\documentclass[12pt]{article}
\usepackage{amsmath,amsthm}
\usepackage{authblk}
\usepackage{bbm, bm}
\usepackage{amsfonts}
\usepackage{graphicx,psfrag,epsf}
\usepackage{subcaption}
\usepackage{enumerate}
\usepackage{natbib}
\usepackage{url} % not crucial - just used below for the URL 
\usepackage{algorithm}
\usepackage[noend]{algpseudocode}
\usepackage{booktabs}
\usepackage[usenames,dvipsnames]{xcolor}
\usepackage{hyperref}
\hypersetup{
    colorlinks = true,
    linkbordercolor = {white},
    citecolor  ={blue}
}

%\pdfminorversion=4
% NOTE: To produce blinded version, replace "0" with "1" below.
\newcommand{\blind}{0}

% theorem and lemma
\theoremstyle{definition}

\theoremstyle{plain}
\newtheorem{theorem}{Theorem}

\newtheorem{condition}{Condition}

% DON'T change margins - should be 1 inch all around.
\addtolength{\oddsidemargin}{-0.5in}%
\addtolength{\evensidemargin}{-0.5in}%
\addtolength{\textwidth}{1in}%
\addtolength{\textheight}{1in}%
\addtolength{\topmargin}{-1in}%

\DeclareMathOperator*{\argmin}{arg\,min}
\DeclareMathOperator*{\ind}{\mathbbm{1}}

\begin{document}

\def\spacingset#1{\renewcommand{\baselinestretch}%
{#1}\small\normalsize} \spacingset{1}

\newcommand{\jelena}[1]{\textcolor{blue}{Jelena: #1}}
\newcommand{\hanbo}[1]{\textcolor{red}{Hanbo: #1}}

%%%%%%%%%%%%%%%%%%%%%%%%%%%%%%%%%%%%%%%%%%%%%%%%%%%%%%%%%%%%%%%%%%%%%%%%%%%%%%

\if0\blind
{
  \title{\Large \bf{Censored Quantile Regression Forests }}
  
  \author{Alexander Hanbo Li}
\author{Jelena Bradic}

\affil{Department of Mathematics \\ University of California San Diego}
   
%\author{Alexander Hanbo Li
%  \and
%Jelena Bradic  
%}

\date{}

  \maketitle
} \fi

\if1\blind
{
  \bigskip
  \bigskip
  \bigskip
  \begin{center}
    {\LARGE\bf Censored Quantile Regression Forests}
\end{center}
  \medskip
} \fi

\bigskip
\begin{abstract}
Random forests are powerful non-parametric regression method but are severely limited in their usage in the presence of randomly censored observations, and naively applied can exhibit poor predictive performance due to the incurred biases. Based on a local adaptive representation of random forests, we develop its regression adjustment for randomly censored regression quantile models. Regression adjustment is based on new estimating equations that adapt to censoring and lead to quantile score whenever the data do not exhibit censoring. The proposed procedure named {\it censored quantile regression forest}, allows us to estimate quantiles of time-to-event without any parametric modeling assumption. We establish its consistency under mild model specifications. Numerical studies showcase a clear advantage of the proposed procedure.
\end{abstract}

\noindent%
{\it Keywords:}  Random Forest; Censored quantile regression; Nonparametric regression; Kaplan-Meier estimation.
 \vfill
  \newpage
\spacingset{1.45} % DON'T change the spacing!

\section{Introduction}
\label{sec:intro}

In many applications, we want to predict and estimate the effect of a covariate on survival time of interests. Examples include treatment, surgical procedure, or immunization on survival time of patients, who for example, could be individuals who have metastatic breast cancer,  military casualties suffering from various injuries, or survival time of infectious diseases. Classically, most datasets have been too small to meaningfully examine the heterogeneity of the data beyond dividing them into a few sub-populations. In the past few years, however, there has been an explosion of experimental settings where it is potentially feasible to explore heterogeneity to its full extent.

An impediment to exploring heterogeneous effects is the fear that scientists with two opposite agendas could hypothetically string together two opposite but coherent results by searching through many different possible models and then reporting only the very extreme ones -- highlighting solely spurious results \citep{olken2015promises}. Thus, protocols for clinical trials must specify in advance the pre-analysis plans and then learn from the data. However, such restrictions can make it challenging to discover unexpected effects due to heterogeneity. Here,  we aim to address this challenge by developing a robust, non-parametric method for estimation in regression settings with censored response variable, which yields consistent estimator that adapts to the heterogeneity of the data and hence can be broadly applied to many different models and achieves further improvement. One example is the accelerated failure time model.

Classical approaches to accelerated failure time model include non-parametric maximum likelihood method, semi-parametric approaches, as well as traditional  parametric approaches; see e.g., \cite{zeng2007efficient}, \cite{robins1992semiparametric},    \cite{robins1992estimation}, as well as \cite{koul1981regression},  \cite{wei1992accelerated},  \cite{huang2007least} etc. These methods perform well in settings where the model error is correctly specified, but quickly break down when the distribution of the error has any heterogeneous, asymmetric, or outlier structure. Other non-parametric methods, like \cite{doi:10.1093/biomet/68.2.381}, \cite{doi:10.1093/biomet/85.4.809}, and rank-based methods, like  \cite{jin2003rank}, strive to achieve certain level of assumption-lean modeling of the mean in the accelerated failure time model. In this paper, we explore the use of ideas from the machine learning literature to improve the performance of these classical methods in a non-parametric fashion that is adaptive to heterogeneous or heavy-tailed error distributions.

Random forest algorithms introduced by \cite{breiman2001random} allow for flexible modeling of covariate interactions, and are related to kernels and nearest-neighbor methods in the sense that they make predictions using a weighted average of ``nearby'' observations. However, random forests depart from the above principles in that they have a data-driven way to determine which nearby observations receive more weight, something that is especially important in environments
with many covariates or complex interactions among covariates.

Despite their wide-spread success at estimation and prediction, application of random forests to censored regression models is far from being easily understood. Not all response variables being observed makes it difficult to understand how to evaluate the prediction arising from simple tree structures.  Namely, a simple average is only enough for conditional mean estimation without censoring. Median estimates, and quantiles more generally, are not easy to construct. In particular, quantile random forests \citep{meinshausen2006quantile} have mainly been undeveloped for censored observations.

This paper addresses these limitations, developing a forest-based method for estimation of quantiles for randomly censored observations (right or left censored). 
More formally, let $T$ be a real-valued latent variable (e.g., survival time) and $X$ be a (possibly high-dimensional) predictor variable. Let $C$ denote the censoring variable, which prevents us from observing all of the information regarding $T$. In left-censored data, we only observe $Y_i = \max(T_i,C_i)$, whereas  in right-censored data, we only observe $Y_i = \min(T_i,C_i)$. Our goal is to estimate the $\tau$-th quantile of $T_i$, non-parametrically, using observations $(Y_i,X_i, \delta_i)$ where $\delta_i =\ind \{ T_i \leq C_i\}$.

We take on the perspective of random forests as that of an nonparametric, 
adaptive kernel smoothing method. This interpretation follows work by  \cite{athey2016generalized}, \cite{Bloniarz2016SupervisedNF}, \cite{hothorn2004bagging},   \cite{li2017forest}, and  \cite{meinshausen2006quantile}, and supplements  the customary view of forests as an ensemble method (i.e., an
average of predictions made by separate trees).  In this view, random forest predictions, of the mean for example, evaluated at a new test point $x$, can be represented by
\[
\sum_{i=1}^n w_i(X_i,x) Y_i, \qquad \sum_{i=1}^n w_i(X_i,x) =1, \qquad w_i(X_i,x)\geq 0
\]
where the weights encode the similarity between the new test point $x$ and the observed covariates $X_i$.
 It is worth pointing that for the conditional mean, the averaging and weighting views of random forests are equivalent; however, once we move to more general settings, the weighting-based perspective proves substantially more powerful.
The goal is then to utilize these local neighborhood weights and quantile regression adjustments to design a new non-parametric quantile estimate of 
\begin{equation}\label{eq:1}
Q_{T|x}(\tau|x), \qquad \tau \in(0,1),
\end{equation}
based on observations $(Y_i, X_i, \delta_i)_{i=1}^n$.  In their simplest form, censored quantile forests just take the forest weights, $w_i(X_i,x)$, and use them for quantile regression:
\[
\hat q_\tau (x) = \arg\min_q \left\{  \Bigl\| \sum_{i=1}^n w_i(X_i,x) {\mathcal{S}}_\tau (X_i, Y_i, q,x) \Bigl\|_l\right\},
\]
where $\| \cdot \|_l$ denotes any $l$-norm with $l\geq 1$.
Here, we define a new score function, ${\mathcal{S}}_\tau $, that is   censoring and quantile adaptive:
\[
{\mathcal{S}}_\tau (X_i, Y_i, q,x)= (1-\tau)\hat{G}(q|x) - \ind (Y_i > q)
\]
with $\hat G$ denoting an estimate of $G (u) = \mathbb{P}(C \geq u| X =x)$, the   survival function of the censoring time $C $ given $X$. In the rest of the document we   focus on  the case of $l=1$; however, results remain true for general cases of $l$ as well.

 Formally, we study the performance of the above non-parametric quantile estimator $\hat q_\tau(x)$ of  \eqref{eq:1} while considering right-censored observations. All of our results can be trivially extended to the left-censored case. Most of the theoretical work focuses on establishing Theorem \ref{thm:consistency} that ensures the consistency of the censored quantile forest estimator $\hat q_\tau(x)$  at a given test point $x$ while allowing minimal assumptions on the regression model as well as the distribution of the model error. This result also allows us to construct prediction intervals regarding  \eqref{eq:1} that adapt to the model structure of the data generating process. Censored quantile forest estimator improves on prediction error compared with many state-of-the-art censored regression methods, and yet it retains the flexibility of random forest methods designed for non-censored observations. Finally, we note that our method can also be seen as an improvement over classical non-parametric approaches for censored observations. While the latter only perform well in low-dimensional problems, ours performs well even with a large number of covariates. One important reason is that random forest weights adapt reasonably well to the dimensionality increase, whereas kernel methods suffer from the curse of dimensionality.

\subsection{Related Work}

There has been a long-time understanding that proportional hazard models, and cox's model, in particular,  are especially powerful for right-censored observations and regression problems.  However, they do not adapt to possibly left-censored observations; besides, they heavily rely on the proportionality assumption which can sometimes be inappropriate, necessitating stratification of the baseline hazard or some other weakening of the proportional hazards condition \citep{koenker2008censored}.

A more flexible approach for random censoring problem is to model the conditional quantiles of the response variable directly. This approach offers greater flexibility as it does not restrict the structure of the hazard function \citep{koenker2008censored} and merely is more intuitive. To estimate the conditional quantiles, \cite{portnoy2003censored} proposed a recursive method which estimates a sequence of linear conditional quantile functions recursively. It can be treated as a generalization to regression of the Kaplan Meier estimator. Another closely related quantile regression model proposed by \cite{peng2008survival} instead makes the linkage to the Nelson-Aalen estimator of the cumulative hazard function, upon which they developed a complete asymptotic theory. The closest work to ours is that of \cite{doi:10.1080/01621459.2018.1469996} where authors propose a new censored loss function. However, they only discuss the properties of quantile estimates in the case of linear quantile model.  

The parametric methods, including those mentioned above, always rely on the linearity assumption on the conditional quantiles, that is,
\begin{equation}\label{eq:linear_quantile}
    Q_{\log(T)|x}(\tau|x) = x^\top \beta.
\end{equation}
Here, the $\log$ transformation is arbitrary but popular in survival analysis and can be replaced by any monotone function. This linearity assumption is too restrictive in many cases, especially when data lie on a complex manifold. Therefore, non-parametric methods are necessary and play an important role in modeling data heterogeneity.

In the case of right censoring, most non-parametric recursive partitioning algorithms in the existing literature rely on survival tree or its ensembles. \cite{ishwaran2008random} proposed random survival forest (RSF) algorithm in which each tree is built by maximizing the between-node log-rank statistic. However, it is not directly estimating the conditional quantiles but instead estimating the cumulative hazard. \cite{zhu2012recursively} proposed the recursively imputed survival trees (RIST) algorithm with the same splitting criterion for each individual tree but different ensemble scheme. Other similar methods relying on different kinds of survival trees were proposed in \cite{gordon1985tree}, \cite{segal1988regression}, \cite{davis1989exponential}, \cite{leblanc1992relative}, and \cite{leblanc1993survival}. All these methods as mentioned above use splitting rules specifically designed to deal with the right censored data. Despite different splitting strategies, they all rely on the proportional hazard assumption and cannot reduce to a loss-based method that might ordinarily be used in the situation with no censoring.

\cite{molinaro2004tree} proposed a tree method based on the inverse probability censoring \citep{robins1994estimation} weighted (IPCW) loss function which reduces to the full data loss function used by CART in the absence of censoring. \cite{hothorn2005survival} then extended the IPCW idea and proposed a forest-type method in which each tree is trained on resampled observations according to inverse probability censoring weights. However, the censored data always get weights zero and hence only uncensored observations will be resampled. As pointed out by \cite{robins1994estimation}, the inverse probability weighted estimators are inefficient because of their failure to utilize all the information available on observations with missing or partially missing data. 

This work aims to build a non-parametric conditional quantile estimator for randomly censored data that reduces to the ordinal quantile forest estimators when full data are observed, and efficiently utilizes all available information on both uncensored and censored observations. Furthermore, it does not require the specific modification of ordinal regression tree (e.g., CART) to survival tree, and hence works on both left and right censored problems.

Fundamentally different from the aforementioned forests methods, in which the censoring information is considered directly in the tree constructing process, our method avoids this complexity and only requires building ordinal regression trees (e.g., CART) for the first step, treating all observations equally. The censoring effects are then taken care of in the second step by solving a locally weighted estimating equation. 
These local weights can be directly calculated from the random forest constructed in the first step;   weights derived from the fraction of trees in which an observation appears in the same leaf as the target value of the covariate vector.  This locally weighted view of random forests was previously advocated by \cite{hothorn2005survival} and \cite{Bloniarz2016SupervisedNF}; original random forest algorithm \citep{breiman2001random} utilized ensemble learning literature.    Differently from kernel weights, typically employed in local maximum likelihood method, for example, and whose performance suffers greatly whenever the dimensionality of the covariate space is more than two or three, our random-forests weights adapt well to moderate dimensionality increases.

Additional challenges arise due to the random censoring nature of the observations. For fixed censoring, one observes all the censoring values and hence can straightforwardly modify the objective used in the general framework of \cite{athey2016generalized}, for example. However, it is unclear how to develop a non-parametric estimator that adapts to unknown censoring in the observations.

\subsection{Organization}

The paper is organized as follows. 
 In Section \ref{ssec:rf_weights}  we provide local adaptive nature of random forests and regression adjustment utilizing those weights. 
  In Sections \ref{subsec:meth1} and \ref{subsec:meth2}, we showcase the development of a new loss function and its power in  predicting any conditional quantile of the latent variable $T$ by solving an ingenious estimating equation, which is   designed to correct the censoring effect. The Algorithm is then described in details in Section \ref{subsec:meth3}.  In Section \ref{sec:theory}, we analyze the time complexity of our algorithm and prove  consistency of the proposed estimator. Section \ref{sec:experiments} contains extensive numerical studies where we compare our algorithm with other forest algorithms on simulated and real censored data sets. Proofs are collected in the Appendix.

\section{Censored Quantile Regression Forest}
\label{sec:meth}
The quantile random forest cannot be directly applied to censored data $\{(X_i, Y_i)\}$ because the conditional quantile of $Y$ is different from that of the latent variable $T$ due to the censoring. Moreover, there is no explicitly defined quantile loss function for randomly censored data.   In this section, we design a new approach to achieve both tasks.

\subsection{Regression adjustment for random forests}
\label{ssec:rf_weights}

Let $\theta$ denote the random parameter determining how a tree is grown, and $\{(X_i, Y_i): i=1,\ldots,n\} \in \mathcal{X} \times \mathcal{Y} \subset \mathbb{R}^p \times \mathbb{R}$ denote the training data. For each tree $T(\theta)$, let $R_l$ denotes its $l$-th terminal leaf. Since the space $\mathcal{X}$ is split into disjoint leaves by $T(\theta)$, we know for any $x \in \mathcal{X}$, there is exactly one leaf containing $x$. We let the index of the leaf be $l(x;\theta)$ and we say $x \in R_{l(x;\theta)}$.

Then for any single tree $T(\theta)$, the prediction on any data point $x \in \mathcal{X}$ is $  \sum_{i=1}^n w(X_i, x; \theta) Y_i$ where
\begin{equation} \label{eq:tree_weight}
    w(X_i, x; \theta) = \frac{\ind_{\{ X_i \in R_{l(x;\theta)} \}}}{\# \{ j: X_j \in R_{l(x;\theta)} \}}.
\end{equation}
Then  a random forest containing $m$ trees formulates a  prediction of $\mathbb{E}[Y|X=x]$ as
%\begin{equation} \label{eq:rf_pred}
\[
 \sum_{i=1}^n w(X_i, x) Y_i
\]
where
\begin{equation} \label{eq:rf_weight}
    w(X_i,x) = \frac{1}{m} \sum_{t=1}^m w(X_i, x; \theta_t).
\end{equation}
From now on, we call the weight $w(X_i,x)$ in \eqref{eq:rf_weight} \textit{random forest weight}. One can easily show that $\sum_{i=1}^n w(X_i,x) = 1$.
The above representation of the random forest prediction of the mean can be equivalently obtained as a solution to the following least-squares optimization problem
\[
\min_{\lambda \in \mathbb{R}} \sum_{i=1}^n w(X_i,x) (Y_i - \lambda)^2.
\]
Therefore, a least-squares regression adjustment, as the above, is equivalent to  \cite{breiman2001random} representation of random forests. However, when we move to estimation quantities that are not the mean, the latter representation is very powerful. Namely, a quantile random forest of \cite{meinshausen2006quantile} can be seen as a quantile regression adjustment \citep{li2017forest}, i.e., as a solution to the following optimization problem
\[
\min_{\lambda \in \mathbb{R}} \sum_{i=1}^n w(X_i, x) \rho_{\tau}(Y_i - \lambda),
\]
where $\rho_\tau$ is the $\tau$-th quantile loss function, defined as $\rho_{\tau}(u) = u(\tau - \ind(u < 0))$.
Local linear regression adjustment was also recently utilized in \cite{athey2016generalized} to obtain a smoother and more poweful random forest algorithm.

\subsection{Motivation}
\label{subsec:meth1}

Let us consider the case of no censored observations. Full data serve as a motivation for developing suitable estimating equations. Following the regression adjustment reasoning, for the case of fully observed data, we 
  could estimate the $\tau$-th quantile of $T_i$ at $x$, denoted as $q_{\tau, x}$, as a solution to 
\[
\min_{q \in \mathbb{R}} \sum_{i=1}^n w(X_i, x) \rho_{\tau}(T_i - q).
\]
Equivalently, such estimate would solve the following 
 estimating equations 
\begin{eqnarray} \label{eq:fop_latent}
U_n(q) &=& \sum_{i=1}^n w(X_i, x) \Bigl\{ (1 - \tau) - \ind (T_i > q) \Bigr\} \nonumber \\
&=& (1-\tau) - \sum_{i=1}^n w(X_i, x) \ind(T_i > q) \approx 0,
\end{eqnarray}
where the second equality is true because $\sum_{i=1}^n w(X_i, x) = 1$.

For simplicity and better illustration of the idea, we first assume the latent variable $T_i$ has the same conditional probability in a neighborhood $R_x$ of $x$. Out of the $n$ data points, assume $\{X_1, \cdots, X_k\} \subset R_x$ and $w(X_i, x) = 1/k$ when $X_i \in R_x$ and $0$ otherwise. Now the estimating equation becomes
\begin{eqnarray} \label{eq:fop_approx}
U_k(q) &=& \frac{1}{k} \sum_{i=1}^k \Bigl\{ (1 - \tau) - \ind (T_i > q) \Bigr\} \nonumber \\
&=& (1-\tau) - \frac{1}{k} \sum_{i=1}^k \ind (T_i > q).
\end{eqnarray}
Now conditional on $\{x\} \cup \{X_i\}_{i=1}^k$,
\begin{equation*}
\mathbb{E} \left[ U_k(q) \right] = (1-\tau) - \mathbb{P}(T > q | x)
\end{equation*}
which will be zero at $q^*$ where $\mathbb{P}(T > q^* | x) = 1-\tau$, that is, when $q^* = q_{\tau,x}$

Let's now consider the case of right-censored setting, where 
 we further have the censoring variable $C_i$, which is independent of $T_i$ conditional on $X_i$, and we could only observe $Y_i = \min\{T_i, C_i\}$ and censoring indicator $\delta_i = \ind (T_i \le C_i)$. In order to estimate $q_{\tau,x}$, we cannot simply replace $T_i$ with $Y_i$ in \eqref{eq:fop_approx} as the $\tau$-th quantile of $T_i$ is no longer the $\tau$-th quantile of $Y_i$ because of the censoring. However, we can observe and utilize the following relationship
\begin{equation*}
\mathbb{P}(Y_i > q_{\tau,x}|x) = \mathbb{P}(T_i > q_{\tau,x}|x) \mathbb{P}(C_i > q_{\tau,x}|x) = (1-\tau) G(q_{\tau,x}|x),
\end{equation*}
where $G(u|x)$ is the survival function of $C_i$ at $x$. That is to say, the $\tau$-th quantile of $T_i$ is actually the $1 - (1-\tau)G(q_{\tau,x}|x)$-th quantile of $Y_i$ at $x$. Now, we define a new estimating equation that resembles  \eqref{eq:fop_approx} as follows
\begin{equation} \label{eq:fop_Y_approx}
S^o_k(q) = \frac{1}{k}\sum_{i=1}^k \Bigl\{ (1-\tau)G(q|x) - \ind (Y_i > q) \Bigr\} \approx 0.
\end{equation}
If we substitute $G(q|x)$ with $G(q_{\tau,x}|x)$, an intuitive explanation for \eqref{eq:fop_Y_approx} is that as  the $\tau$-th quantile of $T_i$ happens to be the $1 - (1-\tau)G(q_{\tau,x}|x)$-th quantile of $Y_i$ at $x$, instead of estimating the former which is not available because of the censoring, we turn to estimate the later one. Namely, the conditional expectation, $\mathbb{E}[S^o_k(q)]$, will still be zero at the same root $q^*$ for \eqref{eq:fop_approx}. 

The survival function $G(\cdot|x)$ can be estimated by a consistent estimate, for example the Kaplan-Meier estimator $\hat{G}(\cdot|x)$ using $\{Y_i\}_{i=1}^k$ and $\{\delta_i\}_{i=1}^k$, and we  can then define  
\begin{equation} \label{eq:fop_Y_approx_KM}
S_k(q) = \frac{1}{k}\sum_{i=1}^k \Bigl\{ (1-\tau)\hat{G}(q|x) - \ind (Y_i > q) \Bigr\} \approx 0.
\end{equation}

\subsection{Full model}
\label{subsec:meth2}
In the previous subsection, we made an assumption that $\mathbb{P}(T|X) = \mathbb{P}(T|x)$ for all $X \in R_x$, where $R_x$ is a neighborhood of $x$. But in reality, this assumption is not always true, and that is why $w(X_i, x)$ plays an important rule in our final estimator, as it ``corrects" the empirical probability of each $T_i$ at $x$.

For example, say we have $n$ data points $\{(X_i, T_i)\}_{i=1}^n$ and have two cases: (1) at all $X_i$'s we have the same conditional probability of $T$, i.e. $\mathbb{P}(T|X_i) = \mathbb{P}(T|X_j)$ for all $i,j$; (2) $T$ has different conditional probabilities at different locations. In the setting (1), $X_i$'s become irrelevant and the point mass on each $T_i$ is $1/n$. We share the mass uniformly to the $n$ points $T_i$'s as they are equally important. When $n \to \infty$, it is known that for any $q$,
\begin{equation} \label{eq:point_mass}
\frac{1}{n} \sum_{i=1}^n \ind (T_i \le q) \to \mathbb{P}(T \le q|x).
\end{equation}
However, in the case (2), the convergence \eqref{eq:point_mass} is no longer valid. We cannot simply put a mass $1/n$ on each $T_i$ because the probability of $T_i$ showing up at $X_i$ could be severely different than the probability it shows up at $x$. An extreme example is when $\mathbb{P}(T|x) = \text{Unif}(x-1,x+1)$. Then if $|X_i - x| > 1$, any $T_i$ showing up at $X_i$ should not even be counted when estimating $\mathbb{P}(T|x)$ because $\mathbb{P}(T_i|x) = 0$. In another word, we should give $T_i$ mass $0$ instead of $1/n$.

Therefore, a measure of ``similarity" between points $X_i$ and $x$ needs to come into play, because we can no longer uniformly distribute the mass; observe that   some $T_i$'s are more important than others for estimating $\mathbb{P}(T|x)$. For instance, if $X_i = x + 0.01$ and $X_j = x + 2$ in the previous example, then $T_i$ should be assigned much more weight than $T_j$.

Now let $w(X_i,x)$ denote the weight (mass) we assign to $T_i$ when we are estimating $\mathbb{P}(T|x)$. In the setting (1), we just have $w(X_i,x) = 1/n$ uniformly. But in the setting (2), we should have $w(X_i, x) > w(X_j, x)$ when $X_i$ is more similar to $x$ than $X_j$ in some sense. Therefore, the estimator for $\mathbb{P}(T \le q|x)$ is then
\begin{equation*}
\sum_{i=1}^n w(X_i, x) \ind(T_i \le q)
\end{equation*}
and it becomes clear that a proper weight $w(X_i,x)$ needs to satisfy: 
\begin{equation} \label{eq:conditions}
(1) \; \sum_{i=1}^n w(X_i, x) = 1; \; \;
(2) \sum_{i=1}^n w(X_i, x) \ind(T_i \le q) \overset{p}{\to} \mathbb{P}(T \le q | x) \; \forall q.
\end{equation}
One may naively think that any fixed Kernel weights, $K(X_i, x)$, could be a suitable choice. However, they would not be able to satisfy the second condition   in \eqref{eq:conditions} for any distribution $\mathbb{P}(T | x)$.
  Fortunately, as shown in \cite{meinshausen2006quantile}, the data-adaptive random forest weight $w(X_i, x)$ introduced in Section \ref{ssec:rf_weights} perfectly satisfy both conditions in \eqref{eq:conditions}. And therefore going back to \eqref{eq:fop_latent}, we  now consider estimating equations,
\begin{equation}
U_n(q_{\tau,x}) = (1-\tau) - \sum_{i=1}^n w(X_i, x) \ind(T_i > q_{\tau,x}) \overset{p}{\to} 0
\end{equation}
when $n \to \infty$. Then following the same logic of how we get \eqref{eq:fop_Y_approx_KM}, a heuristic extension of \eqref{eq:fop_latent} to censoring case will be
\begin{eqnarray} \label{eq:final_fop}
S_n(q; \tau) &=& \sum_{i=1}^n w(X_i, x) \Bigl\{ (1-\tau)\hat{G}(q|x) - \ind (Y_i > q) \Bigr\} \nonumber \\
&=& (1-\tau)\hat{G}(q|x) - \sum_{i=1}^n w(X_i, x)\ind (Y_i > q).
\end{eqnarray}

\subsection{Forest Algorithm}
\label{subsec:meth3}
In the simplified example in Section \ref{subsec:meth1}, we assume that $Y$ has the same conditional probability $\mathbb{P}(Y|X)$ in a neighborhood $R_x$ of $x$, and hence, we can estimate $G(q|x)$ by Kaplan-Meier estimator \citep{kaplan1958nonparametric} (assuming no tied events)
\begin{eqnarray} \label{eq:KM}
\hat{G}(q|x) &=& \prod_{i:X_{(i)} \in R_x, Y_{(i)}\le q} \left( 1 - \frac{1}{k-i+1} \right)^{1-\delta_{(i)}} \nonumber \\
&=& \prod_{i:X_i \in R_x, Y_i\le q} \left( 1 - \frac{1}{\sum_{j=1}^n \ind(Y_j \ge Y_i) \ind(X_j \in R_x)} \right)^{1-\delta_i}
\end{eqnarray}
where $k = |R_x|$. In the more complex case like in Section \ref{subsec:meth2}, many consistent estimators for the conditional survival functions exist. For example, the nonparametric estimator proposed by \cite{beran1981nonparametric}
\begin{equation} \label{eq:Beran}
\tilde{G}(q|x) = \prod_{Y_i \le q} \left\{ 1 - \frac{W_i(x, a_n)}{\sum_{j=1}^n \ind(Y_j \ge Y_i) W_j(x,a_n)} \right\}^{1-\delta_i}
\end{equation}
is shown to be consistent \citep{beran1981nonparametric,dabrowska1987non,dabrowska1989uniform,gonzalez1994asymptotic,akritas1994nearest,li1995approach,van1996uniform}. Here, $W_i(x,a_n)$ are the Nadaraya-Watson weights
\begin{equation*}
W_i(x,a_n) = \frac{K((x-X_i)/a_n)}{\sum_{j=1}^n K((x-X_j) / a_n)},
\end{equation*}
$K(\cdot)$ is a known kernel and $\{a_n\}$ is a bandwidth sequence tending to zero as n tends to infinity. We can then simply use $\tilde{G}(q|x)$ as $\hat{G}(q|x)$ in \eqref{eq:final_fop}.

However, since we already have an adaptive version of kernel -- the random forest weights $w(X_i, x)$, we will propose the following two  new estimators for $G(q|x)$.

\paragraph{Kaplan-Meier using nearest neighbors.}
The first estimator is resembles that  of \eqref{eq:KM}. We first find the $k$ nearest neighbors of $x$ according to the weights $w(X_i, x)$. Denoting these points as a set $N_x$, then we can simply use the Kaplan-Meier estimator on $N_x$
\begin{equation} \label{eq:KM_kNN}
\hat{G}(q|x) = \prod_{i:X_i \in N_x, Y_i\le q} \left( 1 - \frac{1}{\sum_{j=1}^n \ind(Y_j \ge Y_i) \ind(X_j \in N_x)} \right)^{1-\delta_i}.
\end{equation}
Here, the number of neighbors $k$ will be a tuning parameter.

\paragraph{Beran estimator with random forest weights.}
In the second proposal, we will replace the Nadaraya-Watson weights in \eqref{eq:Beran} with random forest weights and get
\begin{equation} \label{eq:Beran_rf}
\hat{G}(q|x) = \prod_{Y_i \le q} \left\{ 1 - \frac{w(X_i, x)}{\sum_{j=1}^n \ind(Y_j \ge Y_i) w(X_j, x)} \right\}^{1-\delta_i}.
\end{equation}
One could observe that \eqref{eq:KM_kNN} is a special case of \eqref{eq:Beran_rf} when the weight $w(X_i, x) = 1/k$ for $X_i \in R_x$ and $0$ otherwise.

Finally, we summarize our main algorithm in Algorithm \ref{alg:main}. The details for choosing the candidate set $\mathcal{C}$ is in Section \ref{subsec:alg}. The choice to minimize the absolute value of $S_n(q;\tau)$ is arbitrary. The goal is to find the approximate root of $S_n(q;\tau) = 0$.

\begin{algorithm}
\caption{Censored quantile regression forest}
\label{alg:main}
\begin{algorithmic}[2]
\State {\hskip -5pt  \small All tuning parameters, including the number of trees, $B$, in the forest as well as the minimum size of the leaves, $m$, in the individual trees  are pre-determined. If using the KM-estimator \eqref{eq:KM_kNN}, the number of nearest neighbors, $k$, is also given.}
\Procedure{Forest-CQR} {test point $x$, training set $ \mathcal{D}=\{(X_i, Y_i, \delta_i)\}_{i=1}^n$, quantile $\tau$} 
\State tree $\mathcal{T} \gets$ {\sc regression forest} $(\mathcal{D})$ with $B$ trees and leaf sizes $m$
%Grow a normal regression forest on the training set with each leaf containing at least $k$ observations.
\State random forest weights $w(x, X_i) \gets$  \eqref{eq:rf_weight}, for all $i$
\State survival function $\hat{G}(q|x) \gets$  \eqref{eq:KM_kNN} or \eqref{eq:Beran_rf}
\State quantile estimator $\hat q$ such that 
$$\hat q \gets \argmin_{q \in \mathcal{C}} |S_n(q; \tau)|$$

\Comment{$\mathcal{C}$ is a candidate set as discussed in Section \ref{subsec:alg}}
\Comment{ $S_n$ is \eqref{eq:final_fop}} \label{step:4}

\Return  $\hat q $ \Comment{The $\tau$-th quantile of  survival time $T$ at $x$}
\State{\hskip -20pt {\small The function {\sc random forest} creates  a regression tree and can be implemented to be as any of the standard forest algorithms.} }
\EndProcedure
\end{algorithmic}
\end{algorithm}

\section{Theoretical Develoments}
\label{sec:theory}
In this section, we will assume the random forest has terminal node size $m$, feature vector $X_i \in \mathbb{R}^{p}$, sample size is $n$, and $k$ nearest neighbors are chosen if using \eqref{eq:KM_kNN}.

\subsection{Time complexity} \label{subsec:alg}
The step \ref{step:4} in Algorithm \ref{alg:main} involves of finding the $q^*$ in a candidate set $\mathcal{C}$ that sets the estimating equation $S_n(q; \tau)$ closest to zero. We simply evaluate the function $S_n(q;\tau)$ for all possible $q$ in $\mathcal{C}$ and find the minimum point. Note that for any fixed $\tau$, $S_n(q; \tau)$ is a step function in $q$ with jumps at $Y_i$'s because the discontinuities only happen at $Y_i$'s for $\hat{G}(q|x)$ (both \eqref{eq:KM_kNN} and \eqref{eq:Beran_rf}) and $\sum_{i=1}^n w(X_i, x) \ind(Y_i > q)$. Therefore, the candidate set $\mathcal{C} \subset \{Y_i\}_{i=1}^n$, and $|\mathcal{C}| = n$ in the worst case.

But in fact, for any fixed $x$, only $Y_i$'s with the corresponding feature vector $X_i \in R_x$ \eqref{eq:KM_kNN} or with $w(X_i, x) > 0$ \eqref{eq:Beran_rf} will be jump points, and hence, we can refine $\mathcal{C} = \{Y_i: X_i \in R_x\}$ for \eqref{eq:KM_kNN} or $\mathcal{C} = \{Y_i: w(X_i, x) > 0\}$ for \eqref{eq:Beran_rf}. We then have the following theorem. The proof is given in the Appendix \ref{sec:appex}. 

\begin{theorem}\label{thm:complexity}
For a fixed test point $x$, depending on whether $G(q|X)$ is estimated by \eqref{eq:KM_kNN} or \eqref{eq:Beran_rf}, the time complexity for Algorithm \ref{alg:main} is $O(n \max\{k, \log(n)\})$ or $O(n m \log(n)^{p-1})$, respectively.
\end{theorem}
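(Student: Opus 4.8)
The plan is to account for the running time of Algorithm~\ref{alg:main} stage by stage and read off the dominant term in each of the two cases. There are four stages: (i) growing the regression forest; (ii) forming the random forest weights $w(X_i,x)$ of \eqref{eq:rf_weight} at the test point $x$; (iii) assembling $\hat G(\cdot\,|x)$ via \eqref{eq:KM_kNN} or \eqref{eq:Beran_rf}; and (iv) the line search of step~\ref{step:4}, i.e.\ minimising $|S_n(q;\tau)|$ over the candidate set $\mathcal C$. Treating the number of trees $B$ and the dimension $p$ as fixed, stages (i) and (ii) contribute only lower-order terms: a tree with minimum leaf size $m$ has depth $O(\log(n/m))$ and, with the usual presorting of the $p$ covariates, is built in $O(n\log n)$ time, so stage (i) costs $O(n\log n)$; dropping $x$ through the $B$ trees and tallying the observations in each terminal leaf of $x$ (each leaf holding at least $m$ and, under greedy splitting, fewer than $2m$ of them) costs $O(n)$. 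Hence it is enough to bound stage (iv) and to track when the $O(n\log n)$ of stage (i) is not already absorbed.

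The structural fact behind stage (iv) --- already noted just before the statement --- is that for fixed $\tau$ the map $q\mapsto S_n(q;\tau)$ in \eqref{eq:final_fop} is a right-continuous step function whose jumps occur only at those observed values $Y_i$ that are \emph{active}: $X_i\in N_x$ in case \eqref{eq:KM_kNN}, and $w(X_i,x)>0$ in case \eqref{eq:Beran_rf} (repeated $Y_i$ giving a single jump). So one may take $\mathcal C$ to be exactly this finite set of active values, and step~\ref{step:4} amounts to evaluating $S_n(\cdot;\tau)$ at the $|\mathcal C|$ points of $\mathcal C$ and returning a minimiser. Evaluating $S_n(q;\tau)$ at one $q$ directly from \eqref{eq:final_fop} together with \eqref{eq:KM_kNN} or \eqref{eq:Beran_rf} is a single pass over the $n$ observations --- to form the at-risk sums $\sum_{j}\ind(Y_j\ge Y_i)(\cdot)$ and $\sum_i w(X_i,x)\ind(Y_i>q)$ --- hence $O(n)$; therefore stage (iv) costs $O(n\,|\mathcal C|)$. (Sorting the active $Y_i$ and updating the at-risk sums incrementally would improve this, but the claimed bounds already follow from the crude count.)

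It remains to bound $|\mathcal C|$. In case \eqref{eq:KM_kNN} this is immediate: $\mathcal C=\{Y_i:X_i\in N_x\}$ has at most $k$ elements, so stage (iv) is $O(nk)$; combined with the $O(n\log n)$ of stage (i) this gives the first bound $O(n\max\{k,\log n\})$. In case \eqref{eq:Beran_rf}, $|\mathcal C|$ is the number of observations sharing a terminal leaf with $x$ in at least one of the $B$ trees. In a single tree this is $O(m)$ by the leaf-size bound, and a union bound over trees would only give $O(Bm)$; the sharper, $B$-free estimate comes from the combinatorial observation that an observation $X_i$ can lie in a common axis-aligned, median-type cell with $x$ only if the box with opposite corners $x$ and $X_i$ contains at most a constant multiple of $m$ observations --- a ``potential $m$-nearest-neighbour'' condition --- and in fixed dimension $p$ the number of such $X_i$ is $O(m(\log n)^{p-1})$, the analogue, for leaf size $m$, of the classical $O((\log n)^{p-1})$ bound on the number of maximal (Pareto-optimal) points of a set, i.e.\ of potential nearest neighbours of $x$. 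Substituting $|\mathcal C|=O(m(\log n)^{p-1})$ into stage (iv) gives $O(nm(\log n)^{p-1})$, which dominates $O(n\log n)$ for $p\ge 2$, yielding the second bound.

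The bookkeeping in the second paragraph --- reading off $\mathcal C$ as the set of active $Y_i$ and noting that each of the $|\mathcal C|$ evaluations of $S_n$ is $O(n)$ --- is routine. The main obstacle is the estimate $|\mathcal C|=O(m(\log n)^{p-1})$ in the Beran case: one must bound the number of observations with $w(X_i,x)>0$ uniformly in the number of trees $B$ and with the right dependence on $m$ and $p$, which is where the geometry of the median-type partitions and the ``potential co-leaf point'' count enter. Care is also needed about whether this bound is meant deterministically, under a balancedness/regularity assumption on the splits, or in expectation over the random design, about ties among the $Y_i$, and about the precise leaf-size window the stopping rule permits.
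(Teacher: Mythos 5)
Your proposal follows essentially the same route as the paper: cast step~\ref{step:4} as an evaluation of $S_n(\cdot;\tau)$ over the candidate set $\mathcal C$ of active $Y_i$'s, price each evaluation at $O(n)$ so stage (iv) costs $O(n\,|\mathcal C|)$, then bound $|\mathcal C|$ by $k$ in the nearest-neighbour case and by $O(m(\log n)^{p-1})$ in the Beran case. Two small contrasts with the paper's one-paragraph proof. First, the paper obtains the $\log n$ in the first bound from a \emph{sorting} step (sorting the $n$ weights before extracting the top $k$), rather than from tree growth as you do; your attribution is also defensible, but it creates an asymmetry you have to paper over for $p=1$ in the Beran case (the paper's accounting avoids this by never charging for forest construction in either branch). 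Second, where the paper simply cites Lin and Jeon (2006) for $|\mathcal C|=O(m(\log n)^{p-1})$, you reconstruct the underlying idea --- the ``potential $m$-nearest-neighbour under box partitions'' count and its $(\log n)^{p-1}$ layered-maxima flavour --- which is the right mechanism and closer to a self-contained argument, though as you yourself flag it leaves open whether the bound holds deterministically under a split-balancedness condition or only in expectation. Neither difference affects the stated conclusion.
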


\subsection{Consistency}
\label{subsec:consistency}
In this section, we will show that for any fixed $\tau \in (0,1)$, $S_n(q; \tau)$ in \eqref{eq:final_fop} will converge in probability to $(1-\tau)G(q|x) - \mathbb{P}(Y_i > q)$ uniformly for $q$.

\begin{condition}\label{cond:1}
The density of $X$ is positive and bounded from above and below by positive constants on the support $\mathcal{X}$.
\end{condition}

We note that Condition \ref{cond:1} is a very primitive condition on the distribution of the covariates. It is satisfied for example for Gaussian distribution and more broadly for most symmetric, continuous distributions with unbounded support. The case of bounded or discrete covariates is beyond the scope of the current work.

\begin{condition}\label{cond:2}
The terminal node size $m \to \infty$ and $m/n \to 0$ as $n \to \infty$. Furthermore, for each tree splitting, the probability that each variable is chosen for the split point is bounded from below by a positive constant, and every child node contains at least $\gamma$ proportion of the data in the parent node, for some $\gamma \in (0, 0.5]$.
\end{condition}

The two requirements of Condition \ref{cond:2} are also required in \cite{meinshausen2006quantile} (see   Assumptions 2 and 3 therein). This condition states that the leaf node size of each tree should increase with the sample size $n$, but at a slower rate. Intuitively, first, the trees that we are using need to be shallow (i.e., with large leaves) in order to estimate a more complex model, reliably. Secondly, there can not be leaves with no samples, i.e., each leaf must be large enough to capture the local estimating equations more adequately. Our experiments also justify the necessity of Condition \ref{cond:2}, as the performance of our model, will deteriorate if we keep a small leaf node size but increase the sample size. We will talk about this in detail in Section \ref{sssec:nodesize}.

\begin{condition}\label{cond:3}
Denote $F(y|x) = \mathbb{P}(Y \le y | x)$. There exists a constant $L$ such that $F(y|x)$ is Lipschitz continuous with parameter $L$, that is, for all $x, x^{'} \in \mathcal{X}$,
\begin{equation*}
\sup_{y} |F(y|x) - F(y|x^{'})| \le L \|x - x^{'}\|_1.
\end{equation*}
\end{condition}

We note that Condition \ref{cond:3} appears in all existing work related to quantile regression and inference thereafter.  

\begin{condition}\label{cond:4}
The  response variable $T$ and the censoring variable $C$ are   independent conditional on $X$, and the conditional distribution $\mathbb{P}(T \le q|x)$ and $\mathbb{P}(C \le q|x)$ are both positive and strictly increasing in $q$ for all $x \in \mathcal{X}$.
\end{condition}

Conditional independence of $T$ and $C$ is a very standard assumption and can be traced back to \cite{robins1992semiparametric} among other works.

\begin{condition}\label{cond:5}
For any $x \in \mathcal{X}$, the estimator $\hat{G}(q|x)$ converges pointwisely to the true conditional survival function $G(q|x)$.
\end{condition}

Condition \ref{cond:5} is satisfied, for example, by the Kaplan-Meier estimator \eqref{eq:Beran} \citep{dabrowska1989uniform}. Please take a look at Figure \ref{fig:g_comparison} and Figure \ref{fig:g_comparison_high} where we compare finite sample properties of the newly introduced estimators \eqref{eq:KM_kNN} and \eqref{eq:Beran_rf}. We observe that the new distributional estimators are more adaptive and yet seemingly inherit consistency to that of the traditional KM estimator.

We proceed to showcase asymptotic properties of the proposed estimating equations. We begin by illustrating a concentration of measure phenomenon for the introduced score equations.

\begin{theorem}
\label{thm:part1}
Define
\begin{equation}
S(q;\tau) = (1-\tau)G(q|x) - \mathbb{P}(Y > q).
\end{equation}
Under Conditions \ref{cond:1} -- \ref{cond:5}, for any $x \in \mathcal{X}$, $r > 0$, $\tau \in (0,1)$,
\begin{equation*}
\sup_{q \in [-r,r]} | S_n(q; \tau) - S(q; \tau) | = o_p(1).
\end{equation*}
\end{theorem}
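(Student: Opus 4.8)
The plan is to decompose the error into a censoring–survival part and a weighted–empirical–distribution part, control each uniformly on the compact interval $[-r,r]$, and recombine. Using $\sum_{i=1}^n w(X_i,x)=1$ one rewrites $\sum_{i=1}^n w(X_i,x)\ind(Y_i>q)=1-\hat F_n(q)$ with $\hat F_n(q):=\sum_{i=1}^n w(X_i,x)\ind(Y_i\le q)$, and reads $\mathbb{P}(Y>q)$ in the statement as the conditional probability $\mathbb{P}(Y>q\mid X=x)=1-F(q|x)$ at the fixed test point. Then
\begin{equation*}
S_n(q;\tau)-S(q;\tau)=(1-\tau)\bigl(\hat G(q|x)-G(q|x)\bigr)+\bigl(\hat F_n(q)-F(q|x)\bigr),
\end{equation*}
so by the triangle inequality and $1-\tau\le 1$ it suffices to prove $\sup_{q\in[-r,r]}|\hat F_n(q)-F(q|x)|=o_p(1)$ and $\sup_{q\in[-r,r]}|\hat G(q|x)-G(q|x)|=o_p(1)$.

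For the first supremum I would invoke the consistency of the forest–weighted empirical distribution of the observed response. Conditions \ref{cond:1}--\ref{cond:3} are exactly the hypotheses under which Theorem~1 of \cite{meinshausen2006quantile} applies to $Y$ (bounded covariate density, $m\to\infty$, $m/n\to 0$, each coordinate chosen for a split with probability bounded below, $\gamma$-balanced splits, and a Lipschitz conditional law $F(\cdot|x)$); hence condition~(2) of \eqref{eq:conditions} holds with $T$ replaced by $Y$, i.e.\ $\hat F_n(q)\overset{p}{\to}F(q|x)$ for every fixed $q$. To upgrade this pointwise statement to the uniform one I would use a P\'olya-type argument: $q\mapsto\hat F_n(q)$ is non-decreasing while $q\mapsto F(q|x)$ is non-decreasing and, by Condition \ref{cond:3}, continuous; given $\varepsilon>0$, pick a grid $-r=q_0<q_1<\cdots<q_M=r$ with $F(q_{j+1}|x)-F(q_j|x)<\varepsilon$, sandwich $\hat F_n(q)\in[\hat F_n(q_j),\hat F_n(q_{j+1})]$ on each $[q_j,q_{j+1}]$, and conclude $\sup_q|\hat F_n(q)-F(q|x)|\le\varepsilon+\max_{0\le j\le M}|\hat F_n(q_j)-F(q_j|x)|=\varepsilon+o_p(1)$; letting $\varepsilon\downarrow 0$ finishes this part.

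For the second supremum, Condition \ref{cond:5} gives $\hat G(q|x)\overset{p}{\to}G(q|x)$ pointwise. Both estimators \eqref{eq:KM_kNN} and \eqref{eq:Beran_rf} are products over $\{i:Y_i\le q\}$ of factors lying in $[0,1]$, so $q\mapsto\hat G(q|x)$ is non-increasing; and $G(\cdot|x)$ is non-increasing and continuous on $[-r,r]$. The continuity is worth recording: by Condition \ref{cond:4} and conditional independence, $1-F(y|x)=\mathbb{P}(T>y|x)\,\mathbb{P}(C>y|x)$ with both factors strictly positive (support unbounded in both directions) and monotone, so an atom of $C$ given $X=x$ would force a jump of $1-F(\cdot|x)$, contradicting Condition \ref{cond:3}; hence $G(\cdot|x)$ is continuous. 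Applying the same sandwiching argument as above, with the monotonicity reversed, yields $\sup_{q\in[-r,r]}|\hat G(q|x)-G(q|x)|=o_p(1)$, and combining the two bounds proves the theorem.

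The main obstacle is the pointwise consistency $\hat F_n(q)\overset{p}{\to}F(q|x)$ of the forest–weighted empirical distribution, i.e.\ verifying that Conditions \ref{cond:1}--\ref{cond:3} genuinely supply the two ingredients behind it: (i) asymptotically negligible bias, where the $\gamma$-balanced split rule of Condition \ref{cond:2} forces the diameter of the terminal leaf of $x$ to shrink and where Condition \ref{cond:3} together with Condition \ref{cond:1} turns a small leaf diameter into a small discrepancy between $\mathbb{E}[\ind(Y_i\le q)\mid X_i\in R_{l(x;\theta)}]$ and $F(q|x)$; and (ii) asymptotically negligible variance, which follows from $\max_i w(X_i,x)\le 1/m$ (every terminal leaf holds at least $m$ points), whence $\sum_i w(X_i,x)^2\le 1/m\to 0$. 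If one prefers a self-contained argument rather than citing \cite{meinshausen2006quantile}, reproving (i)--(ii) under Condition \ref{cond:2} is the only nonroutine step; the two monotone-to-continuous upgrades and the triangle-inequality bookkeeping are standard.
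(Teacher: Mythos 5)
Your top‐level decomposition — write $S_n-S=(1-\tau)(\hat G-G)+(\hat F_n-F)$ and control the two suprema separately — is exactly the paper's, but you handle the uniformity in $q$ by a different device. The paper passes through the probability integral transform $U_i=F(Y_i\mid X_i)$: it splits $\hat F_n-F$ into a term $\sum_i w(X_i,x)\{\ind(U_i\le F(q|x))-F(q|x)\}$, which after reparametrizing by $z=F(q|x)\in[0,1]$ becomes a weighted Glivenko--Cantelli statement handled with Chebyshev plus a union bound, and a second term $\sum_i w(X_i,x)\{\ind(U_i\le F(q|X_i))-\ind(U_i\le F(q|x))\}$, which it controls by citing Theorem~1 and Lemma~2 of \cite{meinshausen2006quantile}. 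You instead take pointwise consistency of $\hat F_n(q)$ from the Meinshausen machinery and upgrade to uniformity by the classical P\'olya/monotone-sandwiching argument, and you apply the same upgrade to $\hat G$. Your route is arguably cleaner at the upgrade step: the paper's claim that pointwise convergence of $\hat G$ plus compactness of $[-r,r]$ yields uniformity is not valid as stated (pointwise convergence on a compact set does not imply uniform convergence without an extra structural hypothesis), and the monotone-limit-continuous sandwich you use is precisely the missing ingredient.

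One caveat worth flagging, which is however shared by the paper: both upgrades require continuity \emph{in $q$} of the limit functions $F(\cdot\mid x)$ and $G(\cdot\mid x)$, and your stated justification — that a jump would ``contradict Condition~\ref{cond:3}'' — does not follow from the written condition, since Condition~\ref{cond:3} is a Lipschitz bound in the covariate $x$ uniform over $y$, not a continuity statement in $y$. Condition~\ref{cond:4}'s ``strictly increasing'' also does not rule out jumps. So continuity of the conditional distribution of $Y$ (and of $C$) in $q$ is an extra hypothesis that both your proof and the paper's are silently invoking; note the paper's very first step (``$U_i=F(Y_i\mid X_i)$ are i.i.d.\ uniform'') already requires it, since the probability integral transform is uniform only for a continuous conditional cdf. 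This is a defect of the stated conditions rather than of your argument, but the specific reason you give for continuity is not correct and should be replaced by an explicit assumption.
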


Next, we present our main result that illustrates an asymptotic consistency of the proposed conditional quantile estimator. The proof is given in Appendix \ref{sec:appex}.

\begin{theorem}
\label{thm:consistency}
Under Conditions \ref{cond:1} -- \ref{cond:5}, for fixed $\tau \in (0,1)$ and $x \in \mathcal{X}$, define $q^*$ to be the root of $S(q;\tau) = 0$, and $r > 0$ to be some constant so that $q^* \in [-r, r]$. Also define $q_n$ to be $\argmin_{q \in [-r, r]} \left|S_n(q; \tau)\right|$. Then 
$$\mathbb{P}(T \le q^*|x) = \tau,
$$ 
and as $n \to \infty$,
$$q_n \overset{p}{\to} q^*.$$

\end{theorem}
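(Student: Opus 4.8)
The plan is to split the statement into a purely algebraic identification of $q^*$ and a standard argmin-consistency argument resting entirely on the uniform convergence already supplied by Theorem~\ref{thm:part1}. For the identification, I would first use conditional independence of $T$ and $C$ given $X$ (Condition~\ref{cond:4}) to write $\mathbb{P}(Y>q\mid x)=\mathbb{P}(T>q\mid x)\,G(q\mid x)$, so that $S(q;\tau)$ factors as $G(q\mid x)\bigl[\mathbb{P}(T\le q\mid x)-\tau\bigr]$. Since $G(q^*\mid x)>0$ by Condition~\ref{cond:4} while $S(q^*;\tau)=0$, the bracket must vanish at $q^*$, which is exactly $\mathbb{P}(T\le q^*\mid x)=\tau$; the same factorization, combined with strict monotonicity of $q\mapsto\mathbb{P}(T\le q\mid x)$, shows that $q^*$ is the \emph{unique} zero of $S(\cdot;\tau)$ and that $S(\cdot;\tau)$ is strictly negative to the left of $q^*$ and strictly positive to its right.

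For $q_n\overset{p}{\to}q^*$ I would run the usual M-estimation sandwich. Because $q^*\in[-r,r]$, minimality of $q_n$ gives $|S_n(q_n;\tau)|\le|S_n(q^*;\tau)|$, and Theorem~\ref{thm:part1} applied at the single point $q^*$ gives $S_n(q^*;\tau)\to S(q^*;\tau)=0$ in probability, hence $S_n(q_n;\tau)=o_p(1)$. Feeding this back through the uniform bound of Theorem~\ref{thm:part1},
\[|S(q_n;\tau)|\le\sup_{q\in[-r,r]}\bigl|S_n(q;\tau)-S(q;\tau)\bigr|+|S_n(q_n;\tau)|=o_p(1).\]

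It remains to upgrade convergence of the criterion to convergence of its minimizer, which I would do with a well-separation estimate. For each $\epsilon>0$, set $\delta_\epsilon:=\inf_{q\in[-r,r],\,|q-q^*|\ge\epsilon}|S(q;\tau)|$. Using the factorization of $S$ from the first paragraph together with monotonicity of $q\mapsto G(q\mid x)$ and of $q\mapsto\mathbb{P}(T\le q\mid x)$, one obtains $\delta_\epsilon\ge G(r\mid x)\,\min\bigl\{\mathbb{P}(T\le q^*{+}\epsilon\mid x)-\tau,\ \tau-\mathbb{P}(T\le q^*{-}\epsilon\mid x)\bigr\}>0$, where the strict positivity is precisely the strict monotonicity of Condition~\ref{cond:4} evaluated against $\mathbb{P}(T\le q^*\mid x)=\tau$. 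Then $\{|q_n-q^*|\ge\epsilon\}\subseteq\{|S(q_n;\tau)|\ge\delta_\epsilon\}$, whose probability tends to $0$ by the previous display; letting $\epsilon\downarrow0$ yields $q_n\overset{p}{\to}q^*$.

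I do not expect either step above to be the real obstacle: the analytic heavy lifting is done once and for all in Theorem~\ref{thm:part1}, where the Meinshausen-type control of the forest weights (Conditions~\ref{cond:1}--\ref{cond:3}) and the pointwise consistency of $\hat{G}(\cdot\mid x)$ (Condition~\ref{cond:5}) are combined through the product form of the conditional Kaplan--Meier/Beran estimator and then made uniform in $q$. The only mild subtleties worth flagging are that $S_n(\cdot;\tau)$ is a step function rather than continuous, so $q_n$ may sit at a jump of $S_n$ --- but the argument above only uses the inequality $|S_n(q_n;\tau)|\le|S_n(q^*;\tau)|$ and never differentiates, so this is immaterial; and that the well-separation bound should be phrased via monotonicity alone, so as not to require continuity of $G(\cdot\mid x)$, which Condition~\ref{cond:4} does not guarantee.
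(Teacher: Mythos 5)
Your proof is correct and follows essentially the same route as the paper's: it rests on the same three ingredients (uniform convergence from Theorem~\ref{thm:part1}, well-separation of the zero of $S$ via the factorization $S(q;\tau)=G(q\mid x)\bigl[\mathbb{P}(T\le q\mid x)-\tau\bigr]$, and near-minimization $S_n(q_n;\tau)=o_p(1)$ obtained from the definition of $q_n$). The only cosmetic differences are that you carry the final event-inclusion step to completion rather than citing the general argmax-consistency theorem in van der Vaart (2000), and you explicitly verify the intermediate claim $\mathbb{P}(T\le q^*\mid x)=\tau$, which the paper treats as immediate from the factorization.
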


\section{Experiments}
\label{sec:experiments}
In this section, we will compare our model, censored forest regression (\textit{crf}) with generalized random forest (\textit{grf}) \citep{athey2016generalized}, quantile random forest (\textit{qrf}) \citep{meinshausen2006quantile} and random survival forest (\textit{rsf}) \citep{hothorn2005survival} on simulated and real data sets.

On the simulated data sets, we will apply \textit{qrf} and \textit{grf} to the censored data directly, and get biased models which we denote by \textit{qrf} and \textit{grf}, respectively. We also apply \textit{qrf} and \textit{grf} to the data with uncensored responses, and call the resulted models \textit{qrf-oracle} and \textit{grf-oracle}.

Throughout this section, we fix the number of trees for each forest to be 1000. The only tuning parameter we have is the node size of each tree. All other parameters are kept as default.

\subsection{Illustrative  example}
\label{subsec:one-d}
In this section, we generate the true response, i.e., latent,  variables $T_i \sim \textrm{Unif}(0,1)$, and consider the censoring variables $C_i \sim \mathcal{N}(0.8, 0.2^2)$. The censored responses, i.e. observed responses, are then taken as $Y_i = \min(T_i, C_i)$. We compare the estimating equation on the latent variables $T_i$
\begin{equation*}
U_1(q) = (1-\tau) - \frac{1}{n} \sum_{i=1}^n \ind(T_i > q)
\end{equation*}
to the estimating equation of our proposed algorithm
\begin{equation*}
U_2(q) = (1-\tau)\hat{G}(q) - \frac{1}{n} \sum_{i=1}^n \ind(Y_i > q),
\end{equation*}
where $\hat{G}(q)$ is the one-dimensional Kaplan-Meier estimator for the survival function of censoring variable $C$. The results are  shown in Figure \ref{fig:loss_plot}. We consider $\tau=0.5$ but the results persist for many other choices of $\tau$.

In Figure \ref{fig:loss_plot} we present the two estimating equations as functions of $q$ and illustrate that the solutions to  $U_1(q)=0$ and $U_2(q)=0$ are closer and closer together when the sample size grows. The solution for $U_1(q)=0$ can be treated as an oracle solution where the oracle observes ``uncensored" (true) response variable. Figure \ref{fig:loss_plot} therefore indicated that 
 the root of our method's estimating equation is very close to the oracle root  and that we are therefore finding a good approximation to the unknown parameter of interest.

%It means that by solving $U_2(q) = 0$, we have a good approximation of the root of $U_1(q)=0$, and hence find the $\tau$-th quantile of the latent variable $T$.

\begin{figure}[!htb]
\small
\centering
\includegraphics[width=0.618\linewidth]{./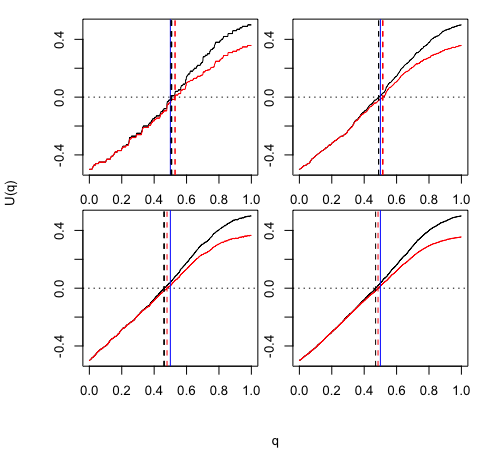}
\caption{Sample loss plot for different sample size when $\tau = 0.5$. For upper left figure, we have sample size $100$, for upper right, $n = 500$, for lower left, $n = 1000$, and for lower right, $n = 5000$. The black curve is the value of $U_1(q)$, the red cureve is the value of $U_2(q)$, the black dotted vertical line is the root of $U_1(q)$, the red dotted vertical line is the root of $U_2(q)$, and the blue vertical line is $q = \tau$. \label{fig:loss_plot}}
\end{figure}

\subsection{Simulation results}
\label{ssec:simulation}
In this section, we will compare different forest algorithms on simulated data sets including accelerated failure time model (AFT) and many non-parametric censored regression models.

\subsubsection{One-dimensional AFT model}
\label{sssec:1d-aft}
We simulate data from an one-dimensional AFT model
\begin{equation*}
    \log(T) = X + \epsilon
\end{equation*}
where $X \sim \textrm{Unif}(0,2)$ and $\epsilon \sim \mathcal{N}(0, 0.3^2)$. Then the censoring variable $C \sim \textrm{Exp}(\lambda = 0.08)$, and the observed response $Y = \min(T, C)$ and the censoring indicator $\delta = \ind(T \le C)$. The average censoring rate is about $20\%$. The number of training data, validation data and test data are all 300. All the forests consist of 1000 trees. The node size of each forest is determined by cross--validation. We plot out one set of training data and the corresponding quantile predictions for $\tau = 0.3, 0.5, 0.7$ on a set of test data in Figure \ref{fig:aft_1d}. We only show the results of \textit{crf}, \textit{grf}, and \textit{grf-oracle} because in one dimension, \textit{qrf}'s performance is visually indistinguishable from \textit{grf}. There we observe a consistency of our method as well as superior behavior to the competing methods. Namely, the   generalized random forest that ignores the censoring component of the data, incurs large bias; due to the right censoring, bias is larger for lower values of the quantiles. We observe that the proposed {\it crf} follows closely the oracle estimator and is extremely close to the true quantile regardless of the $\tau$ in the study. 

\begin{figure}[!htb]
    \small
    \centering
    \begin{subfigure}[b]{0.4\linewidth}
        \includegraphics[width=\textwidth]{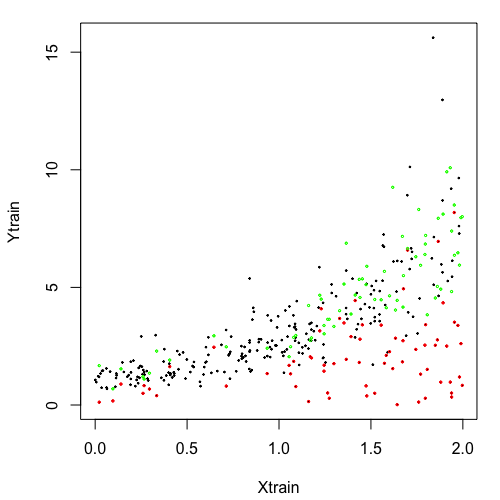}
        \caption{Training data}
        \label{fig:aft_train}
    \end{subfigure}
    ~ %add desired spacing between images, e. g. ~, \quad, \qquad, \hfill etc. 
      %(or a blank line to force the subfigure onto a new line)
    \begin{subfigure}[b]{0.4\linewidth}
        \includegraphics[width=\textwidth]{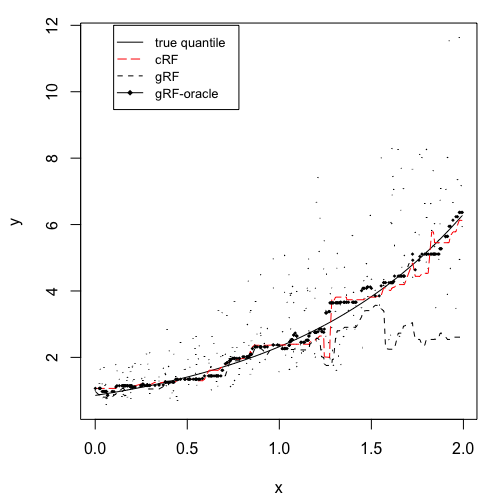}
        \caption{$\tau = 0.3$}
        \label{fig:aft_1d_tau_03}
    \end{subfigure}
    
    \vspace{-0.05in}
    
    \begin{subfigure}[b]{0.4\linewidth}
        \includegraphics[width=\textwidth]{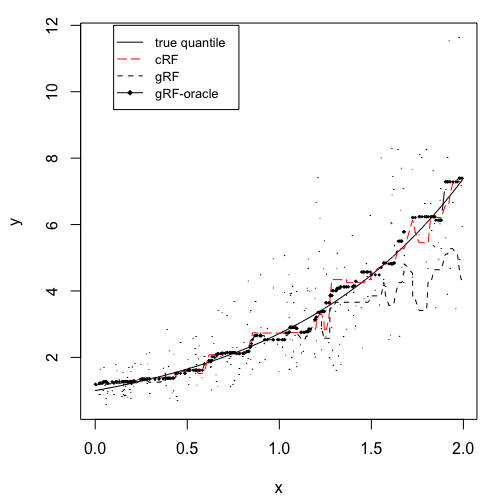}
        \caption{$\tau = 0.5$}
        \label{fig:aft_1d_tau_05}
    \end{subfigure}
    ~
    \begin{subfigure}[b]{0.4\linewidth}
        \includegraphics[width=\textwidth]{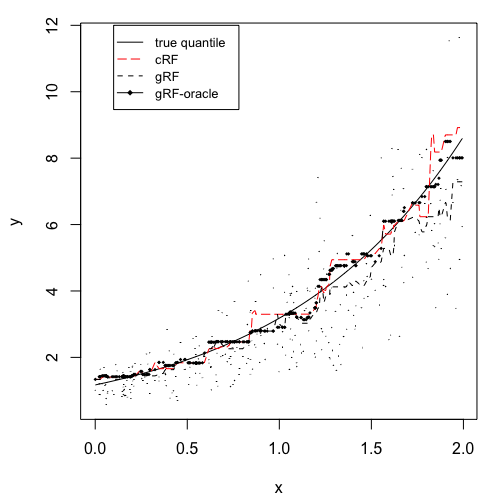}
        \caption{$\tau = 0.7$}
        \label{fig:aft_1d_tau_07}
    \end{subfigure}
    
    \caption{One-dimensional AFT model results with $n=300$ and $B=1000$. In (a), black points stand for observations that are not censored; red points are observations that are censored, i.e. the truly observed data points, and the green points are the counterpart of the red points, that is, they are the latent values of those red points if they were not censored.}
    \label{fig:aft_1d}
\end{figure}

Moreover, we proceed further and   for a set of values  $\tau \in \{0.1,0.3,0.5,0.7,0.9\}$, we repeat the process   40 times, and for each time, we calculate the MSE and MAD between the estimated quantiles and the true quantiles, and the $\tau$-th quantile loss. To be more specific, let $T_i$ be the response in test set (all uncensored), $Q^{\tau}_i$ be the true $\tau$-th quantile, and $\hat{Q}^{\tau}_i$ be the estimated quantile, then
\begin{equation}\label{eq:L_MSE}
    L_{MSE} = \frac{1}{n} \sum_{i=1}^n (\hat{Q}^{\tau}_i - Q^{\tau}_i)^2,
\end{equation}
\begin{equation}\label{eq:L_MAD}
    L_{MAD} = \frac{1}{n} \sum_{i=1}^n |\hat{Q}^{\tau}_i - Q^{\tau}_i|,
\end{equation}
\begin{equation}\label{eq:L_quantile}
    L_{quantile} = \frac{1}{n} \sum_{i=1}^n \rho_{\tau}(T_i - \hat{Q}^{\tau}_i).
\end{equation}
The reason we use $L_{quantile}$ to measure the quality of quantile predictions is that, by \cite{meinshausen2006quantile}, the $\tau$-th quantile of $T$ at $x$ equals to $\argmin_{q \in \mathbb{R}} \mathbb{E}[\rho_{\tau}(T - q) | X=x]$. The results are illustrated in Figure \ref{fig:aft_1d_box} where besides the abose three measures we compare the concordance index (C-index) \citep{harrell1982evaluating}, which is related to the area under the ROC curve \citep{heagerty2005survival}. It estimates the probability that, in a randomly selected pair of cases, the case that fails first had a worse predicted outcome. In \cite{ishwaran2008random}, they use the ensemble mortality as the predictive outcome for their random survival forest, and the predicted survival time for random forest regression. For our method \textit{crf} and the other two methods, \textit{qrf} and \textit{grf}, we will use the $\tau$-th conditional quantile as the predicted outcome. Since the outcomes will be different for different $\tau$, we report the results for all $\tau \in \{0.1,0.3,0.5,0.7,0.9\}$.

In Figure \ref{fig:aft_1d_box} we observe an oracle like behavior of the proposed {\it crf} method in terms all four measures of the quality of estimation and/or prediction. Namely, we observe that MAD, MSE and quantile losses are extremely small whereas C-index is high and all are close to the corresponding oracle estimators (colored purple and blue). Moreover, we observe that the proposed {\it crf} method, although not primarily build for the hazard rates,  is even better than survival random forest: see for example discrepancies between red and brown boxplots in the last row of Figure \ref{fig:aft_1d_box} where the larger the C-index is  the better the method is.

\begin{figure}[!htb]
    \small
    \centering
    \begin{subfigure}[b]{0.182\linewidth}
        \includegraphics[width=\textwidth]{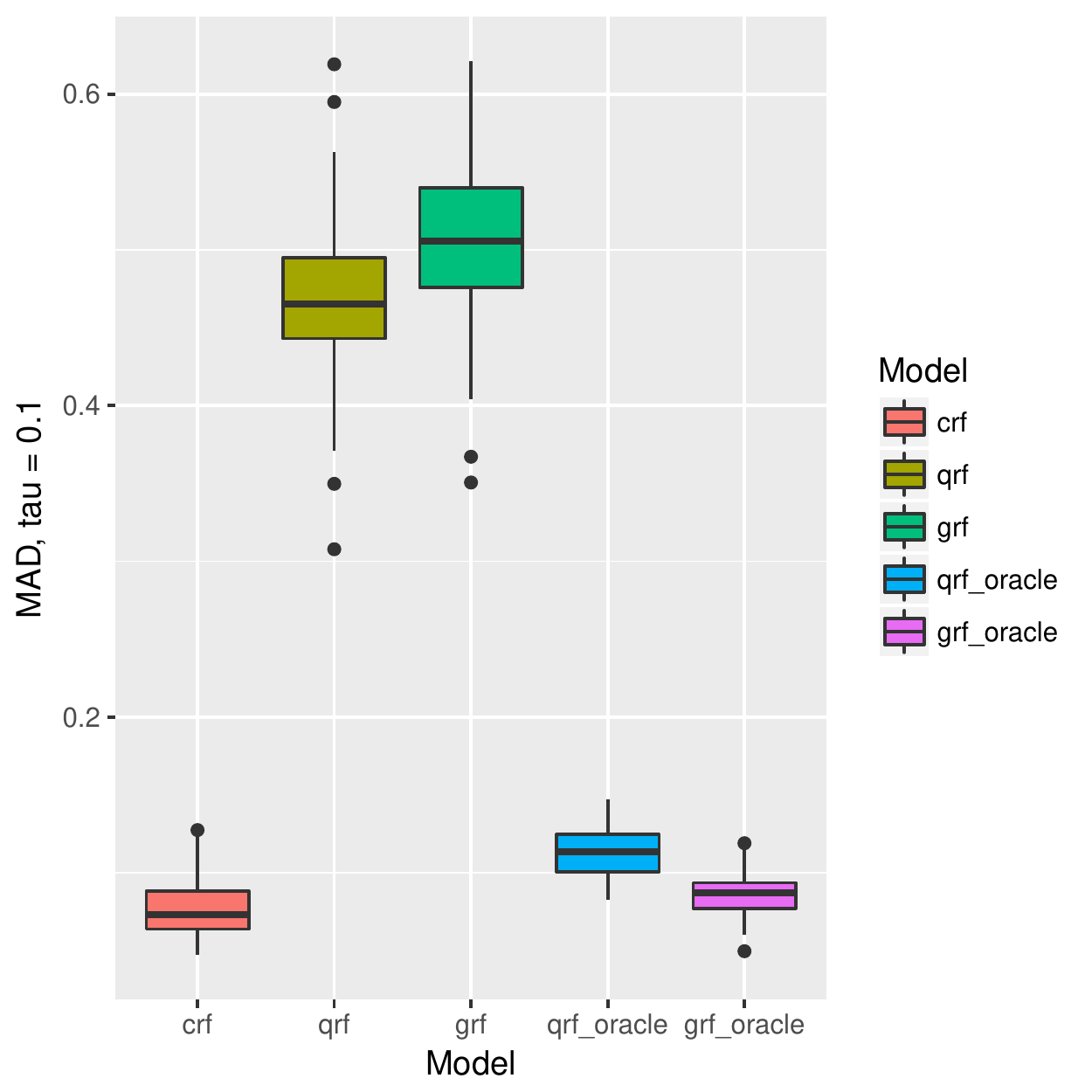}
        \caption{MAD: $\tau = 0.1$}
    \end{subfigure}
    ~
    \begin{subfigure}[b]{0.182\linewidth}
        \includegraphics[width=\textwidth]{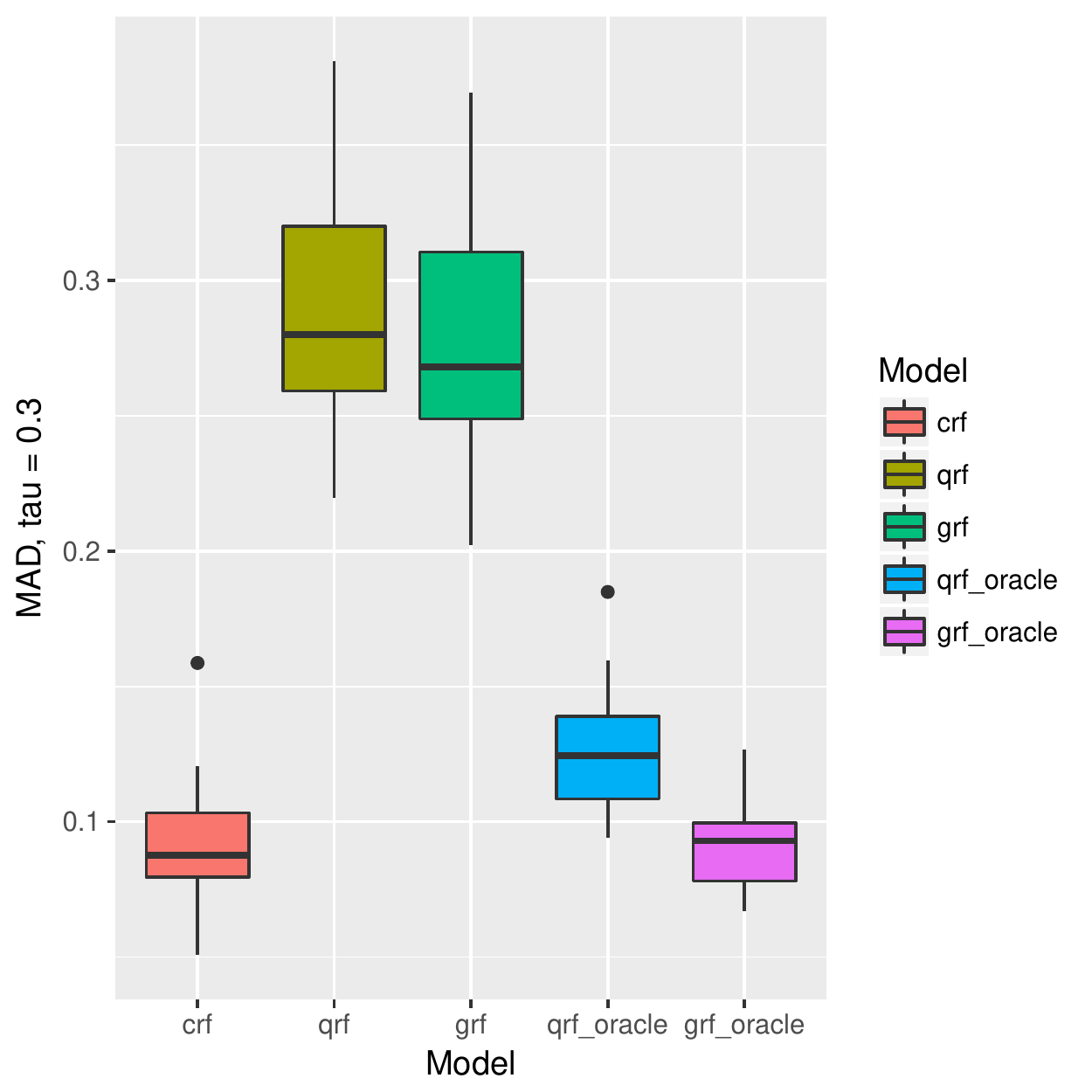}
        \caption{MAD: $\tau = 0.3$}
    \end{subfigure}
    ~
    \begin{subfigure}[b]{0.182\linewidth}
        \includegraphics[width=\textwidth]{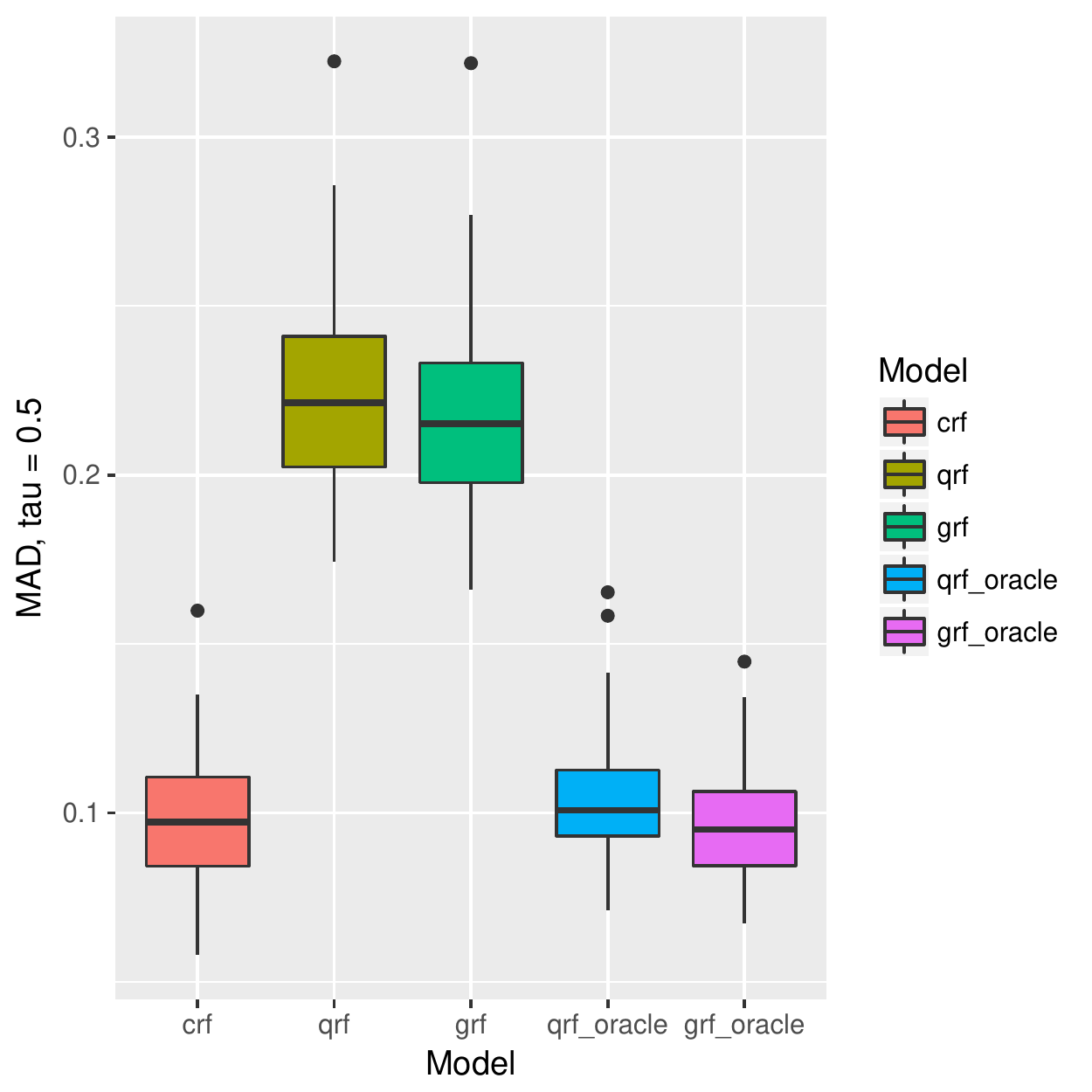}
        \caption{MAD: $\tau = 0.5$}
    \end{subfigure}
    ~
    \begin{subfigure}[b]{0.182\linewidth}
        \includegraphics[width=\textwidth]{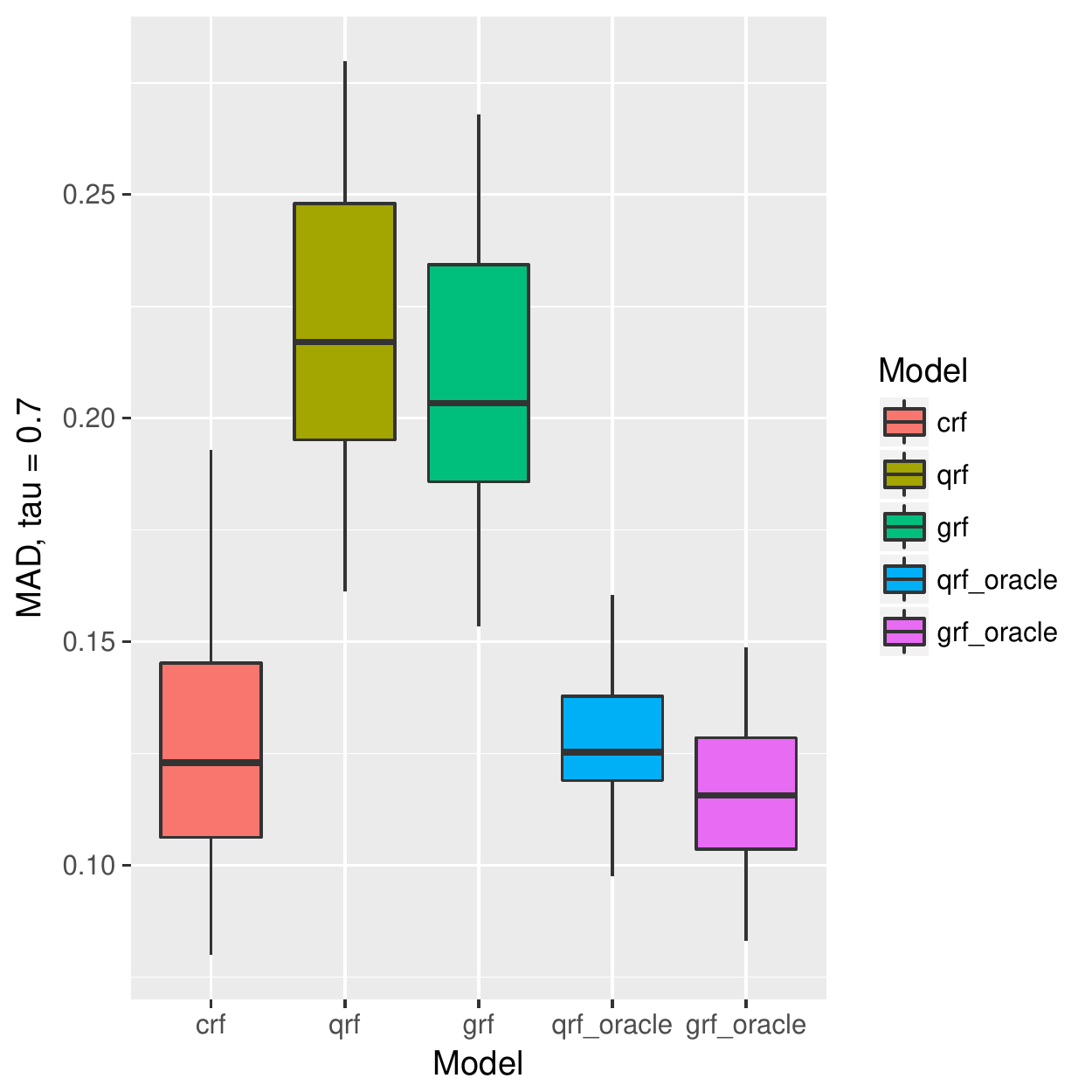}
        \caption{MAD: $\tau = 0.7$}
    \end{subfigure}
    ~
    \begin{subfigure}[b]{0.182\linewidth}
        \includegraphics[width=\textwidth]{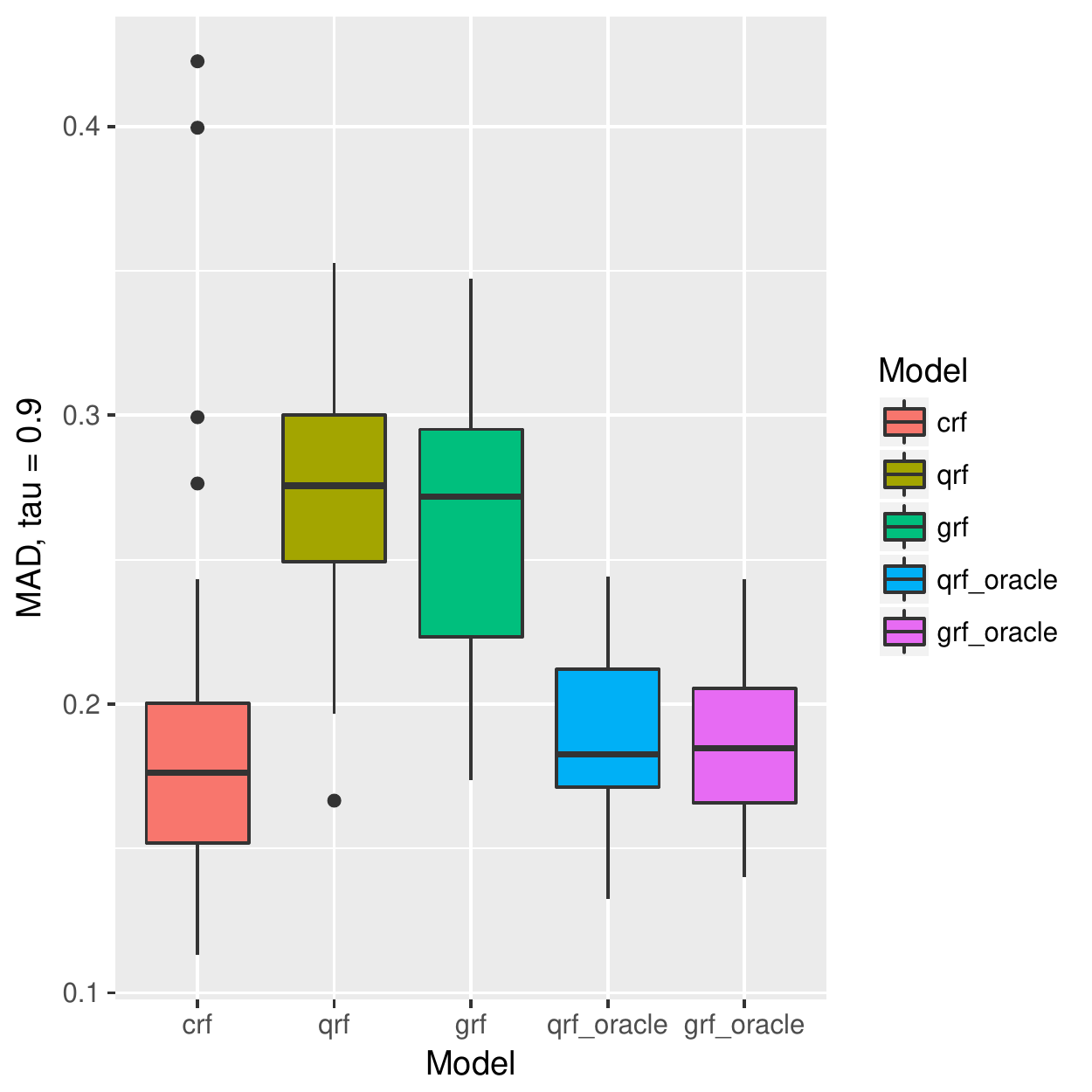}
        \caption{MAD: $\tau = 0.9$}
    \end{subfigure}
    
    \vspace{-0.05in}
    
    \begin{subfigure}[b]{0.18\linewidth}
        \includegraphics[width=\textwidth]{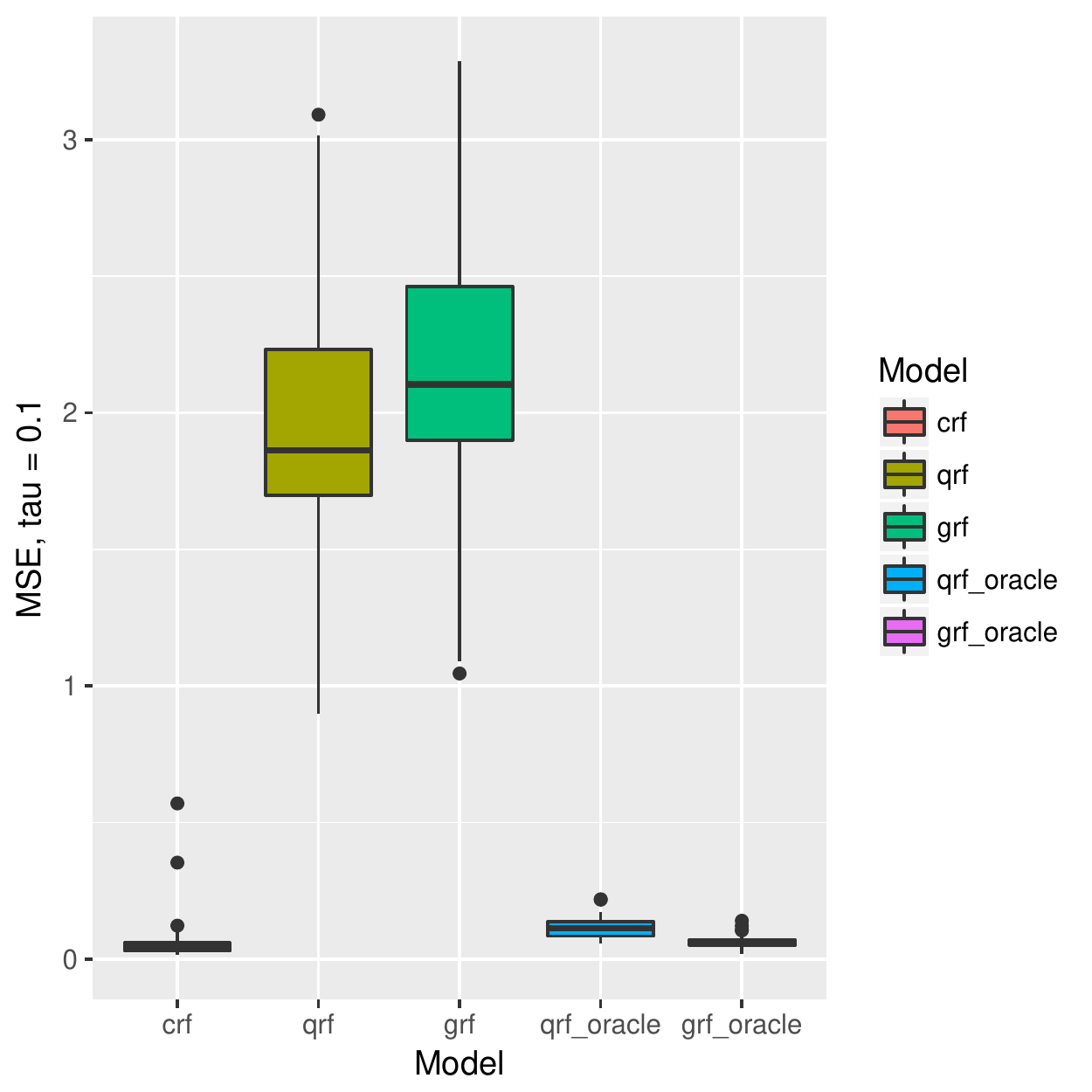}
        \caption{MSE: $\tau = 0.1$}
    \end{subfigure}
    ~
    \begin{subfigure}[b]{0.18\linewidth}
        \includegraphics[width=\textwidth]{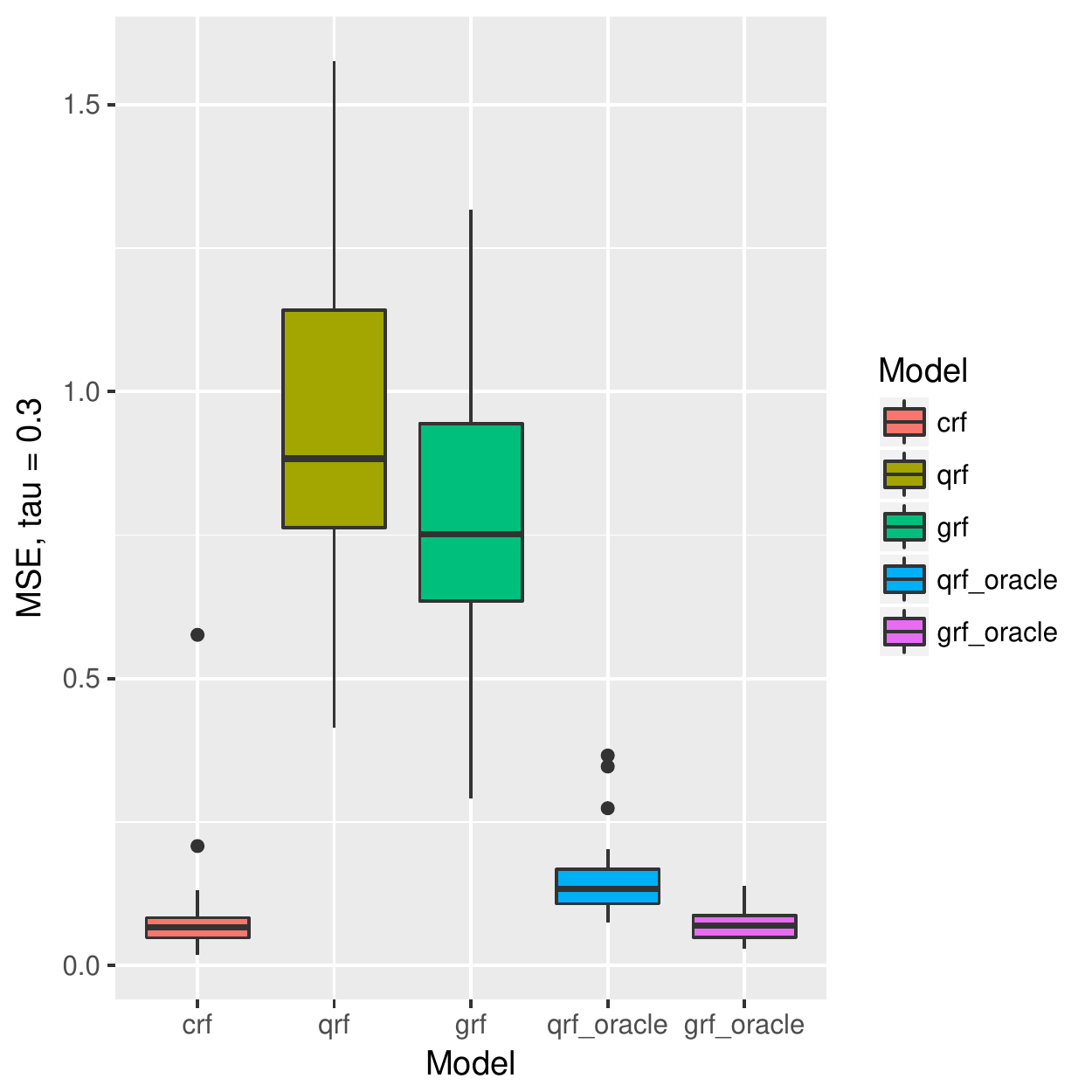}
        \caption{MSE: $\tau = 0.3$}
    \end{subfigure}
    ~ %add desired spacing between images, e. g. ~, \quad, \qquad, \hfill etc. 
      %(or a blank line to force the subfigure onto a new line)
    \begin{subfigure}[b]{0.18\linewidth}
        \includegraphics[width=\textwidth]{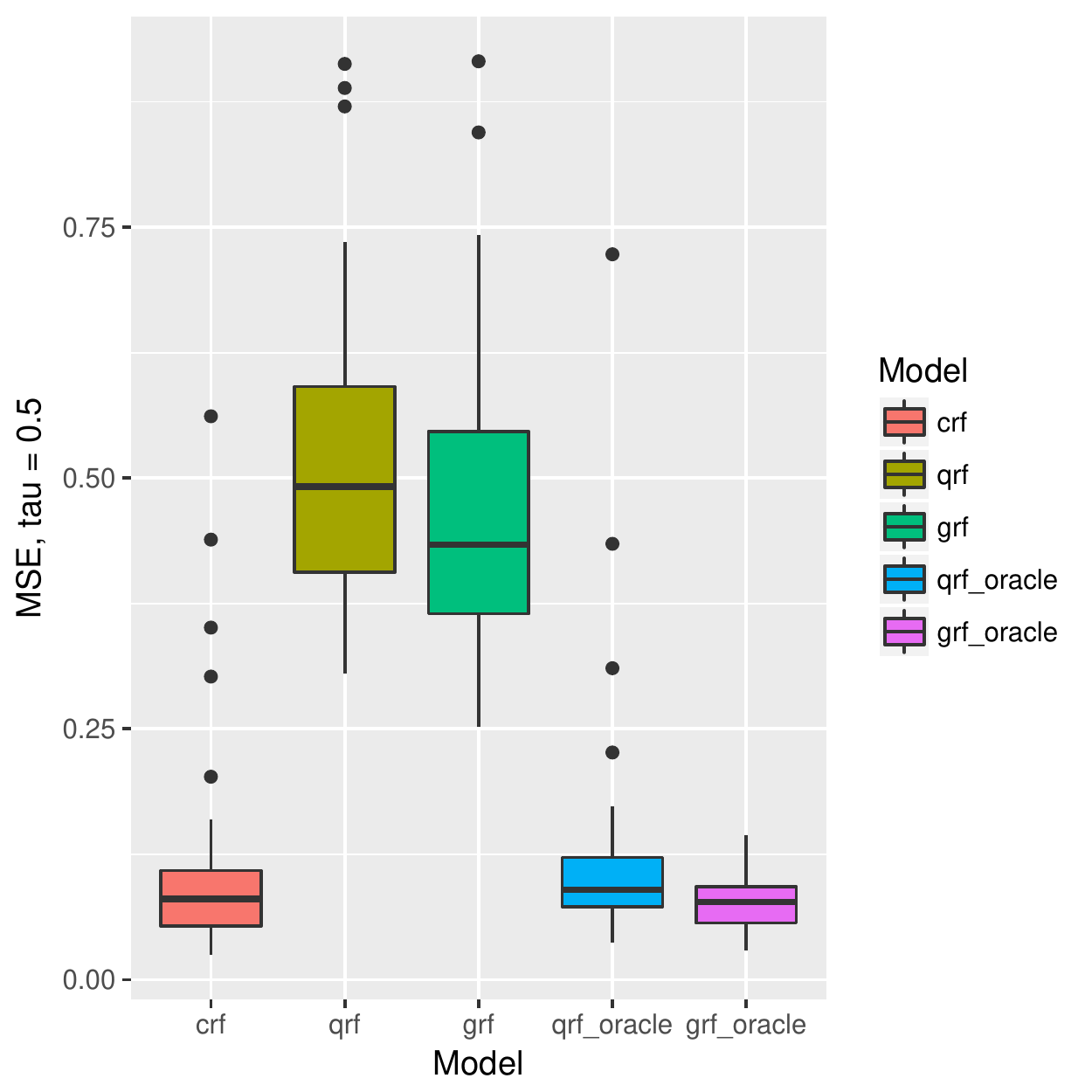}
        \caption{MSE: $\tau = 0.5$}
    \end{subfigure}
    ~
    \begin{subfigure}[b]{0.18\linewidth}
        \includegraphics[width=\textwidth]{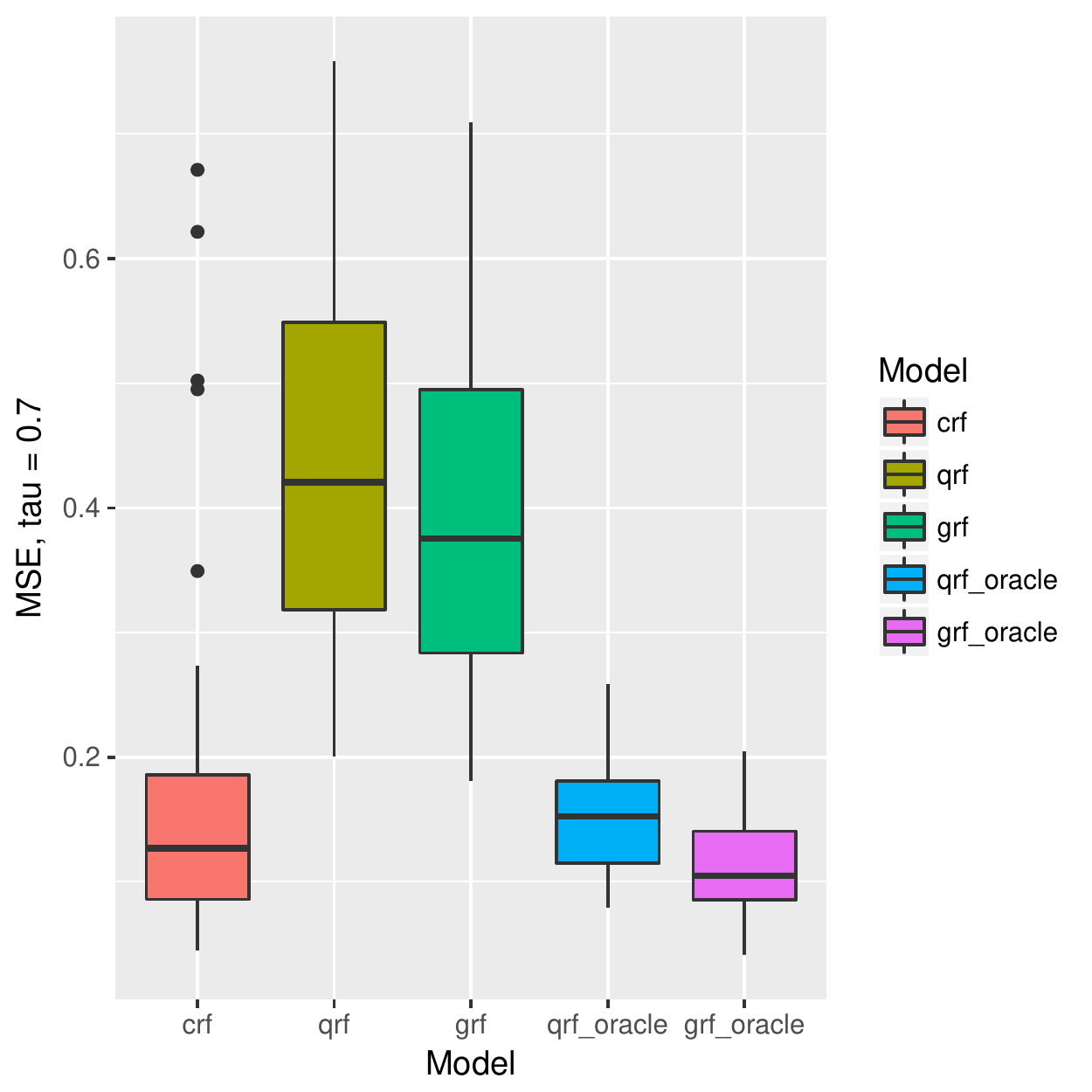}
        \caption{MSE: $\tau = 0.7$}
    \end{subfigure}
    ~
    \begin{subfigure}[b]{0.18\linewidth}
        \includegraphics[width=\textwidth]{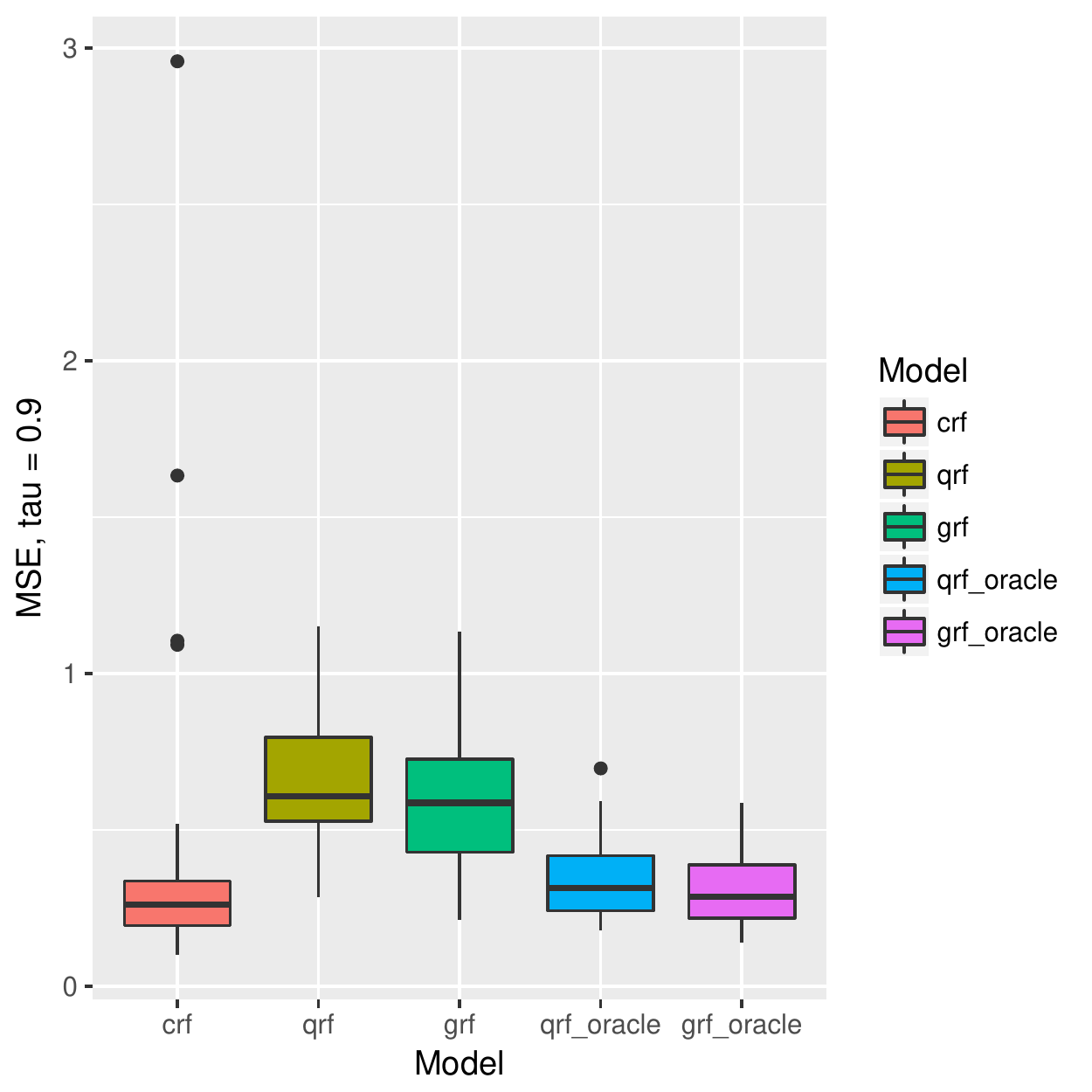}
        \caption{MSE: $\tau = 0.9$}
    \end{subfigure}
    
    \vspace{-0.05in}
    
    \begin{subfigure}[b]{0.182\linewidth}
        \includegraphics[width=\textwidth]{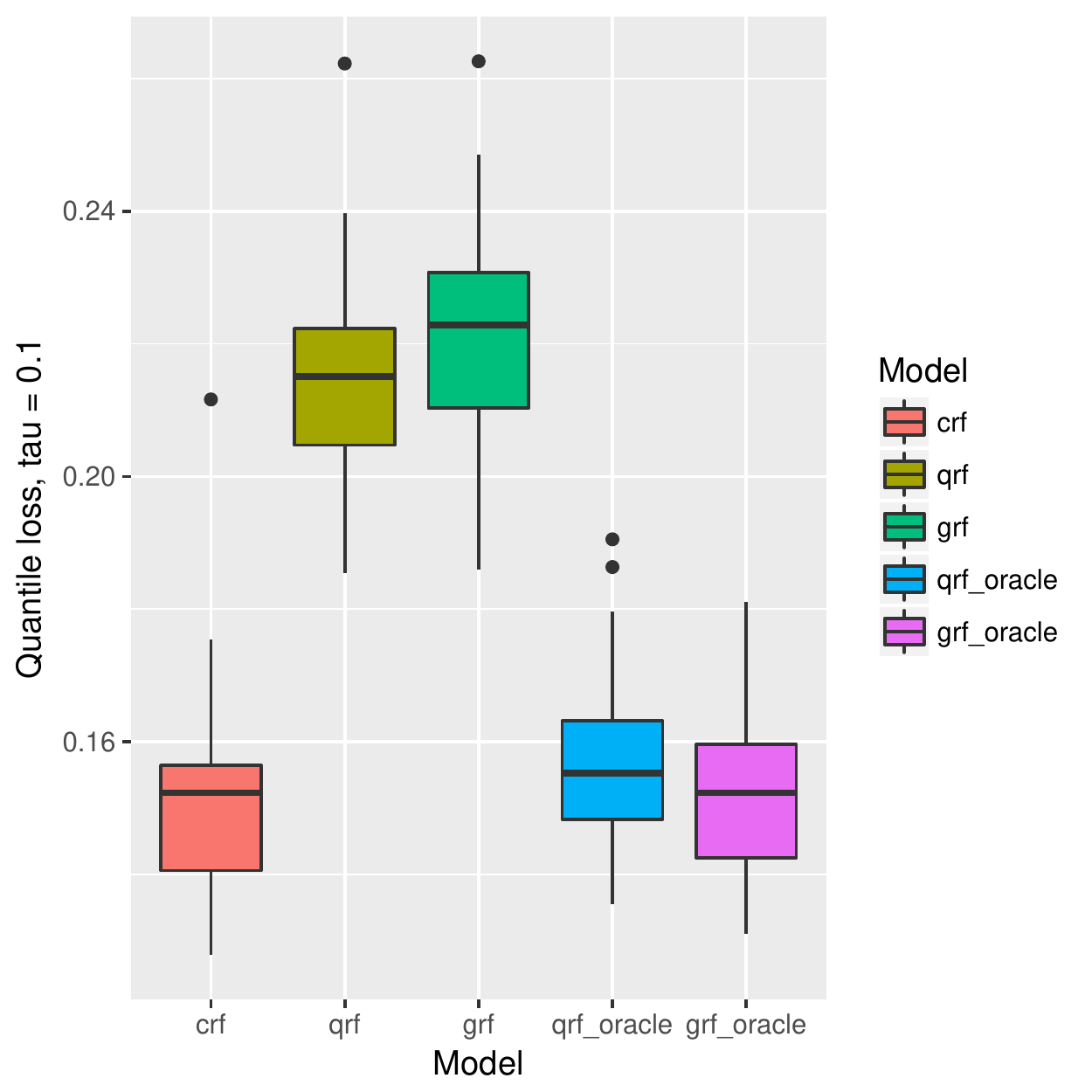}
        \caption{Quantile loss: $\tau = 0.1$}
    \end{subfigure}
    ~
    \begin{subfigure}[b]{0.182\linewidth}
        \includegraphics[width=\textwidth]{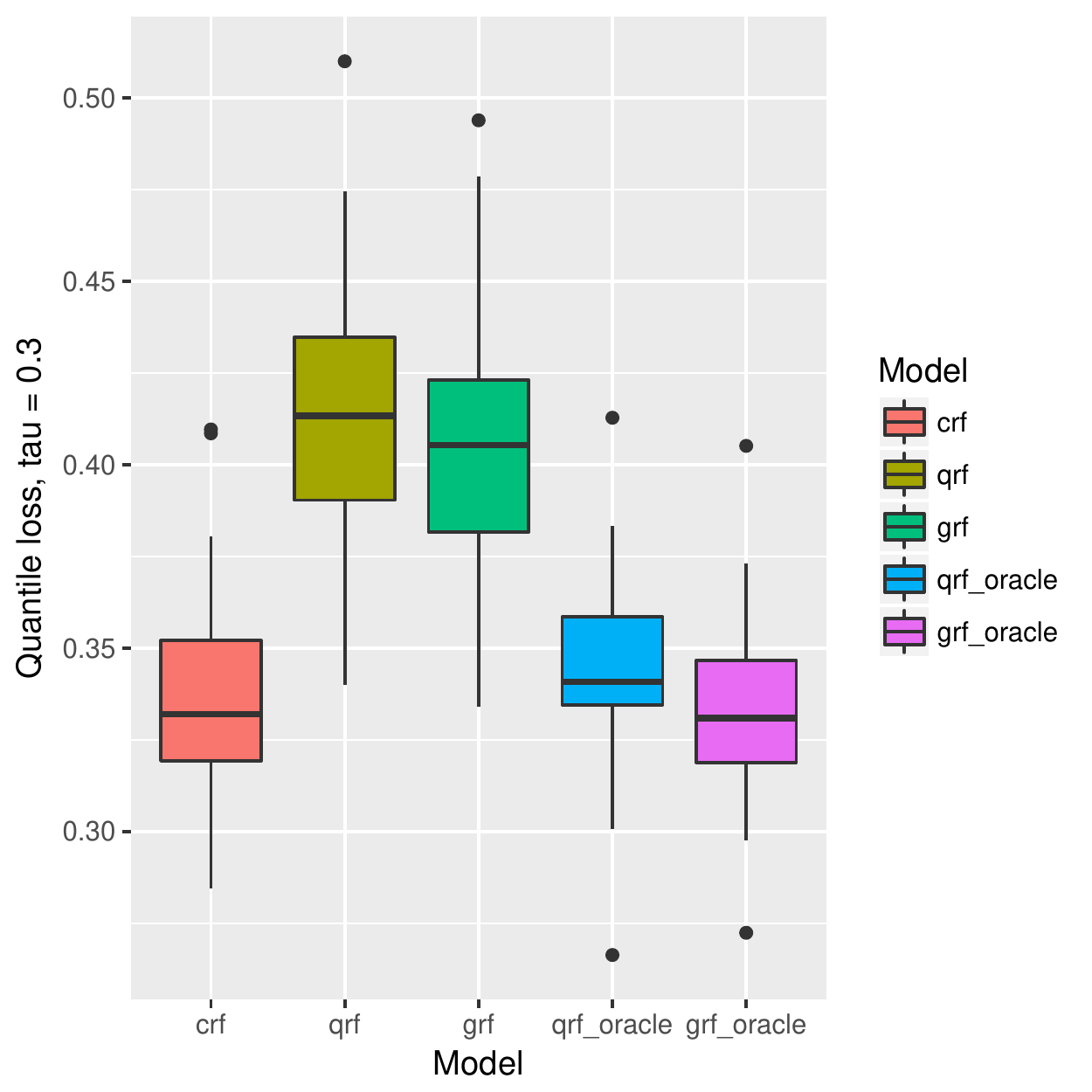}
        \caption{Quantile loss: $\tau = 0.3$}
    \end{subfigure}
    ~ %add desired spacing between images, e. g. ~, \quad, \qquad, \hfill etc. 
      %(or a blank line to force the subfigure onto a new line)
    \begin{subfigure}[b]{0.182\linewidth}
        \includegraphics[width=\textwidth]{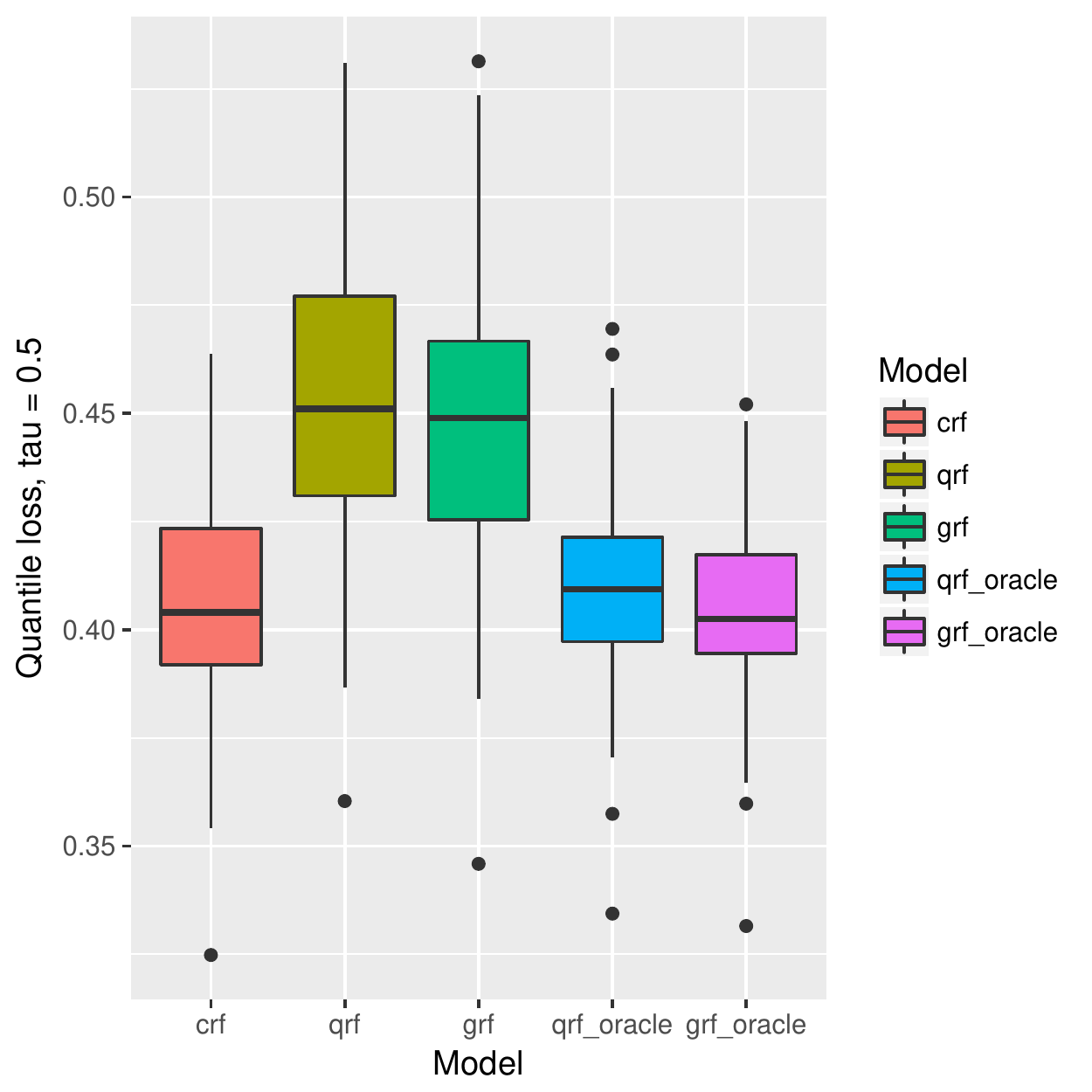}
        \caption{Quantile loss: $\tau = 0.5$}
    \end{subfigure}
    ~
    \begin{subfigure}[b]{0.182\linewidth}
        \includegraphics[width=\textwidth]{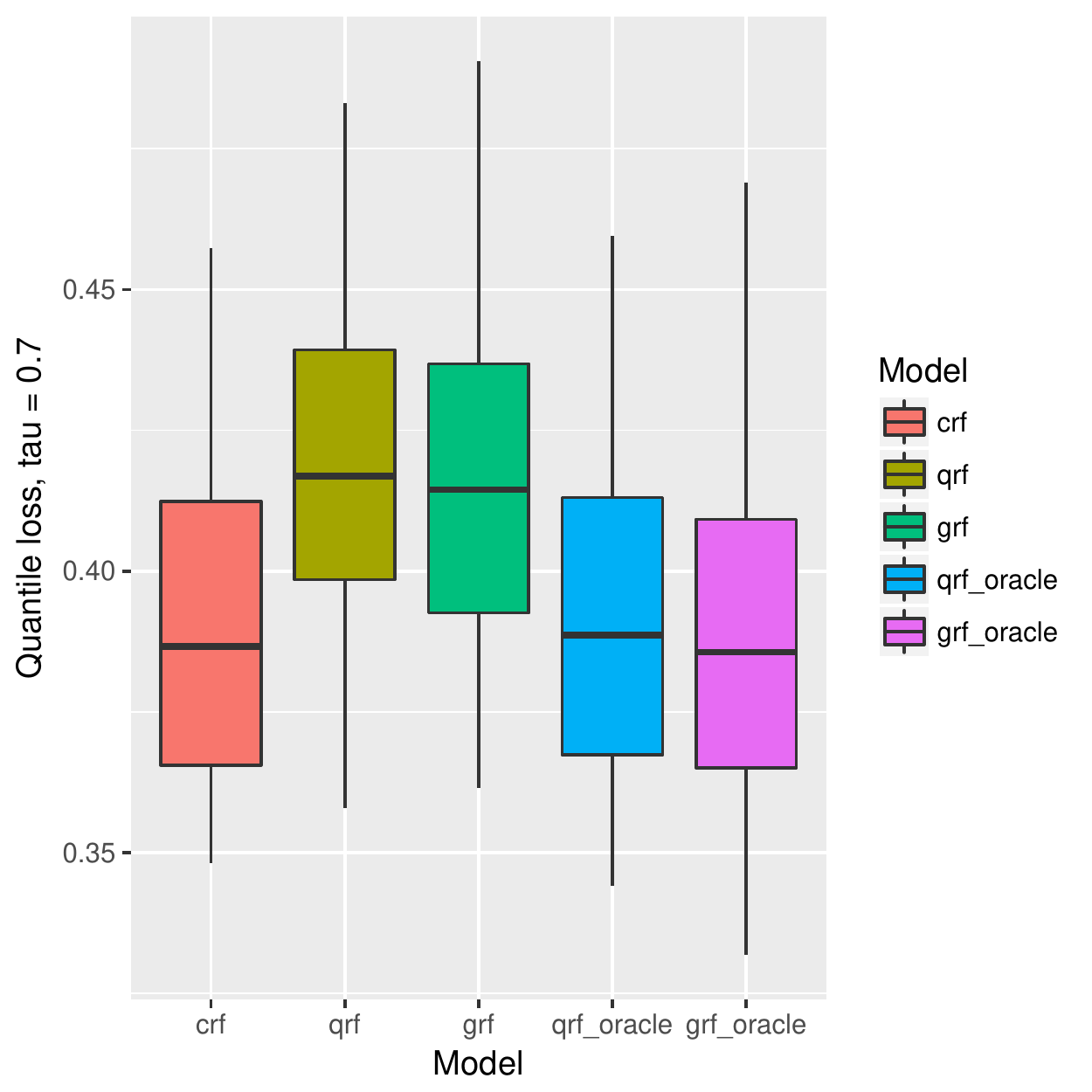}
        \caption{Quantile loss: $\tau = 0.7$}
    \end{subfigure}
    ~
    \begin{subfigure}[b]{0.182\linewidth}
        \includegraphics[width=\textwidth]{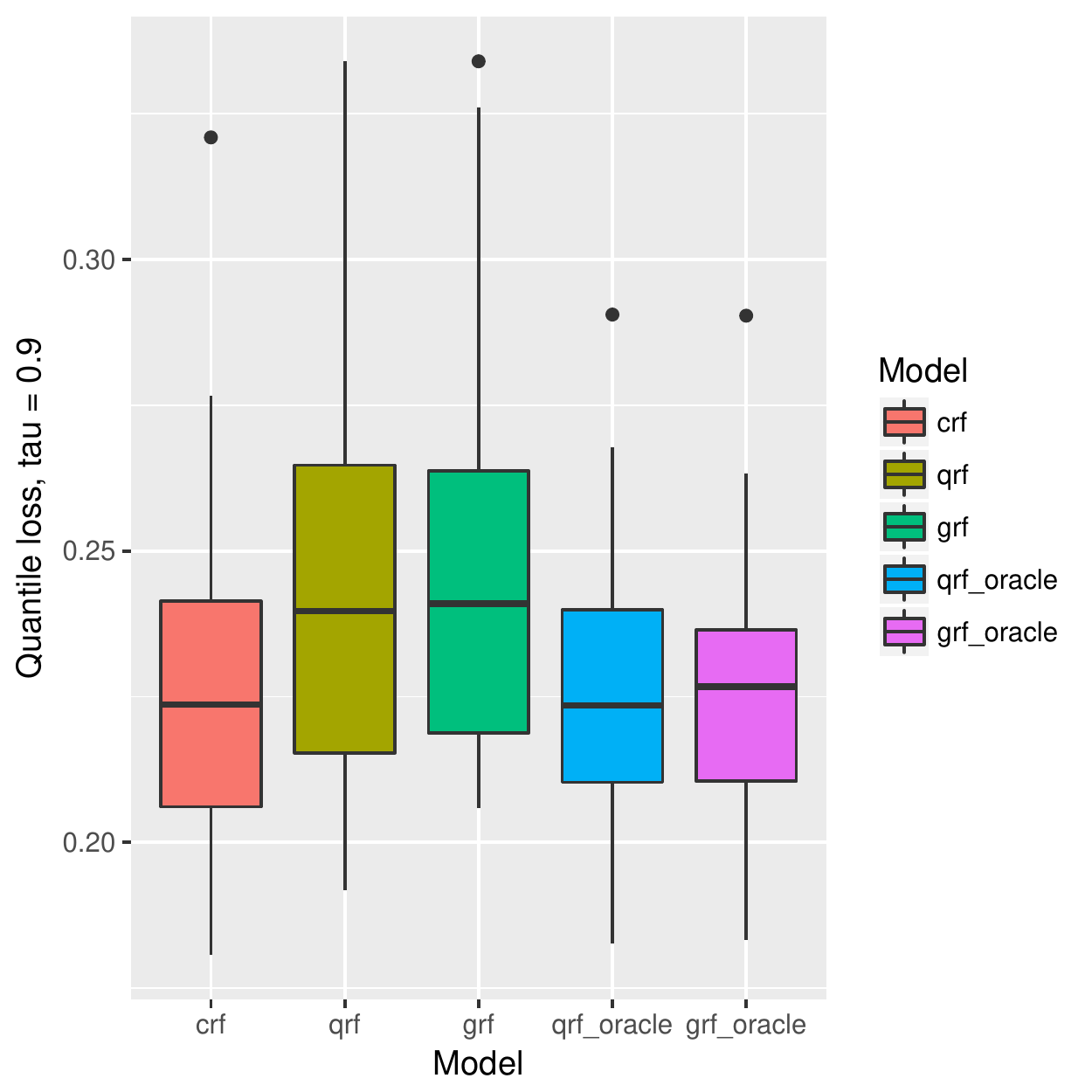}
        \caption{Quantile loss: $\tau = 0.9$}
    \end{subfigure}
    
    \vspace{-0.05in}
    
    \begin{subfigure}[b]{0.18\linewidth}
        \includegraphics[width=\textwidth]{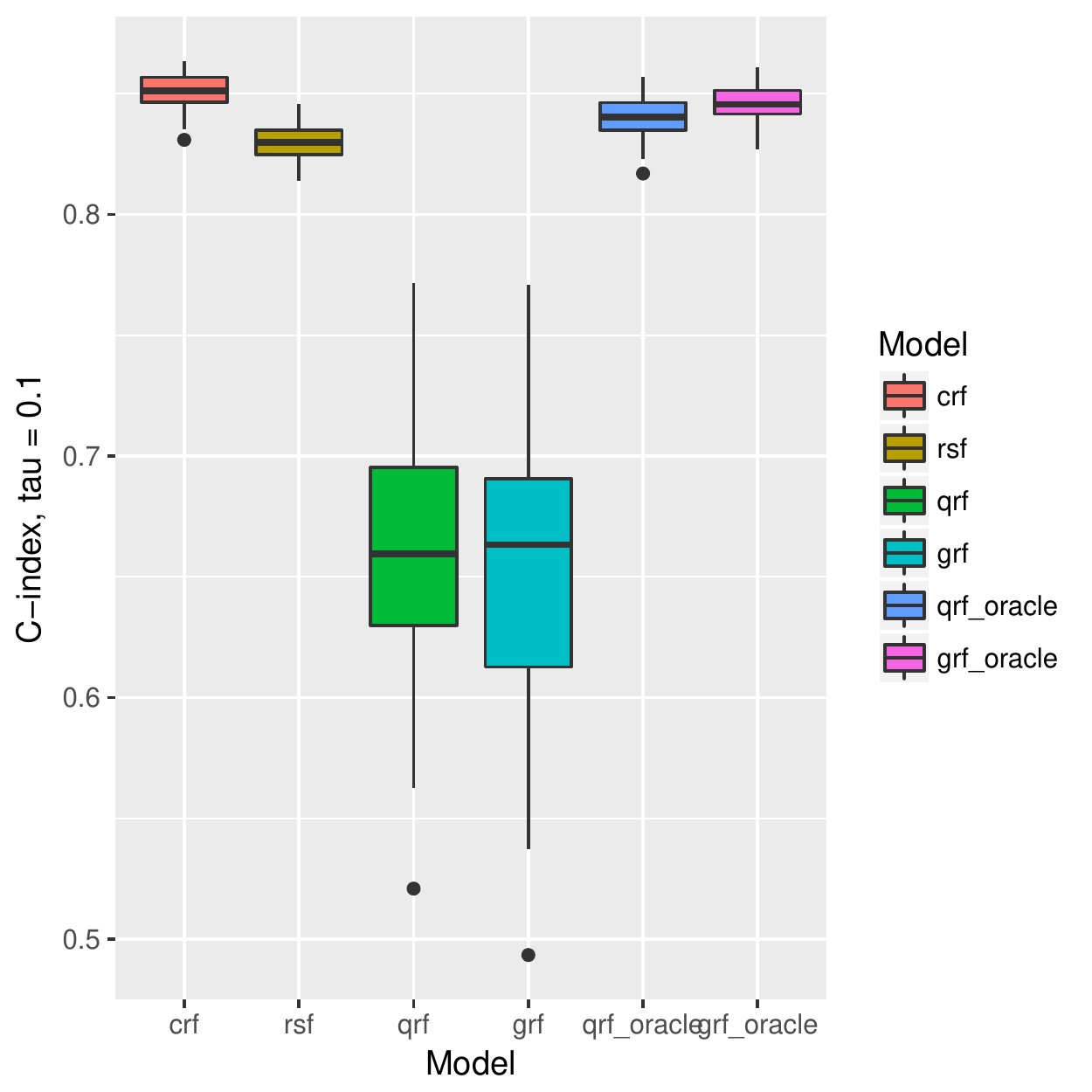}
        \caption{C-index: $\tau = 0.1$}
    \end{subfigure}
    ~
    \begin{subfigure}[b]{0.18\linewidth}
        \includegraphics[width=\textwidth]{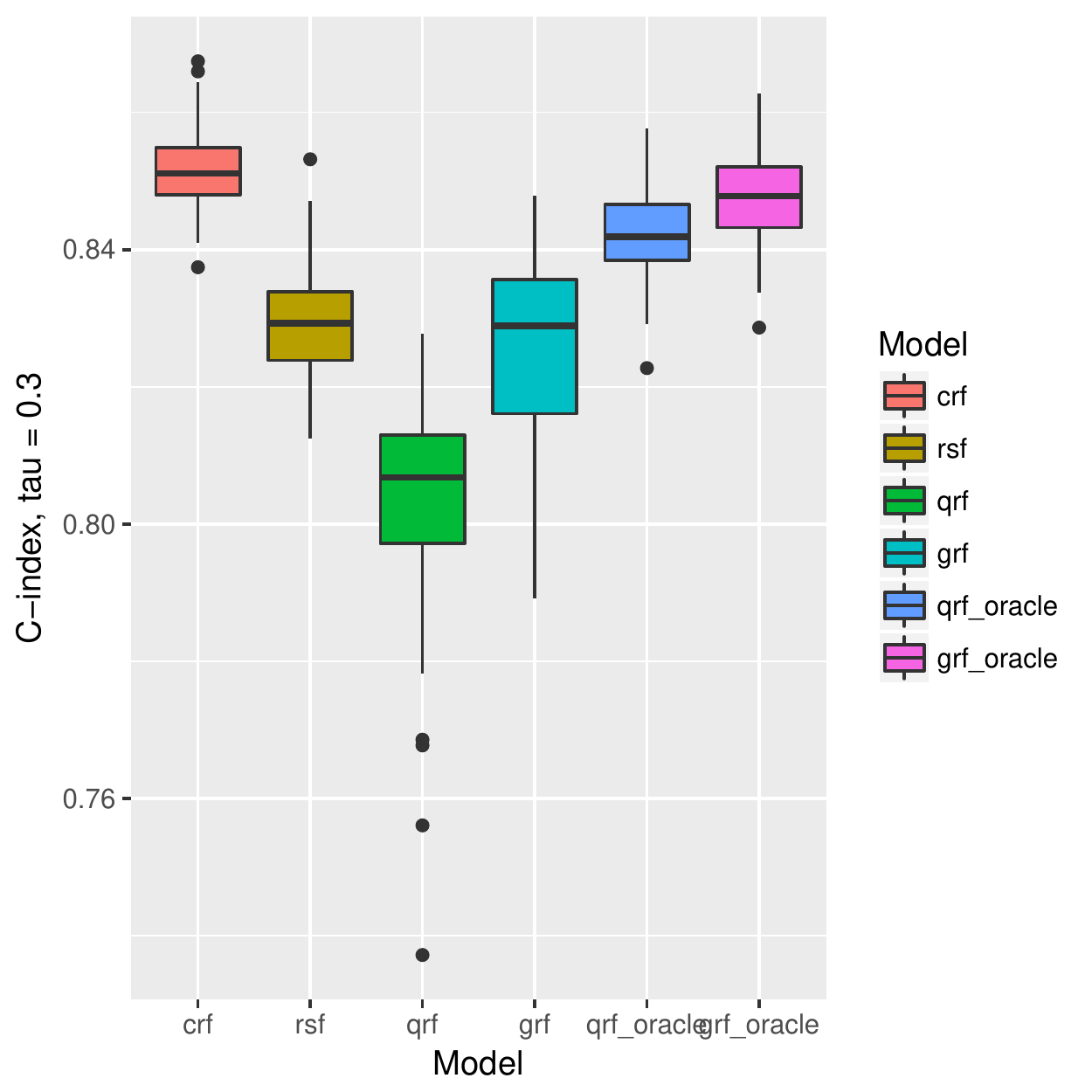}
        \caption{C-index: $\tau = 0.3$}
    \end{subfigure}
    ~ %add desired spacing between images, e. g. ~, \quad, \qquad, \hfill etc. 
      %(or a blank line to force the subfigure onto a new line)
    \begin{subfigure}[b]{0.18\linewidth}
        \includegraphics[width=\textwidth]{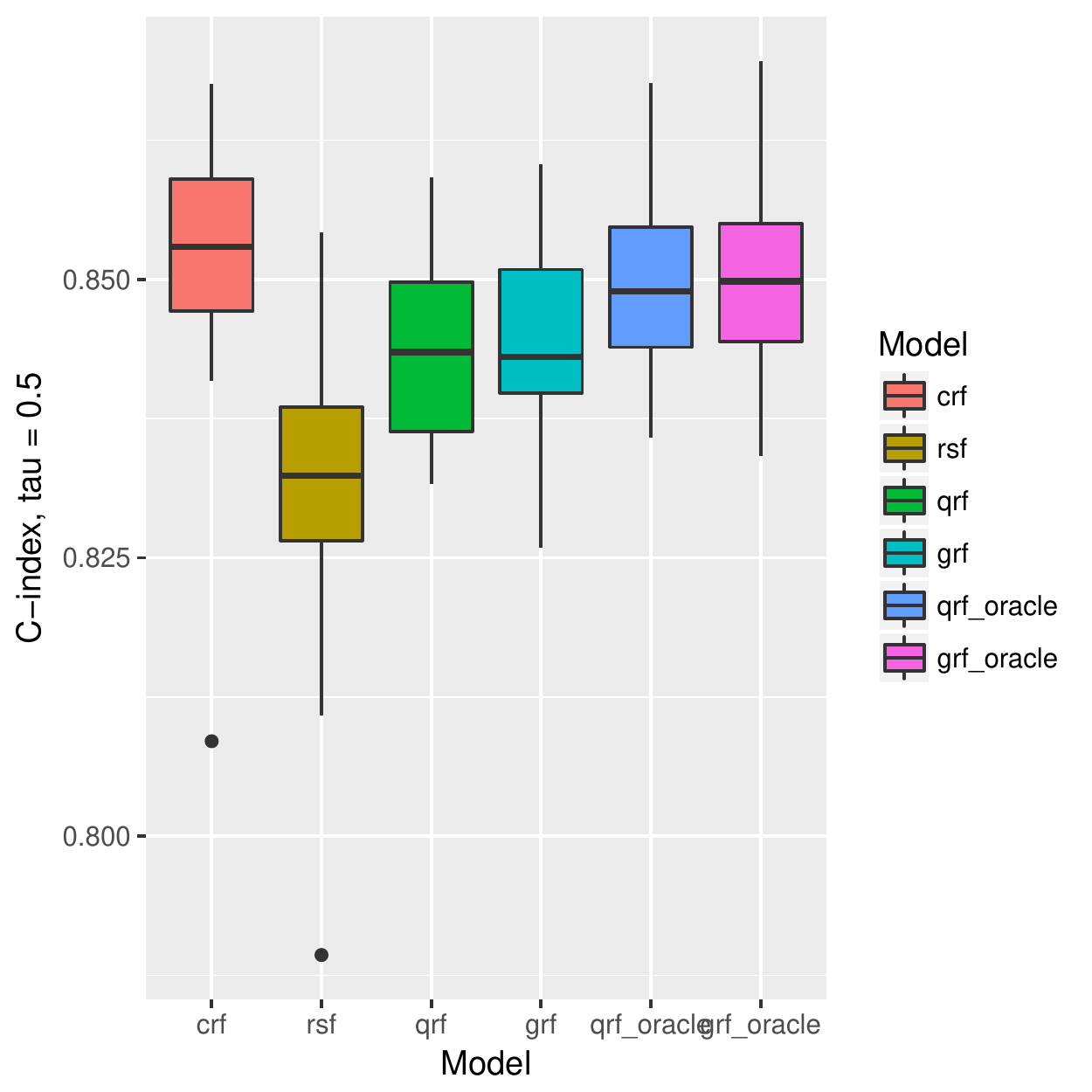}
        \caption{C-index: $\tau = 0.5$}
    \end{subfigure}
    ~
    \begin{subfigure}[b]{0.18\linewidth}
        \includegraphics[width=\textwidth]{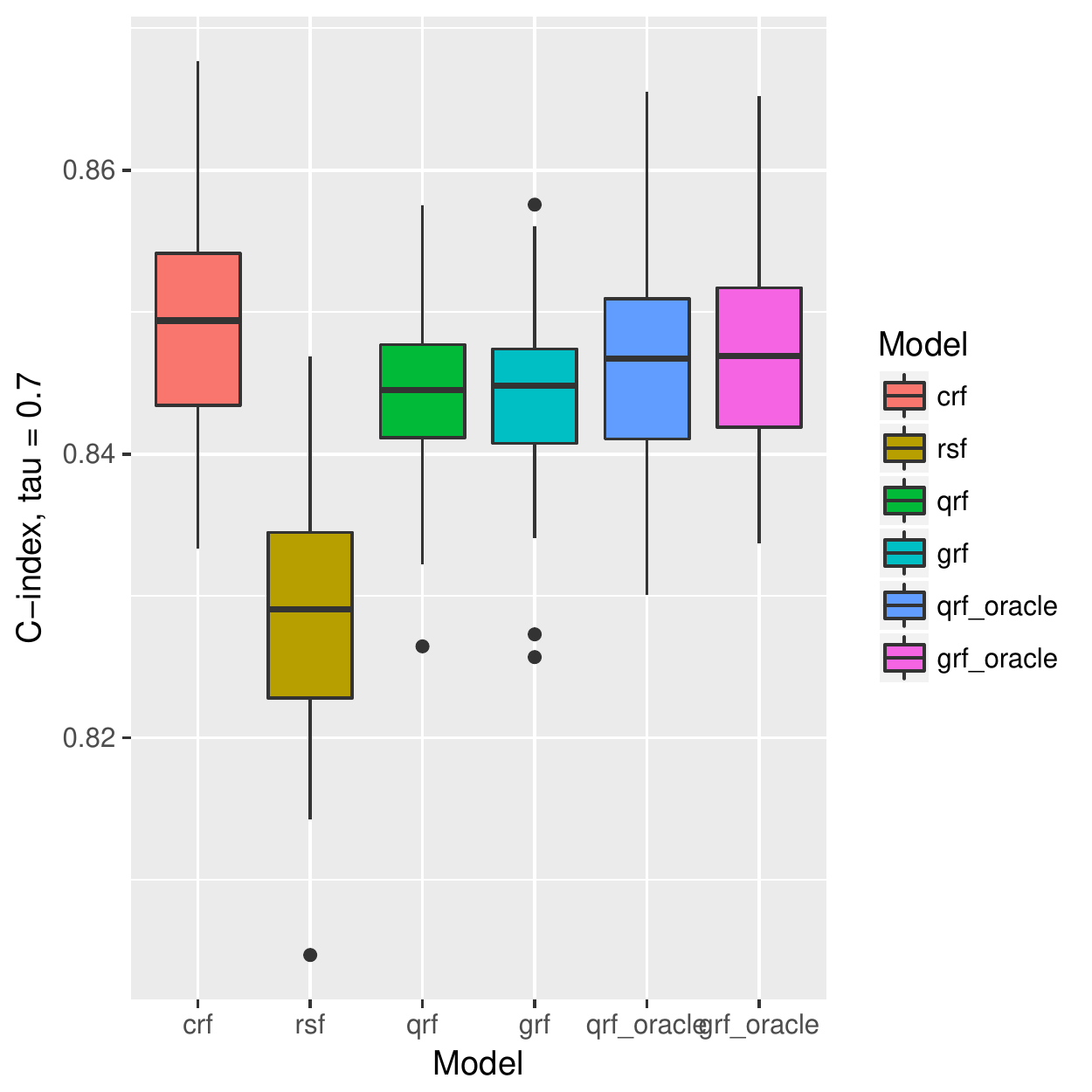}
        \caption{C-index: $\tau = 0.7$}
    \end{subfigure}
    ~
    \begin{subfigure}[b]{0.18\linewidth}
        \includegraphics[width=\textwidth]{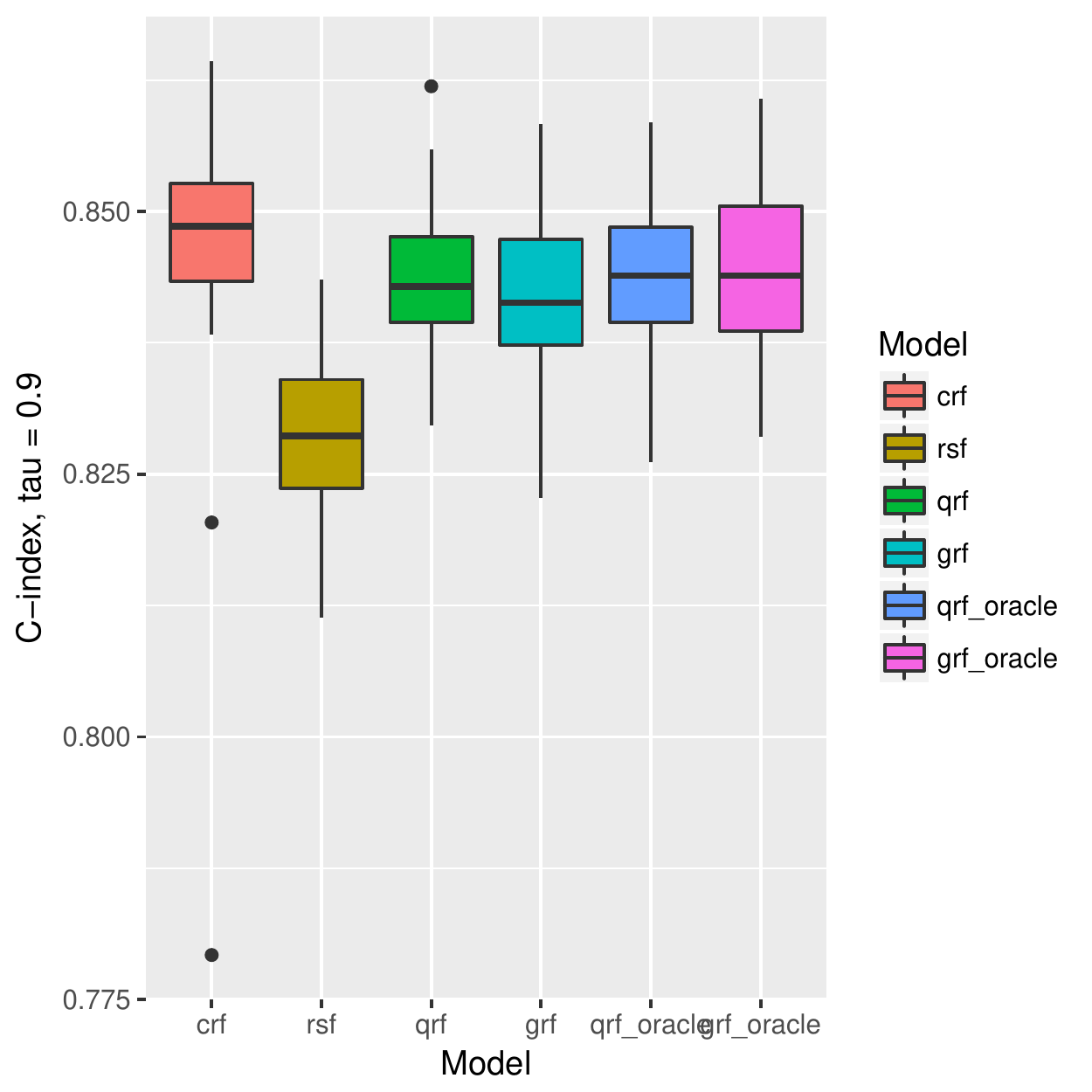}
        \caption{C-index: $\tau = 0.9$}
    \end{subfigure}
    
    \caption{One-dimensional AFT model box plots with $n=300$ and $B=1000$. For the metrics MSE \eqref{eq:L_MSE}, MAD \eqref{eq:L_MAD} and quantile loss \eqref{eq:L_quantile}, the smaller the value is the better. For C-index, the larger it is the better.}
    \label{fig:aft_1d_box}
\end{figure}

\subsubsection{Comparison of different conditional survival estimators}
In this section, we will compare the two different conditional survival function estimators \eqref{eq:KM_kNN} and \eqref{eq:Beran_rf}. We generate training data and test data from the one-dimensional AFT model defined in the previous section, but with two different censoring rate:
\begin{itemize}
    \item $C \sim \textrm{Exp}(\lambda = 0.08)$, in this case, the censoring rate is about $20\%$.
    \item $C \sim \textrm{Exp}(\lambda = 0.20)$, in this case, the censoring rate is about $50\%$.
\end{itemize}
We then choose four test points $\{x_1=0.4, x_2=0.8, x_3=1.2, x_4=1.6\}$, and then plot out the conditional survival function estimators $\hat{G}(q|x_i)$ by the two different methods \eqref{eq:KM_kNN} and \eqref{eq:Beran_rf} on these four points. The results are shown in Figure \ref{fig:g_comparison} and \ref{fig:g_comparison_high} for three different training sample sizes $n \in \{300, 2000, 5000\}$. For the nearest neighbor estimator \eqref{eq:KM_kNN}, we set the number of neighbors to be $n/10$, which is also the node size we choose.

We can observe that when $n$ increases, two curves become closer and are both good approximations of the true survival curve. But the first method \eqref{eq:KM_kNN} does have an extra tuning parameter $k$ -- the number of nearest neighbors, so in the experiments, we always choose to use the second estimator \eqref{eq:Beran_rf}, which is more adaptive and parameter free.

Note that the estimated survival function will degenerate at the tail of the distribution when the test point $x$ is small; see  the first two columns in Figure \ref{fig:g_comparison} and \ref{fig:g_comparison_high}. This is a common phenomenon even for the regular KM estimator because there is no censored observations beyond some time point. In the AFT model, the conditional distribution of the latent variable depends on the location $x$. When $x$ is small, the conditional mean of $T$ is also small, and we could not observe most of the censoring values where $C_i > T_i$, leading to degenerated survival curves. However, if we continue increasing the sample size $n$, we should be able to recover the entire curve even for smaller $x$. In fact, when we increase the censoring level from $20\%$ (Figure \ref{fig:g_comparison}) to $50\%$ (Figure \ref{fig:g_comparison_high}), we find that both estimators give better performance because we can observe more censored values.

\begin{figure}[!htb]
    \small
    \centering
    \begin{subfigure}[b]{0.23\linewidth}
        \includegraphics[width=\textwidth]{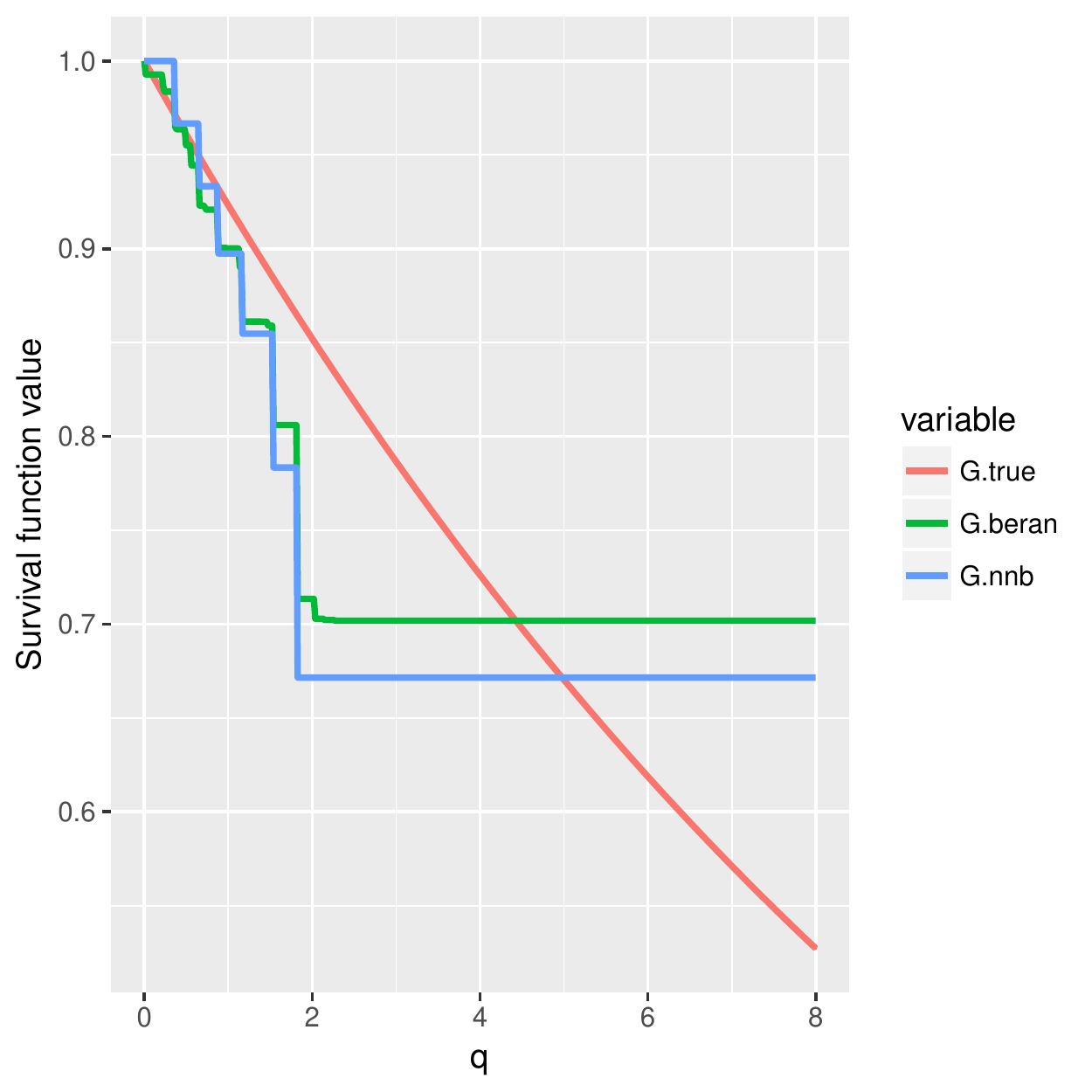}
        \caption{sample size: $300$}
    \end{subfigure}
    ~
    \begin{subfigure}[b]{0.23\linewidth}
        \includegraphics[width=\textwidth]{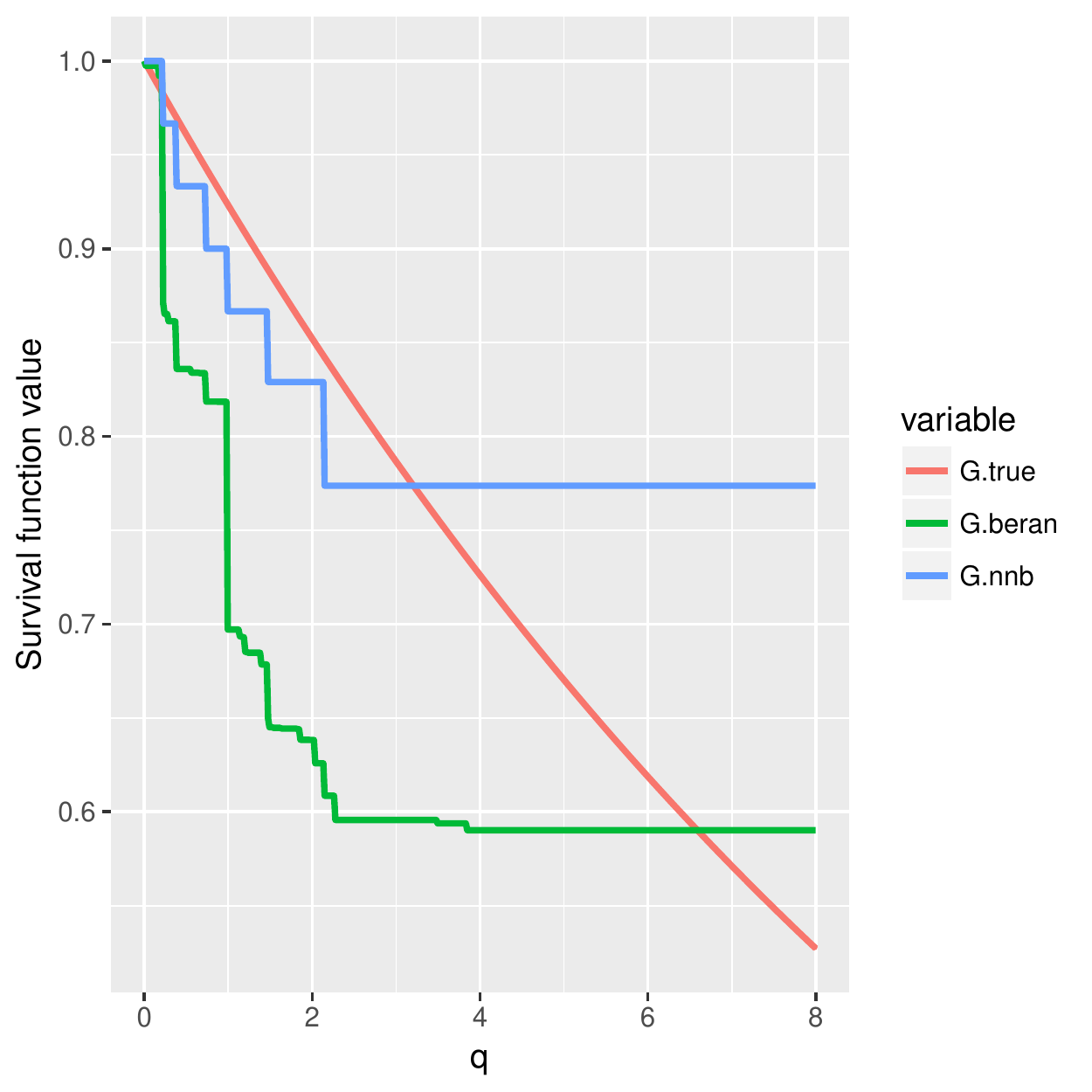}
        \caption{sample size: $300$}
    \end{subfigure}
    ~
    \begin{subfigure}[b]{0.23\linewidth}
        \includegraphics[width=\textwidth]{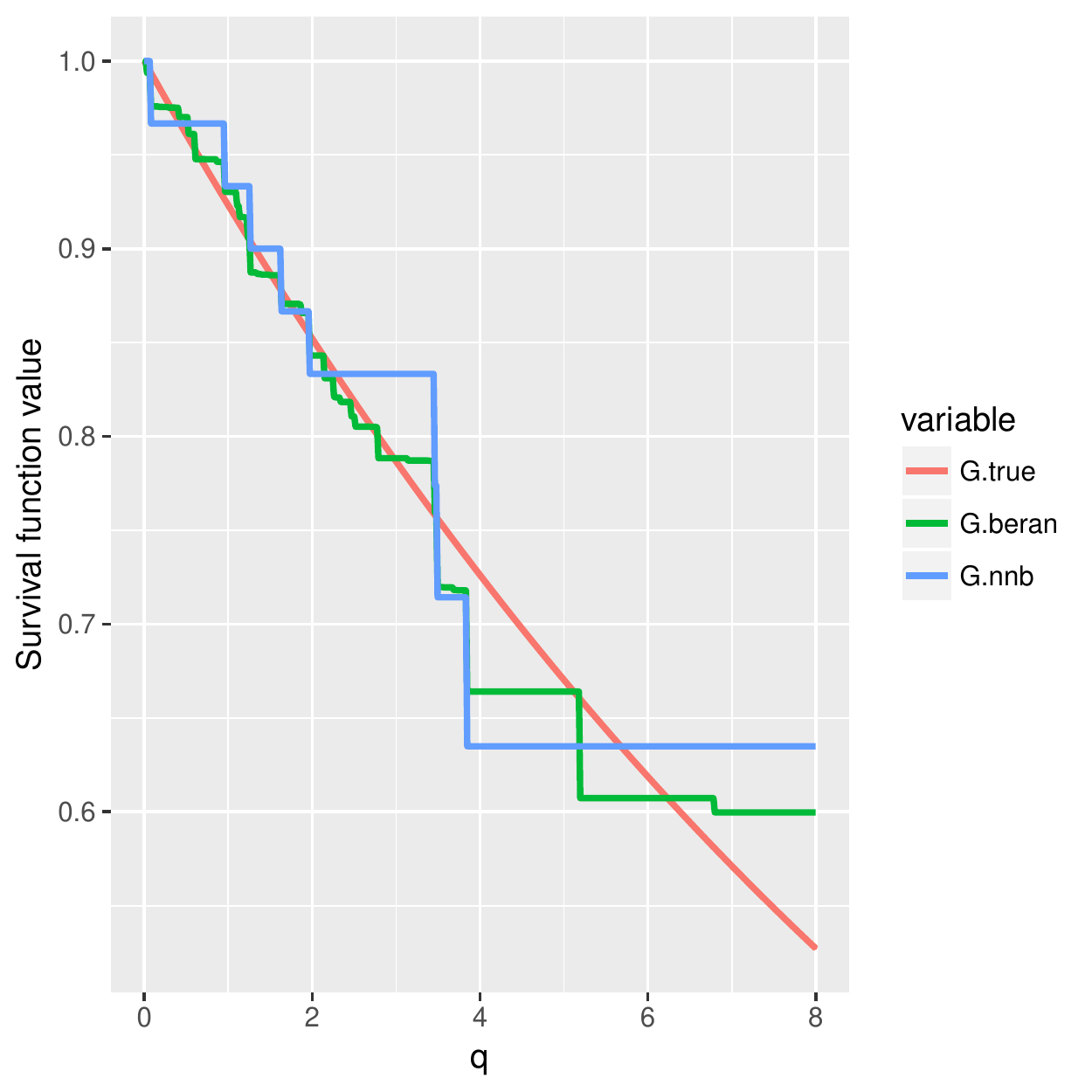}
        \caption{sample size: $300$}
    \end{subfigure}
    ~
    \begin{subfigure}[b]{0.23\linewidth}
        \includegraphics[width=\textwidth]{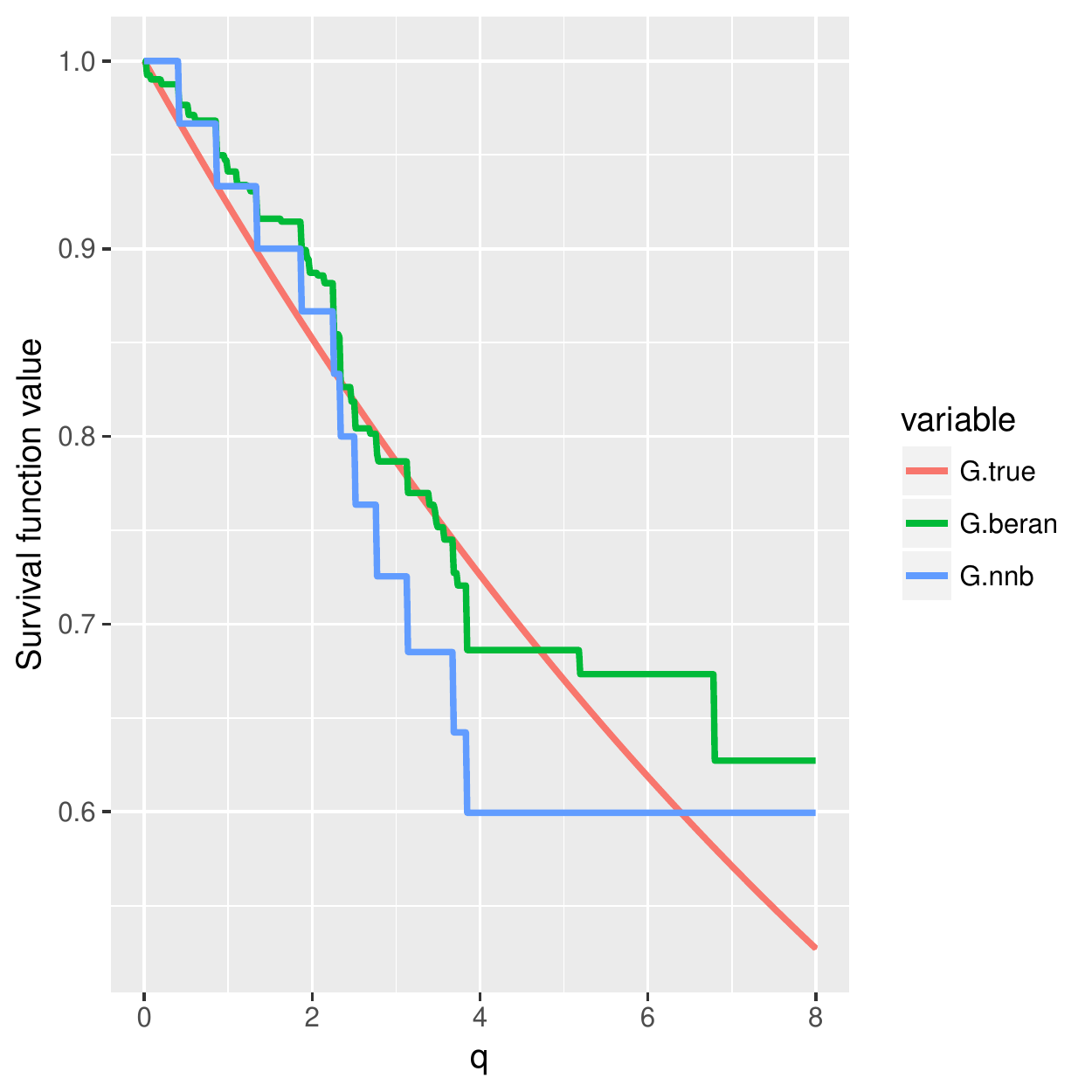}
        \caption{sample size: $300$}
    \end{subfigure}
    
    \vspace{-0.05in}
    
    \begin{subfigure}[b]{0.23\linewidth}
        \includegraphics[width=\textwidth]{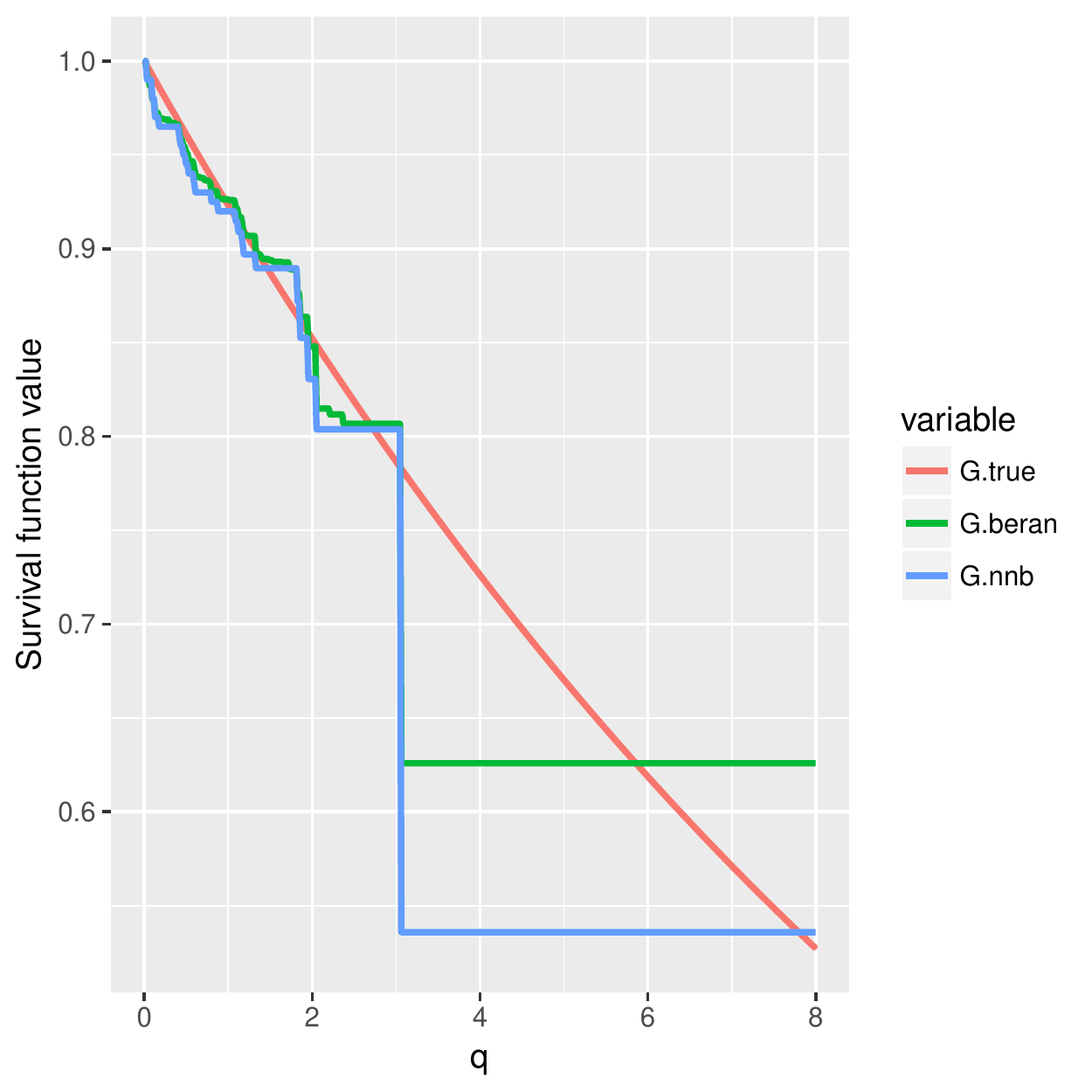}
        \caption{sample size: $2000$}
    \end{subfigure}
    ~
    \begin{subfigure}[b]{0.23\linewidth}
        \includegraphics[width=\textwidth]{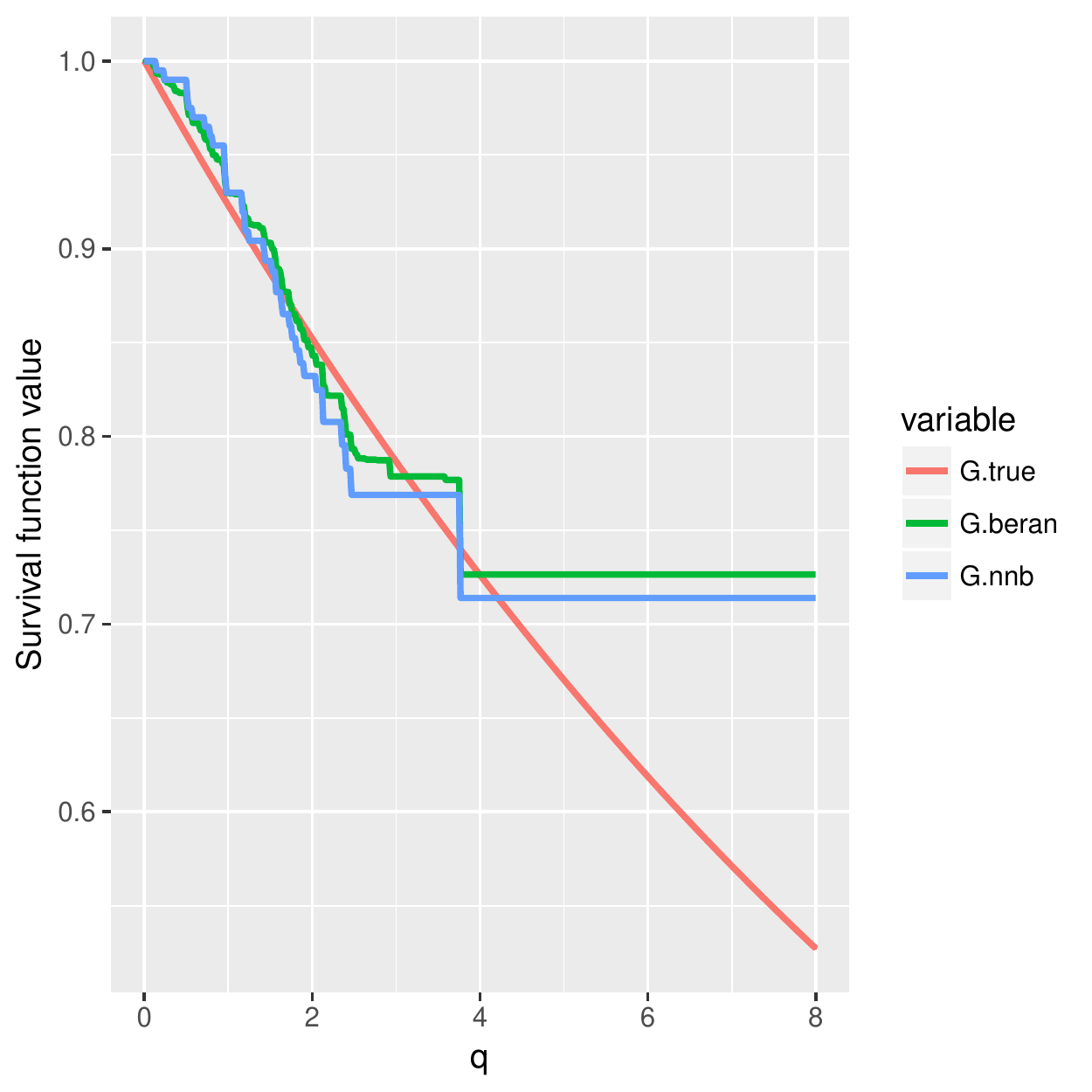}
        \caption{sample size: $2000$}
    \end{subfigure}
    ~
    \begin{subfigure}[b]{0.23\linewidth}
        \includegraphics[width=\textwidth]{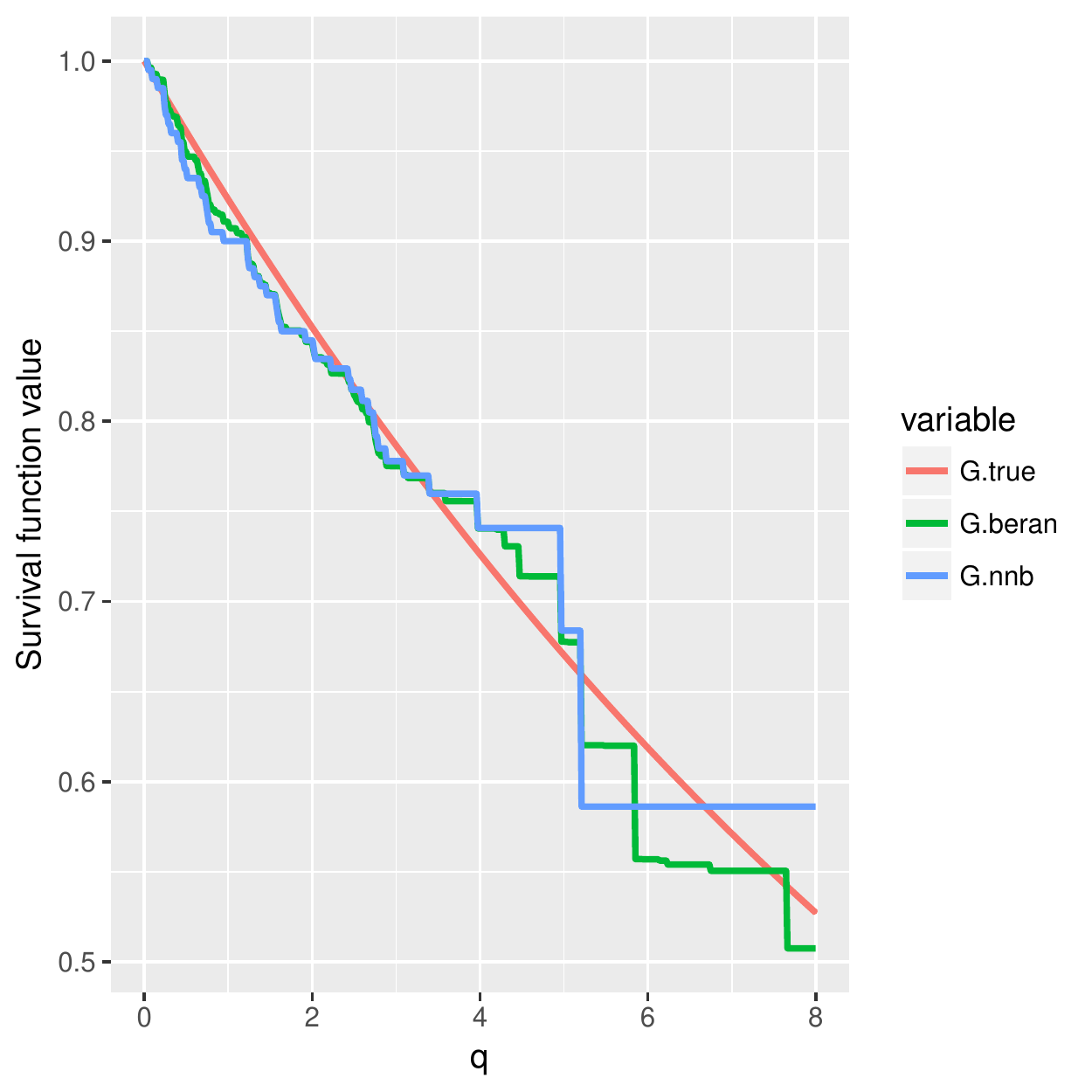}
        \caption{sample size: $2000$}
    \end{subfigure}
    ~
    \begin{subfigure}[b]{0.23\linewidth}
        \includegraphics[width=\textwidth]{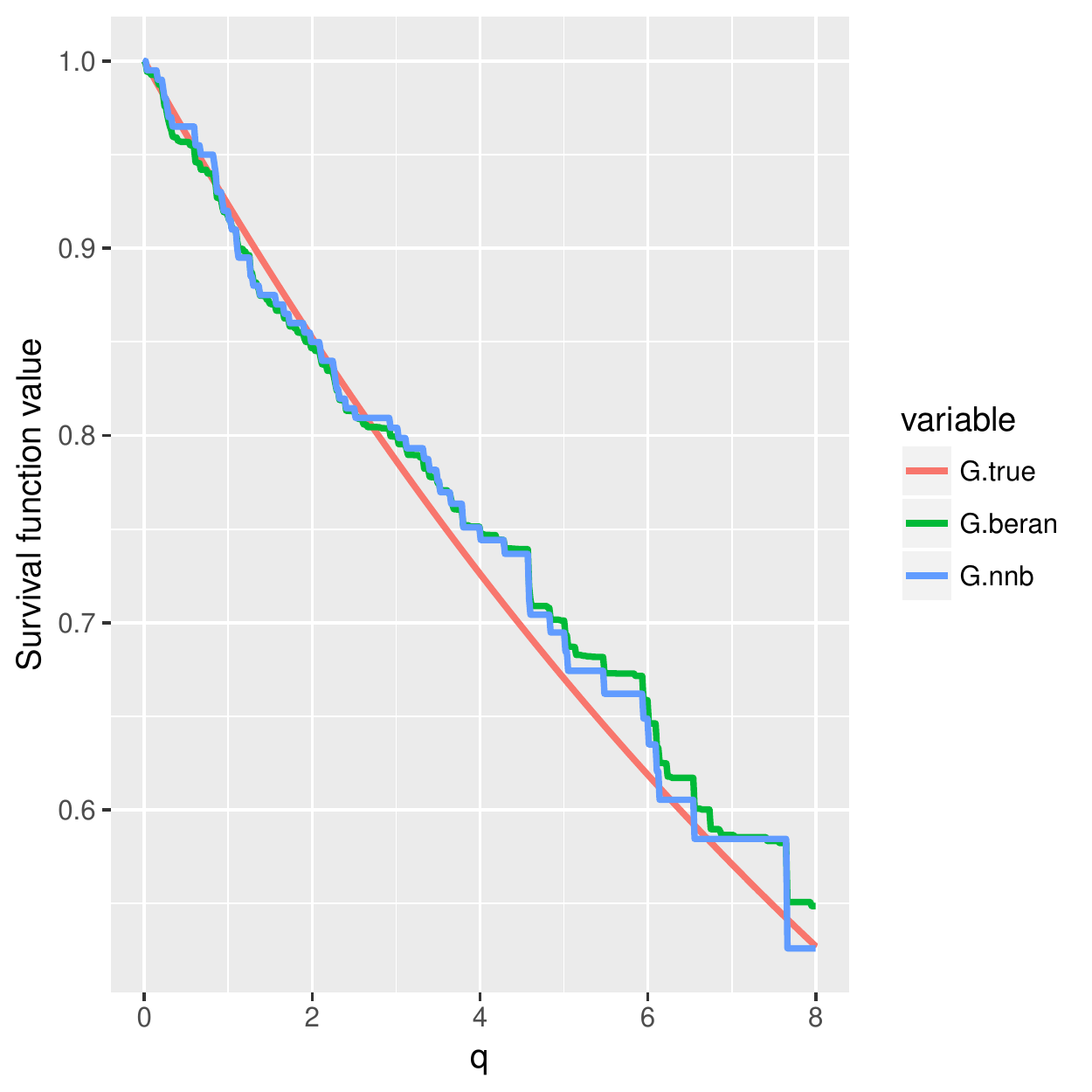}
        \caption{sample size: $2000$}
    \end{subfigure}
    
    \vspace{-0.05in}
    
    \begin{subfigure}[b]{0.23\linewidth}
        \includegraphics[width=\textwidth]{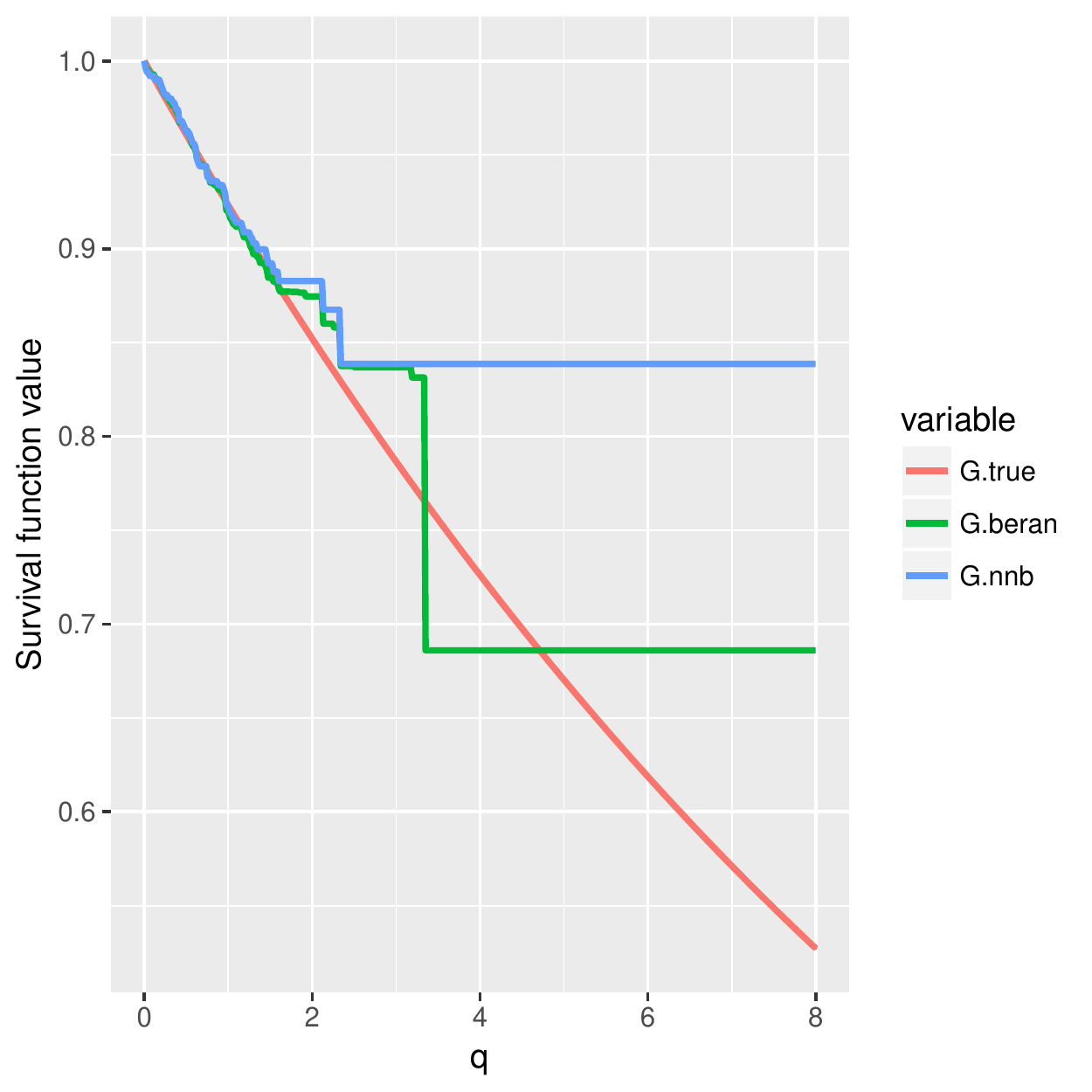}
        \caption{sample size: $5000$}
    \end{subfigure}
    ~
    \begin{subfigure}[b]{0.23\linewidth}
        \includegraphics[width=\textwidth]{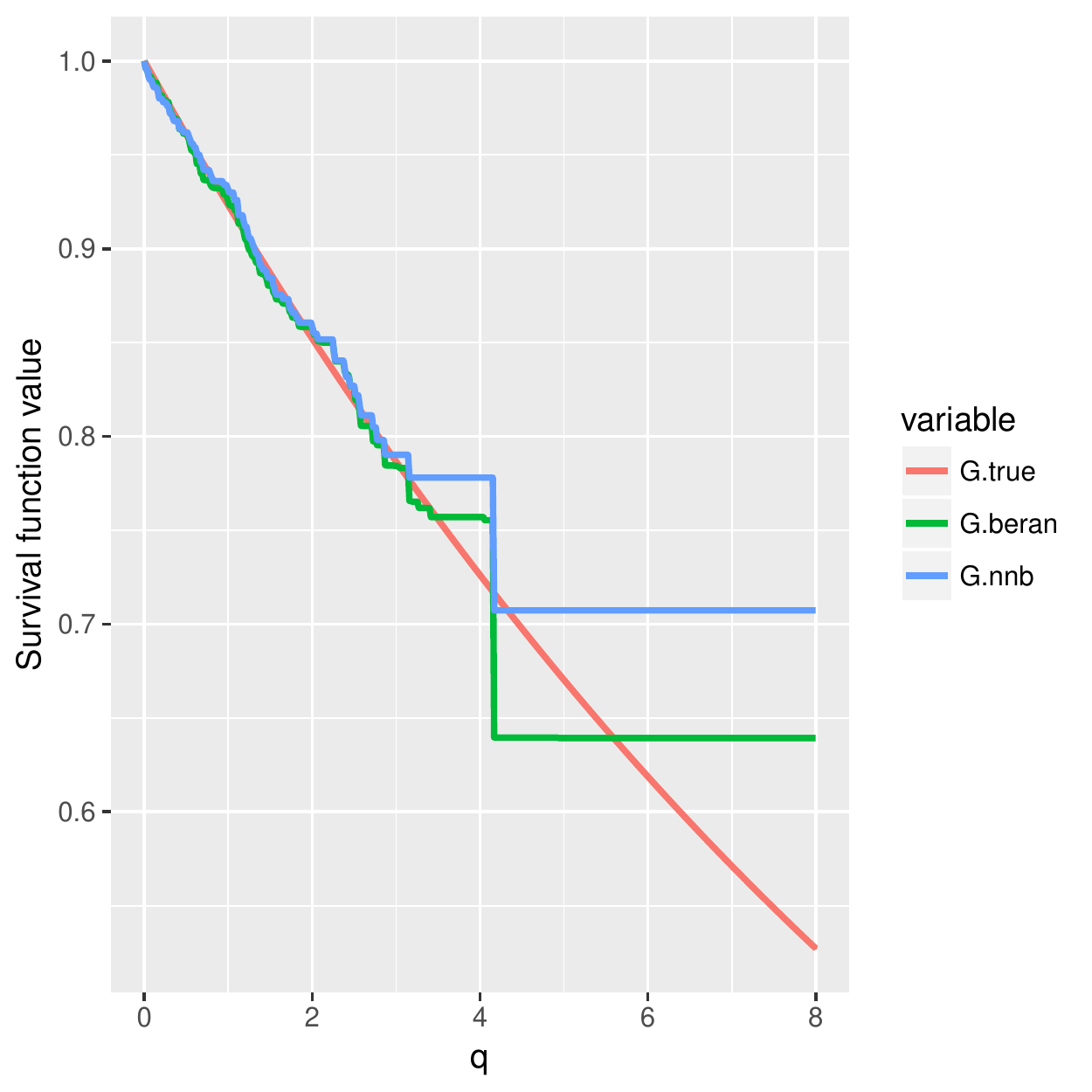}
        \caption{sample size: $5000$}
    \end{subfigure}
    ~
    \begin{subfigure}[b]{0.23\linewidth}
        \includegraphics[width=\textwidth]{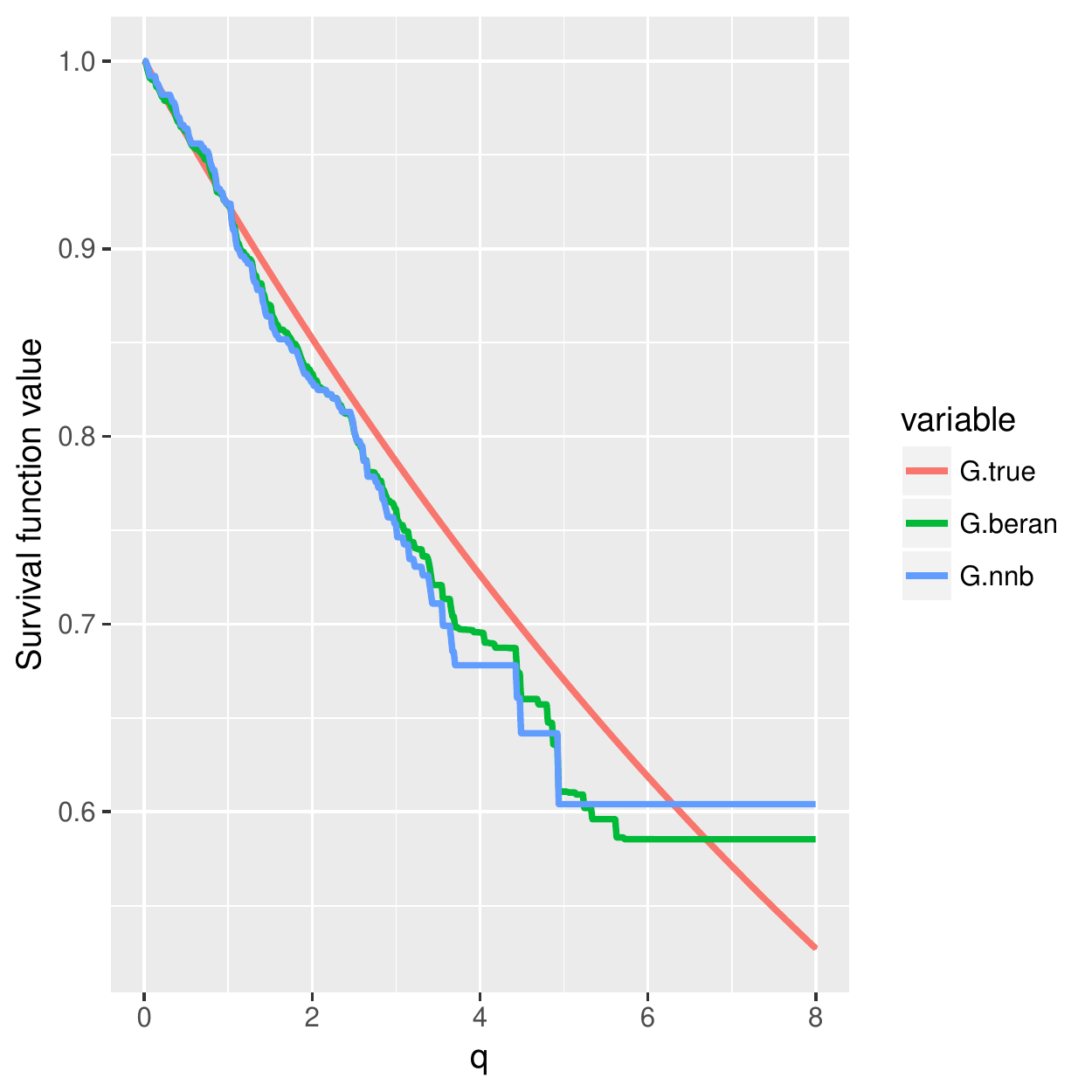}
        \caption{sample size: $5000$}
    \end{subfigure}
    ~
    \begin{subfigure}[b]{0.23\linewidth}
        \includegraphics[width=\textwidth]{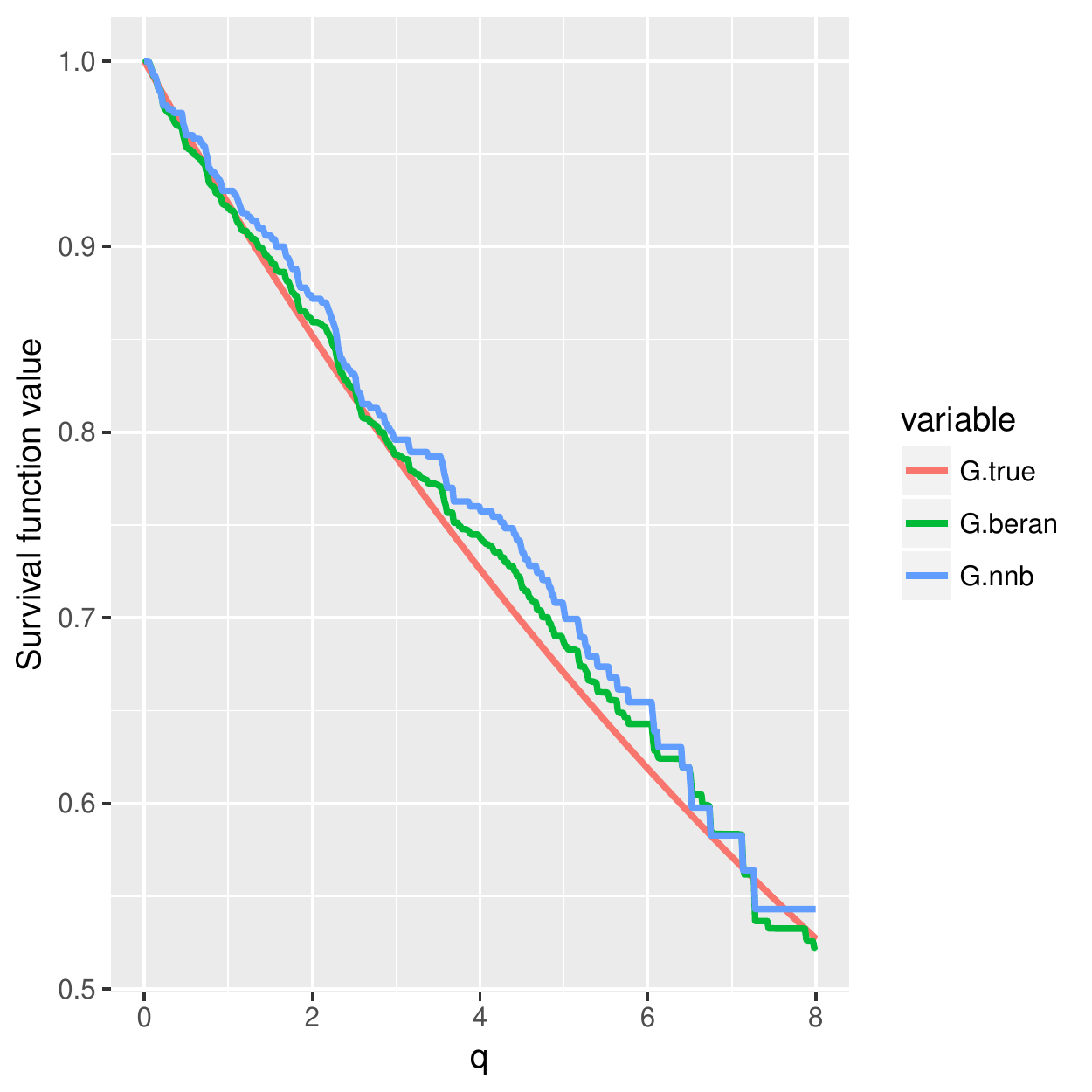}
        \caption{sample size: $5000$}
    \end{subfigure}
    
    \caption{Comparison of different conditional survival estimators on the one-dimensional AFT model. In this case, the censoring variable $C \sim \textrm{Exp}(\lambda=0.08)$, and the average censoring rate is around $20\%$. From left-most column to right-most column, we plot the conditional survival estimators for four test points, $x = 0.4, 0.8, 1.2, 1.6$.}
    \label{fig:g_comparison}
\end{figure}

\begin{figure}[!htb]
    \small
    \centering
    \begin{subfigure}[b]{0.23\linewidth}
        \includegraphics[width=\textwidth]{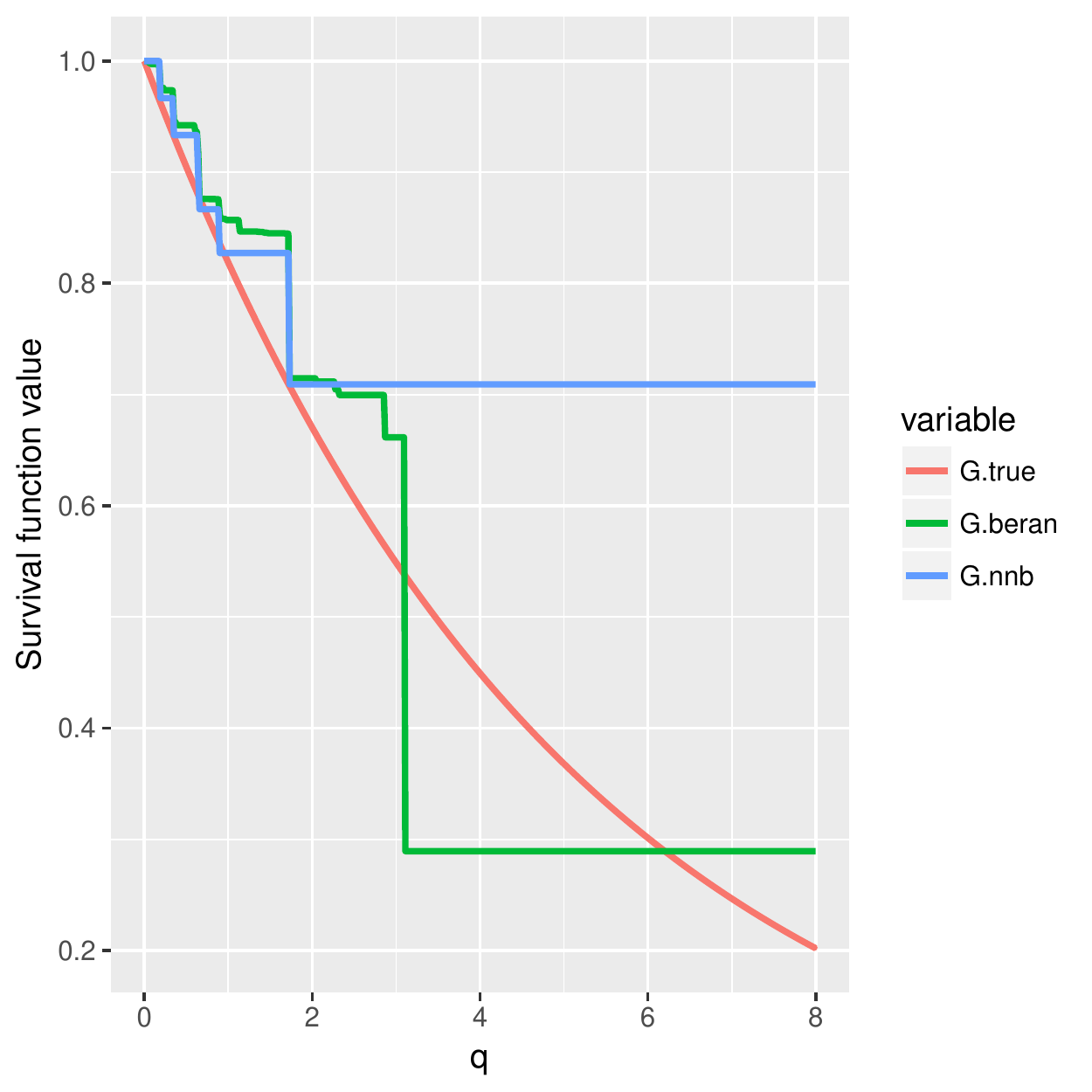}
        \caption{sample size: $300$}
    \end{subfigure}
    ~
    \begin{subfigure}[b]{0.23\linewidth}
        \includegraphics[width=\textwidth]{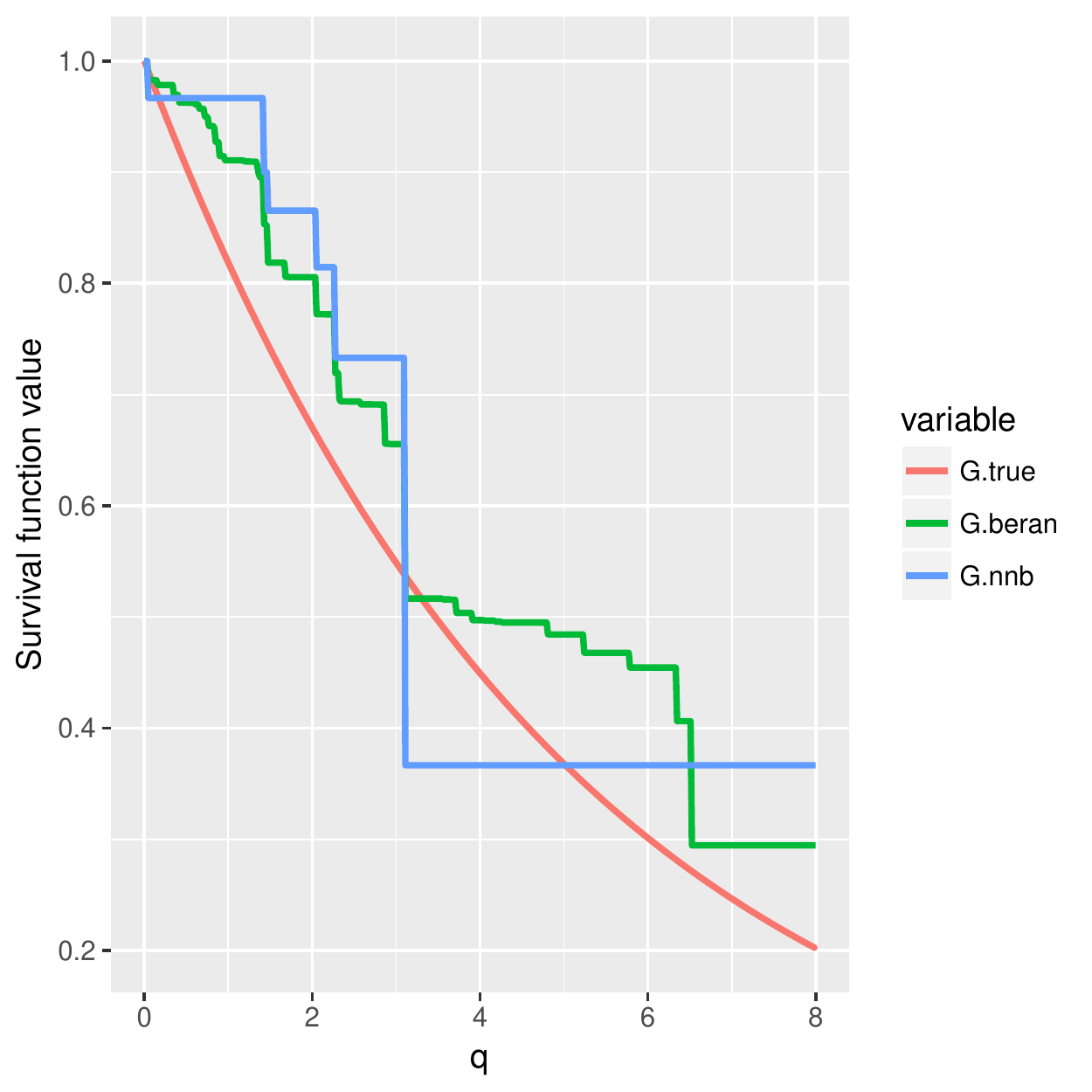}
        \caption{sample size: $300$}
    \end{subfigure}
    ~
    \begin{subfigure}[b]{0.23\linewidth}
        \includegraphics[width=\textwidth]{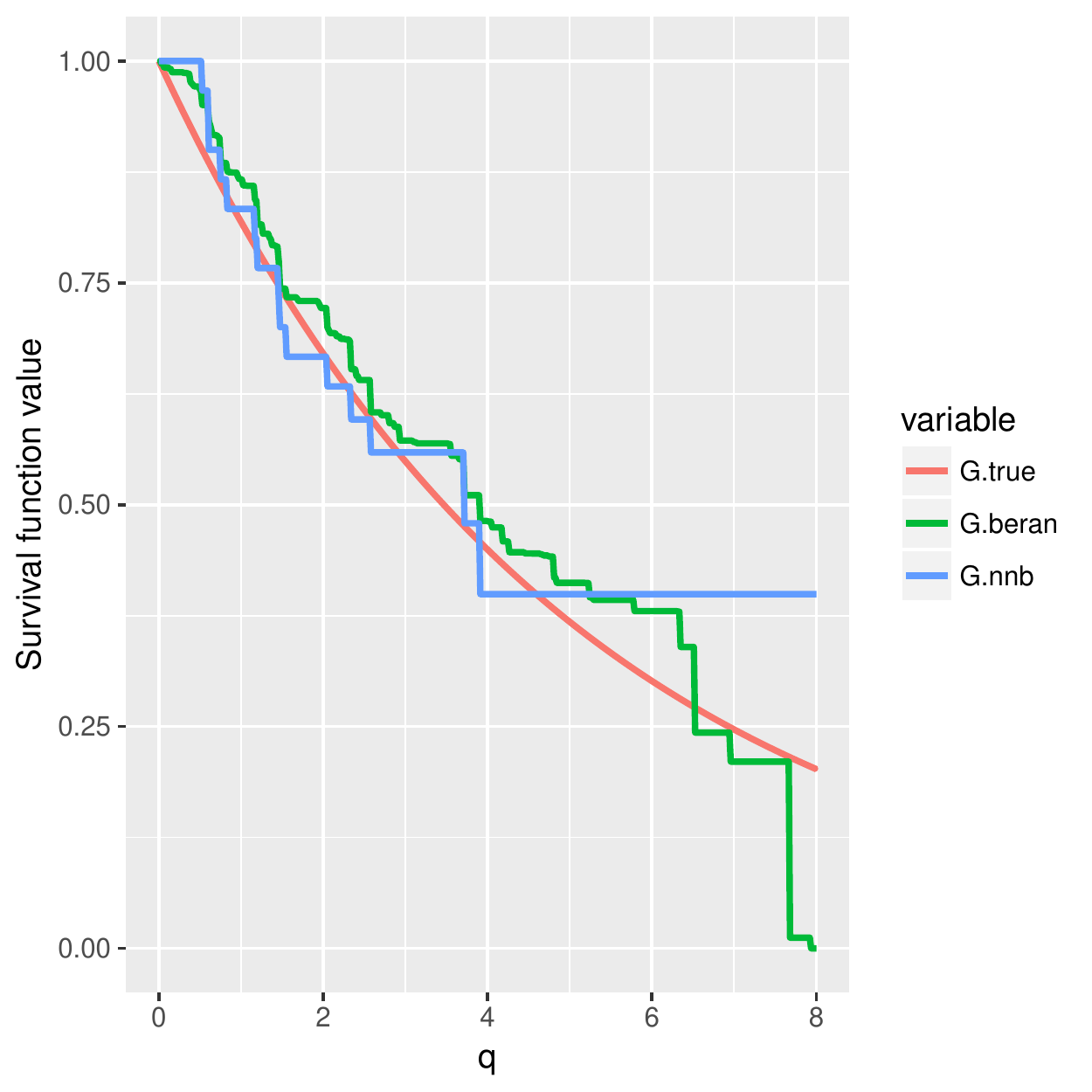}
        \caption{sample size: $300$}
    \end{subfigure}
    ~
    \begin{subfigure}[b]{0.23\linewidth}
        \includegraphics[width=\textwidth]{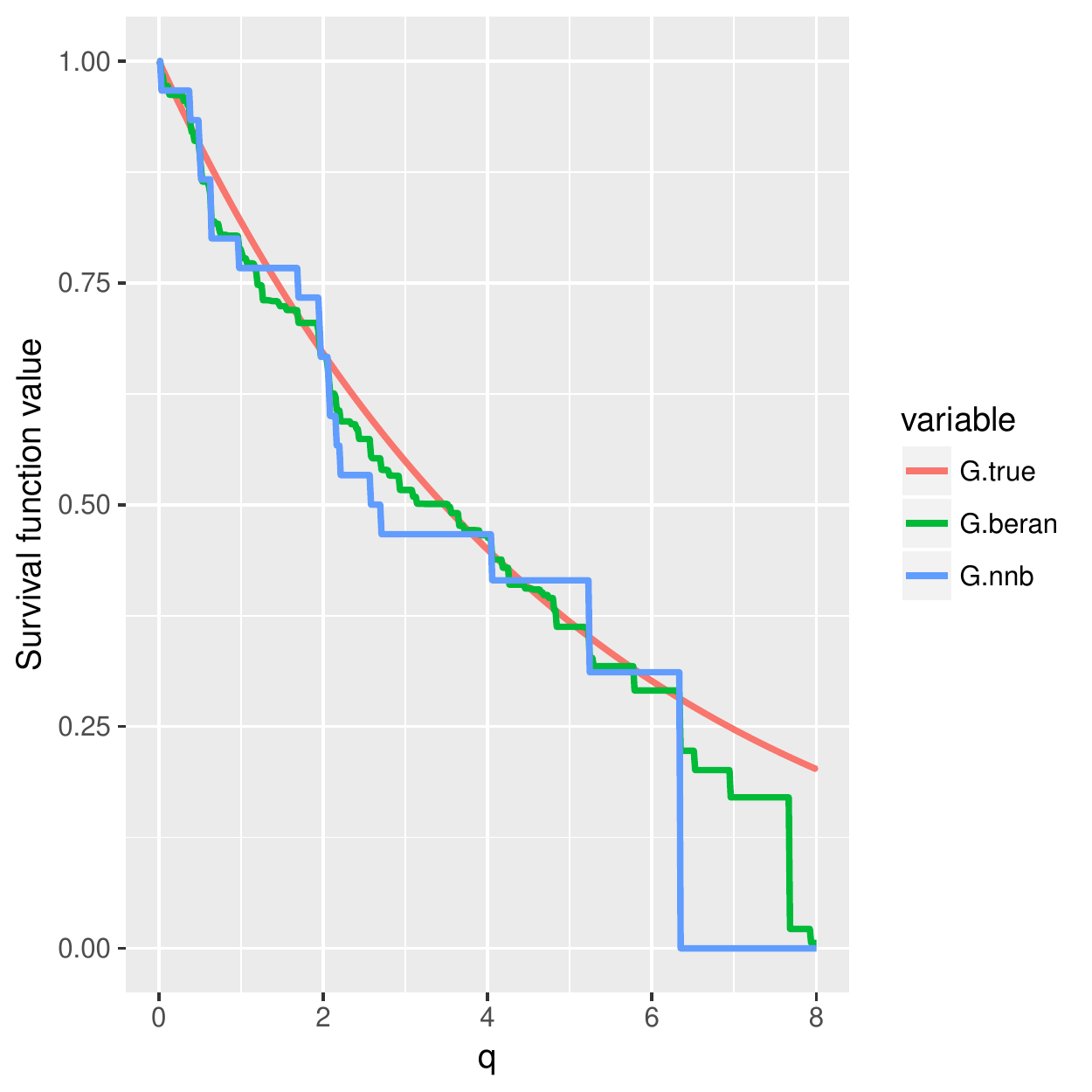}
        \caption{sample size: $300$}
    \end{subfigure}
    
    \vspace{-0.05in}
    
    \begin{subfigure}[b]{0.23\linewidth}
        \includegraphics[width=\textwidth]{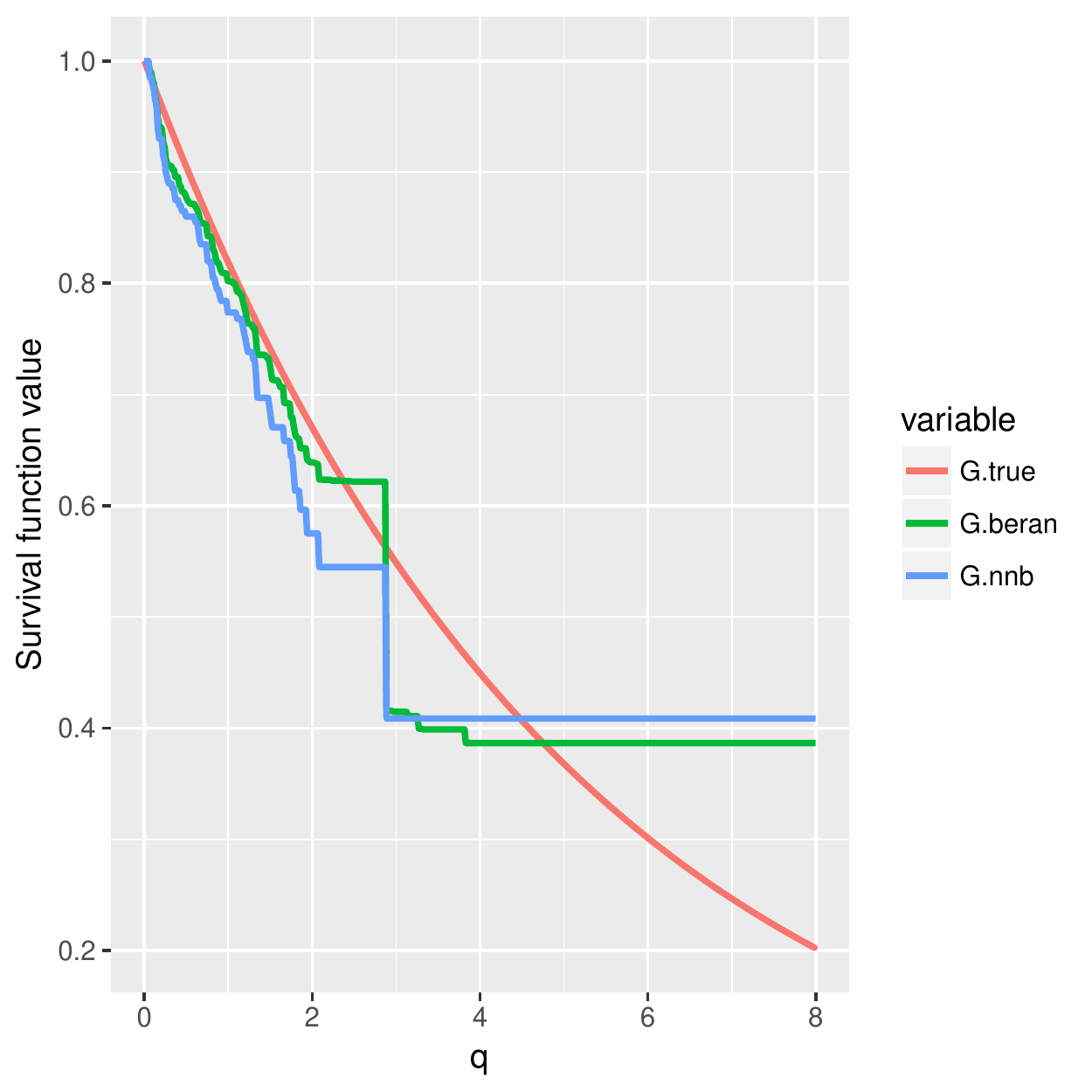}
        \caption{sample size: $2000$}
    \end{subfigure}
    ~
    \begin{subfigure}[b]{0.23\linewidth}
        \includegraphics[width=\textwidth]{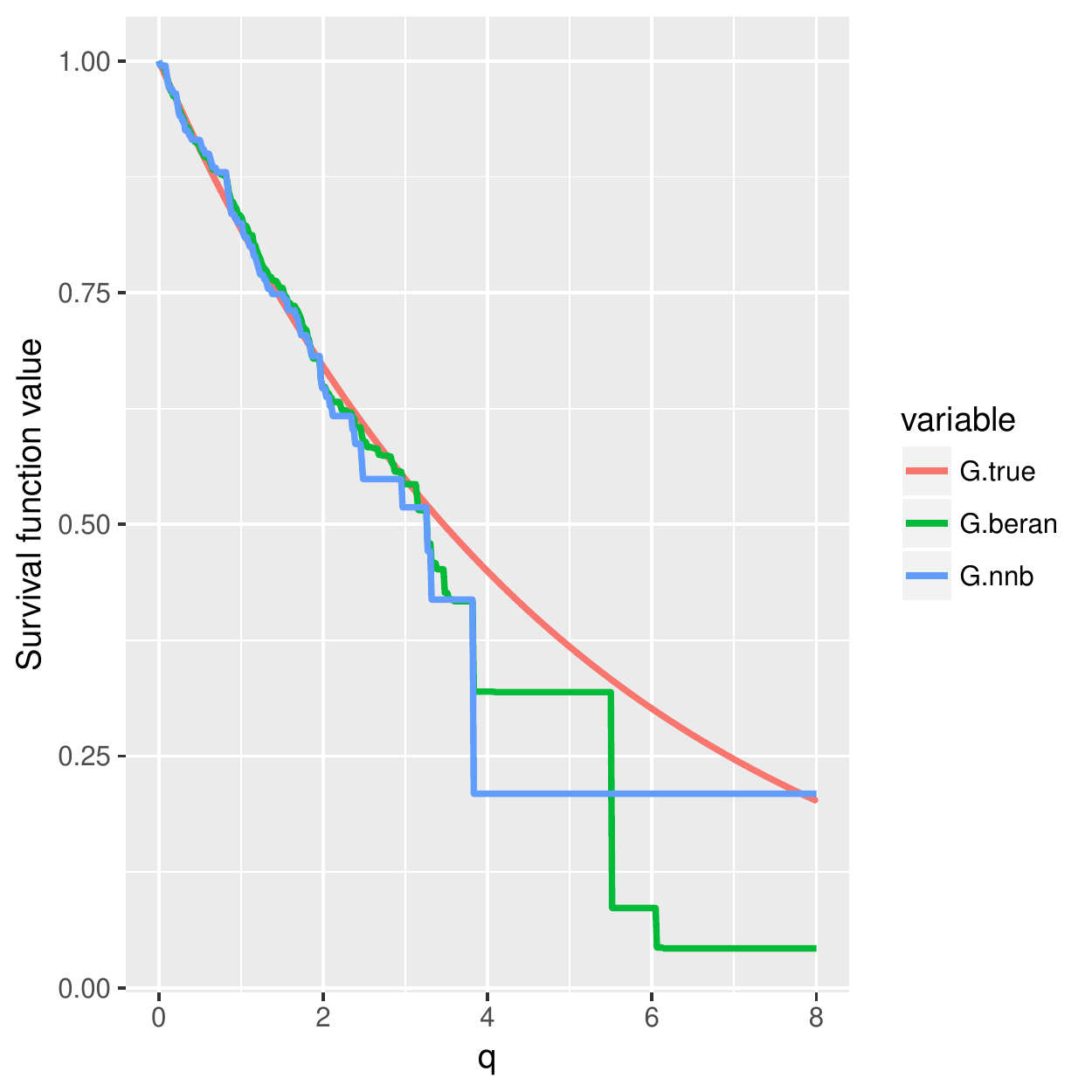}
        \caption{sample size: $2000$}
    \end{subfigure}
    ~
    \begin{subfigure}[b]{0.23\linewidth}
        \includegraphics[width=\textwidth]{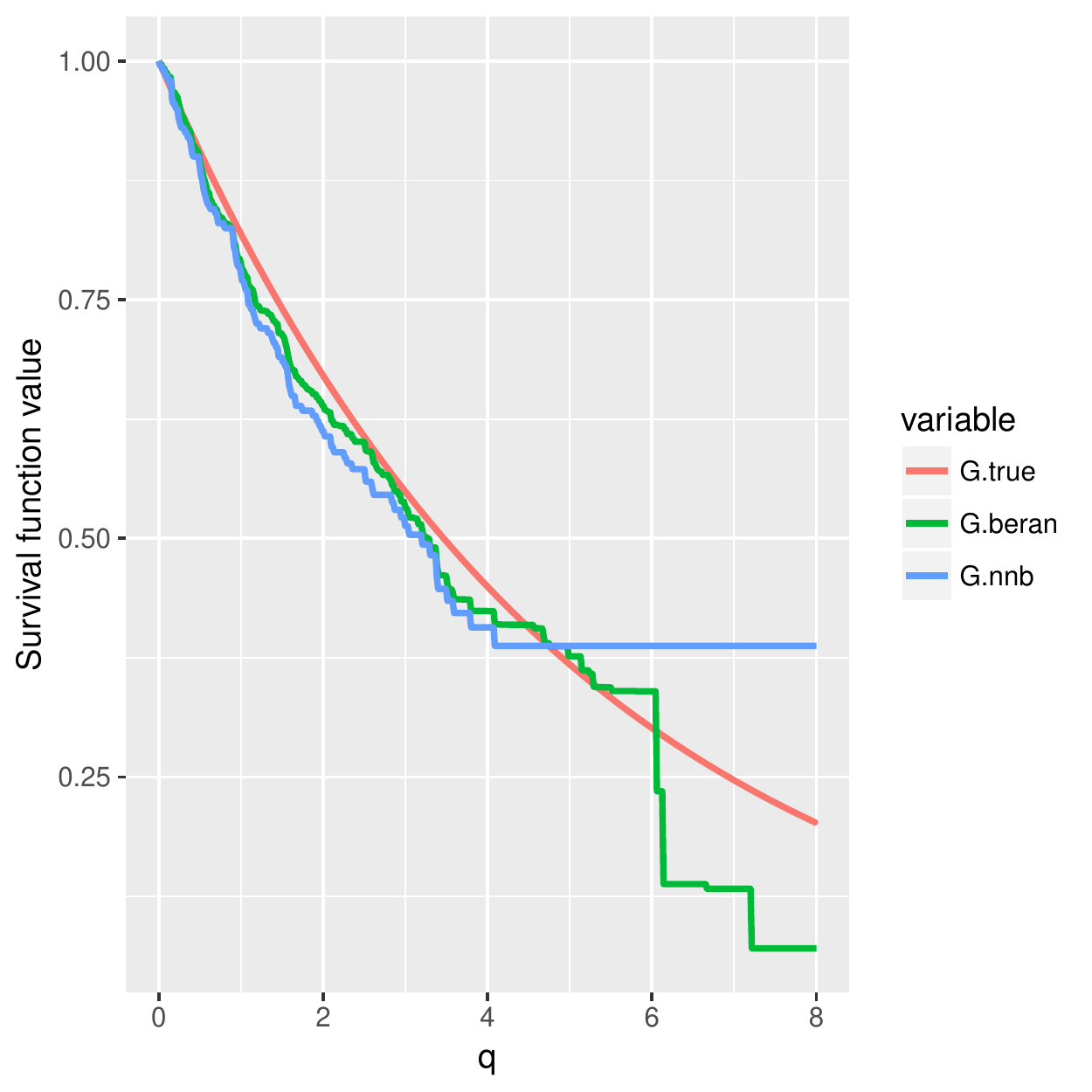}
        \caption{sample size: $2000$}
    \end{subfigure}
    ~
    \begin{subfigure}[b]{0.23\linewidth}
        \includegraphics[width=\textwidth]{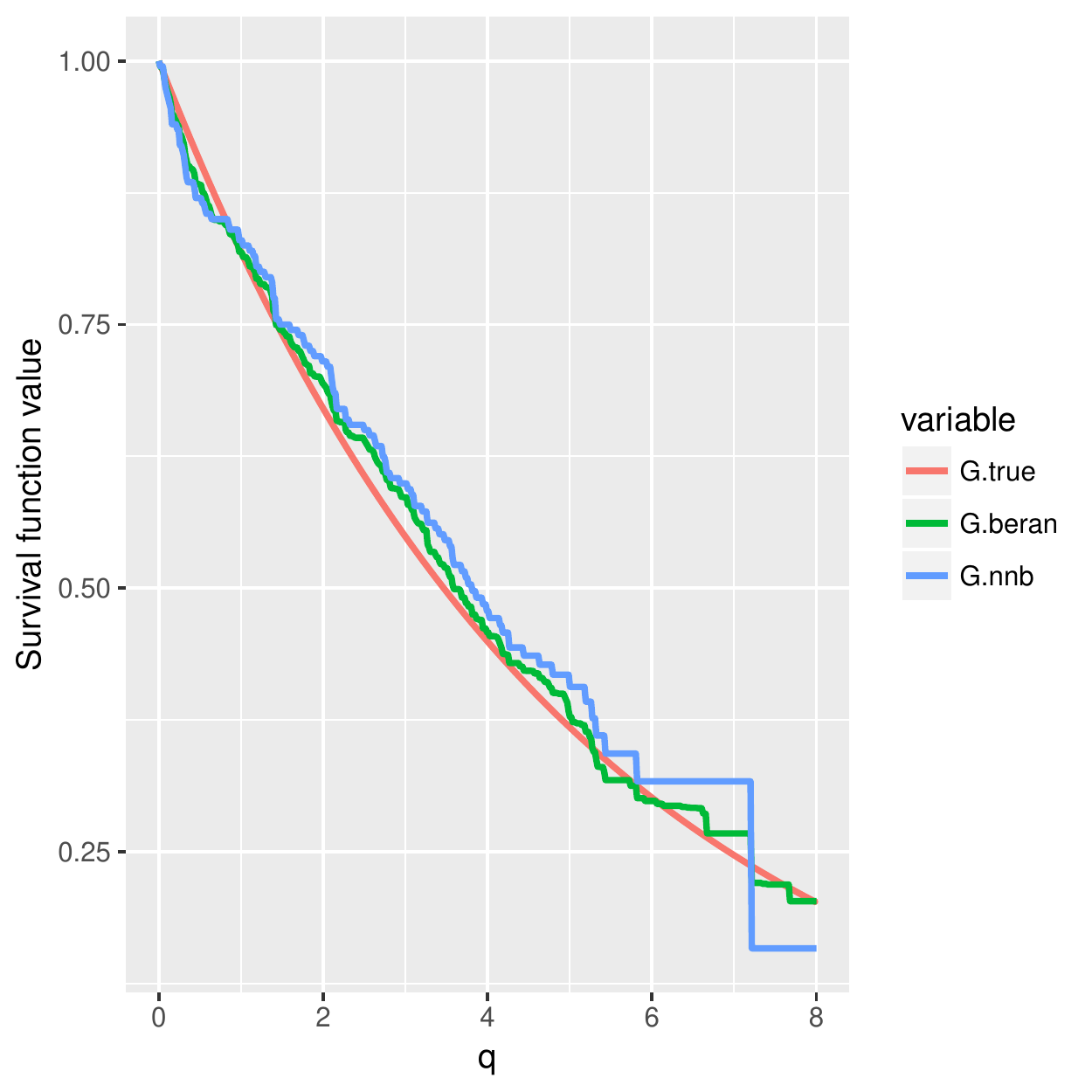}
        \caption{sample size: $2000$}
    \end{subfigure}
    
    \vspace{-0.05in}
    
    \begin{subfigure}[b]{0.23\linewidth}
        \includegraphics[width=\textwidth]{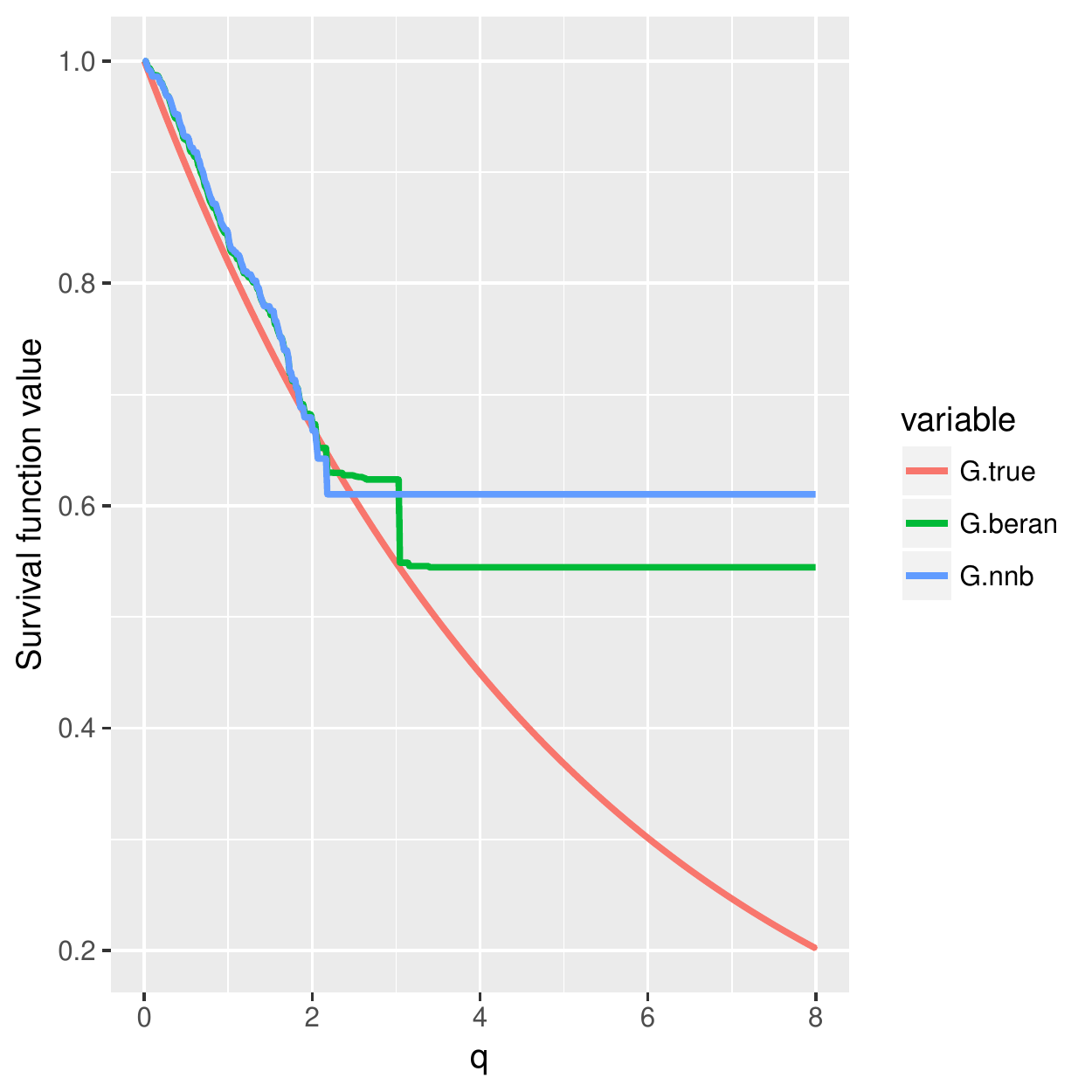}
        \caption{sample size: $5000$}
    \end{subfigure}
    ~
    \begin{subfigure}[b]{0.23\linewidth}
        \includegraphics[width=\textwidth]{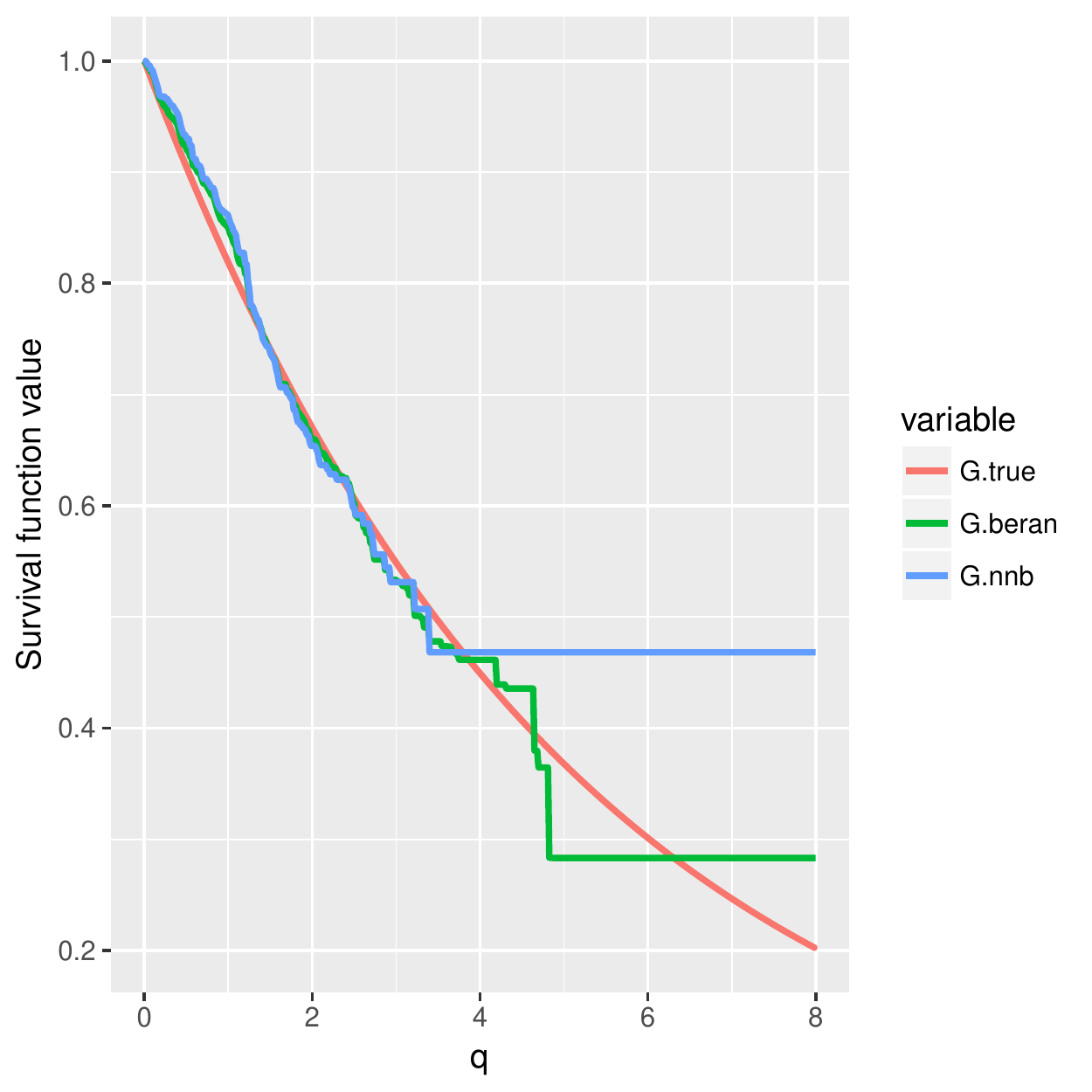}
        \caption{sample size: $5000$}
    \end{subfigure}
    ~
    \begin{subfigure}[b]{0.23\linewidth}
        \includegraphics[width=\textwidth]{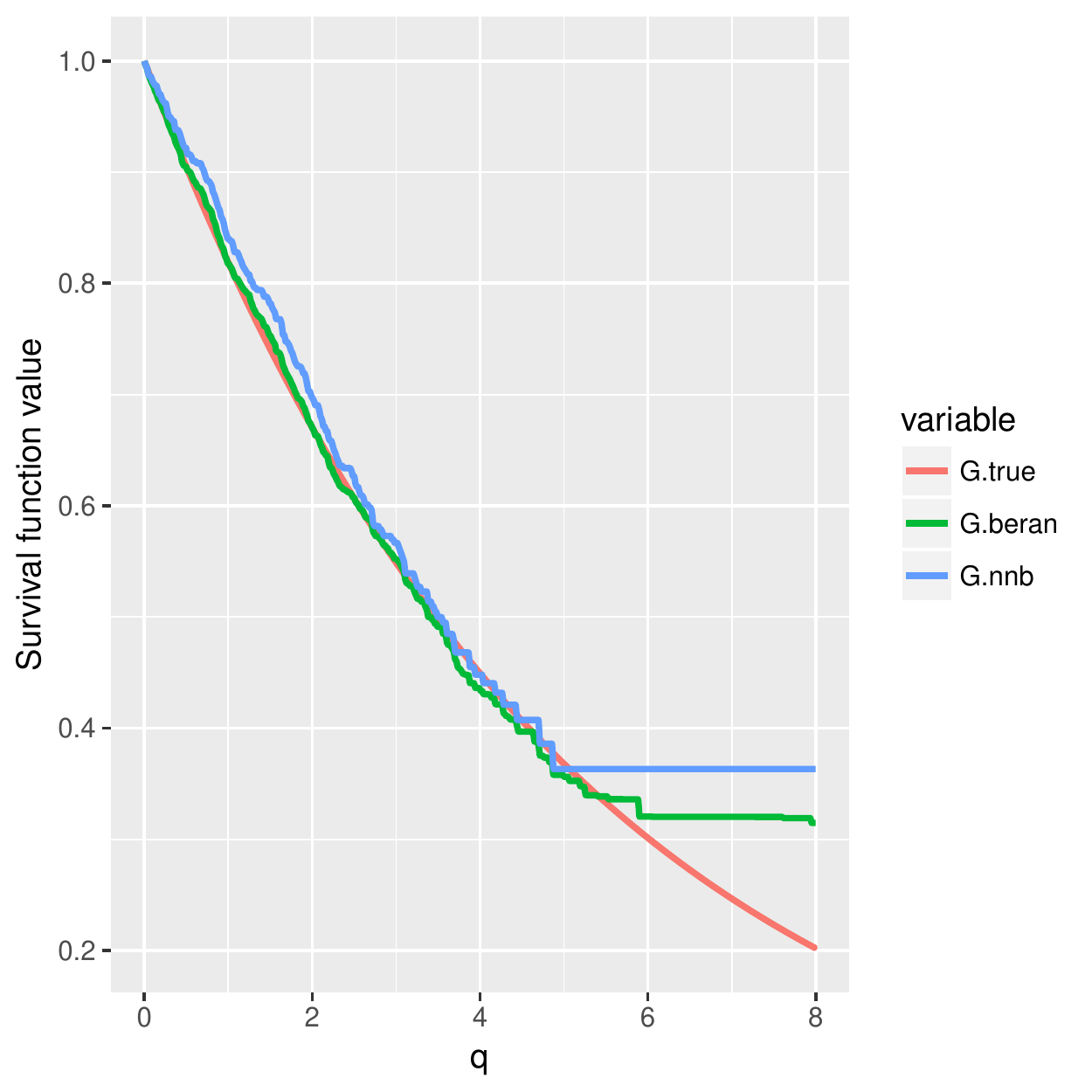}
        \caption{sample size: $5000$}
    \end{subfigure}
    ~
    \begin{subfigure}[b]{0.23\linewidth}
        \includegraphics[width=\textwidth]{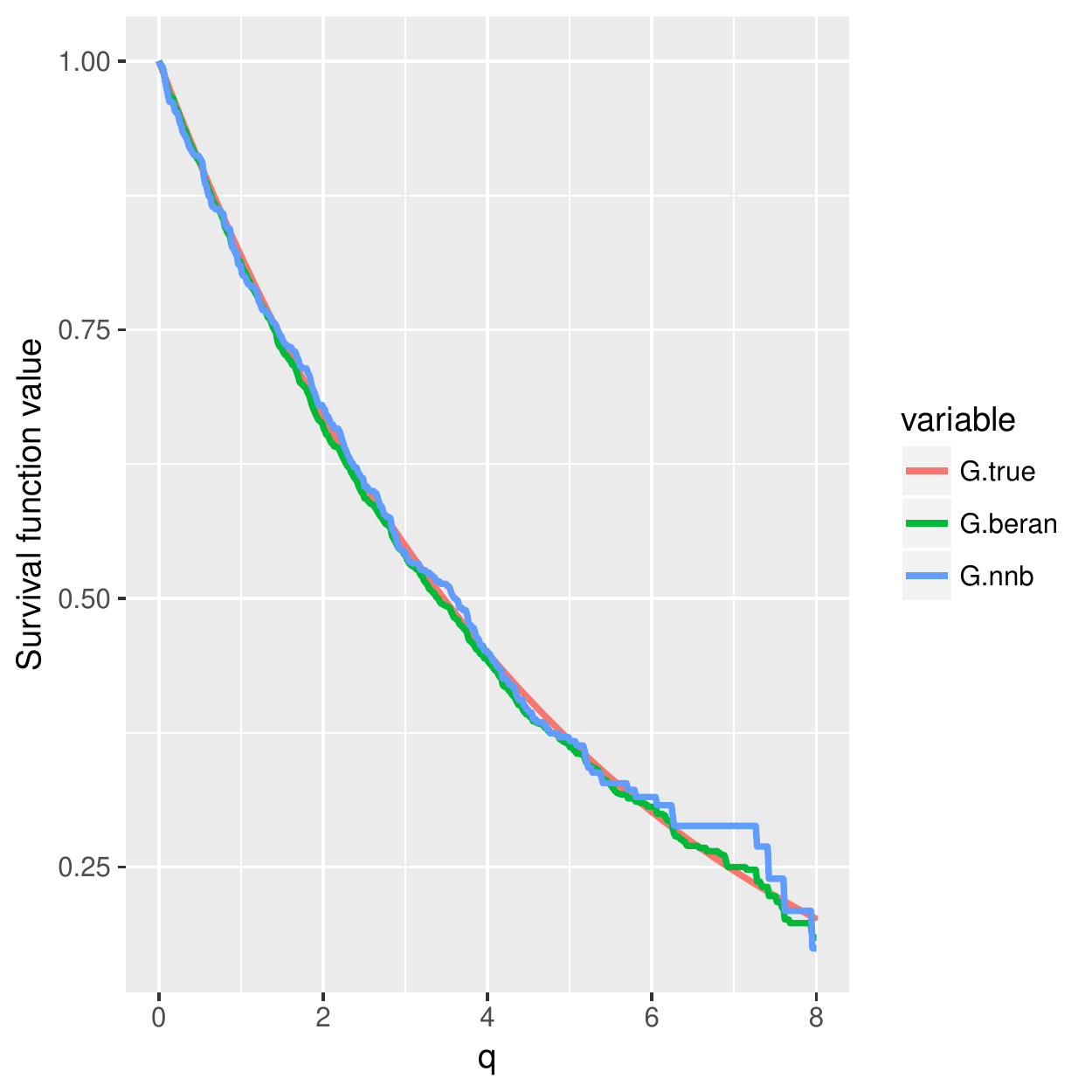}
        \caption{sample size: $5000$}
    \end{subfigure}
    
    \caption{Comparison of different conditional survival estimators on the one-dimensional AFT model. In this case, the censoring variable $C \sim \textrm{Exp}(\lambda=0.20)$, and the average censoring rate is around $50\%$. From left-most column to right-most column, we plot the conditional survival estimators for four test points, $x = 0.4, 0.8, 1.2, 1.6$.}
    \label{fig:g_comparison_high}
\end{figure}

\subsubsection{One-dimensional censored sine model}
Since our forest regression method \textit{crf} is nonparametric and does not rely on any parametric assumption between response and explanatory variables, it can be used to estimate quantiles for any general model $T = f(X) + \epsilon$. In this section, we let $f(x) = \sin(x)$ and have the model
\begin{equation*}
    T = 2.5 + \sin(X) + \epsilon
\end{equation*}
where $X \sim \textrm{Unif}(0,2 \pi)$ and $\epsilon \sim \mathcal{N}(0, 0.3^2)$. Then the censoring variable $C \sim 1 + \sin(X) + \textrm{Exp}(\lambda = 0.2)$, and the responses $Y = \min(T, C)$. All the other settings are the same as in Section \ref{sssec:1d-aft}. We plot out the training data and the quantile predictions for $\tau = 0.3, 0.5, 0.7$ in Figure \ref{fig:sine_1d}. The censoring level is about $25\%$. We observe that for all $\tau \in \{0.3,0.5,0.7\}$, \textit{crf} can produce comparable quantile predictions to \textit{grf-oracle}. Especially when $\tau=0.3$, the quantile prediction by $grf$ (blue dotted curve) severely deviates from the true quantile, while our method \textit{crf} can still predict the correct quantile and performs as good as \textit{grf-oracle}. We want to emphasize that \textit{grf-oracle} uses the latent responses $T_i$ while our method only uses the observed responses $Y_i$ and censoring indicators $\delta_i$. We then repeat the experiments for 40 times and report the box plots in Figure \ref{fig:sine_1d_box}. Again we can see that for all quantiles, our method \textit{crf} behaves almost as good as \textit{qrf-oracle} and \textit{grf-oracle}, and consistently better than \textit{qrf} and \textit{grf}. For example the order of magnitude of our error is twice and sometimes more than two times smaller than that of quantile or generalized  random forest. 

\begin{figure}[!htb]
    \small
    \centering
    \begin{subfigure}[b]{0.4\linewidth}
        \includegraphics[width=\textwidth]{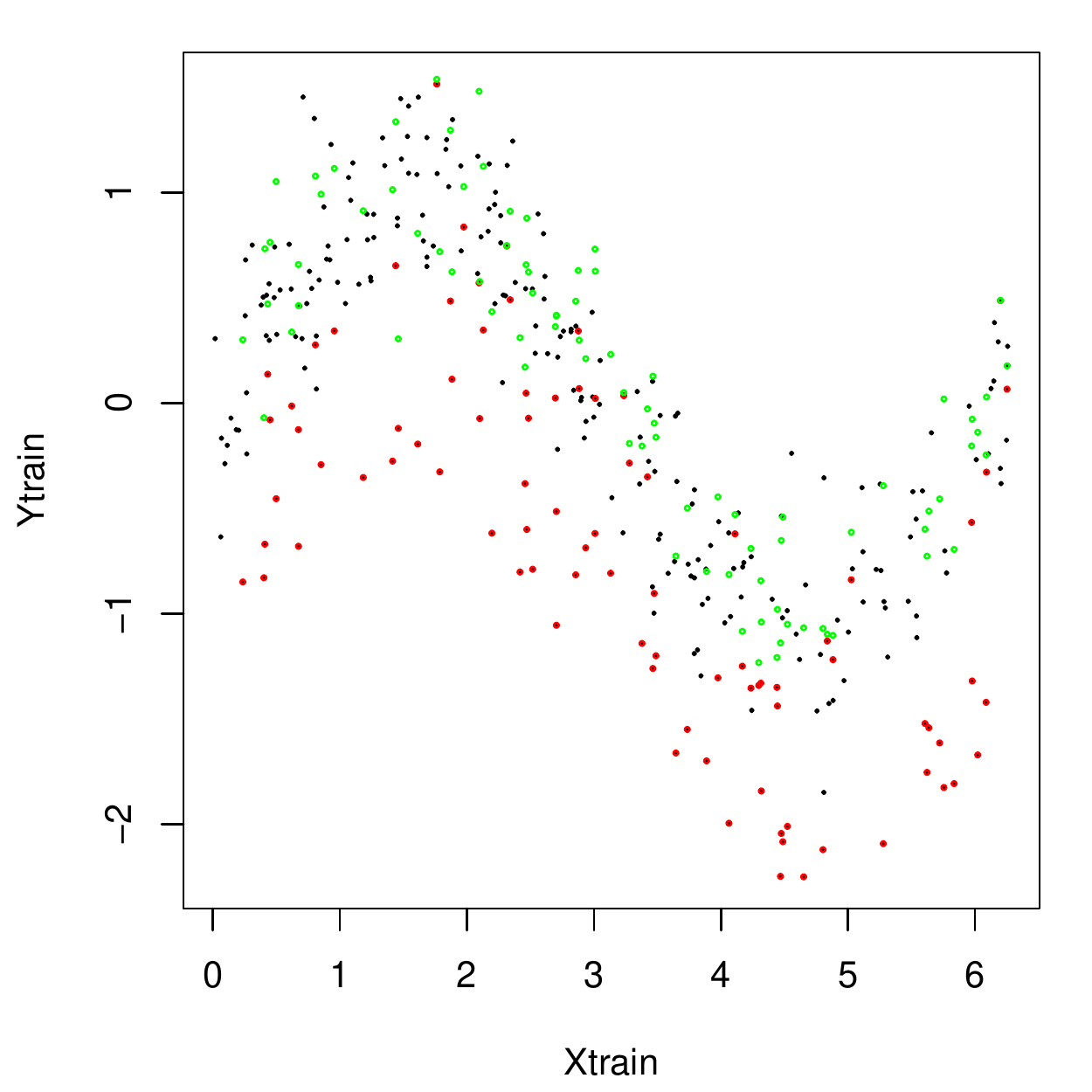}
        \caption{Training data}
        \label{fig:sine_train}
    \end{subfigure}
    ~
    \begin{subfigure}[b]{0.4\linewidth}
        \includegraphics[width=\textwidth]{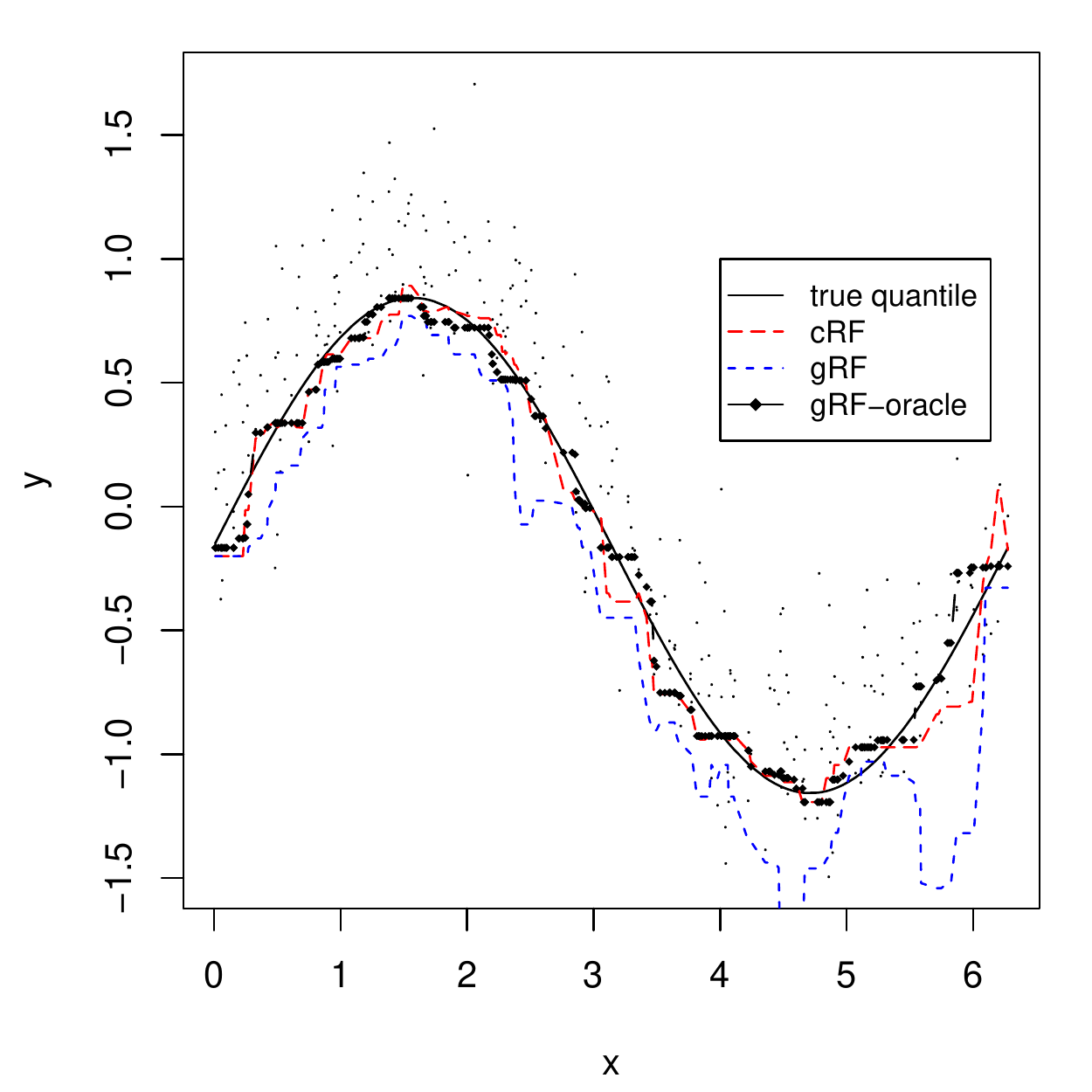}
        \caption{$\tau = 0.3$}
        \label{fig:sine_1d_tau_03}
    \end{subfigure}
    
    \vspace{-0.05in}
    
    \begin{subfigure}[b]{0.4\linewidth}
        \includegraphics[width=\textwidth]{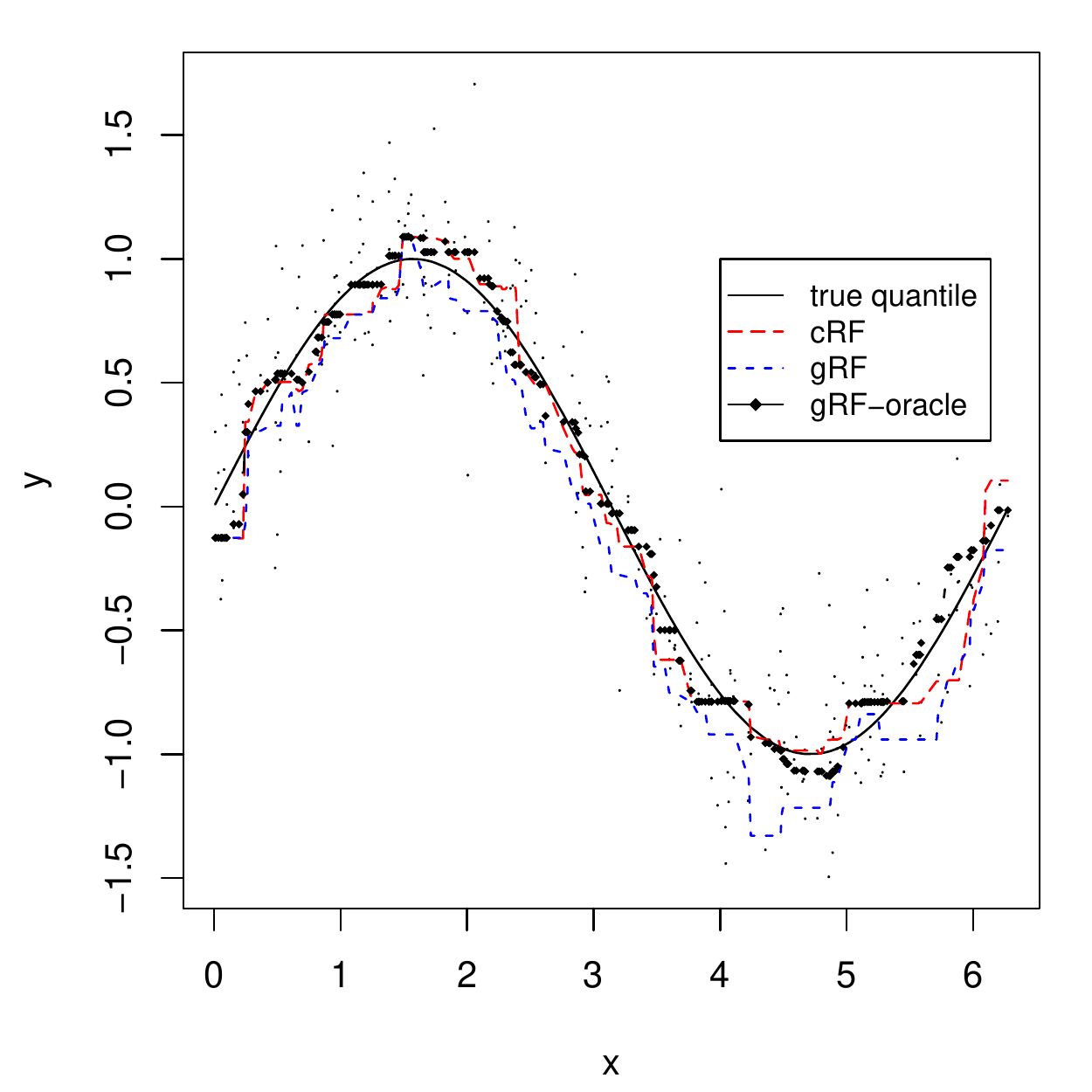}
        \caption{$\tau = 0.5$}
        \label{fig:sine_1d_tau_05}
    \end{subfigure}
    ~
    \begin{subfigure}[b]{0.4\linewidth}
        \includegraphics[width=\textwidth]{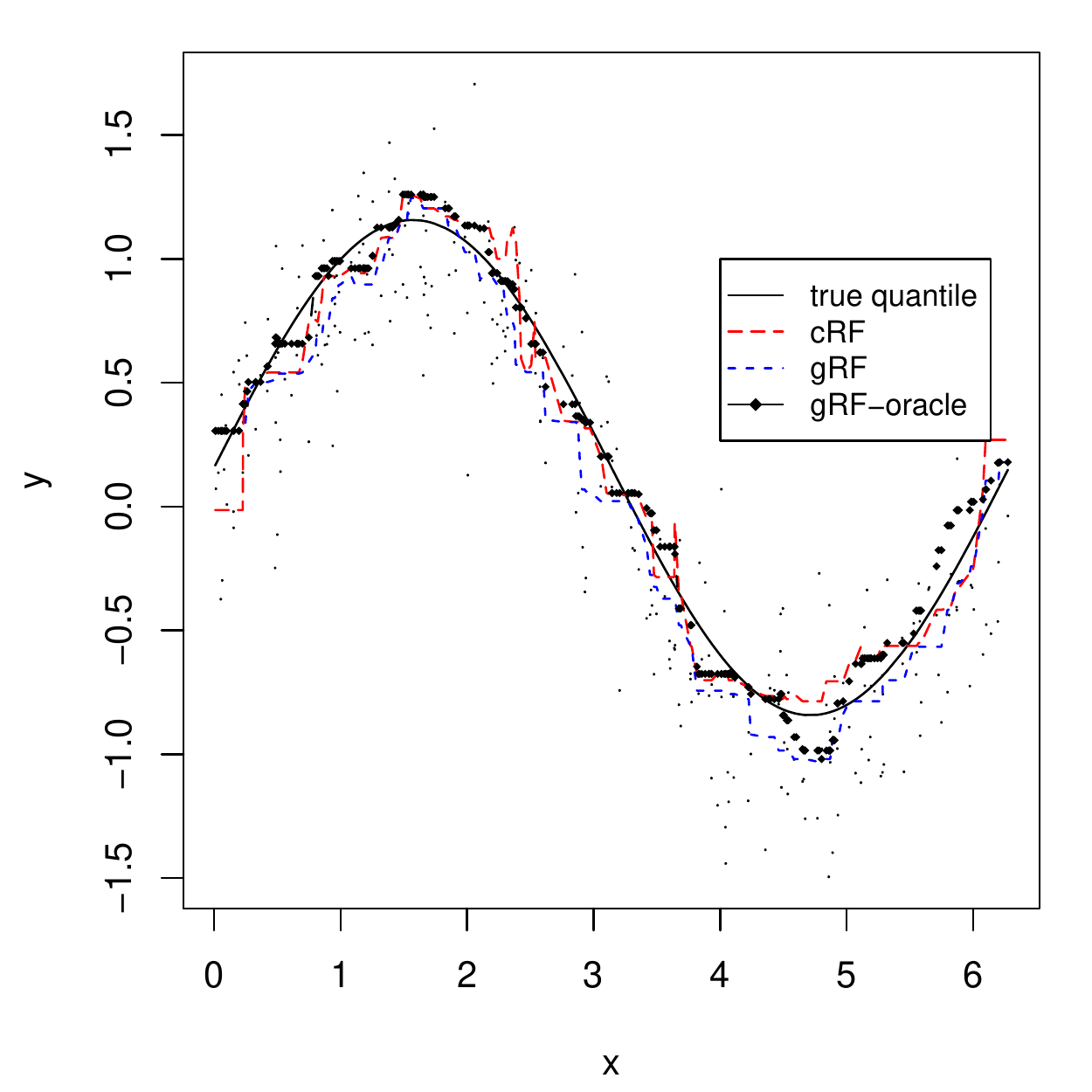}
        \caption{$\tau = 0.7$}
        \label{fig:sine_1d_tau_07}
    \end{subfigure}
    
    \caption{One-dimensional Sine model. In (a), black points stand for observation that are not censored; red points are observations that are censored, and the green points are the counterpart of the red points, that is, they are the latent values of those red points if they were not censored.}
    \label{fig:sine_1d}
\end{figure}

\begin{figure}[!htb]
    \small
    \centering
    \begin{subfigure}[b]{0.182\linewidth}
        \includegraphics[width=\textwidth]{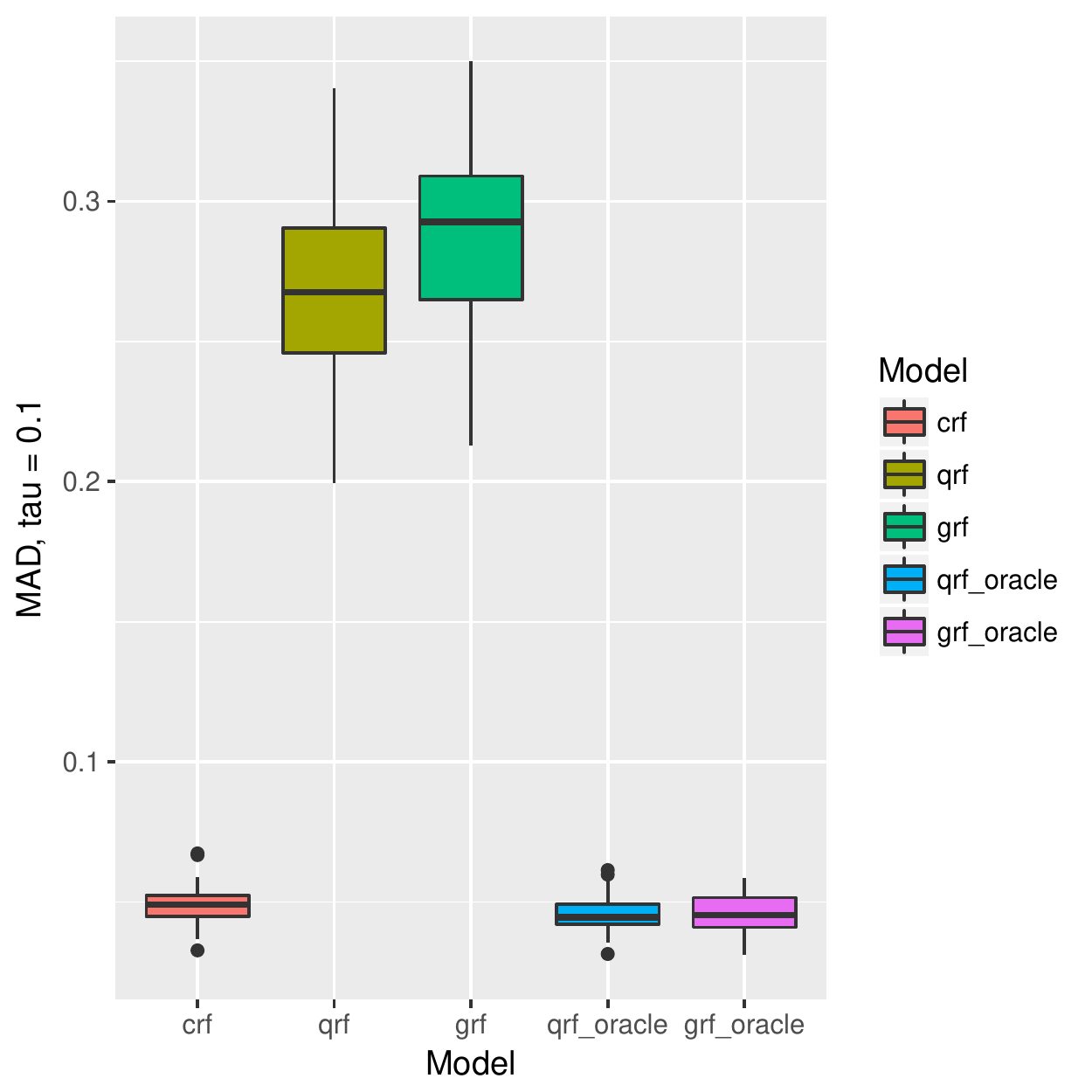}
        \caption{MAD: $\tau = 0.1$}
    \end{subfigure}
    ~
    \begin{subfigure}[b]{0.182\linewidth}
        \includegraphics[width=\textwidth]{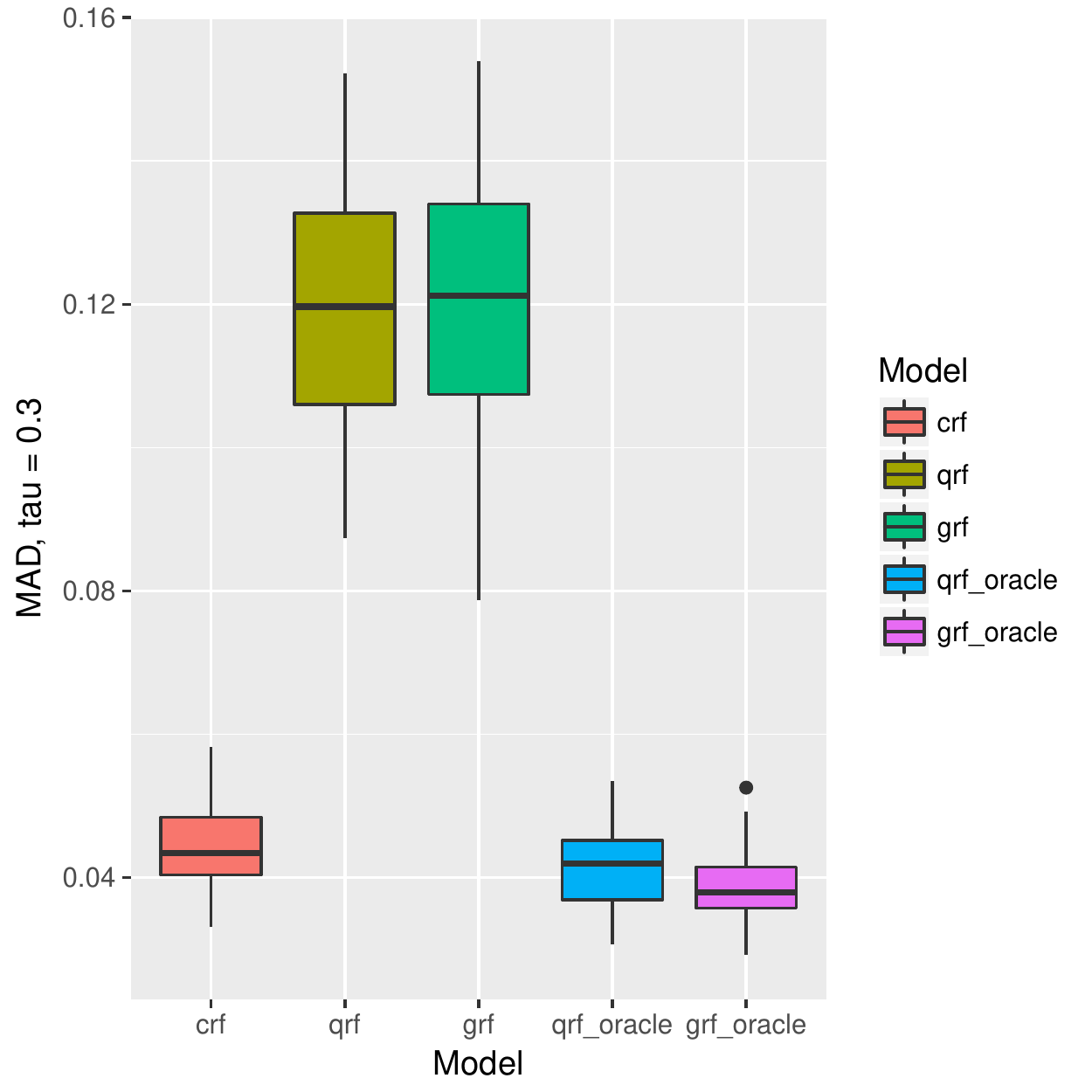}
        \caption{MAD: $\tau = 0.3$}
    \end{subfigure}
    ~
    \begin{subfigure}[b]{0.182\linewidth}
        \includegraphics[width=\textwidth]{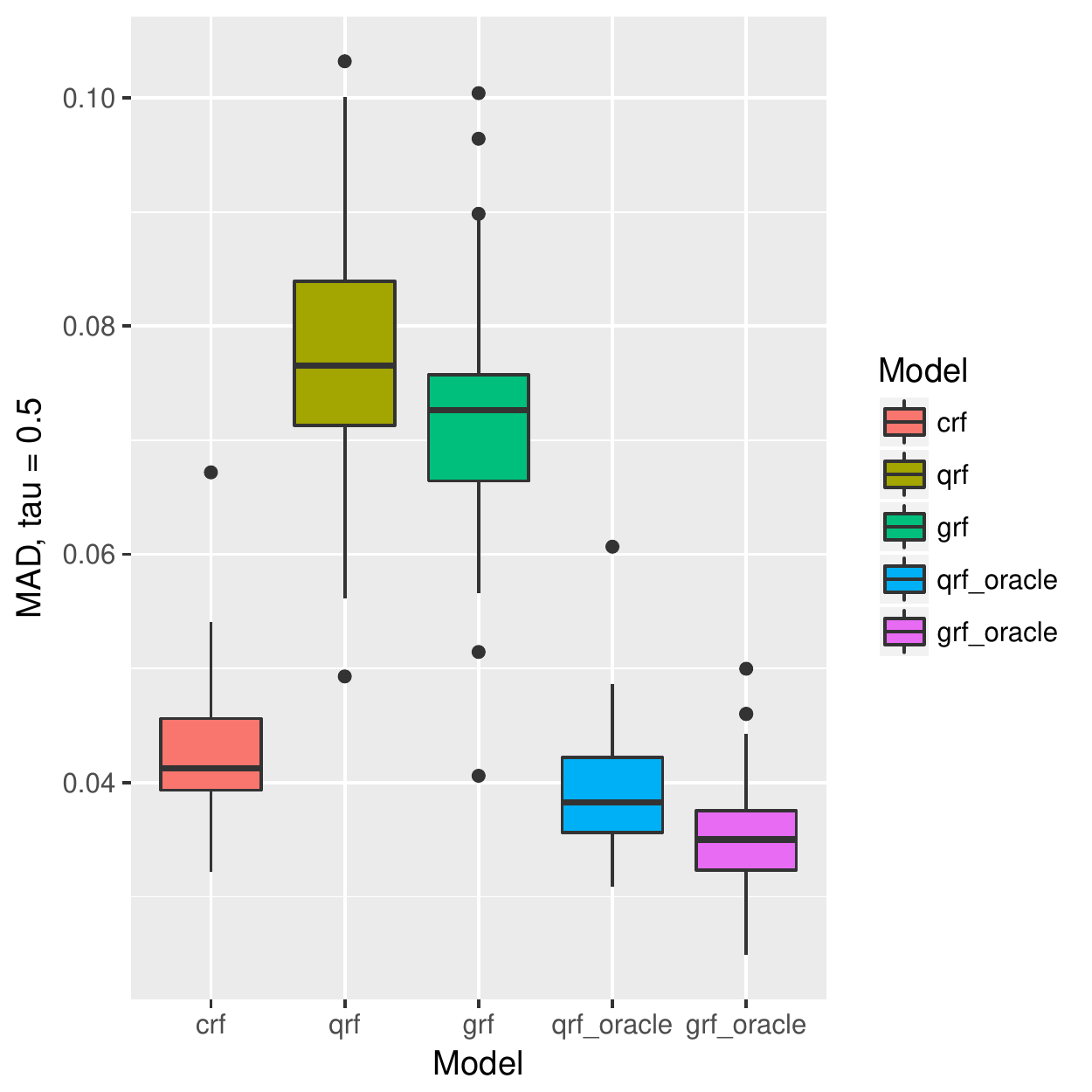}
        \caption{MAD: $\tau = 0.5$}
    \end{subfigure}
    ~
    \begin{subfigure}[b]{0.182\linewidth}
        \includegraphics[width=\textwidth]{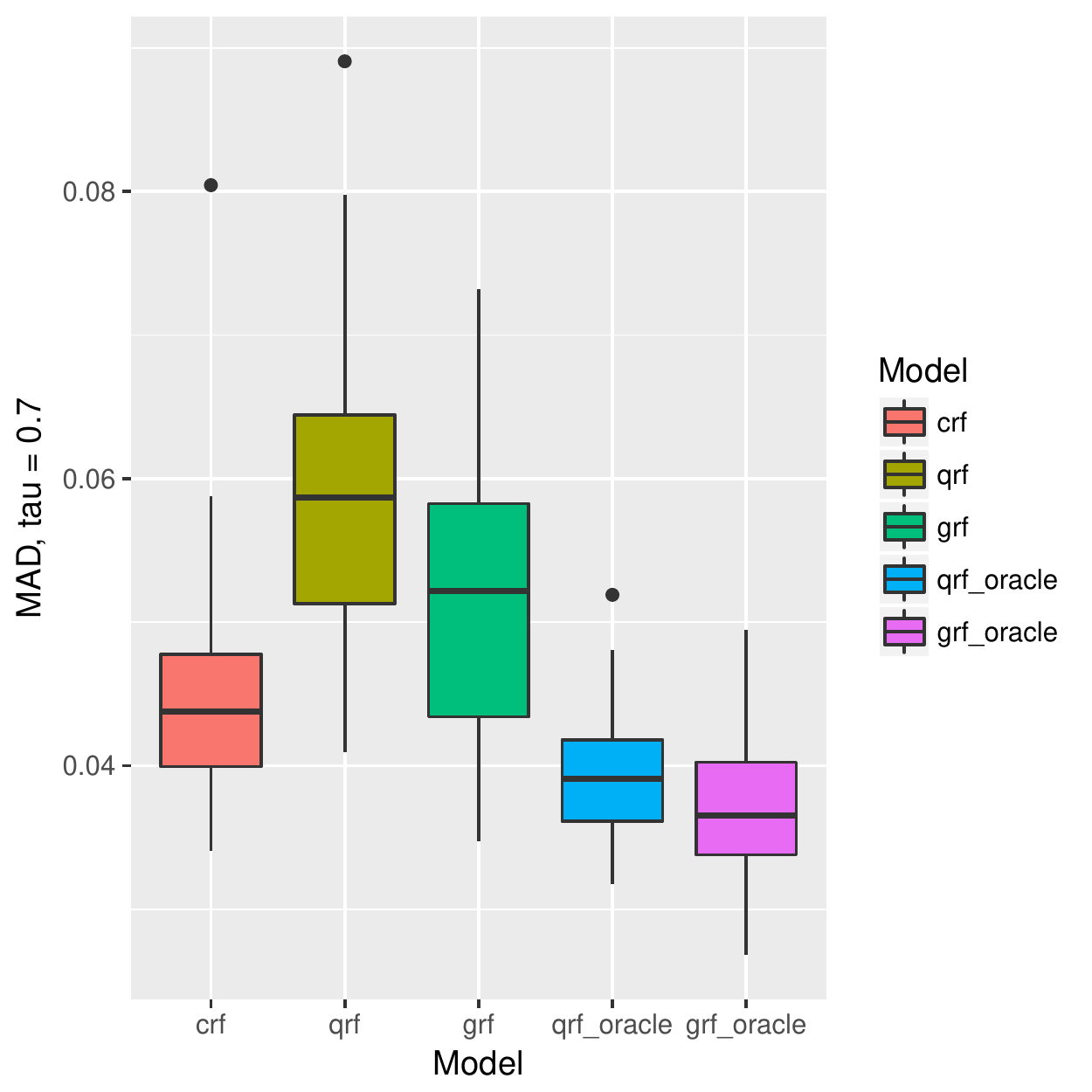}
        \caption{MAD: $\tau = 0.7$}
    \end{subfigure}
    ~
    \begin{subfigure}[b]{0.182\linewidth}
        \includegraphics[width=\textwidth]{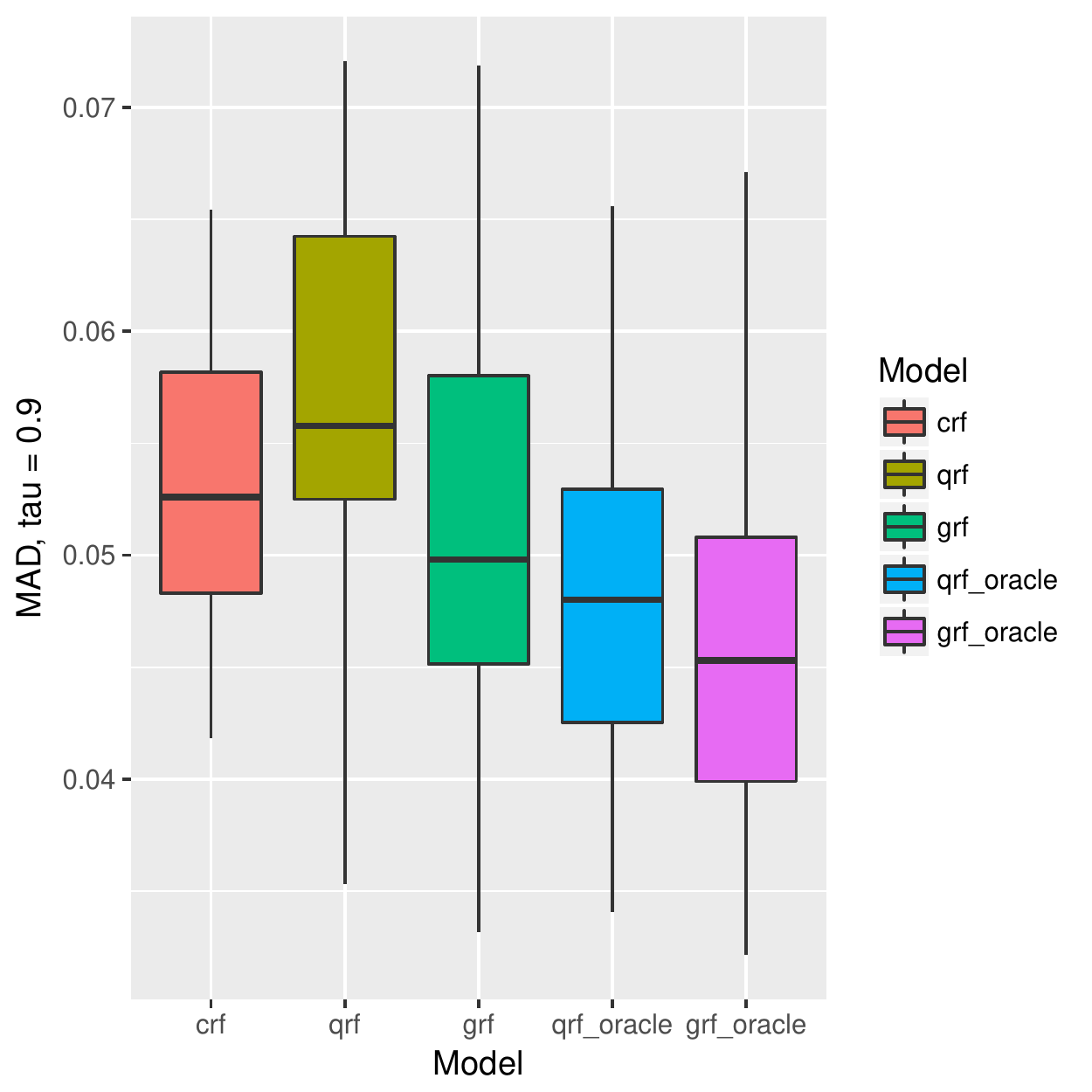}
        \caption{MAD: $\tau = 0.9$}
    \end{subfigure}
    
    \vspace{-0.05in}
    
    \begin{subfigure}[b]{0.18\linewidth}
        \includegraphics[width=\textwidth]{fig/mse_result_1.pdf}
        \caption{MSE: $\tau = 0.1$}
    \end{subfigure}
    ~
    \begin{subfigure}[b]{0.18\linewidth}
        \includegraphics[width=\textwidth]{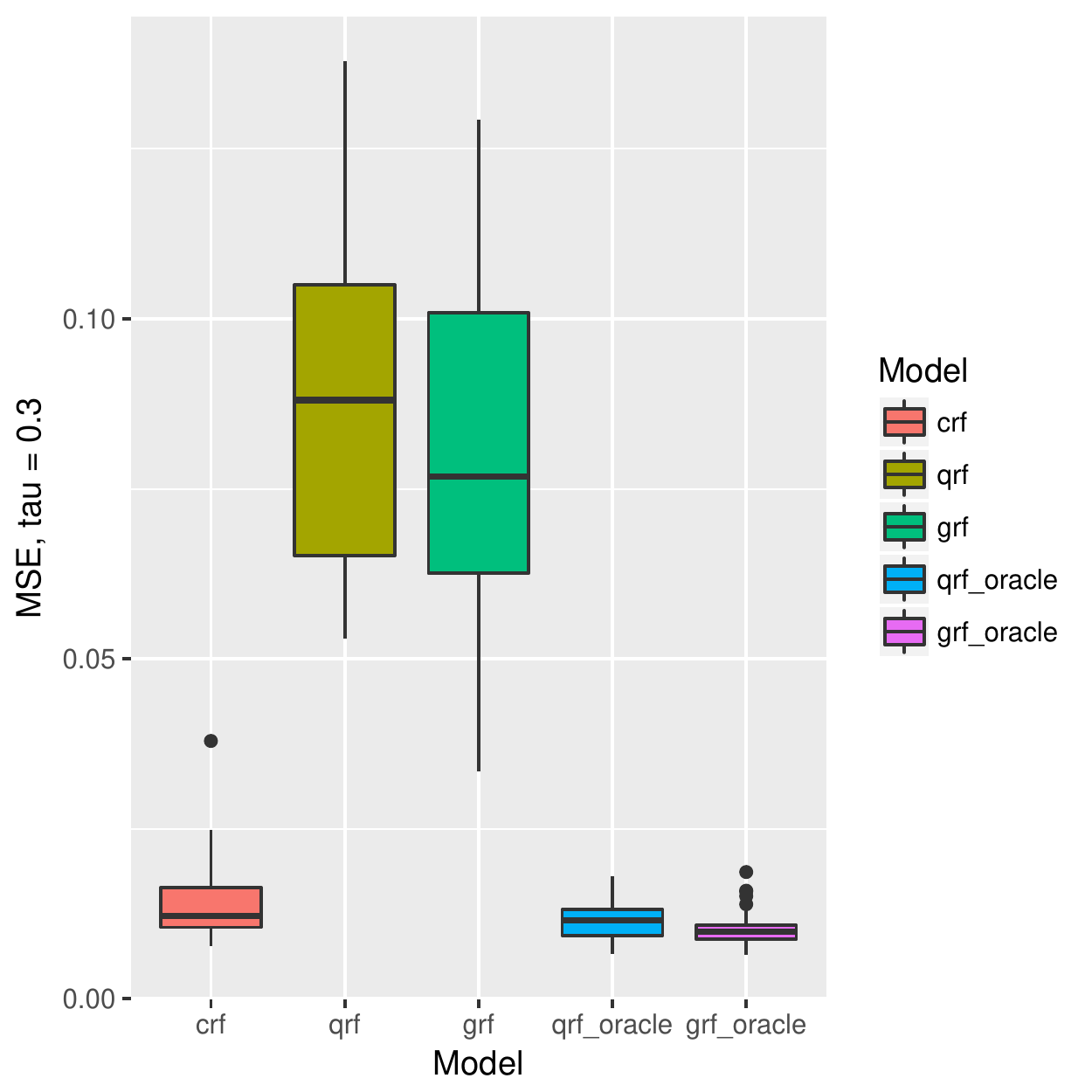}
        \caption{MSE: $\tau = 0.3$}
    \end{subfigure}
    ~
    \begin{subfigure}[b]{0.18\linewidth}
        \includegraphics[width=\textwidth]{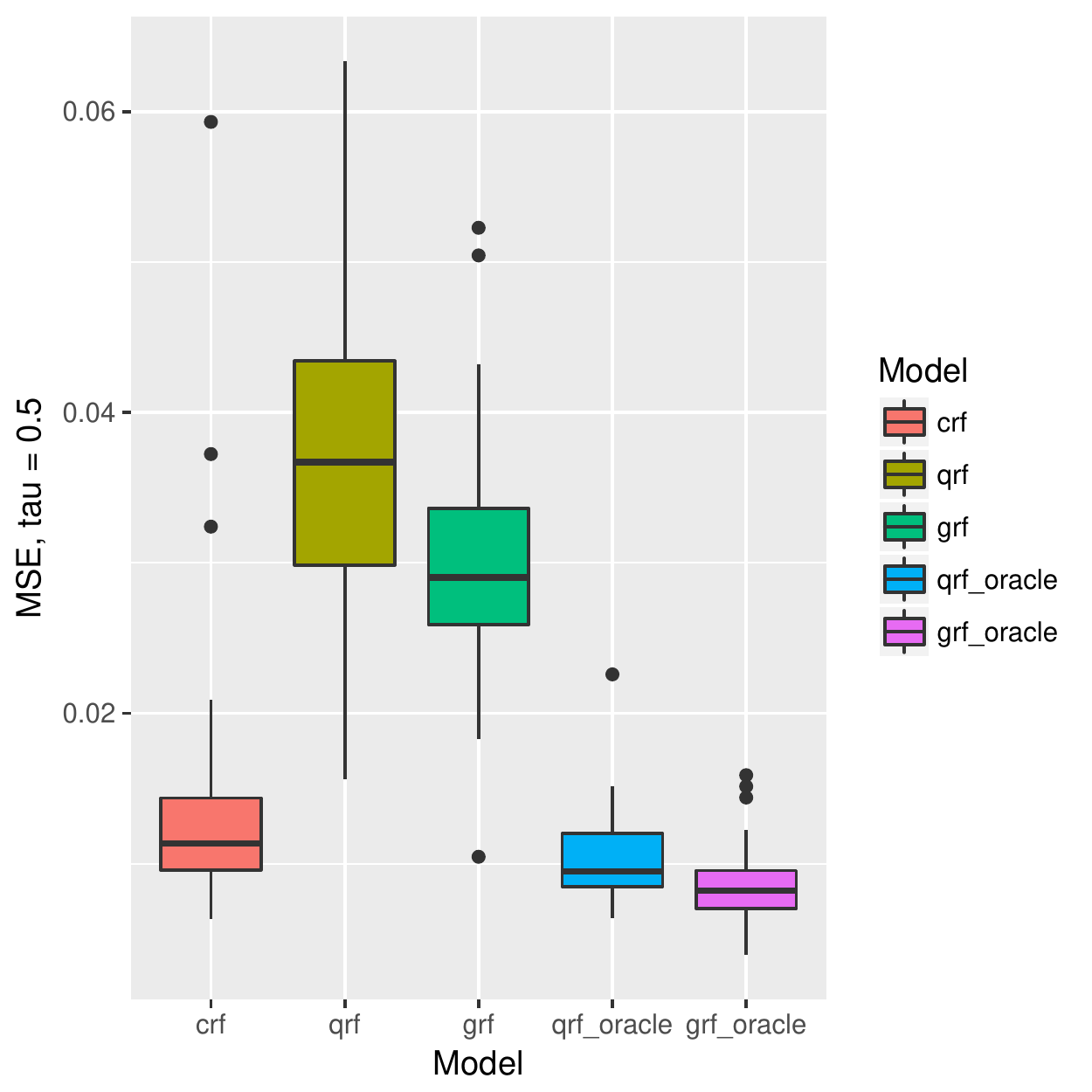}
        \caption{MSE: $\tau = 0.5$}
    \end{subfigure}
    ~
    \begin{subfigure}[b]{0.18\linewidth}
        \includegraphics[width=\textwidth]{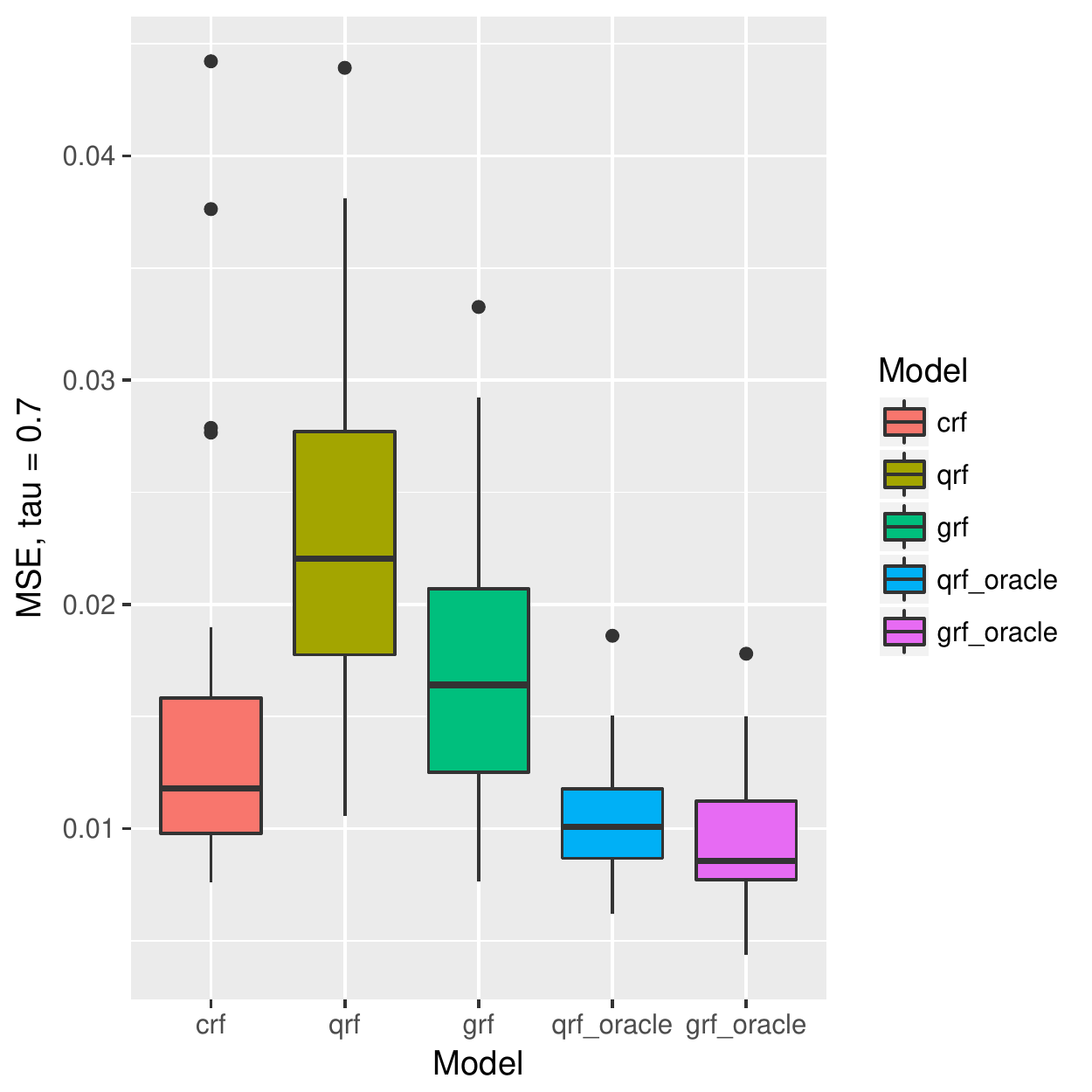}
        \caption{MSE: $\tau = 0.7$}
    \end{subfigure}
    ~
    \begin{subfigure}[b]{0.18\linewidth}
        \includegraphics[width=\textwidth]{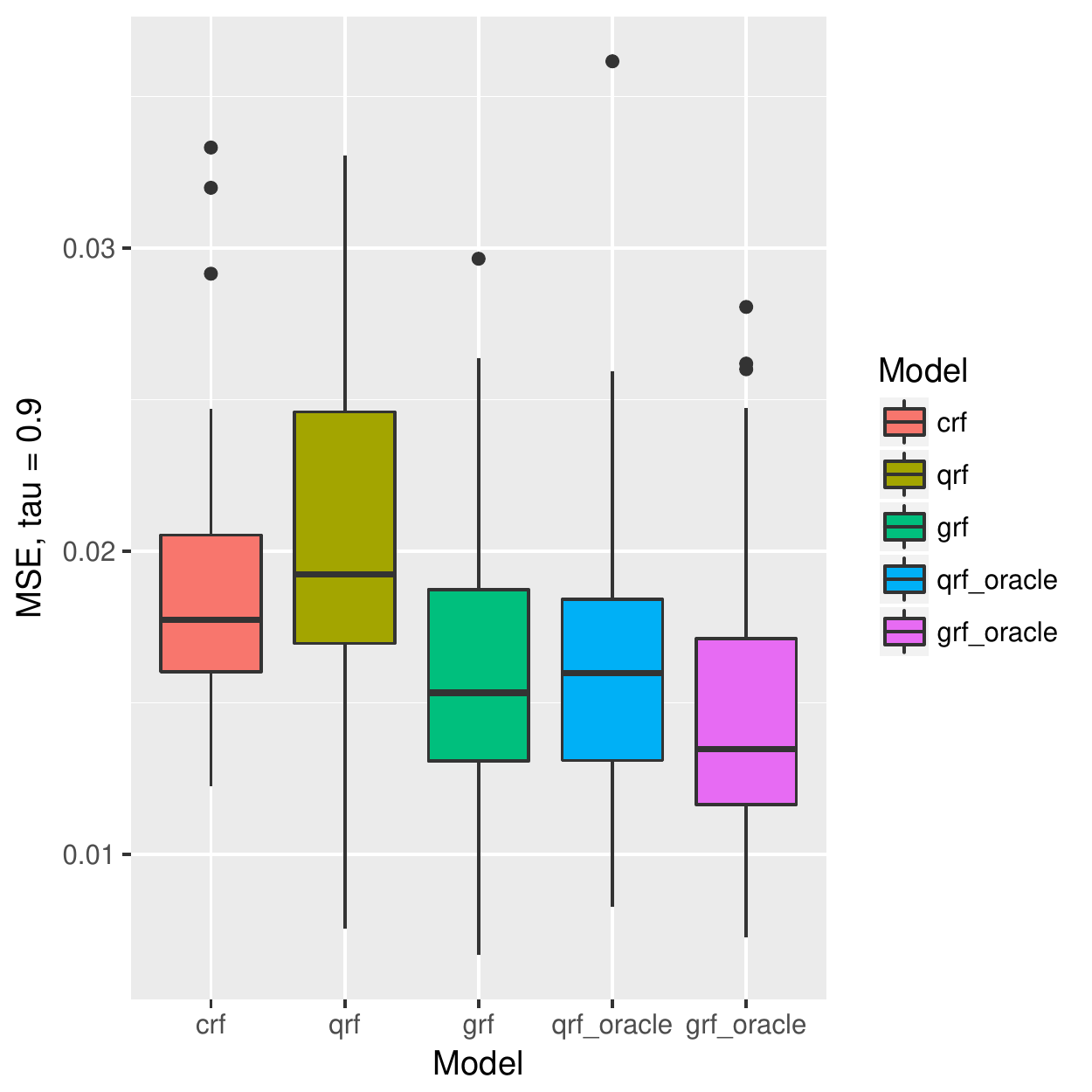}
        \caption{MSE: $\tau = 0.9$}
    \end{subfigure}
    
    \vspace{-0.05in}
    
    \begin{subfigure}[b]{0.18\linewidth}
        \includegraphics[width=\textwidth]{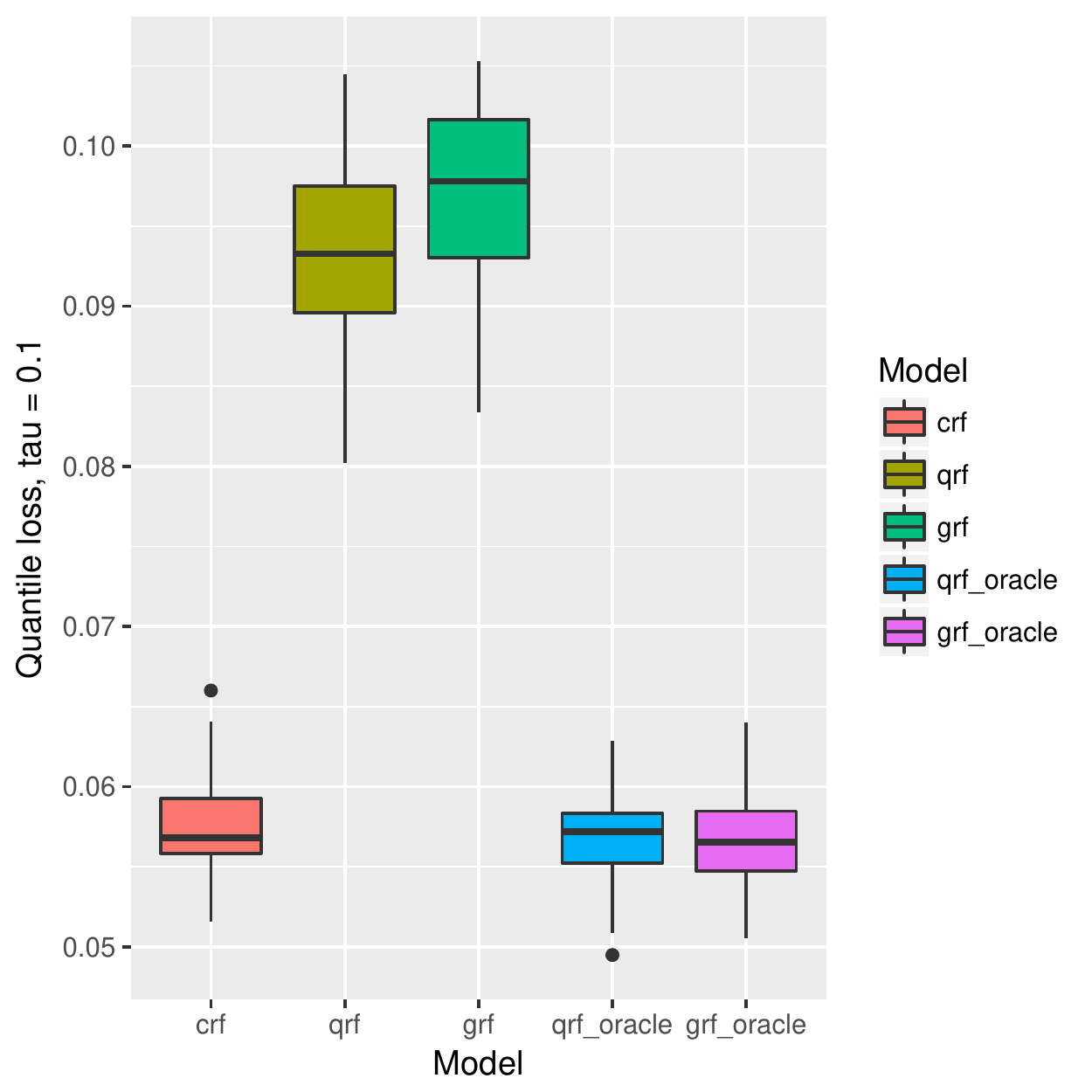}
        \caption{Qauntile loss: $\tau = 0.1$}
    \end{subfigure}
    ~
    \begin{subfigure}[b]{0.18\linewidth}
        \includegraphics[width=\textwidth]{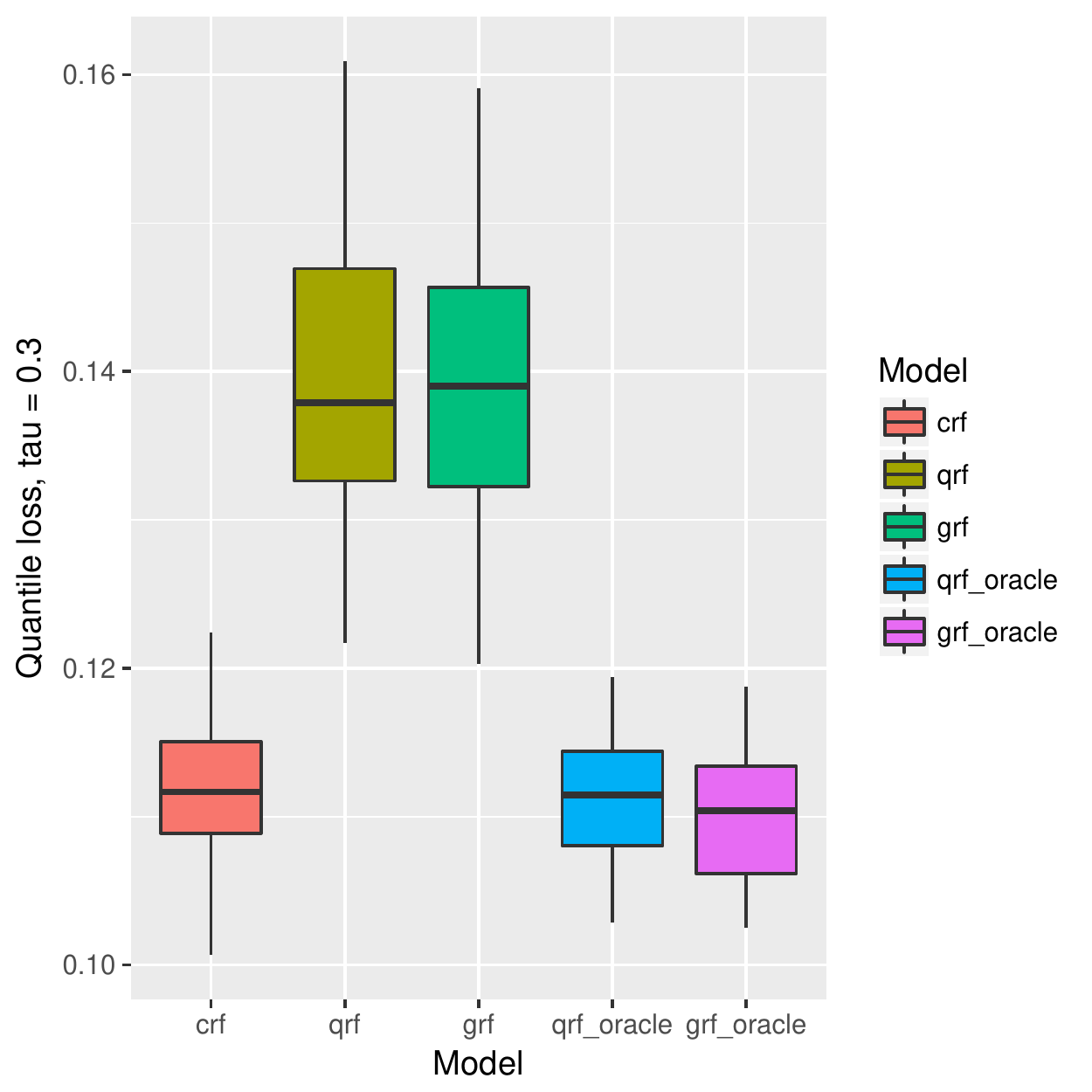}
        \caption{Qauntile loss: $\tau = 0.3$}
    \end{subfigure}
    ~ %add desired spacing between images, e. g. ~, \quad, \qquad, \hfill etc. 
      %(or a blank line to force the subfigure onto a new line)
    \begin{subfigure}[b]{0.18\linewidth}
        \includegraphics[width=\textwidth]{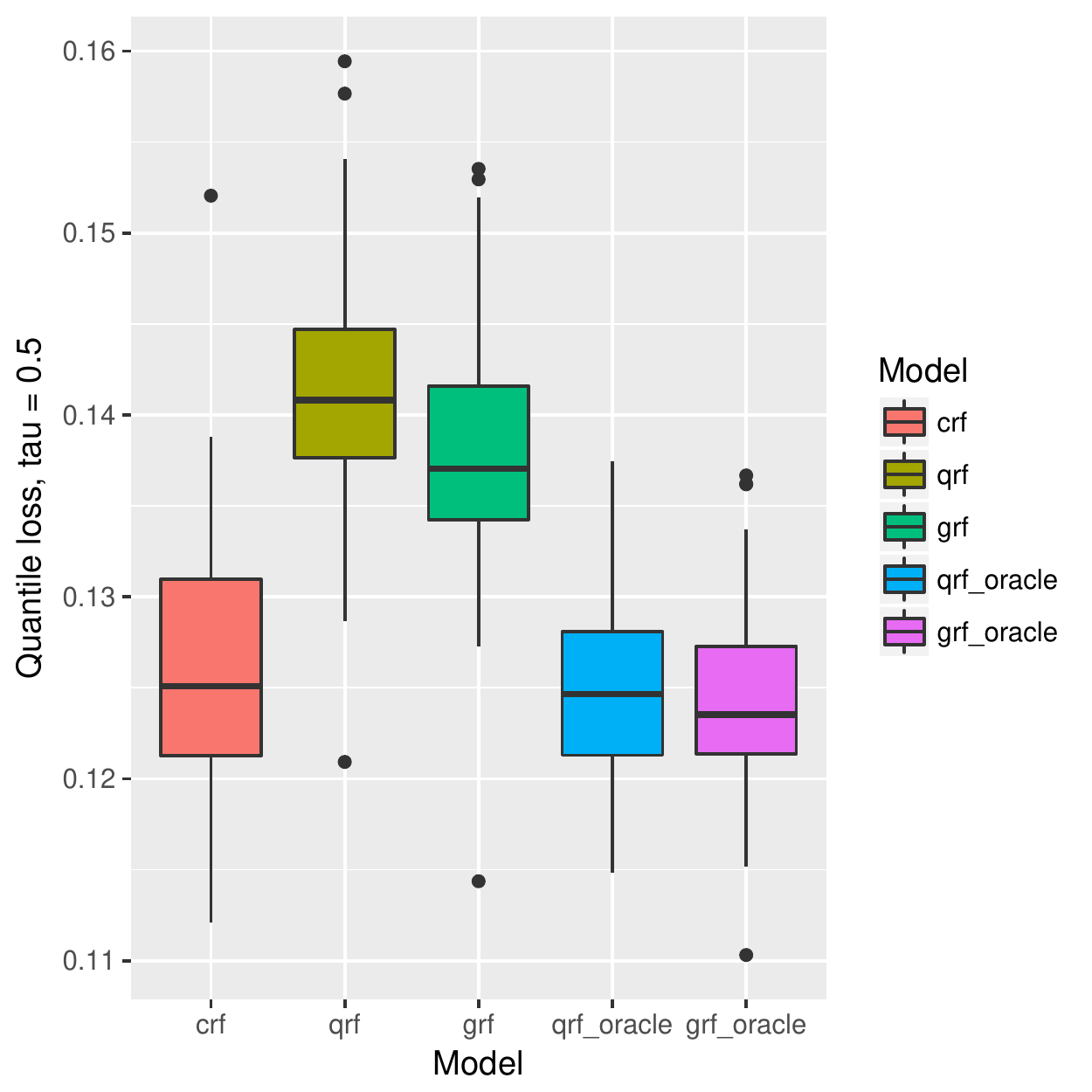}
        \caption{Quantile loss: $\tau = 0.5$}
    \end{subfigure}
    ~
    \begin{subfigure}[b]{0.18\linewidth}
        \includegraphics[width=\textwidth]{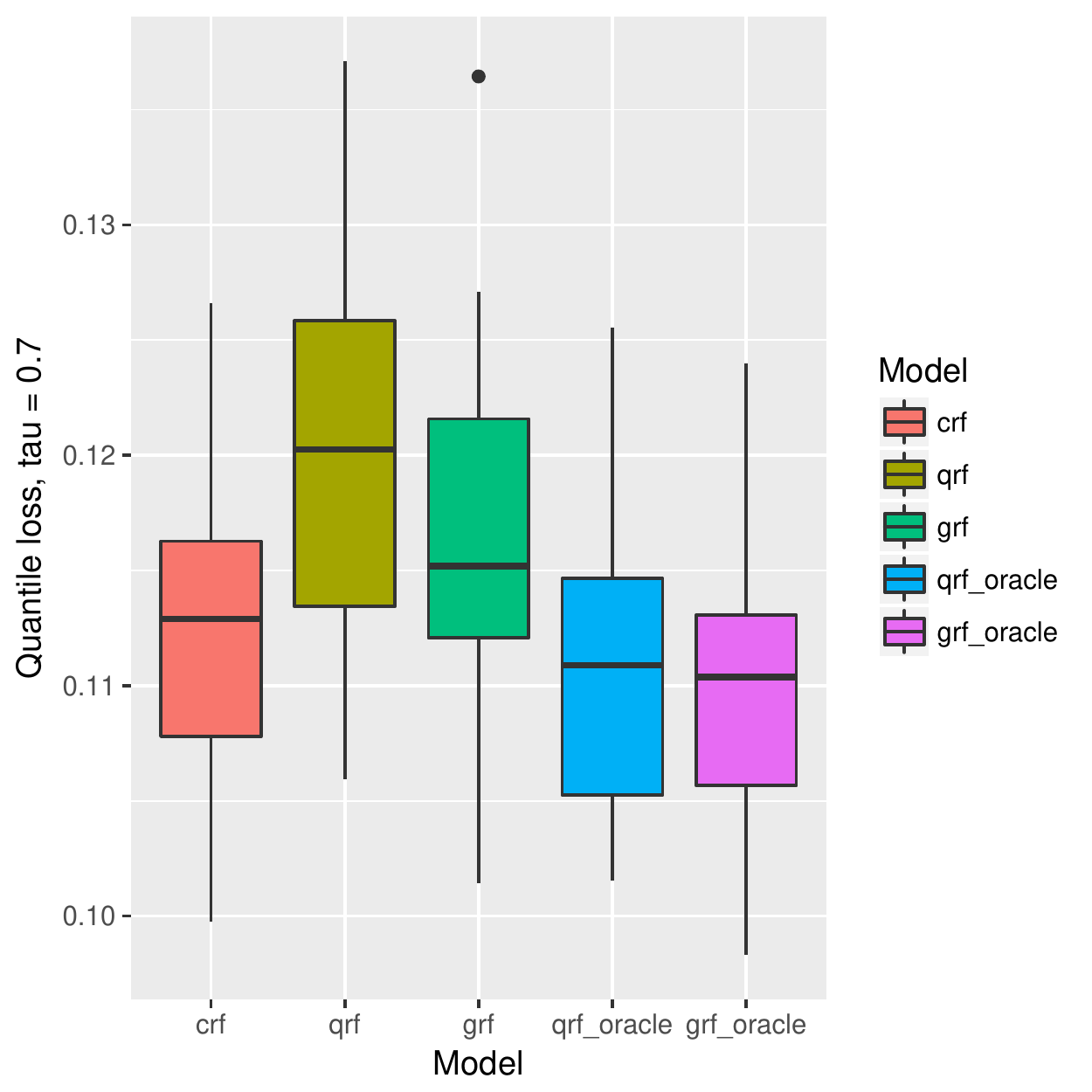}
        \caption{Quantile loss: $\tau = 0.7$}
    \end{subfigure}
    ~
    \begin{subfigure}[b]{0.18\linewidth}
        \includegraphics[width=\textwidth]{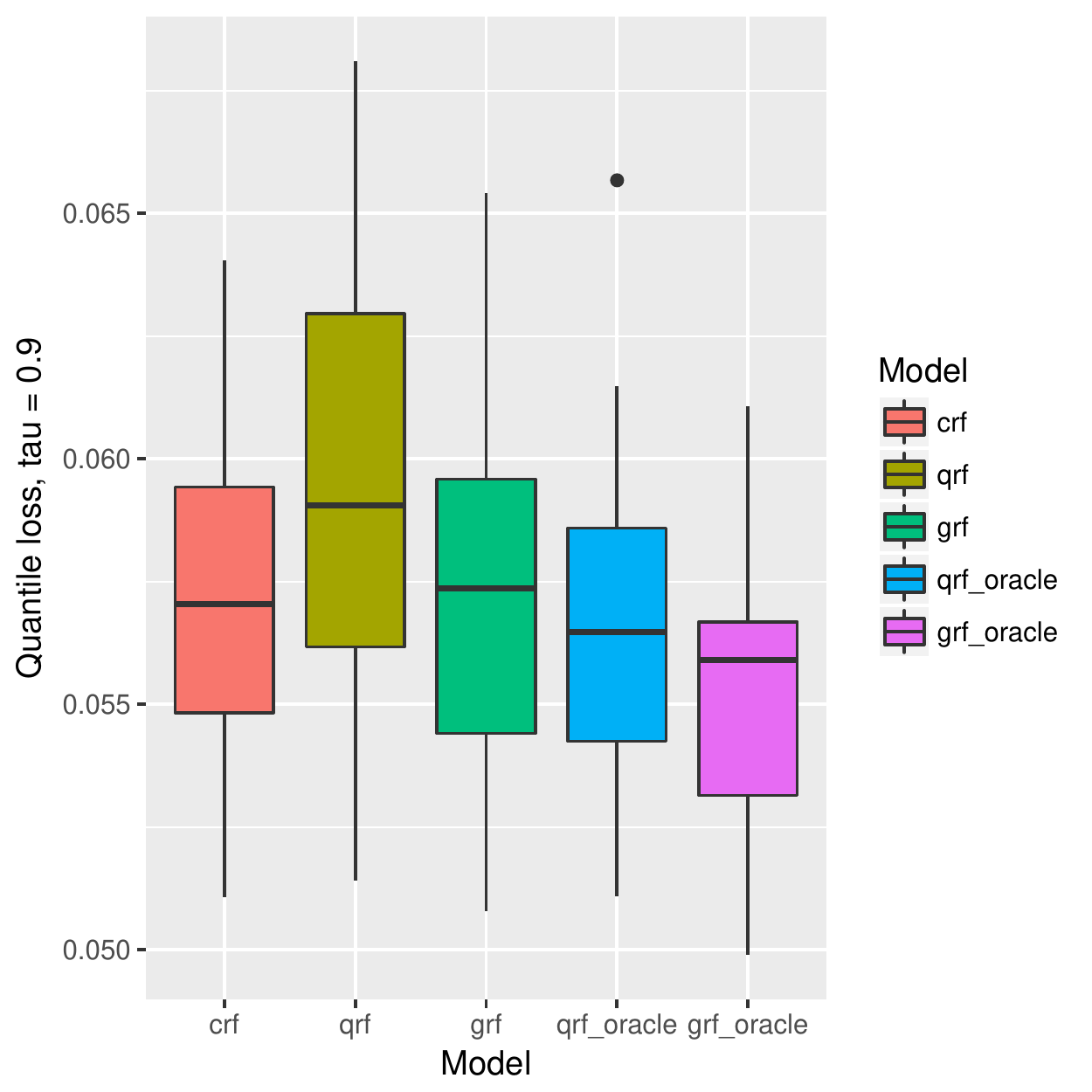}
        \caption{Qauntile loss: $\tau = 0.9$}
    \end{subfigure}
    
    \vspace{-0.05in}
    
    \begin{subfigure}[b]{0.18\linewidth}
        \includegraphics[width=\textwidth]{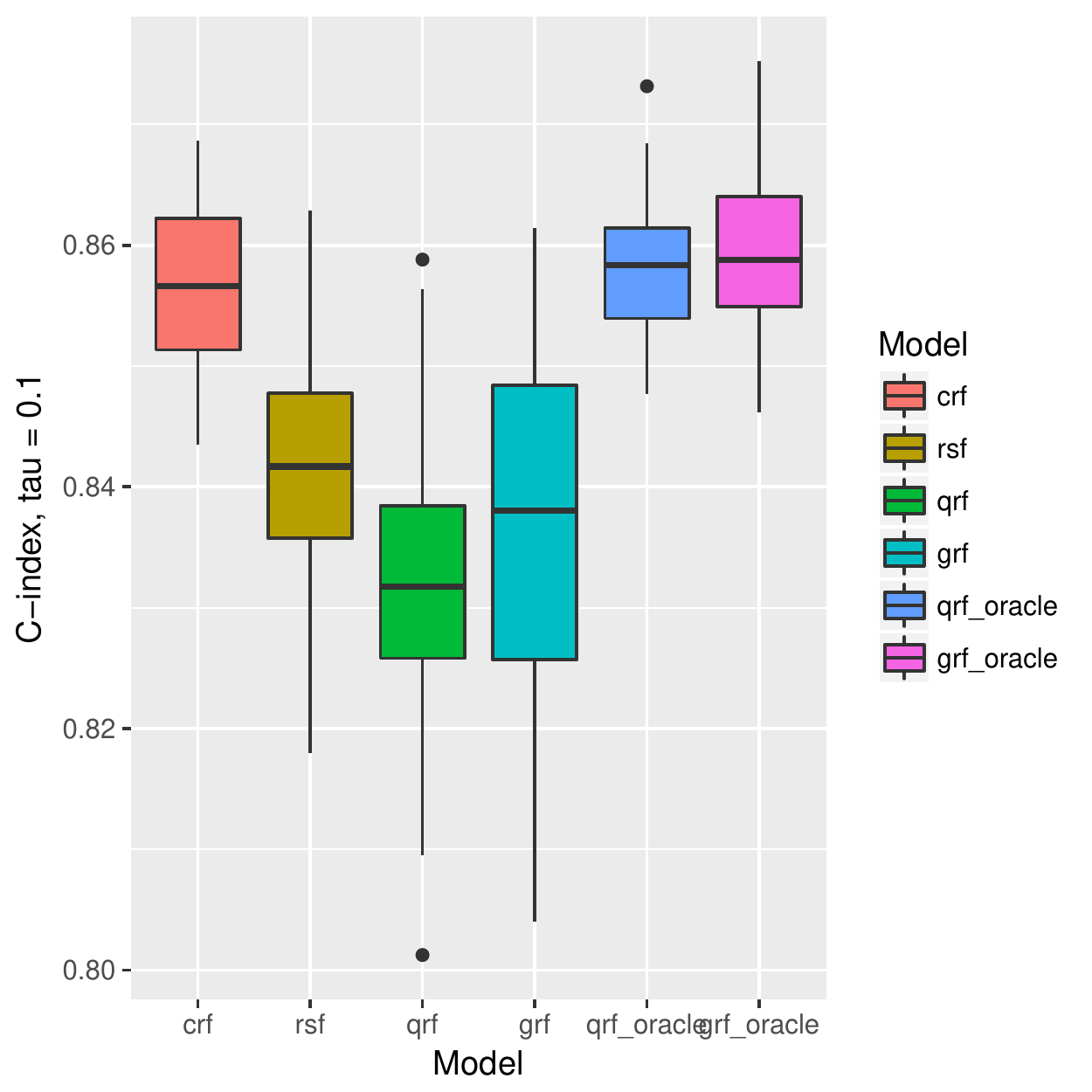}
        \caption{C-index: $\tau = 0.1$}
    \end{subfigure}
    ~
    \begin{subfigure}[b]{0.18\linewidth}
        \includegraphics[width=\textwidth]{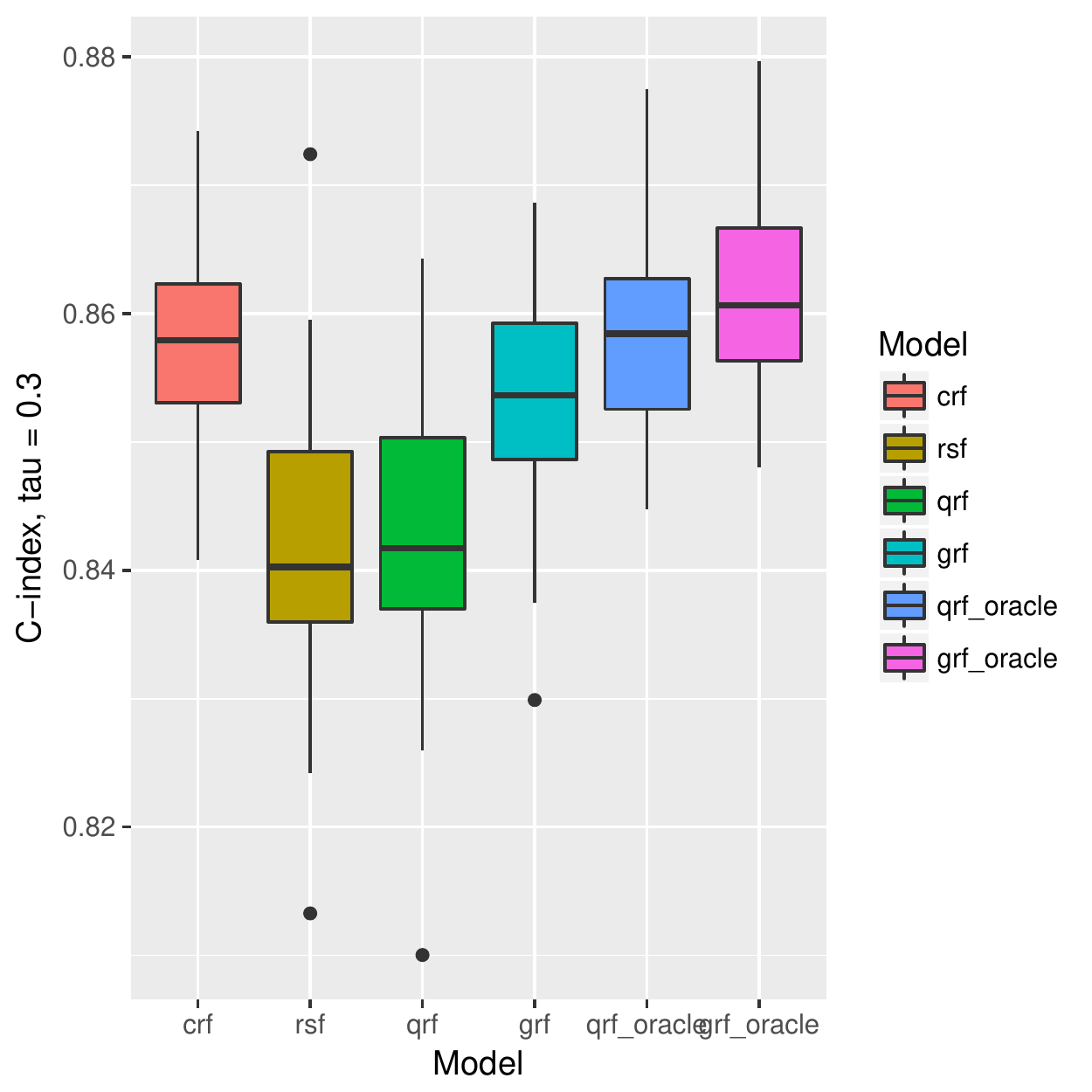}
        \caption{C-index: $\tau = 0.3$}
    \end{subfigure}
    ~ %add desired spacing between images, e. g. ~, \quad, \qquad, \hfill etc. 
      %(or a blank line to force the subfigure onto a new line)
    \begin{subfigure}[b]{0.18\linewidth}
        \includegraphics[width=\textwidth]{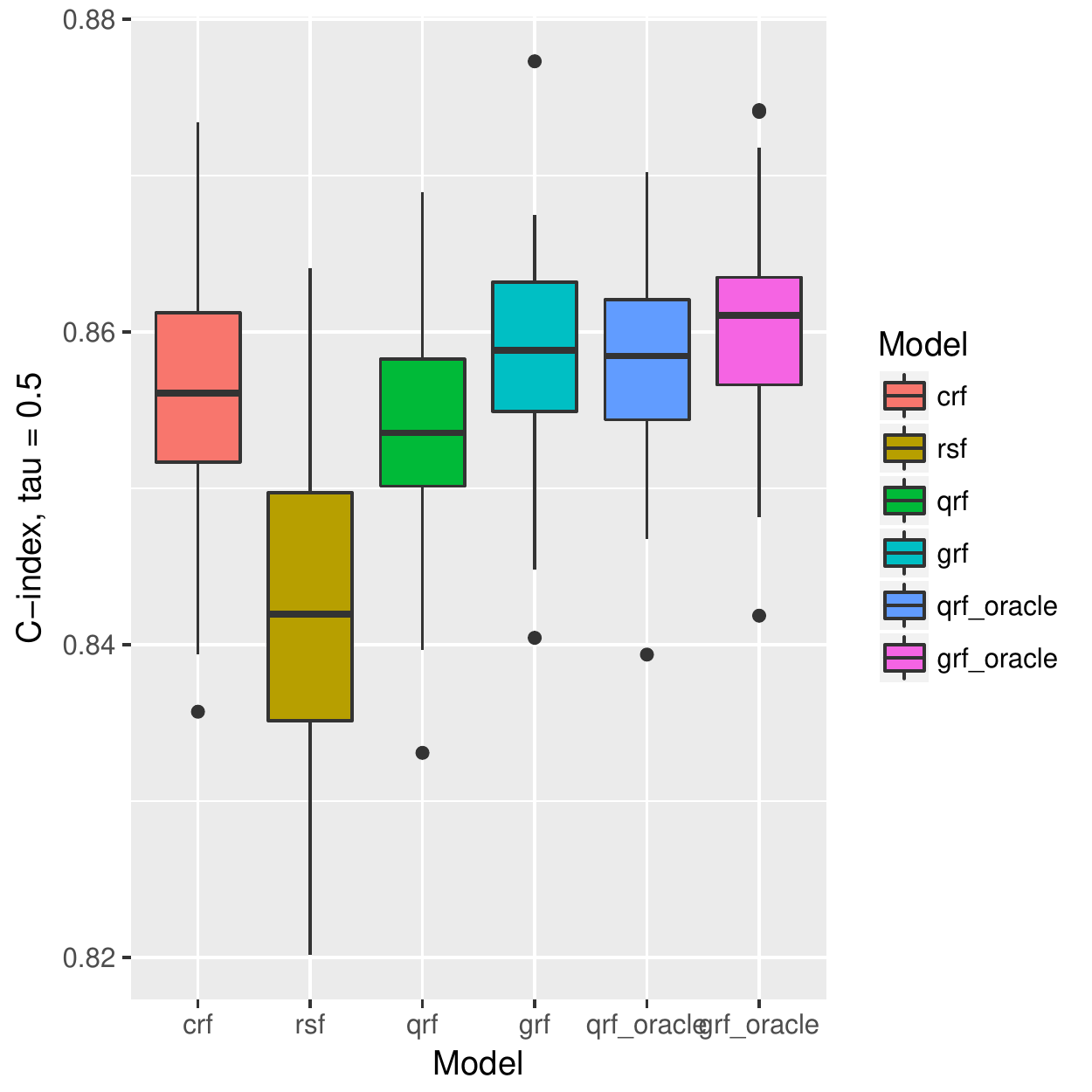}
        \caption{C-index: $\tau = 0.5$}
    \end{subfigure}
    ~
    \begin{subfigure}[b]{0.18\linewidth}
        \includegraphics[width=\textwidth]{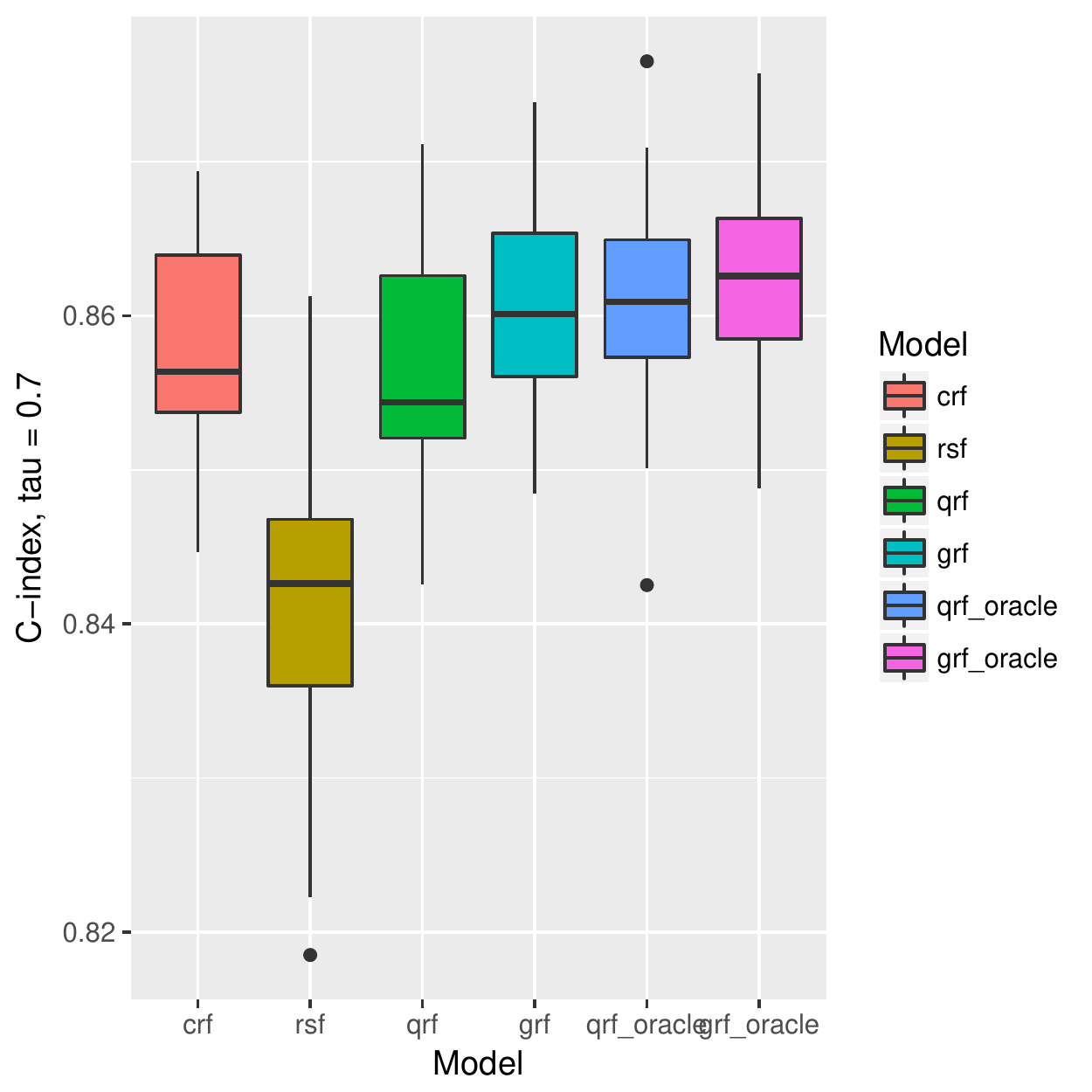}
        \caption{C-index: $\tau = 0.7$}
    \end{subfigure}
    ~
    \begin{subfigure}[b]{0.18\linewidth}
        \includegraphics[width=\textwidth]{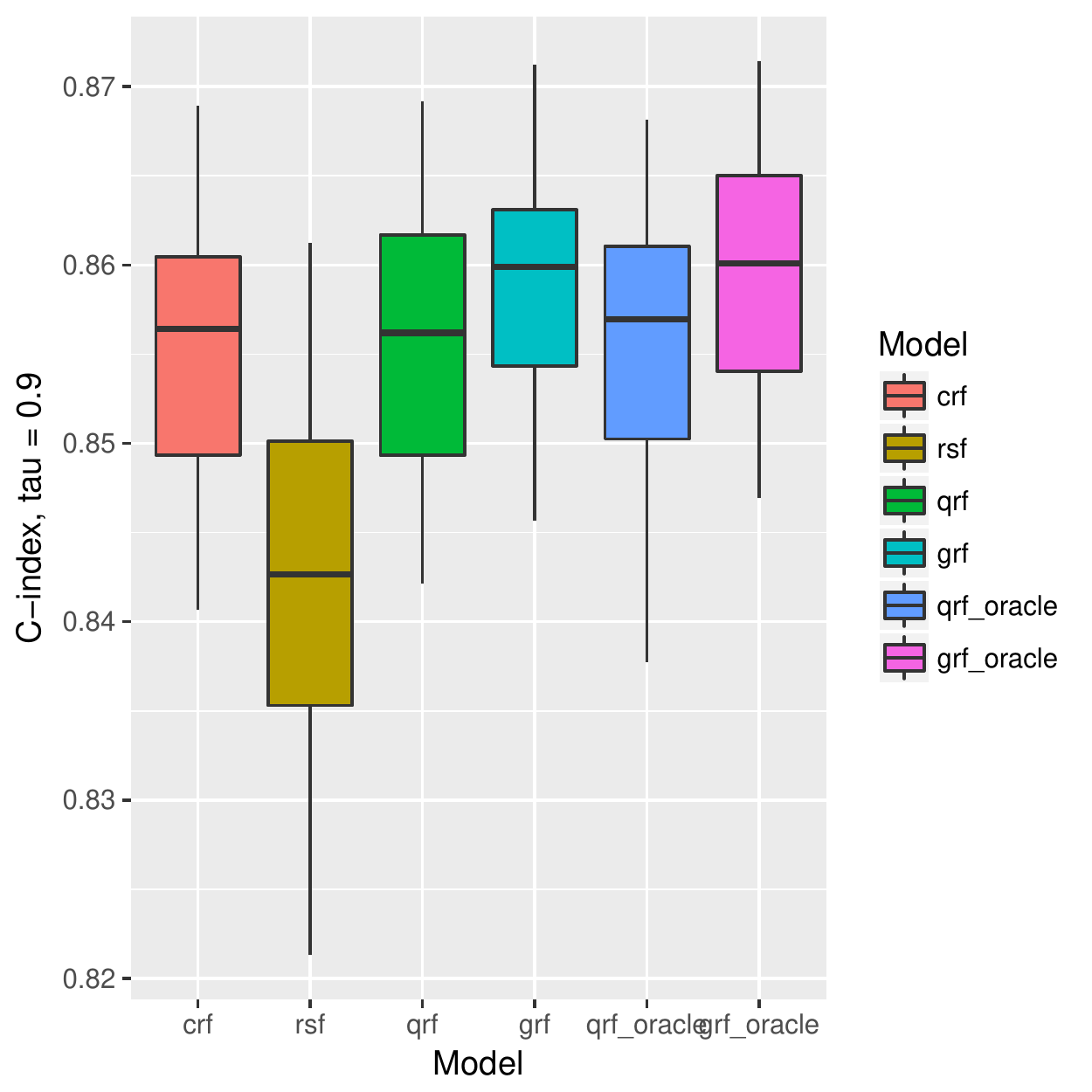}
        \caption{C-index: $\tau = 0.9$}
    \end{subfigure}
    
    \caption{One-dimensional censored sine model box plots with $n=300$ and $B=1000$. For the metrics MSE \eqref{eq:L_MSE}, MAD \eqref{eq:L_MAD} and quantile loss \eqref{eq:L_quantile}, the smaller the value is the better. For C-index, the larger it is the better.}
    \label{fig:sine_1d_box}
\end{figure}

\subsubsection{Multi-dimensional AFT model results}
In this section, we test our algorithm on a multi-dimensional AFT model
\begin{equation*}
    \log(T) = X^\top \beta + \epsilon,
\end{equation*}
where $\bm{\beta} = (0.1,0.2,0.3,0.4,0.5)$, $X_{\cdot,j} \sim \textrm{Unif}(0, 2)$, and $\epsilon \sim \mathcal{N}(0, 0.3^2)$. The censoring variable $C \sim \textrm{Exp}(\lambda = 0.05)$, and $Y = \min(T, C)$. The censoring level is about $22\%$. The number of training data is 500 and the number of test points is 300. All the forests consist of 1000 trees. The result is in Figure \ref{fig:aft_multi_box}. Our model \textit{crf} still outperforms \textit{qrf} and \textit{grf} significantly, and is comparable to \textit{qrf-oracle} and \textit{grf-oracle}.

\begin{figure}[!htb]
    \small
    \centering
    \begin{subfigure}[b]{0.182\linewidth}
        \includegraphics[width=\textwidth]{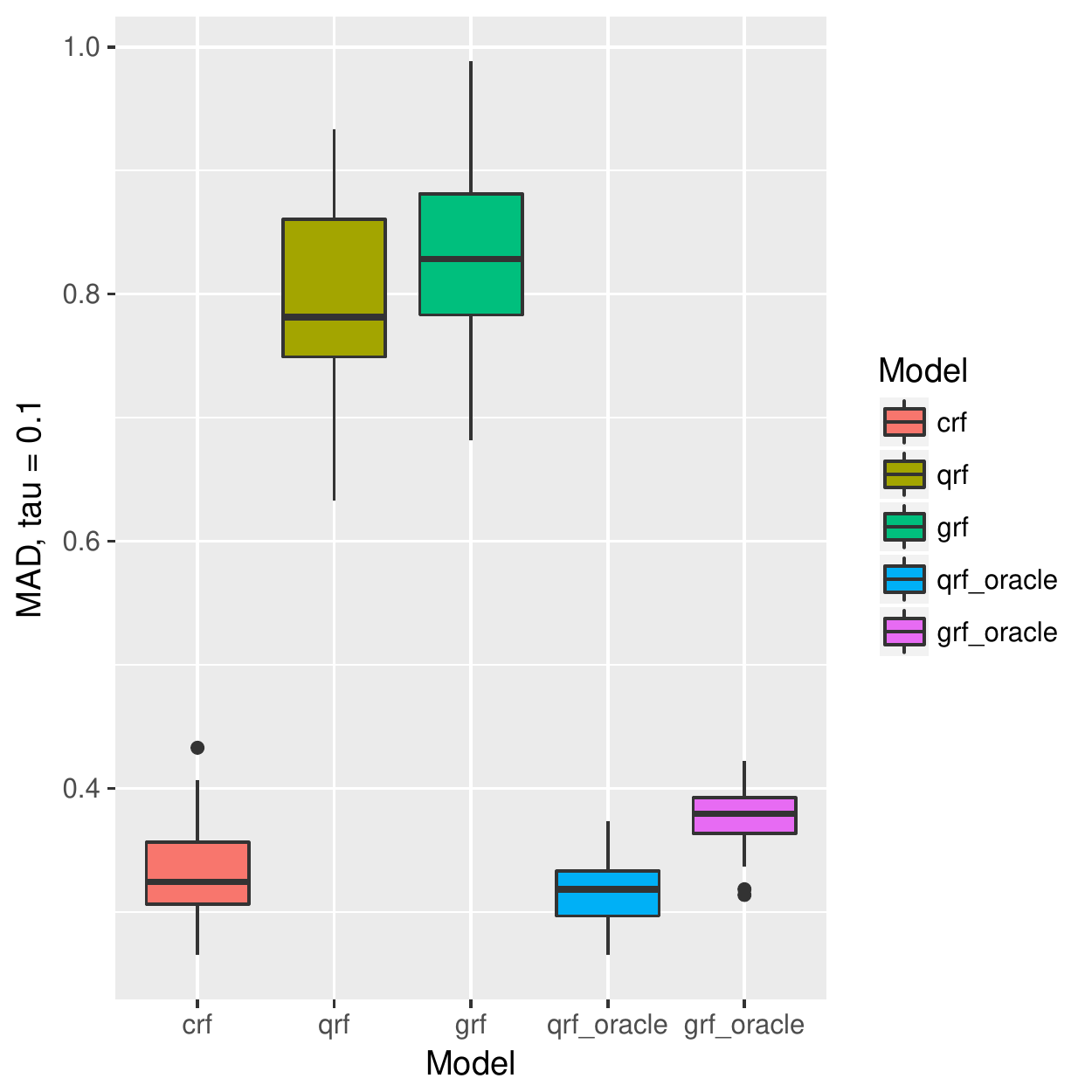}
        \caption{MAD: $\tau = 0.1$}
    \end{subfigure}
    ~
    \begin{subfigure}[b]{0.182\linewidth}
        \includegraphics[width=\textwidth]{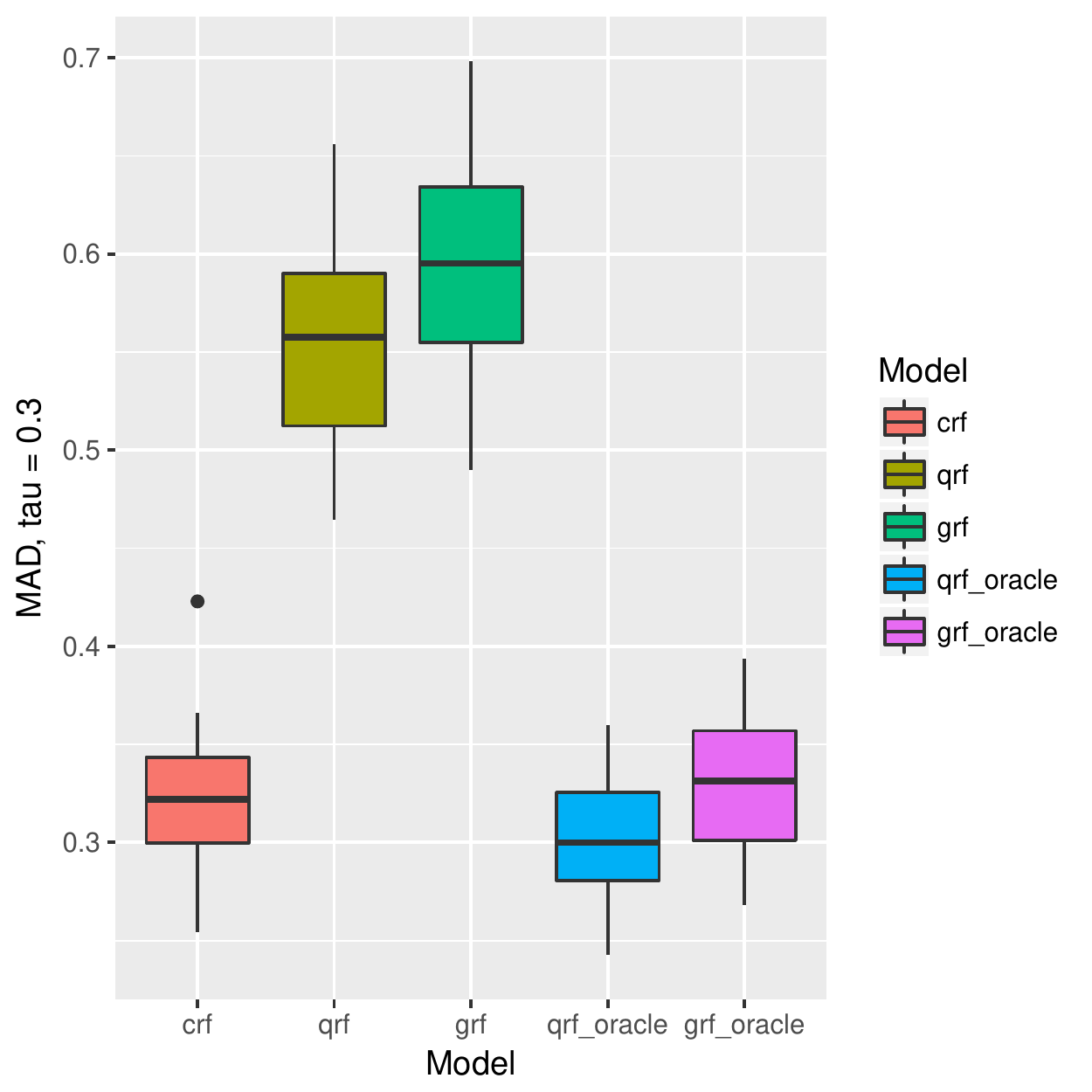}
        \caption{MAD: $\tau = 0.3$}
    \end{subfigure}
    ~ %add desired spacing between images, e. g. ~, \quad, \qquad, \hfill etc. 
      %(or a blank line to force the subfigure onto a new line)
    \begin{subfigure}[b]{0.182\linewidth}
        \includegraphics[width=\textwidth]{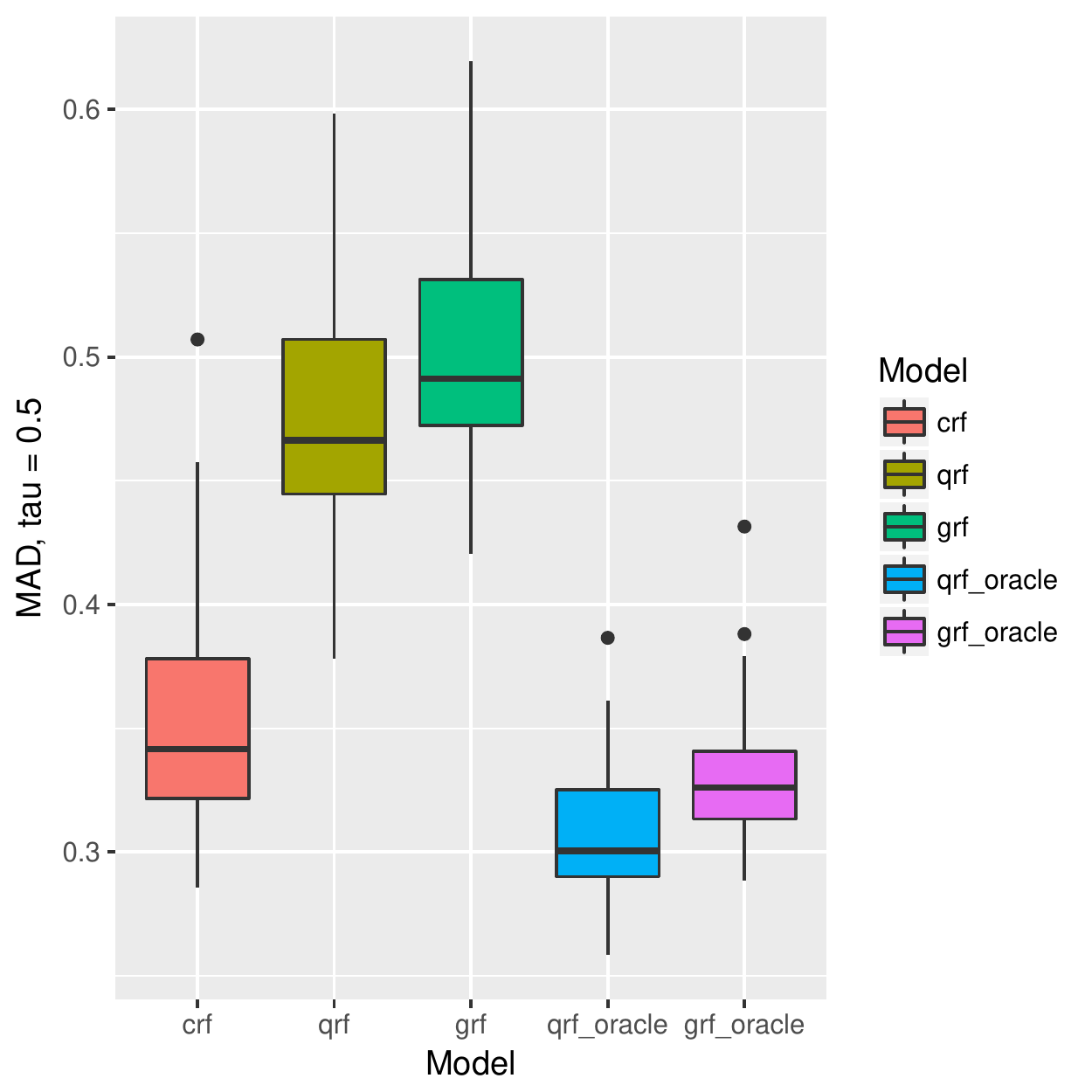}
        \caption{MAD: $\tau = 0.5$}
    \end{subfigure}
    ~
    \begin{subfigure}[b]{0.182\linewidth}
        \includegraphics[width=\textwidth]{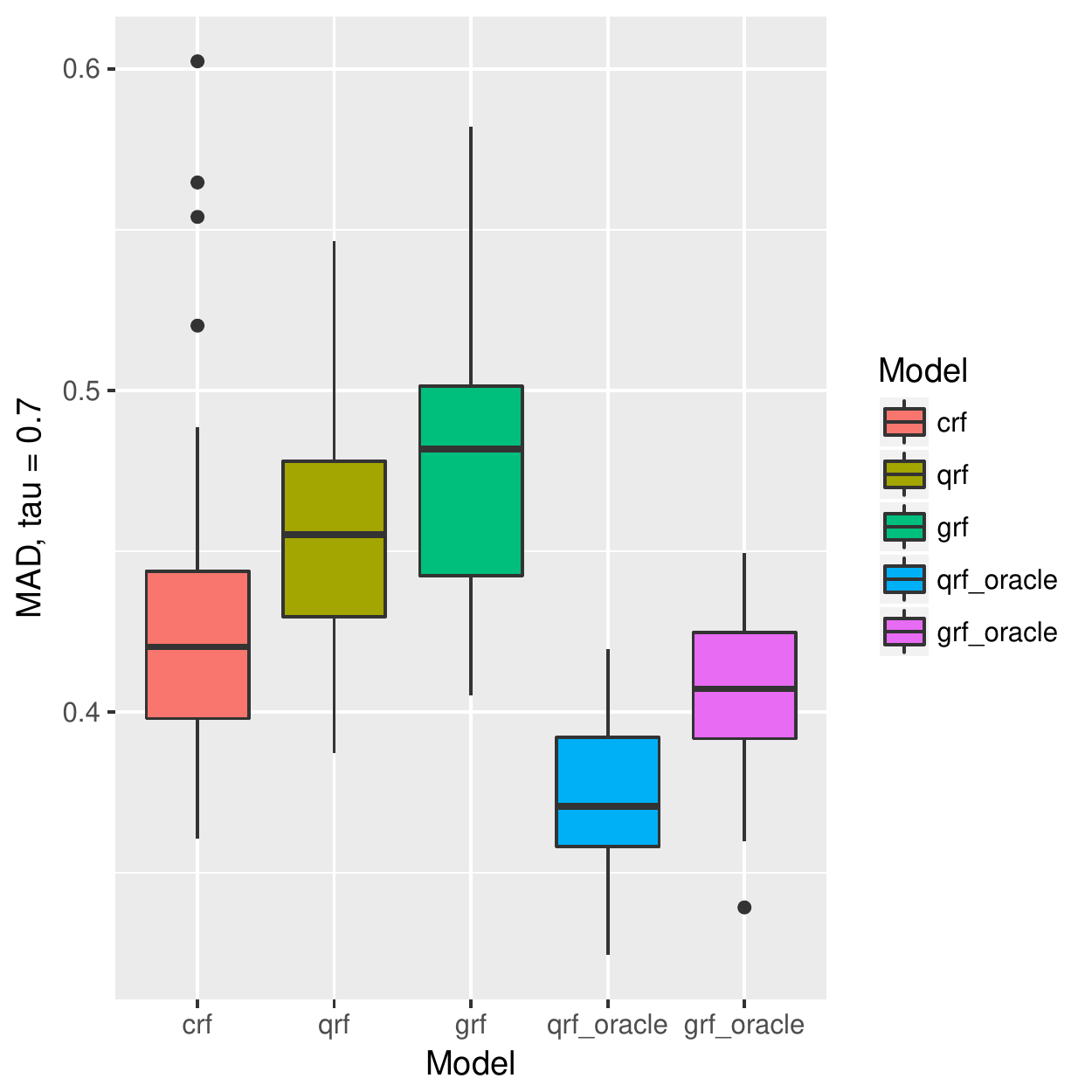}
        \caption{MAD: $\tau = 0.7$}
    \end{subfigure}
    ~
    \begin{subfigure}[b]{0.182\linewidth}
        \includegraphics[width=\textwidth]{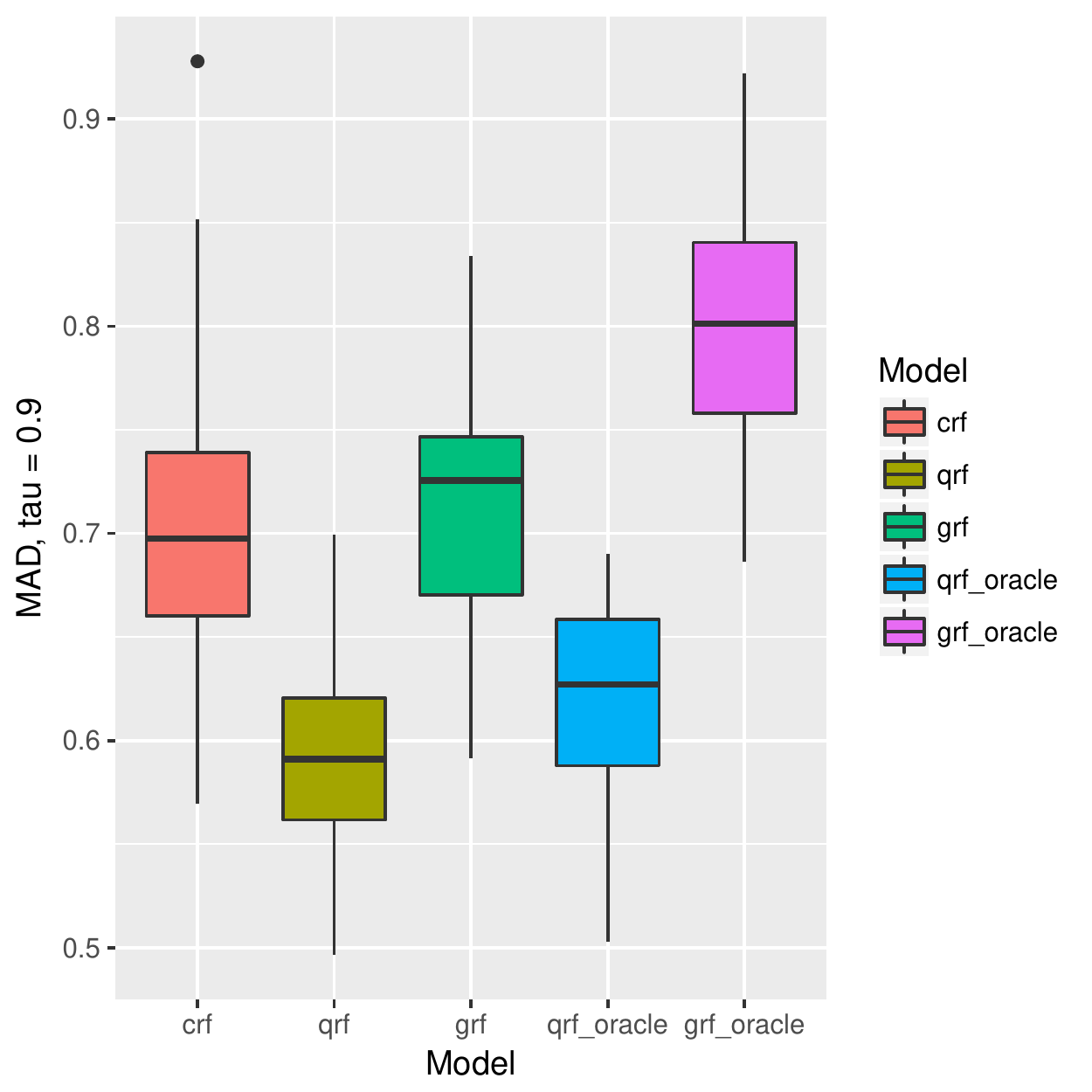}
        \caption{MAD: $\tau = 0.9$}
    \end{subfigure}
    
    \vspace{-0.05in}
    
    \begin{subfigure}[b]{0.18\linewidth}
        \includegraphics[width=\textwidth]{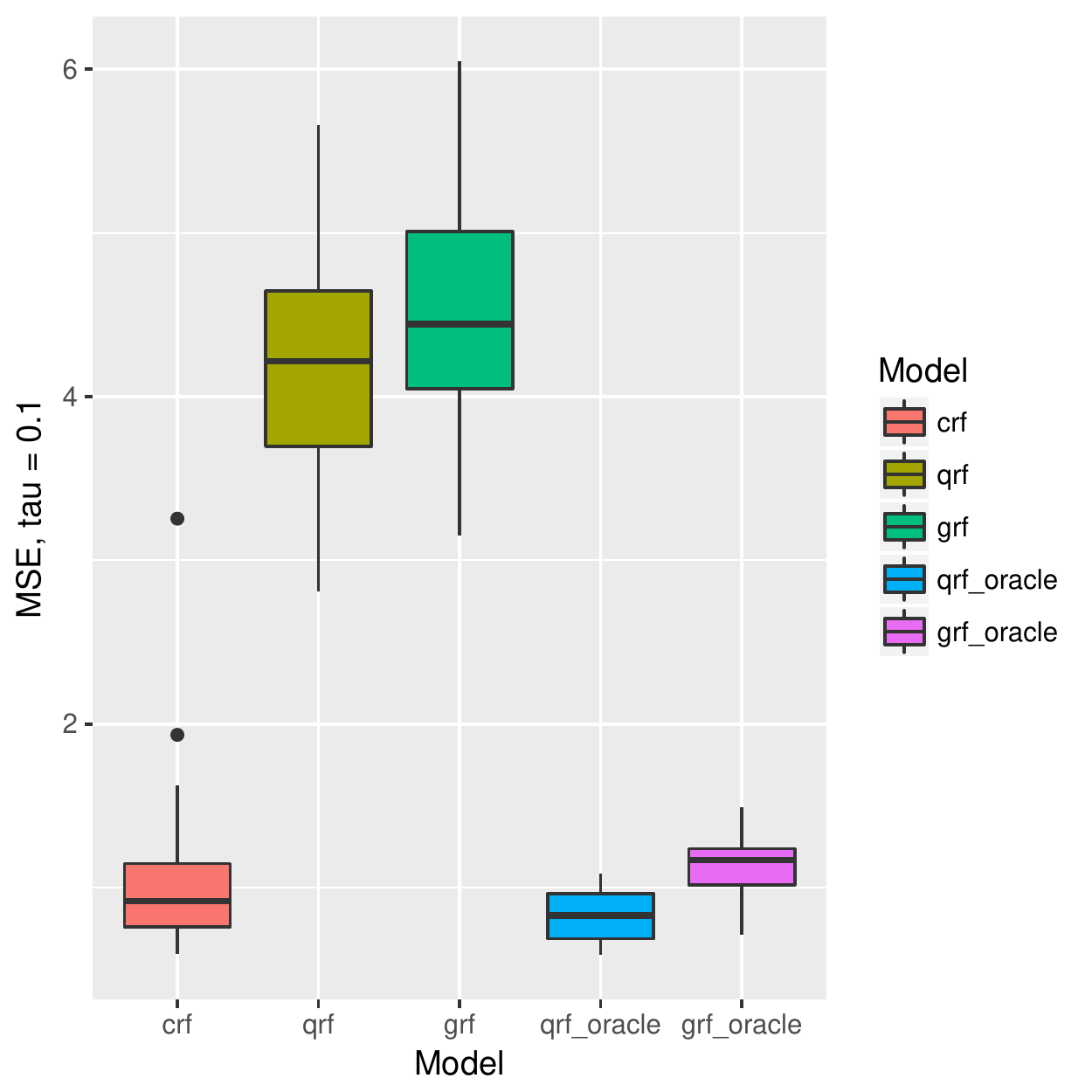}
        \caption{MSE: $\tau = 0.1$}
    \end{subfigure}
    ~
    \begin{subfigure}[b]{0.18\linewidth}
        \includegraphics[width=\textwidth]{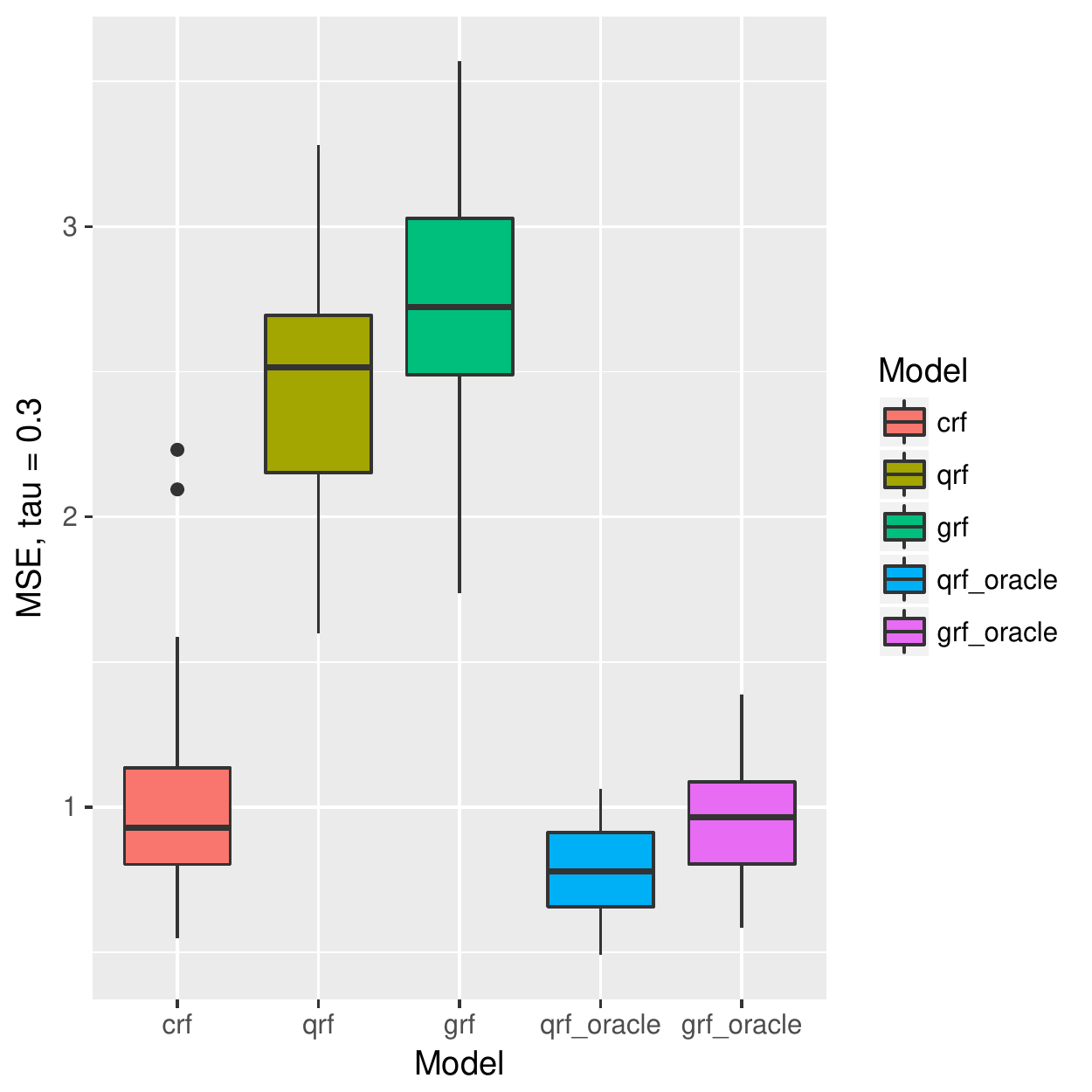}
        \caption{MSE: $\tau = 0.3$}
    \end{subfigure}
    ~ %add desired spacing between images, e. g. ~, \quad, \qquad, \hfill etc. 
      %(or a blank line to force the subfigure onto a new line)
    \begin{subfigure}[b]{0.18\linewidth}
        \includegraphics[width=\textwidth]{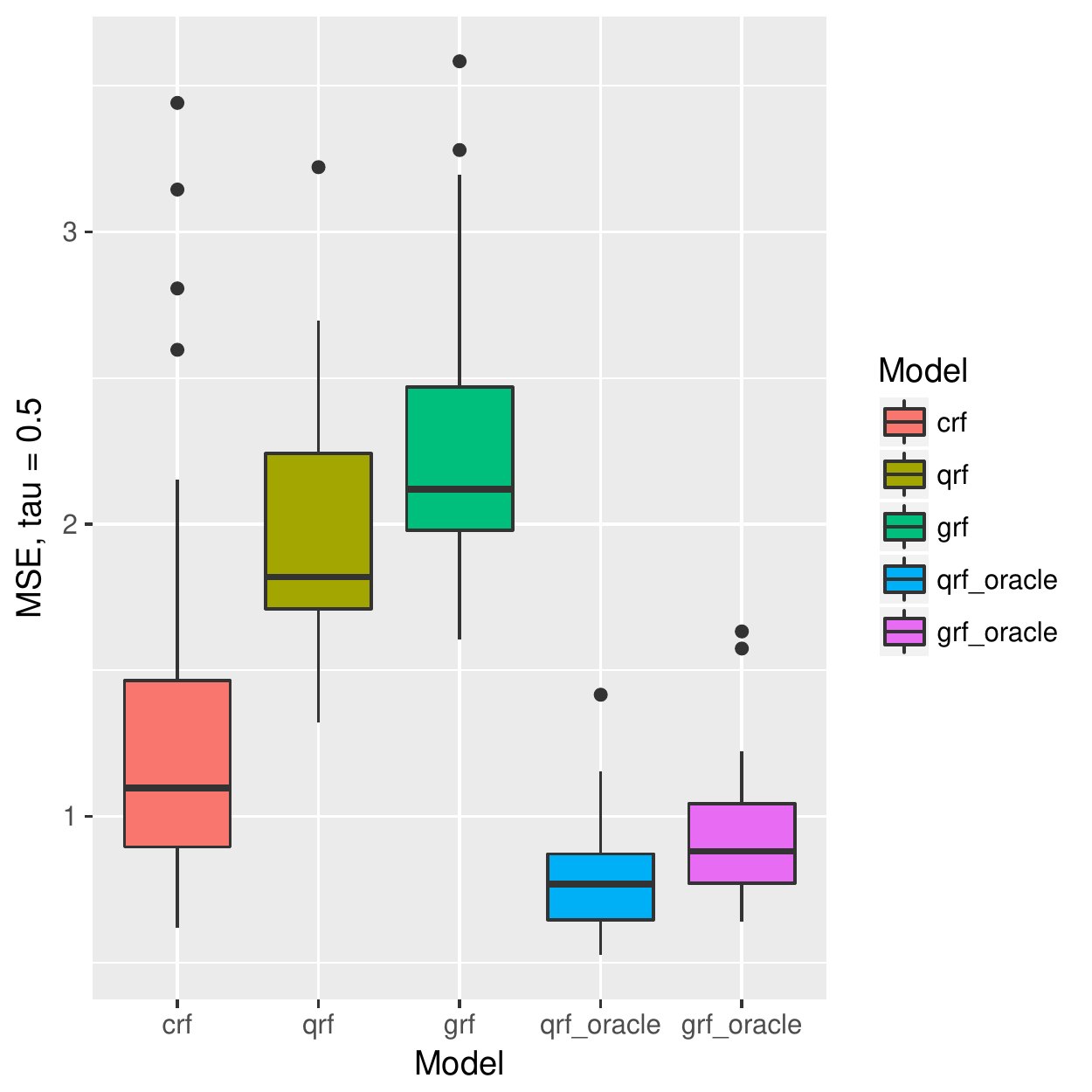}
        \caption{MSE: $\tau = 0.5$}
    \end{subfigure}
    ~
    \begin{subfigure}[b]{0.18\linewidth}
        \includegraphics[width=\textwidth]{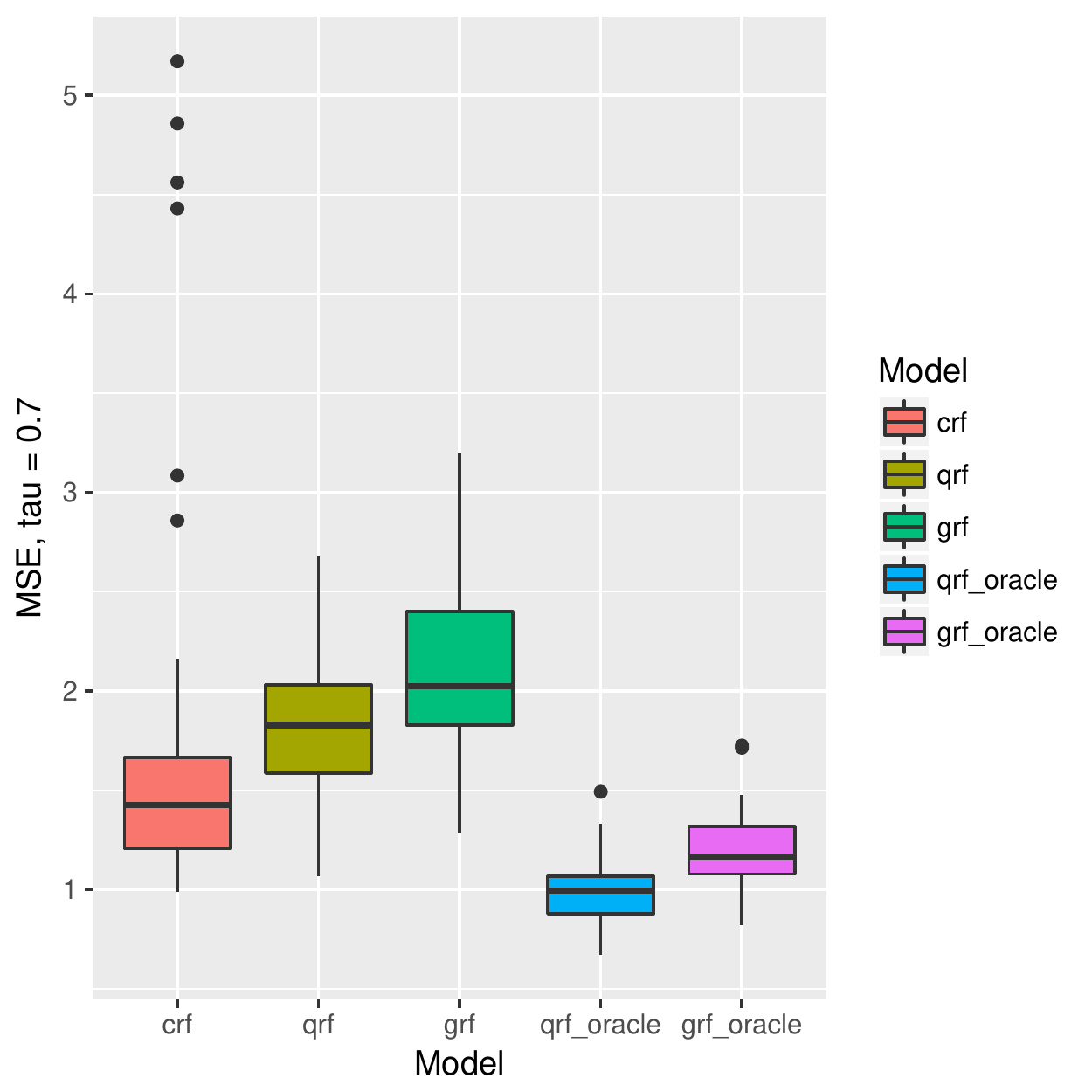}
        \caption{MSE: $\tau = 0.7$}
    \end{subfigure}
    ~
    \begin{subfigure}[b]{0.18\linewidth}
        \includegraphics[width=\textwidth]{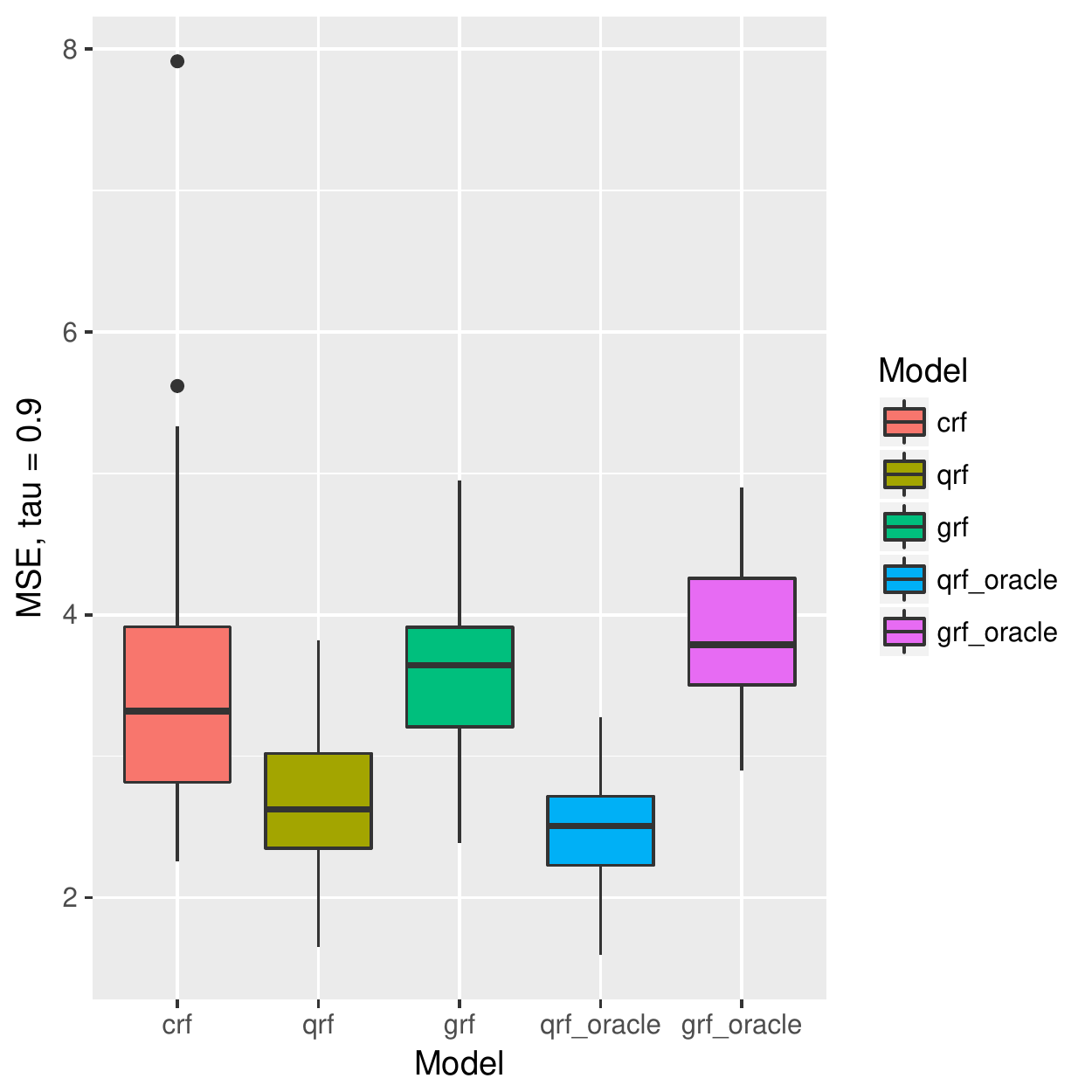}
        \caption{MSE: $\tau = 0.9$}
    \end{subfigure}
    
    \vspace{-0.05in}
    
    \begin{subfigure}[b]{0.18\linewidth}
        \includegraphics[width=\textwidth]{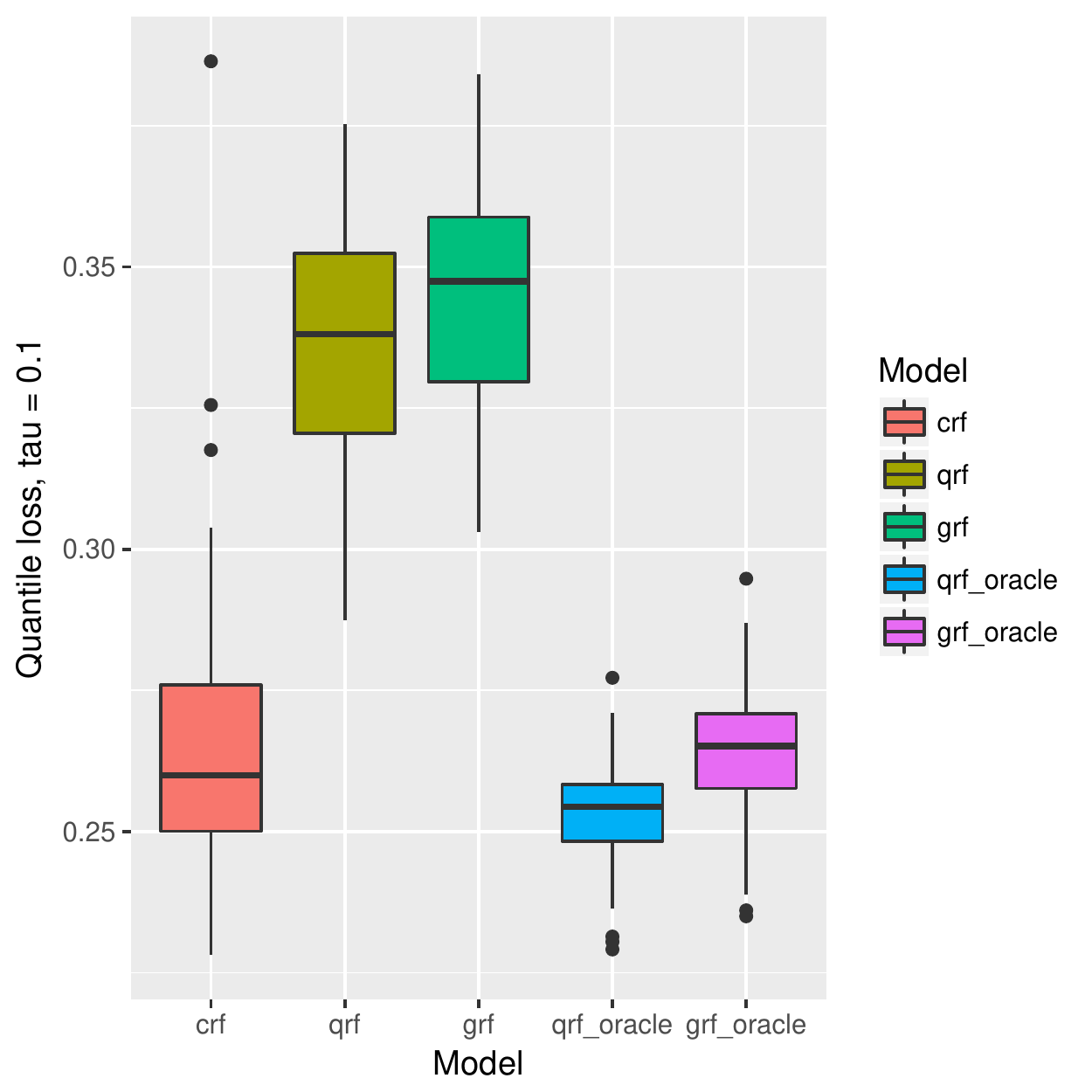}
        \caption{Quantile loss: $\tau = 0.3$}
    \end{subfigure}
    ~ %add desired spacing between images, e. g. ~, \quad, \qquad, \hfill etc. 
      %(or a blank line to force the subfigure onto a new line)
    \begin{subfigure}[b]{0.18\linewidth}
        \includegraphics[width=\textwidth]{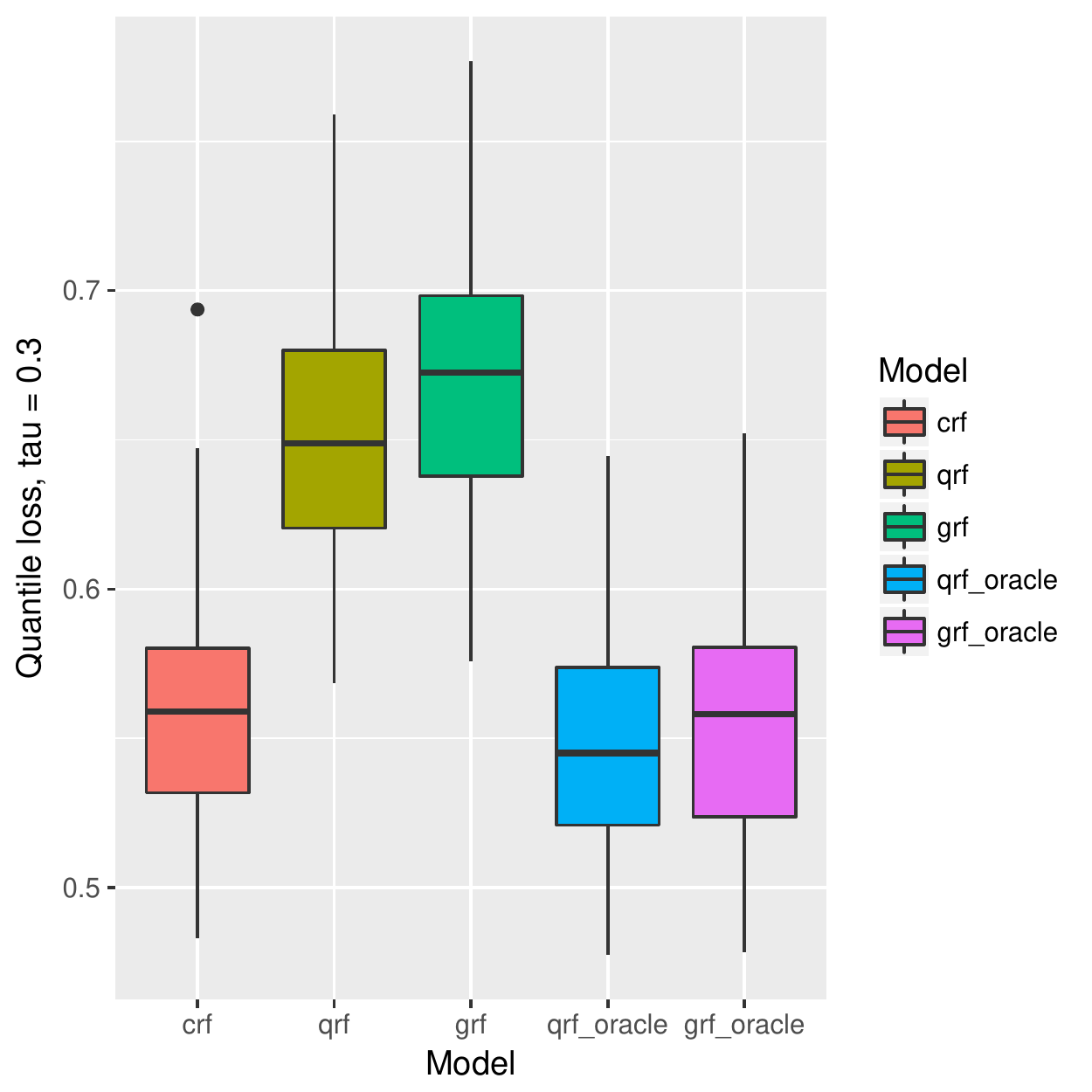}
        \caption{Quantile loss: $\tau = 0.3$}
    \end{subfigure}
    ~
    \begin{subfigure}[b]{0.18\linewidth}
        \includegraphics[width=\textwidth]{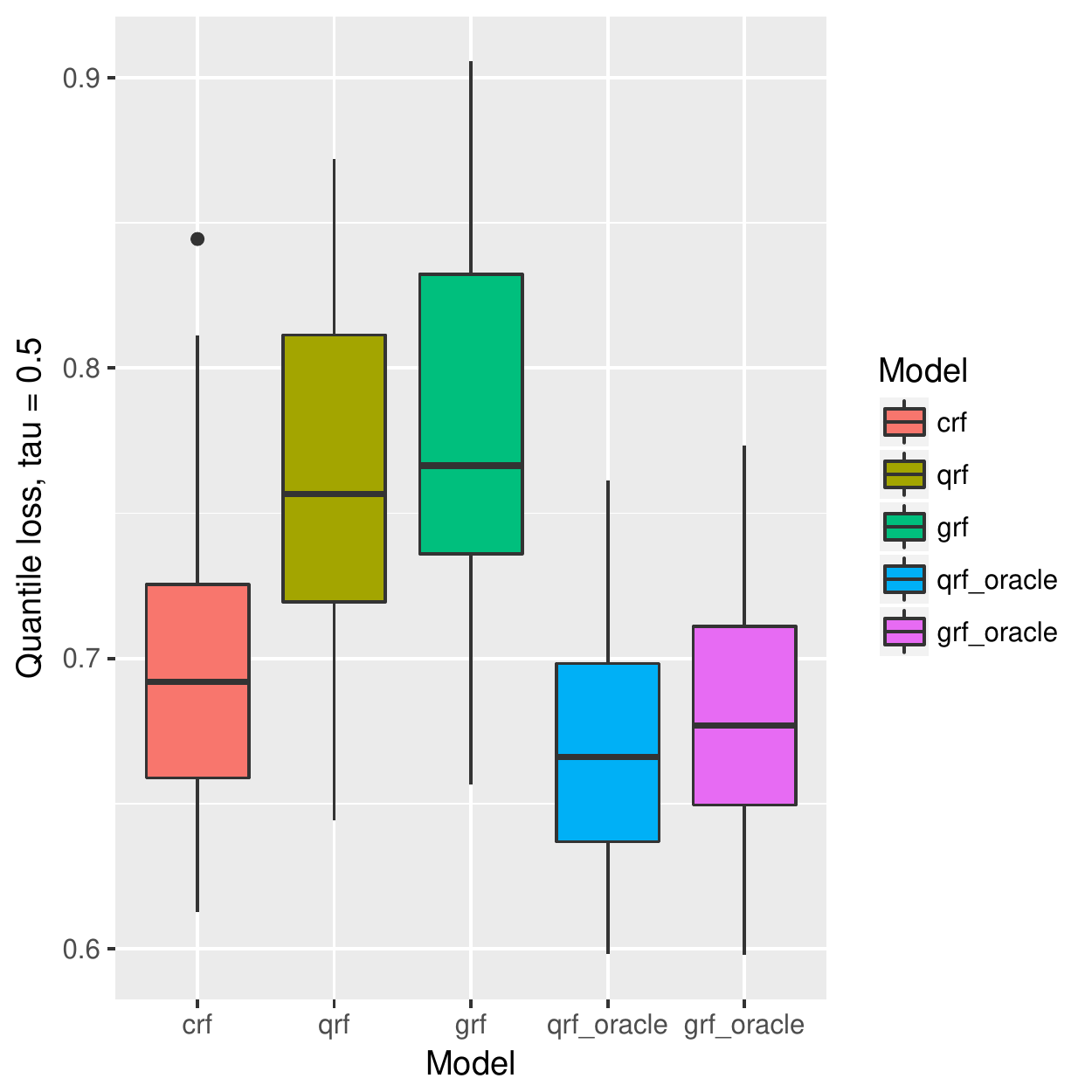}
        \caption{Quantile loss: $\tau = 0.5$}
    \end{subfigure}
    ~
    \begin{subfigure}[b]{0.18\linewidth}
        \includegraphics[width=\textwidth]{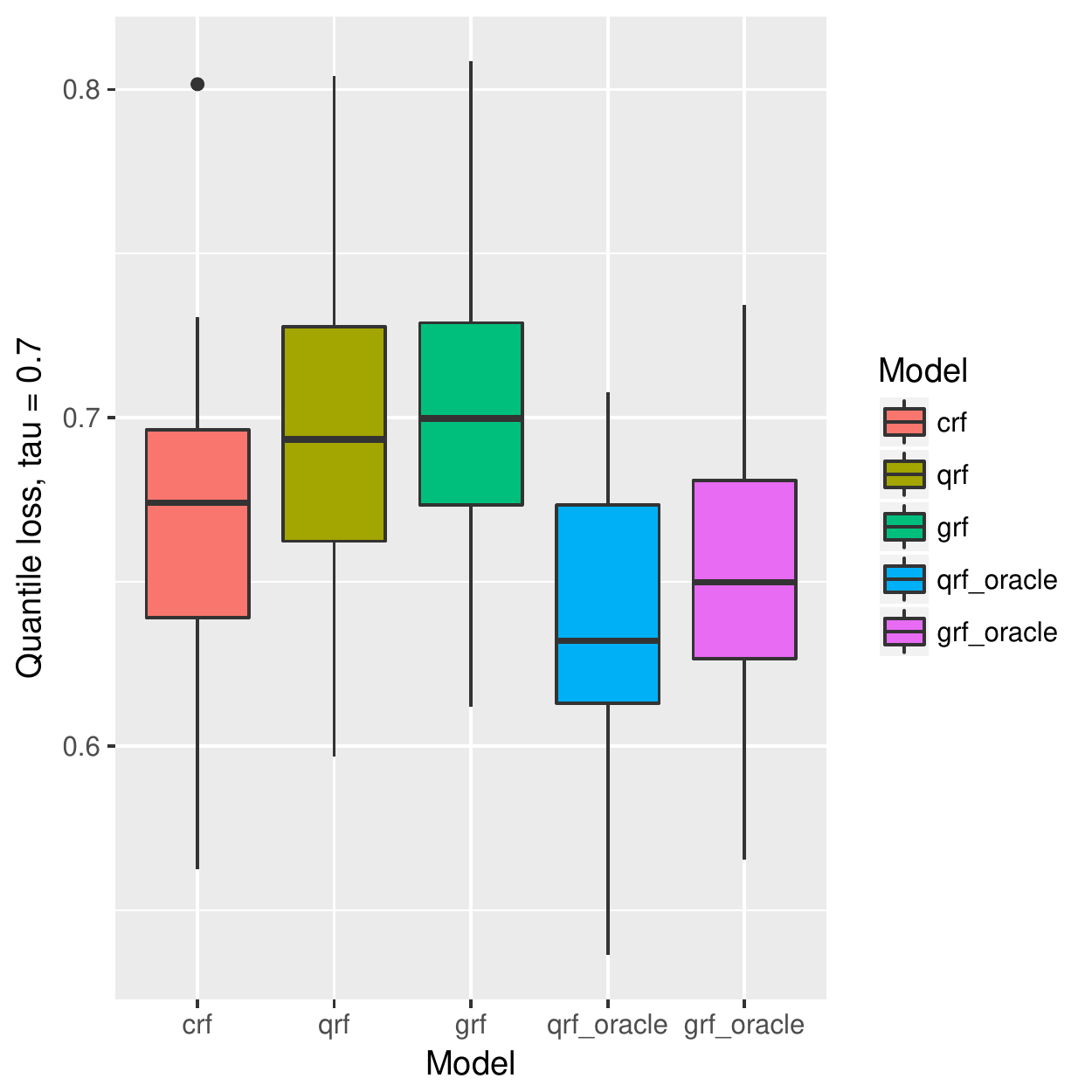}
        \caption{Quantile loss: $\tau = 0.7$}
    \end{subfigure}
    ~
    \begin{subfigure}[b]{0.18\linewidth}
        \includegraphics[width=\textwidth]{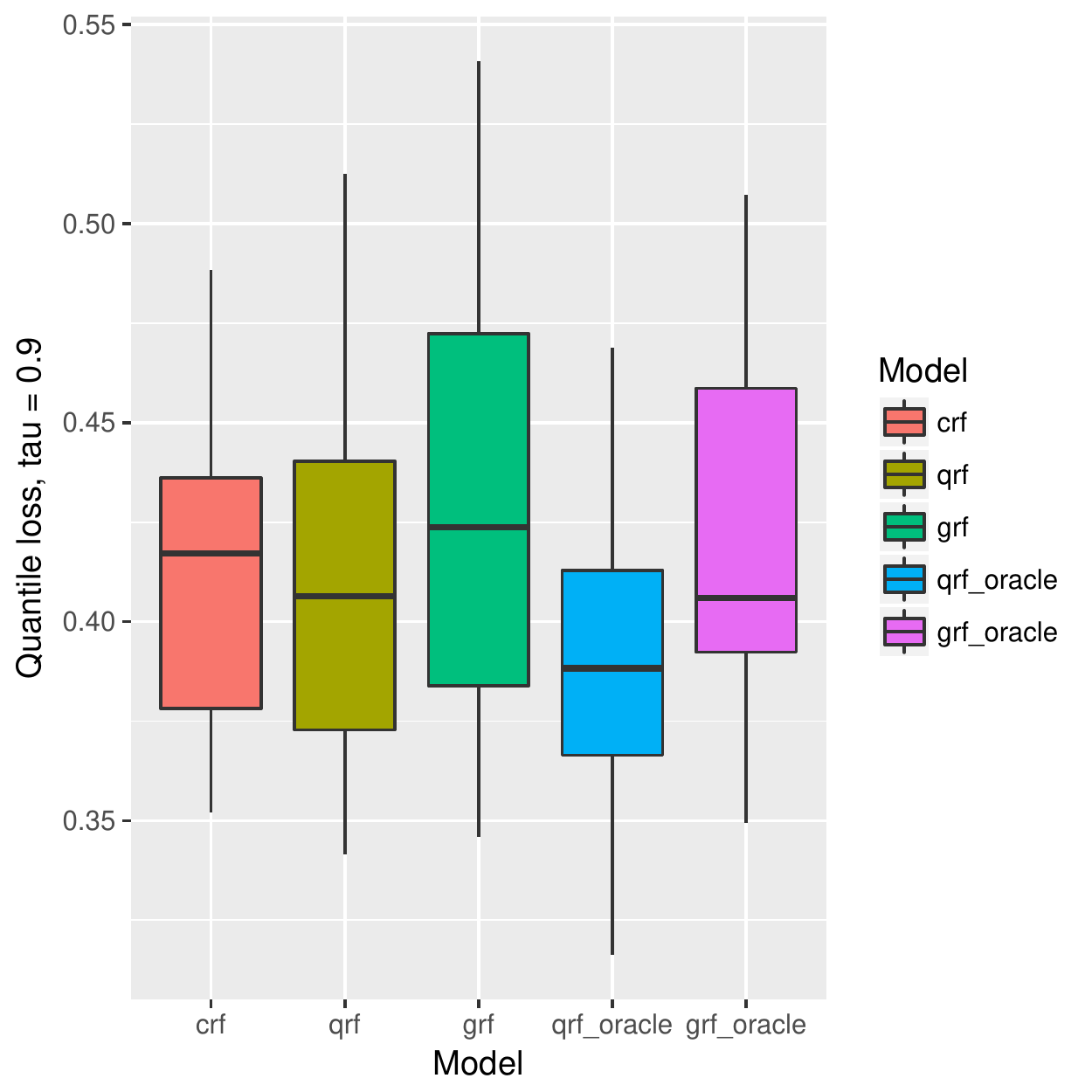}
        \caption{Qauntile loss: $\tau = 0.9$}
    \end{subfigure}
    
    \vspace{-0.05in}
    
    \begin{subfigure}[b]{0.18\linewidth}
        \includegraphics[width=\textwidth]{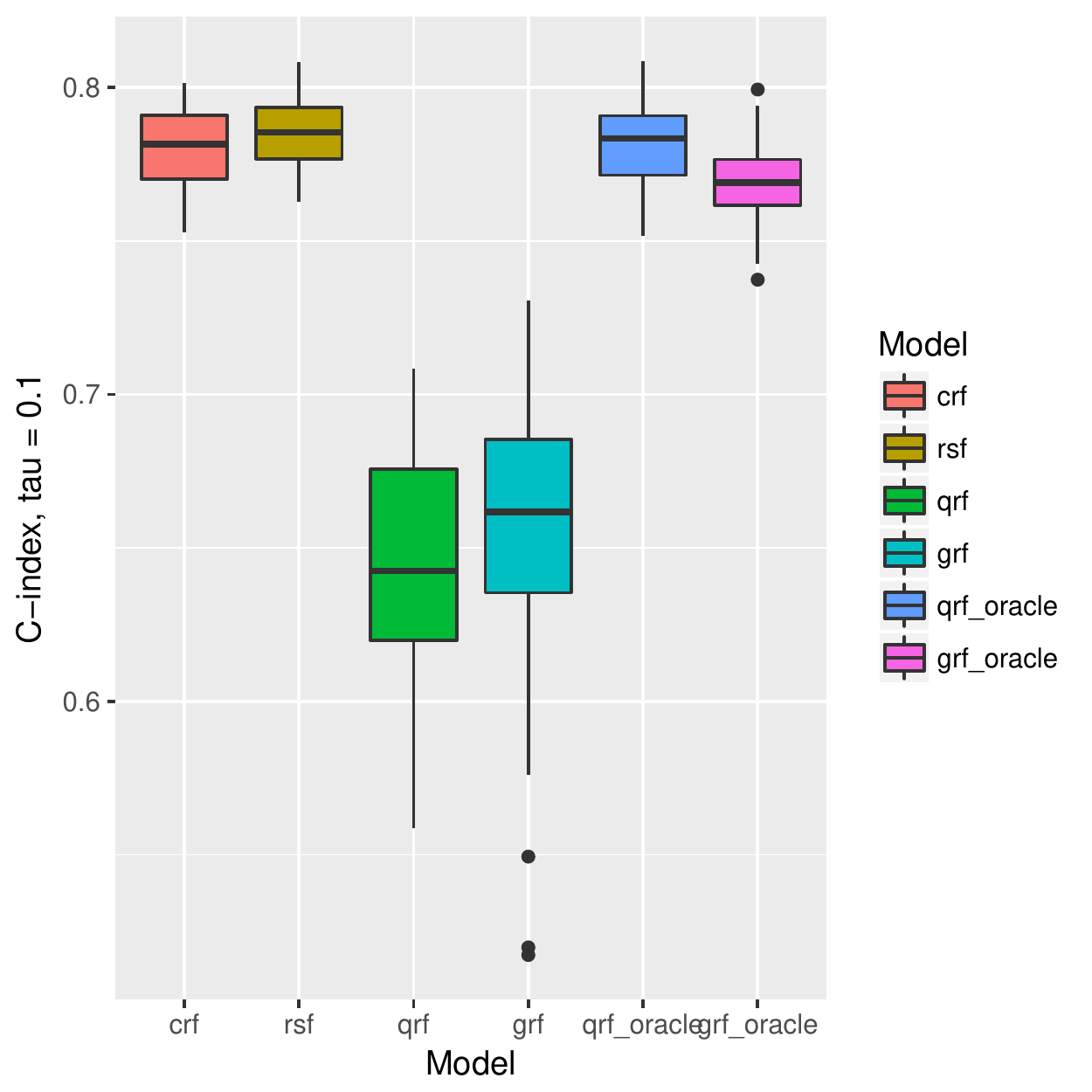}
        \caption{C-index: $\tau = 0.3$}
    \end{subfigure}
    ~ %add desired spacing between images, e. g. ~, \quad, \qquad, \hfill etc. 
      %(or a blank line to force the subfigure onto a new line)
    \begin{subfigure}[b]{0.18\linewidth}
        \includegraphics[width=\textwidth]{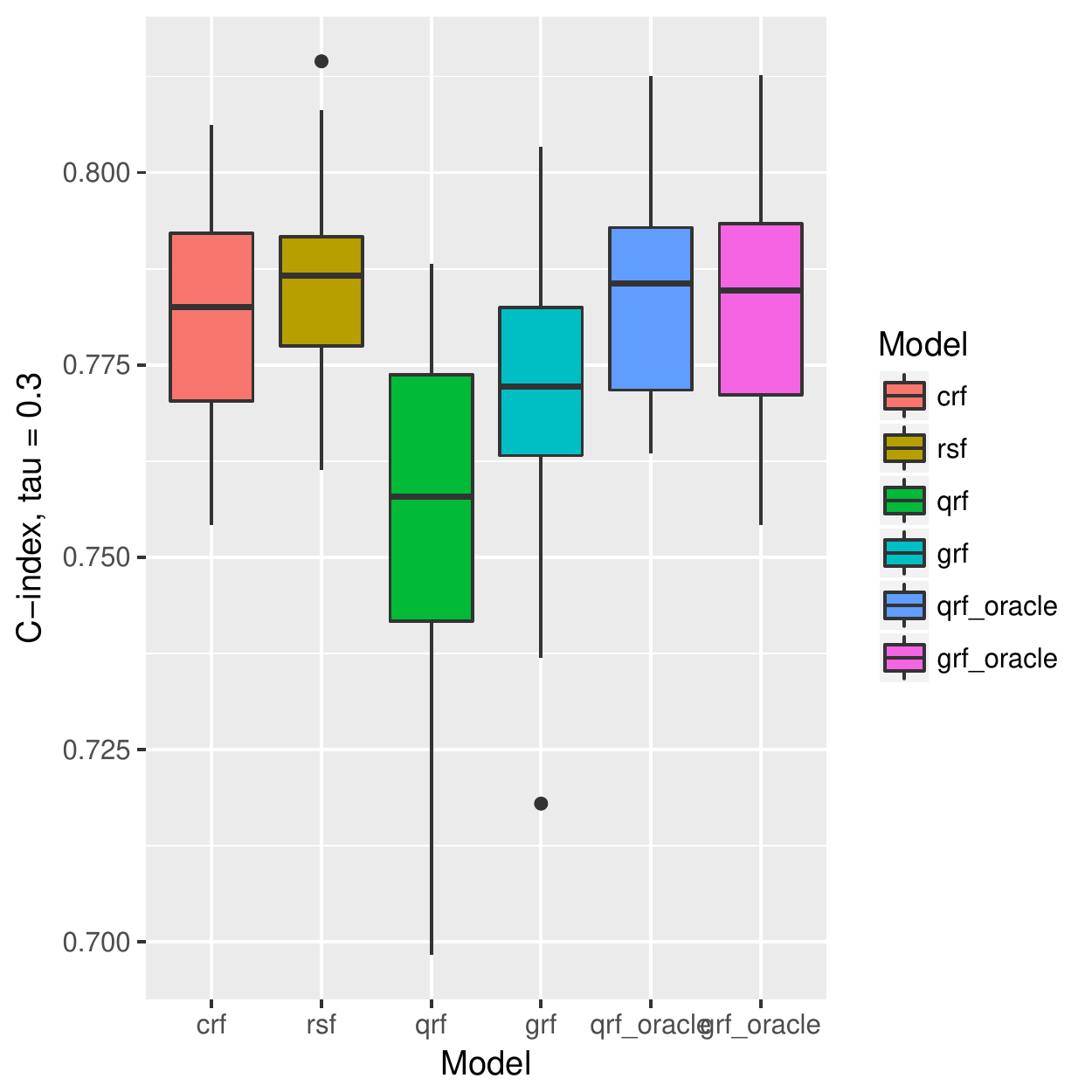}
        \caption{C-index: $\tau = 0.3$}
    \end{subfigure}
    ~
    \begin{subfigure}[b]{0.18\linewidth}
        \includegraphics[width=\textwidth]{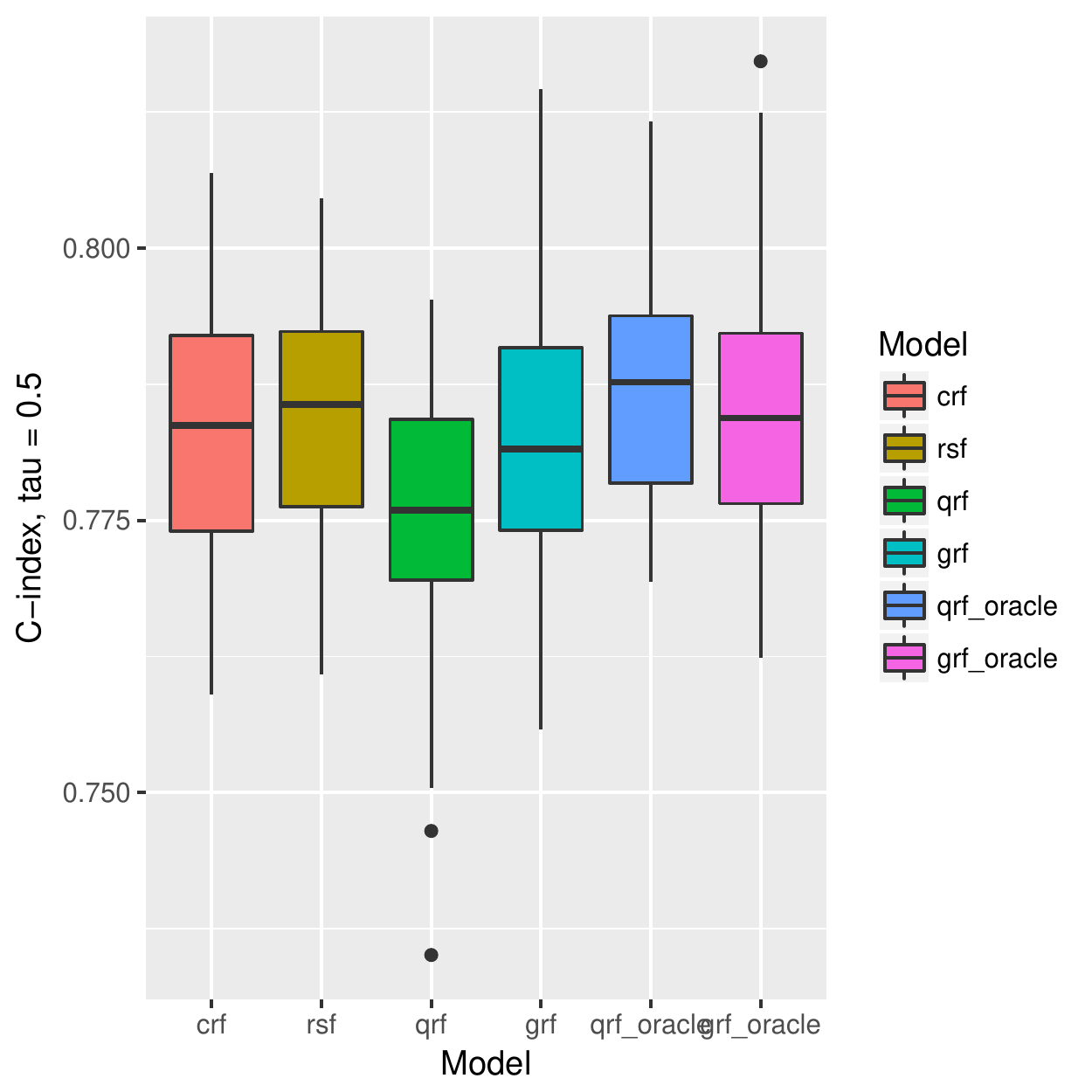}
        \caption{C-index: $\tau = 0.5$}
    \end{subfigure}
    ~
    \begin{subfigure}[b]{0.18\linewidth}
        \includegraphics[width=\textwidth]{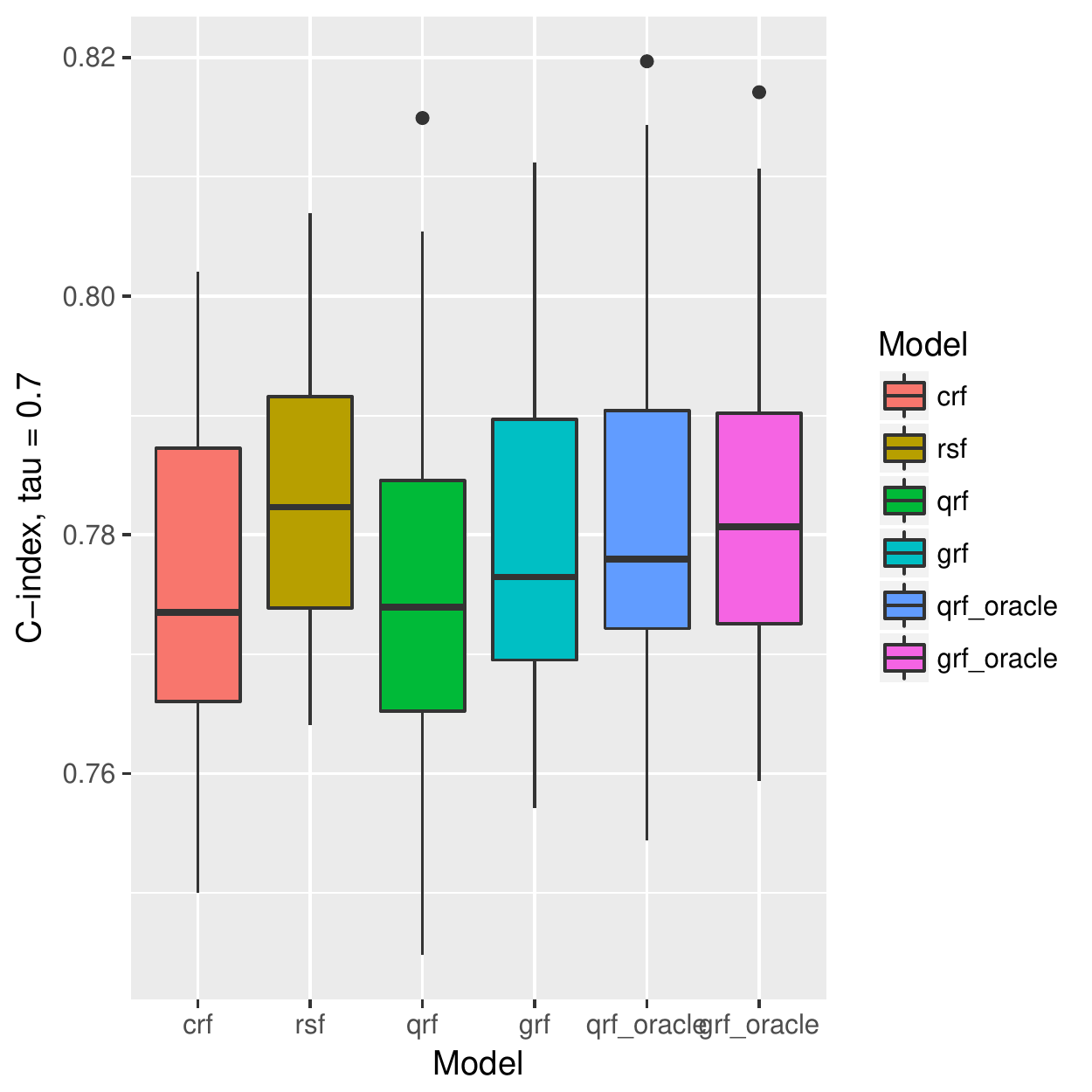}
        \caption{C-index: $\tau = 0.7$}
    \end{subfigure}
    ~
    \begin{subfigure}[b]{0.18\linewidth}
        \includegraphics[width=\textwidth]{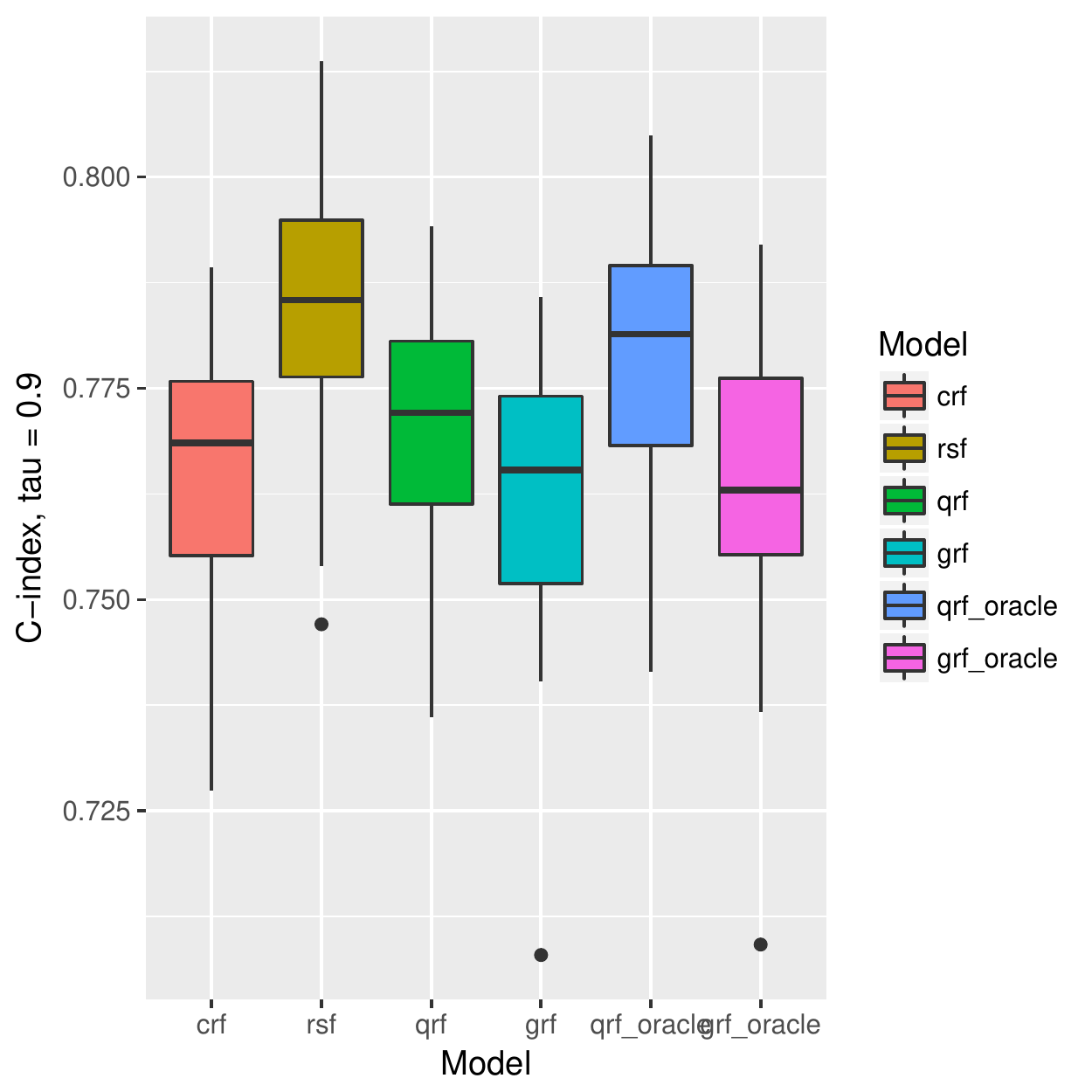}
        \caption{C-index: $\tau = 0.9$}
    \end{subfigure}
    
    \caption{Multi-dimensional AFT model box plots with dimension five and $n=800$ and $B=1000$: $500$ training and $300$ testing size. For the metrics MSE \eqref{eq:L_MSE}, MAD \eqref{eq:L_MAD} and quantile loss \eqref{eq:L_quantile}, the smaller the value is the better. For C-index, the larger it is the better.}
    \label{fig:aft_multi_box}
\end{figure}

\subsubsection{Multi-dimensional censored complex manifold}
In this section, we construct a complex model
\begin{equation*}
    T = 5 + \frac{1}{5} \left( \sin(X_{\cdot,1}) + \cos(X_{\cdot,2}) + X_{\cdot,3}^2 + \exp(X_{\cdot,4}) + X_{\cdot,5} \right) + \epsilon,
\end{equation*}
where $X_{\cdot,j}$ stands for $j$-th dimension of $\bm{X} \in \mathbb{R}^5$, and $\epsilon \sim \mathcal{N}(0,0.3^2)$. Then we consider a censoring variable independent of $\bm{X}$ and $T$: $C \sim \textrm{Exp}(\lambda = 0.015)$. The result is summarized in Figure \ref{fig:complex_multi_box}. Although the model is highly non-linear our method is able to capture that and estimate the quantile of interest consistently. However, other methods are unable to be consistent. Behavior matches that of linear models closely. In this case we notice that the oracular generalized random forests have much better performance; note that since we are estimating equations, our algorithm can be easily tweaked to match {\it grf-oracle} -- the only change would be to design the splits in the initial tree construction to match our estimating equations, use subsampling and save a separate sample for minimizing the equations themselves.

\begin{figure}[!htb]
    \small
    \centering
    \begin{subfigure}[b]{0.182\linewidth}
        \includegraphics[width=\textwidth]{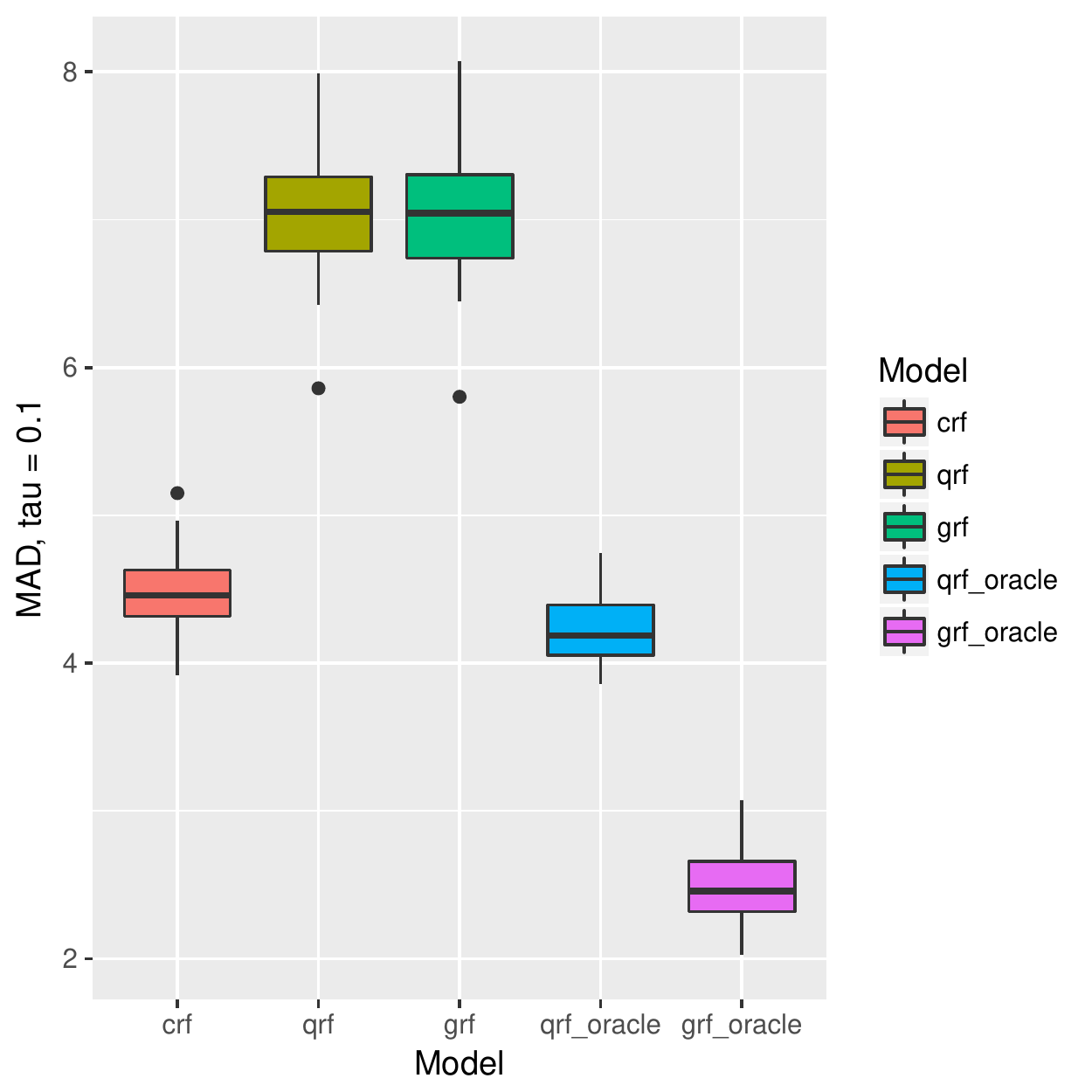}
        \caption{MAD: $\tau = 0.1$}
    \end{subfigure}
    ~
    \begin{subfigure}[b]{0.182\linewidth}
        \includegraphics[width=\textwidth]{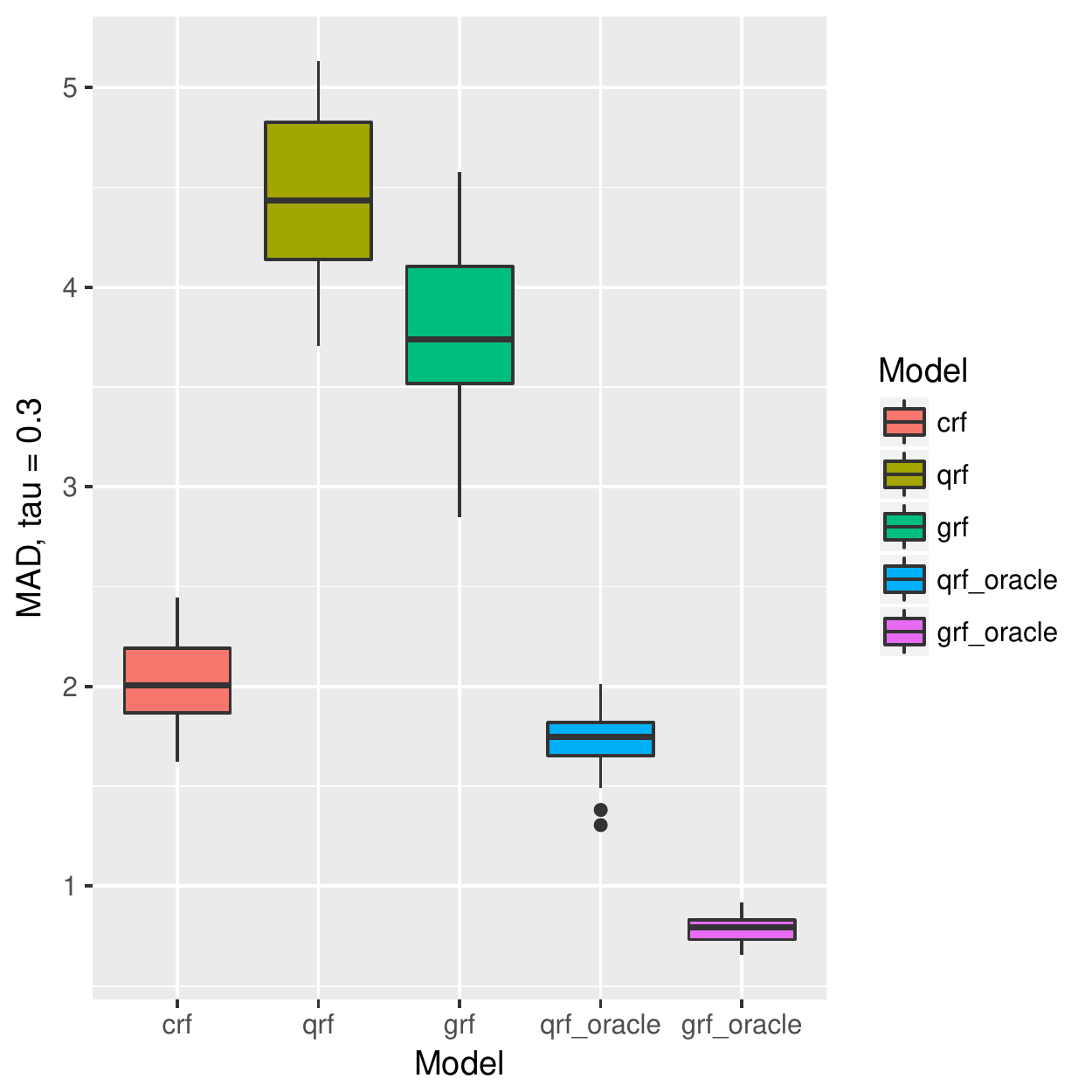}
        \caption{MAD: $\tau = 0.3$}
    \end{subfigure}
    ~ %add desired spacing between images, e. g. ~, \quad, \qquad, \hfill etc. 
      %(or a blank line to force the subfigure onto a new line)
    \begin{subfigure}[b]{0.182\linewidth}
        \includegraphics[width=\textwidth]{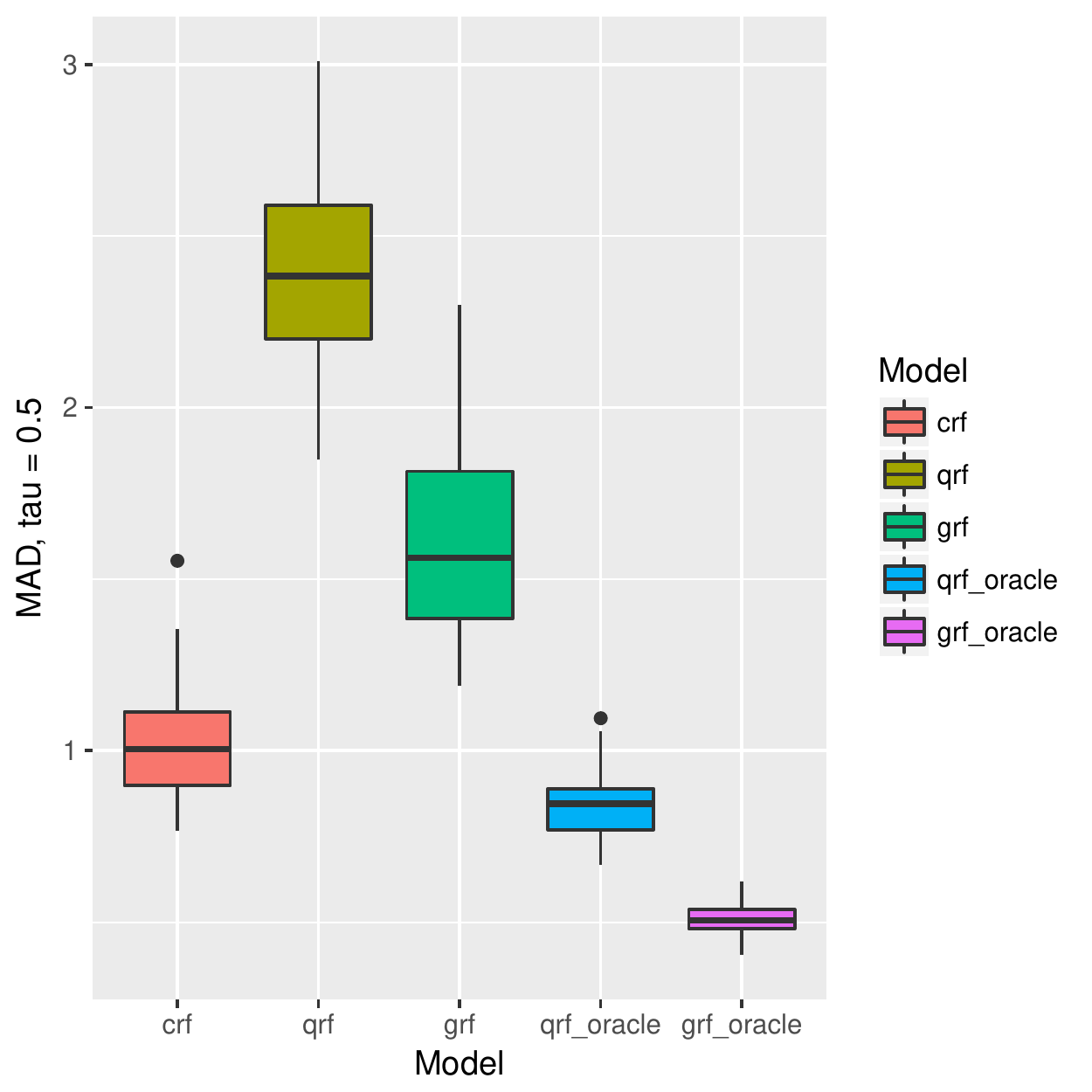}
        \caption{MAD: $\tau = 0.5$}
    \end{subfigure}
    ~
    \begin{subfigure}[b]{0.182\linewidth}
        \includegraphics[width=\textwidth]{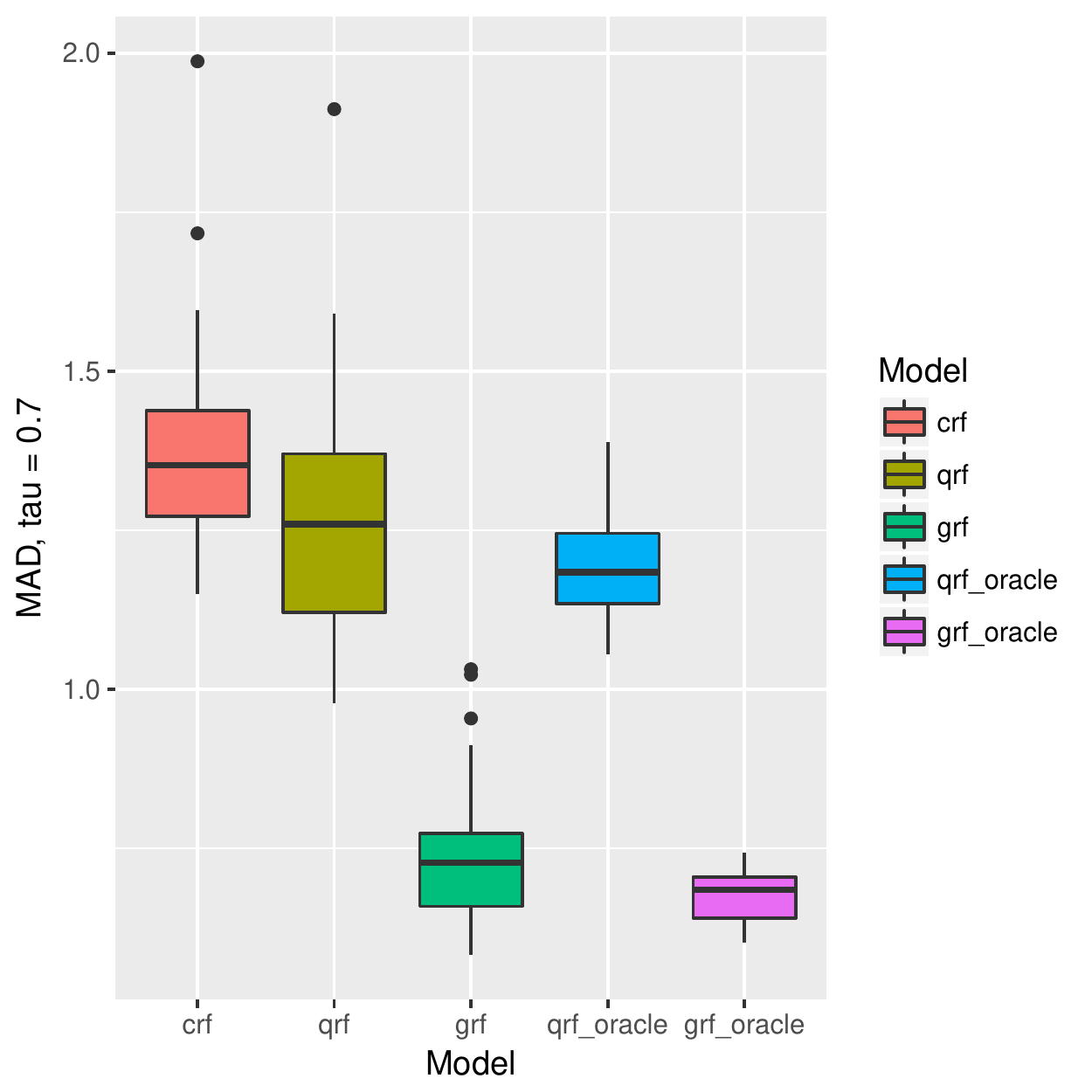}
        \caption{MAD: $\tau = 0.7$}
    \end{subfigure}
    ~
    \begin{subfigure}[b]{0.182\linewidth}
        \includegraphics[width=\textwidth]{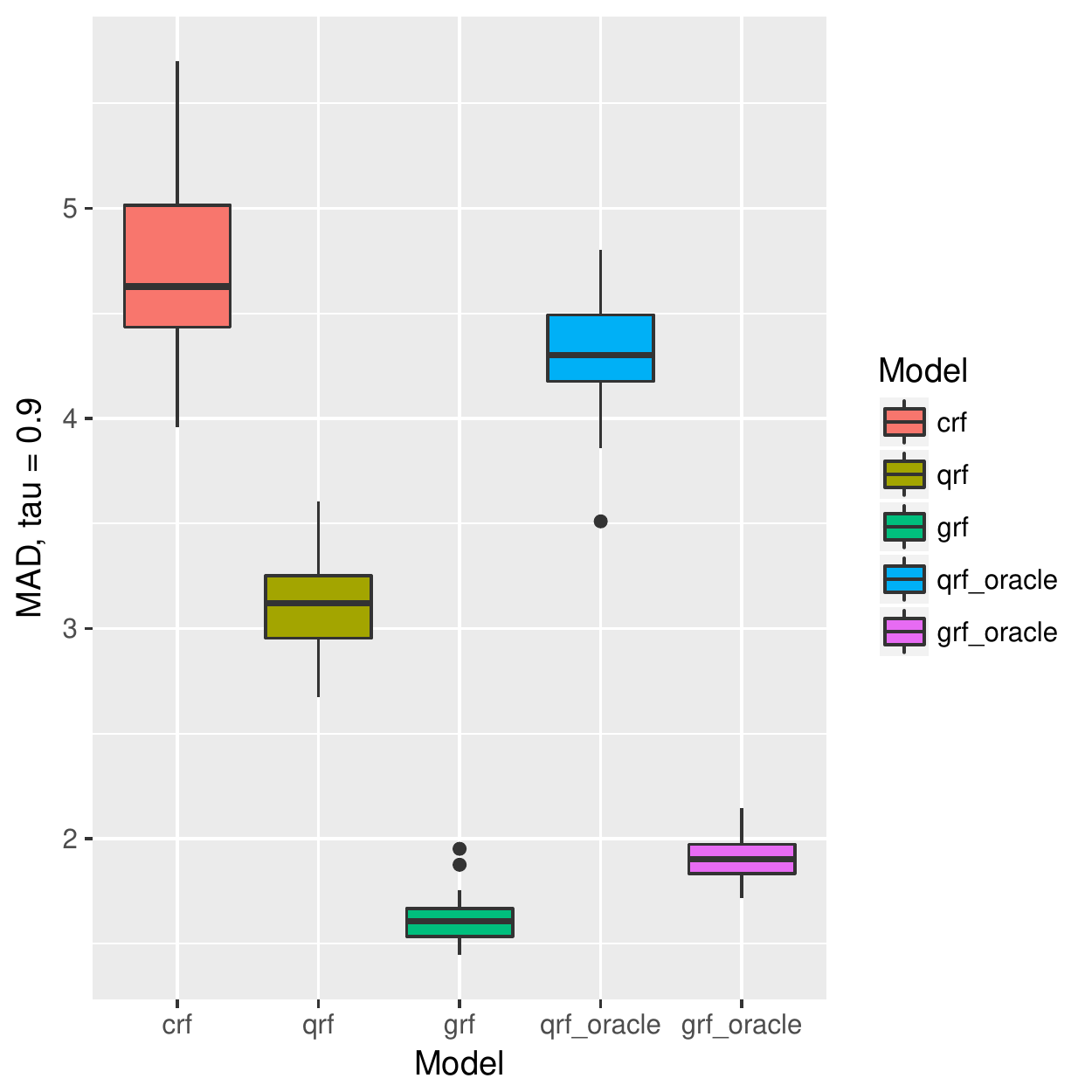}
        \caption{MAD: $\tau = 0.9$}
    \end{subfigure}
    
    \vspace{-0.05in}
    
    \begin{subfigure}[b]{0.18\linewidth}
        \includegraphics[width=\textwidth]{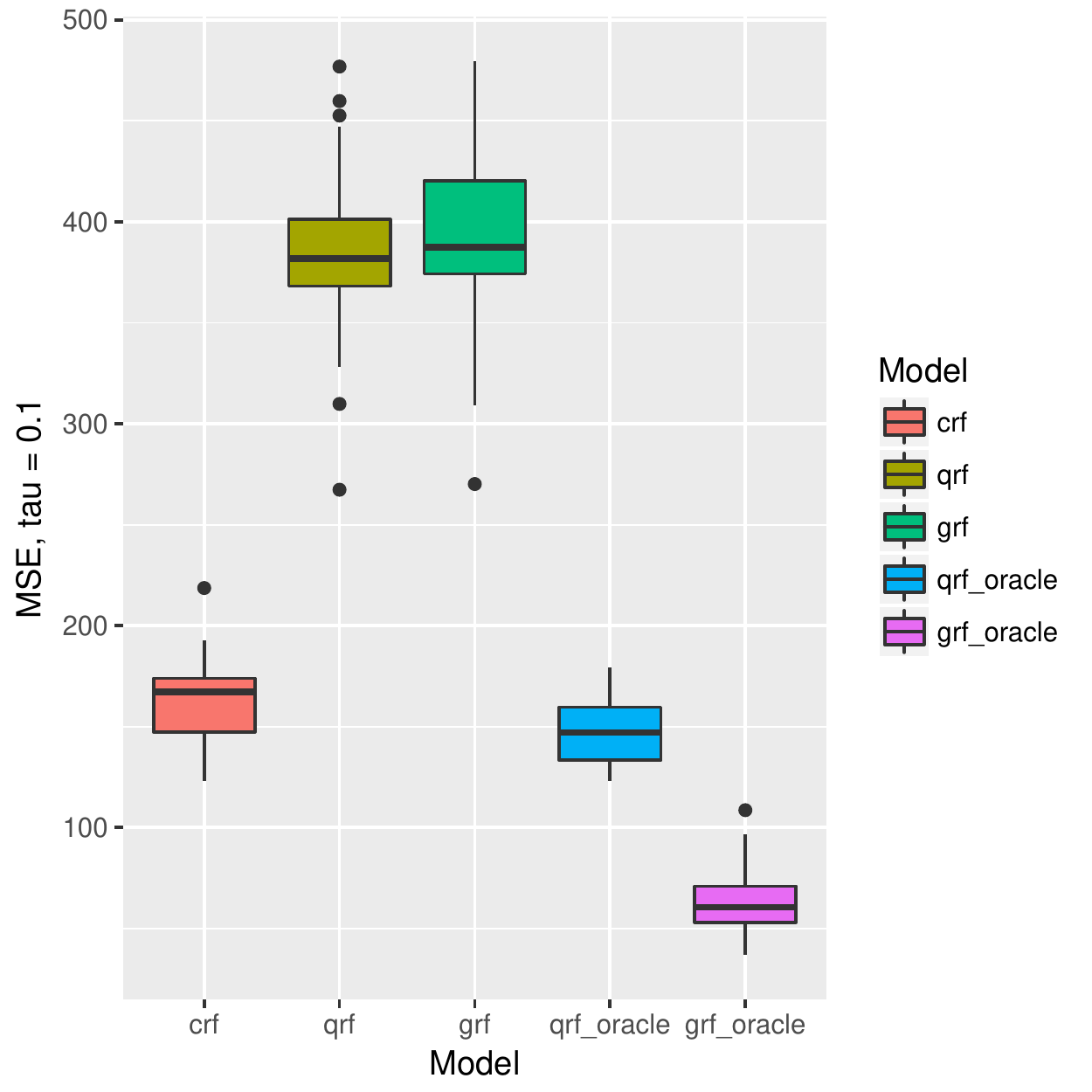}
        \caption{MSE: $\tau = 0.1$}
    \end{subfigure}
    ~
    \begin{subfigure}[b]{0.18\linewidth}
        \includegraphics[width=\textwidth]{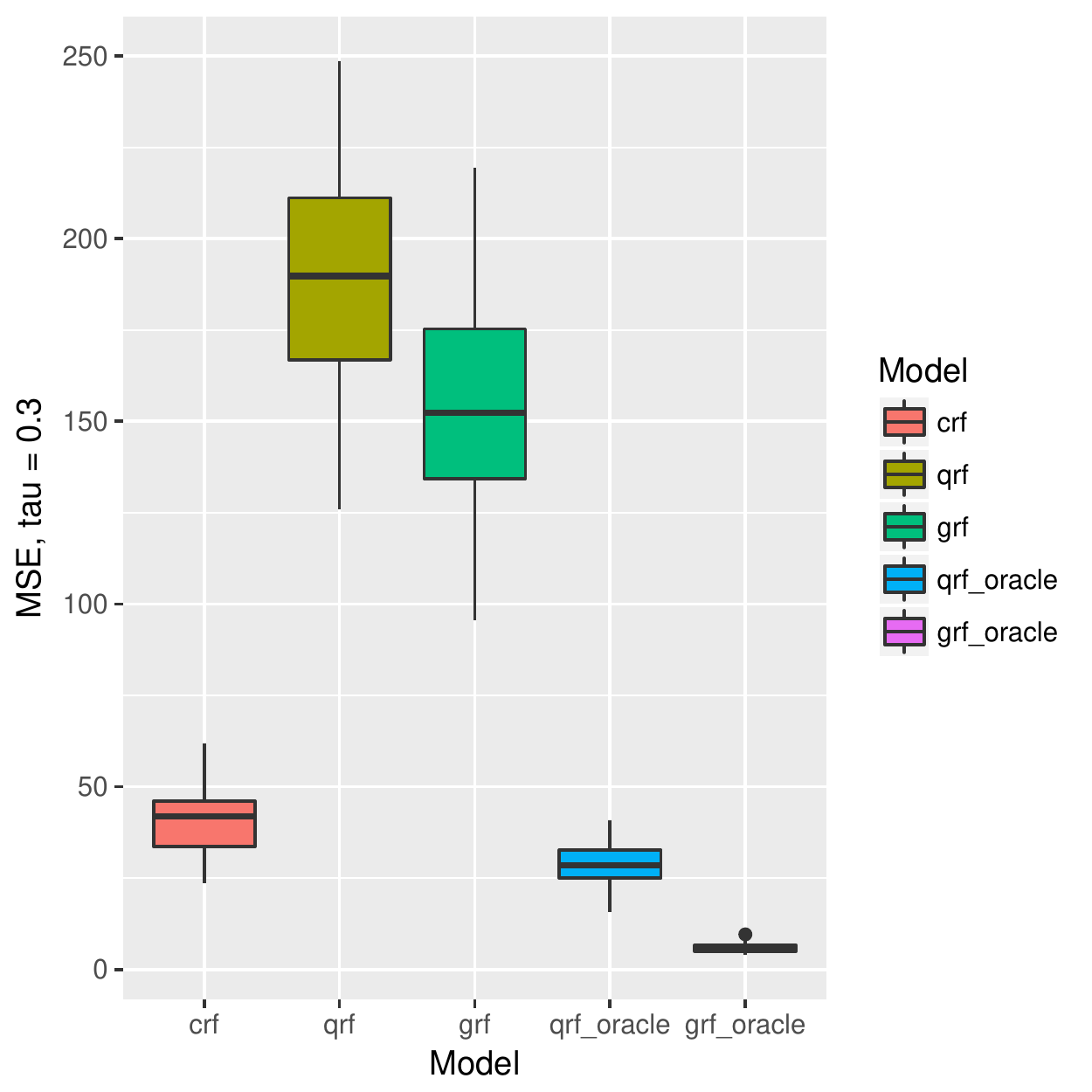}
        \caption{MSE: $\tau = 0.3$}
    \end{subfigure}
    ~ %add desired spacing between images, e. g. ~, \quad, \qquad, \hfill etc. 
      %(or a blank line to force the subfigure onto a new line)
    \begin{subfigure}[b]{0.18\linewidth}
        \includegraphics[width=\textwidth]{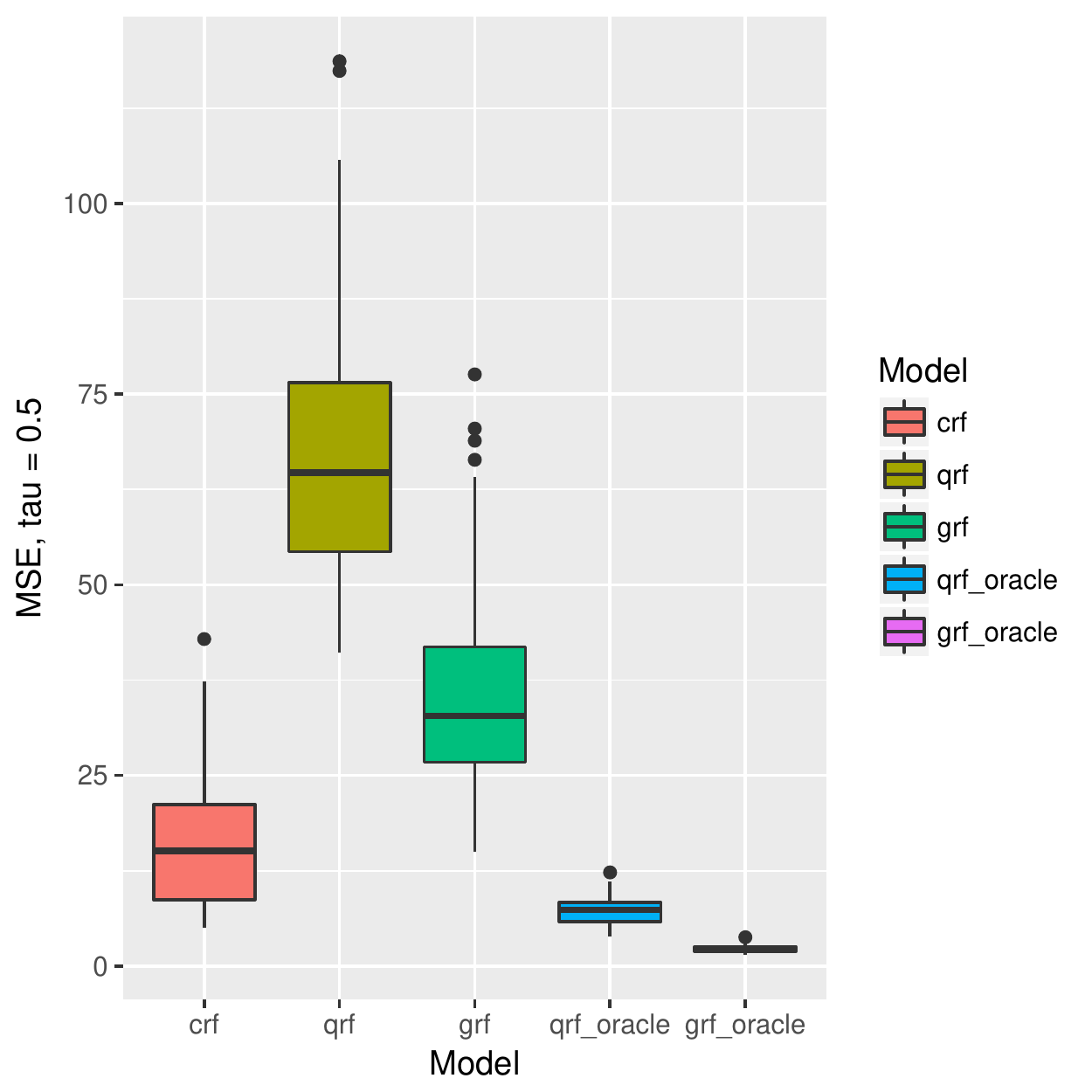}
        \caption{MSE: $\tau = 0.5$}
    \end{subfigure}
    ~
    \begin{subfigure}[b]{0.18\linewidth}
        \includegraphics[width=\textwidth]{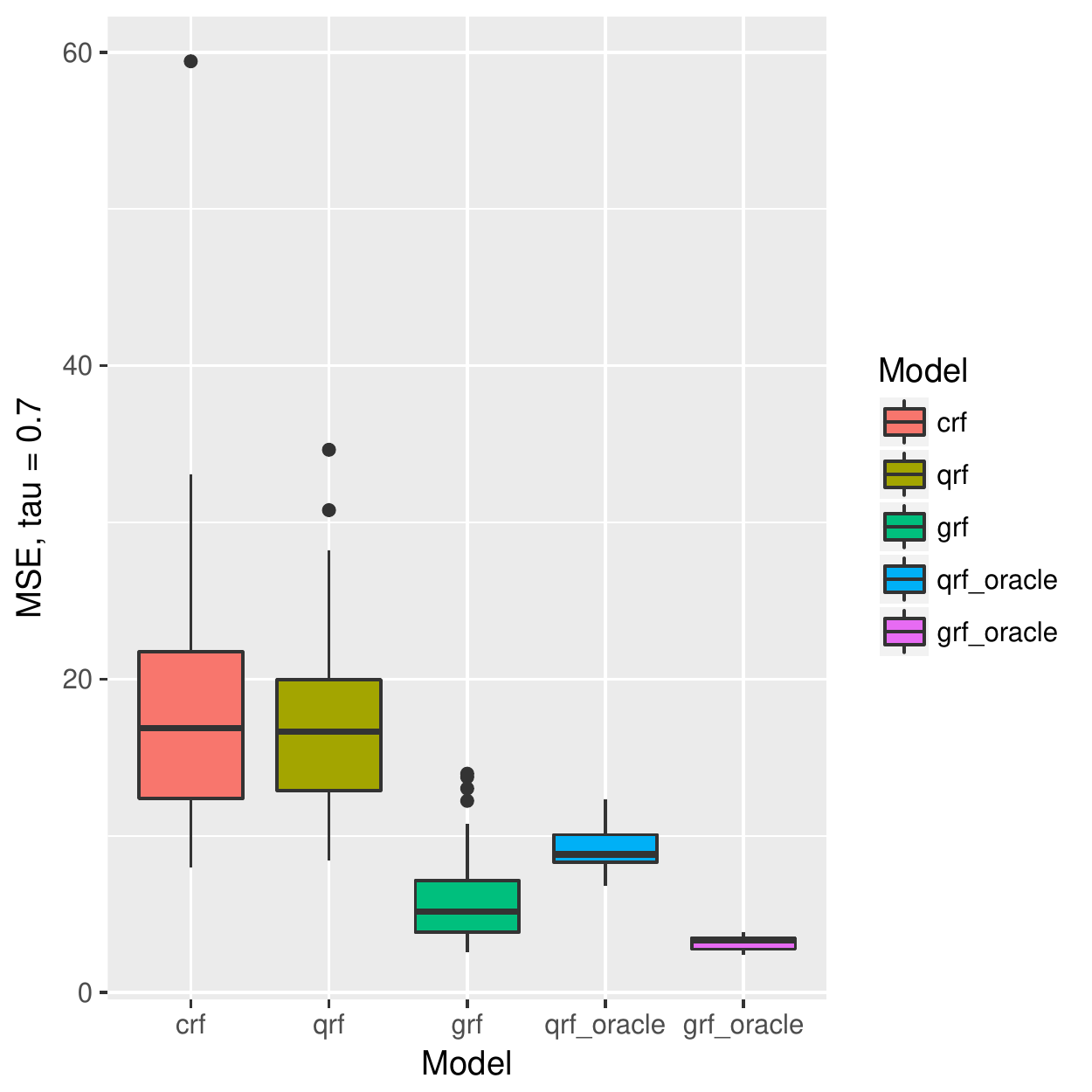}
        \caption{MSE: $\tau = 0.7$}
    \end{subfigure}
    ~
    \begin{subfigure}[b]{0.18\linewidth}
        \includegraphics[width=\textwidth]{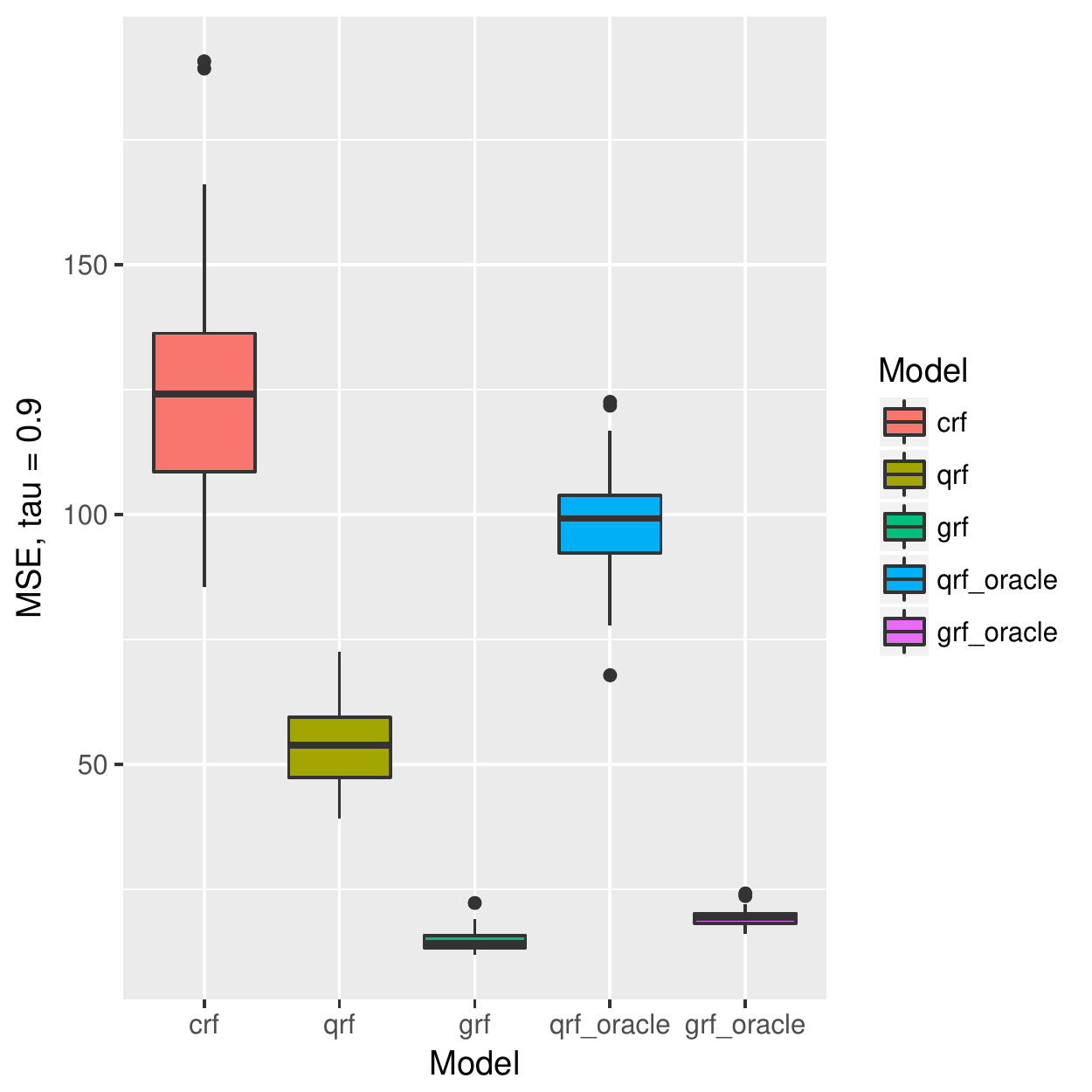}
        \caption{MSE: $\tau = 0.9$}
    \end{subfigure}
    
    \vspace{-0.05in}
    
    \begin{subfigure}[b]{0.18\linewidth}
        \includegraphics[width=\textwidth]{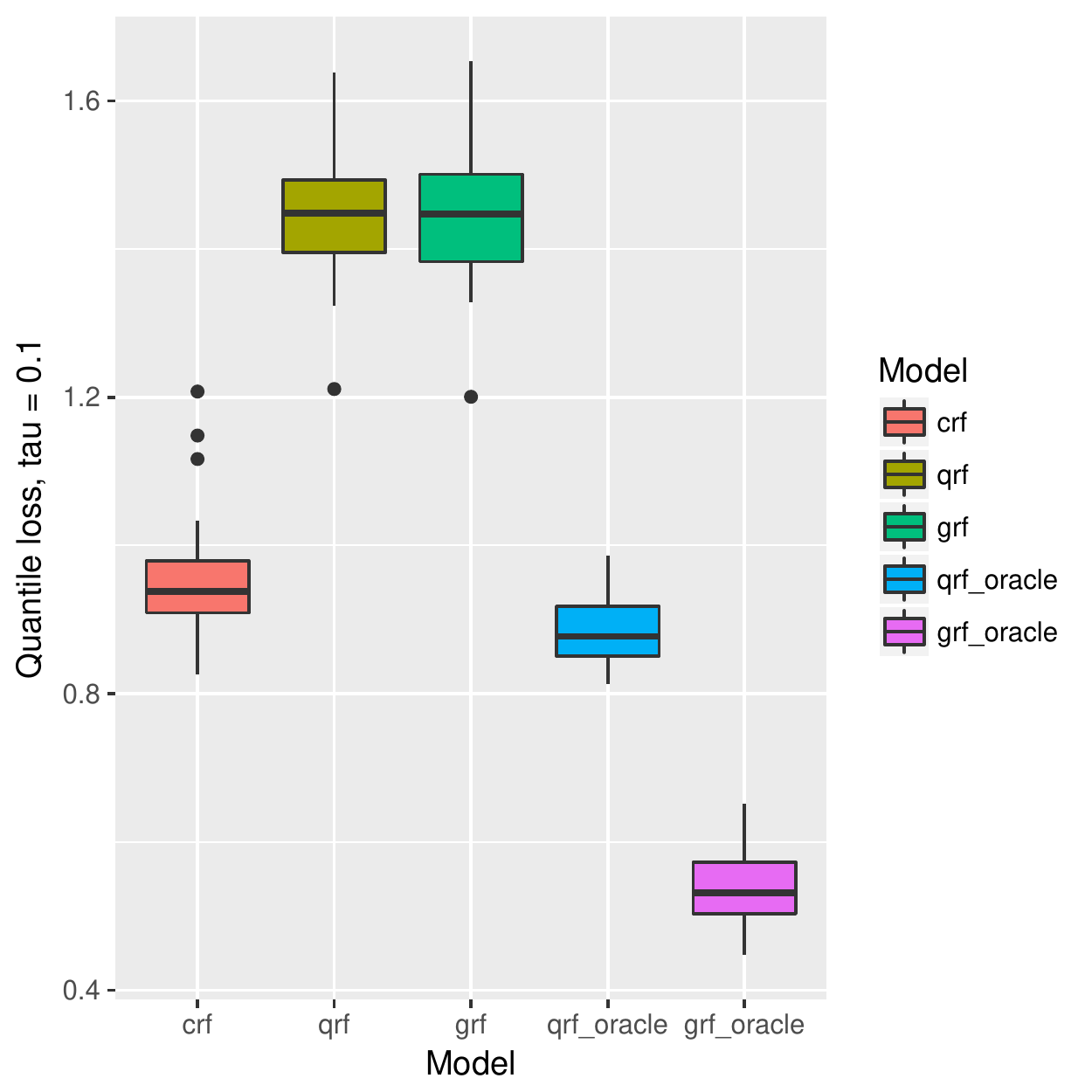}
        \caption{Quantile loss: $\tau = 0.3$}
    \end{subfigure}
    ~ %add desired spacing between images, e. g. ~, \quad, \qquad, \hfill etc. 
      %(or a blank line to force the subfigure onto a new line)
    \begin{subfigure}[b]{0.18\linewidth}
        \includegraphics[width=\textwidth]{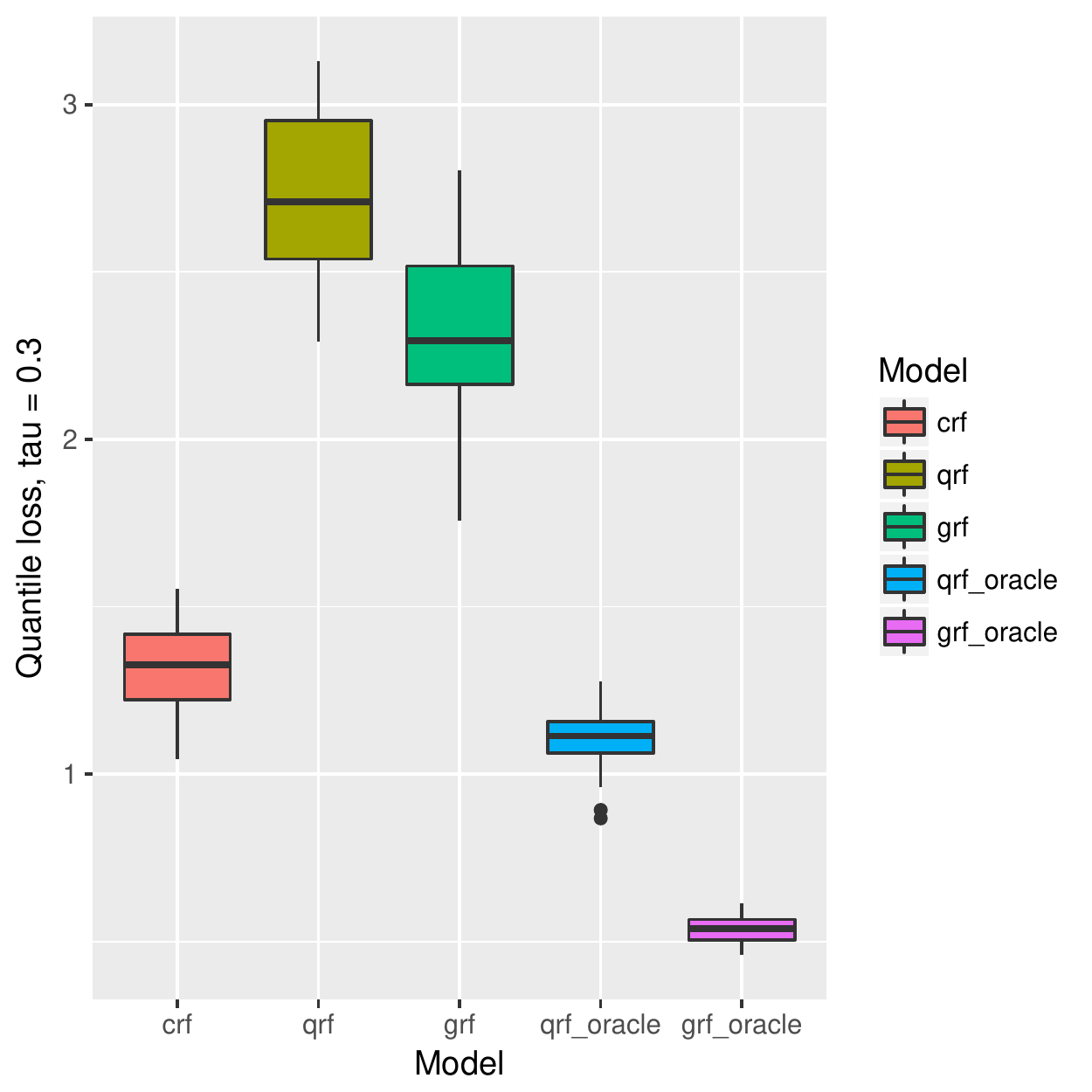}
        \caption{Quantile loss: $\tau = 0.3$}
    \end{subfigure}
    ~
    \begin{subfigure}[b]{0.18\linewidth}
        \includegraphics[width=\textwidth]{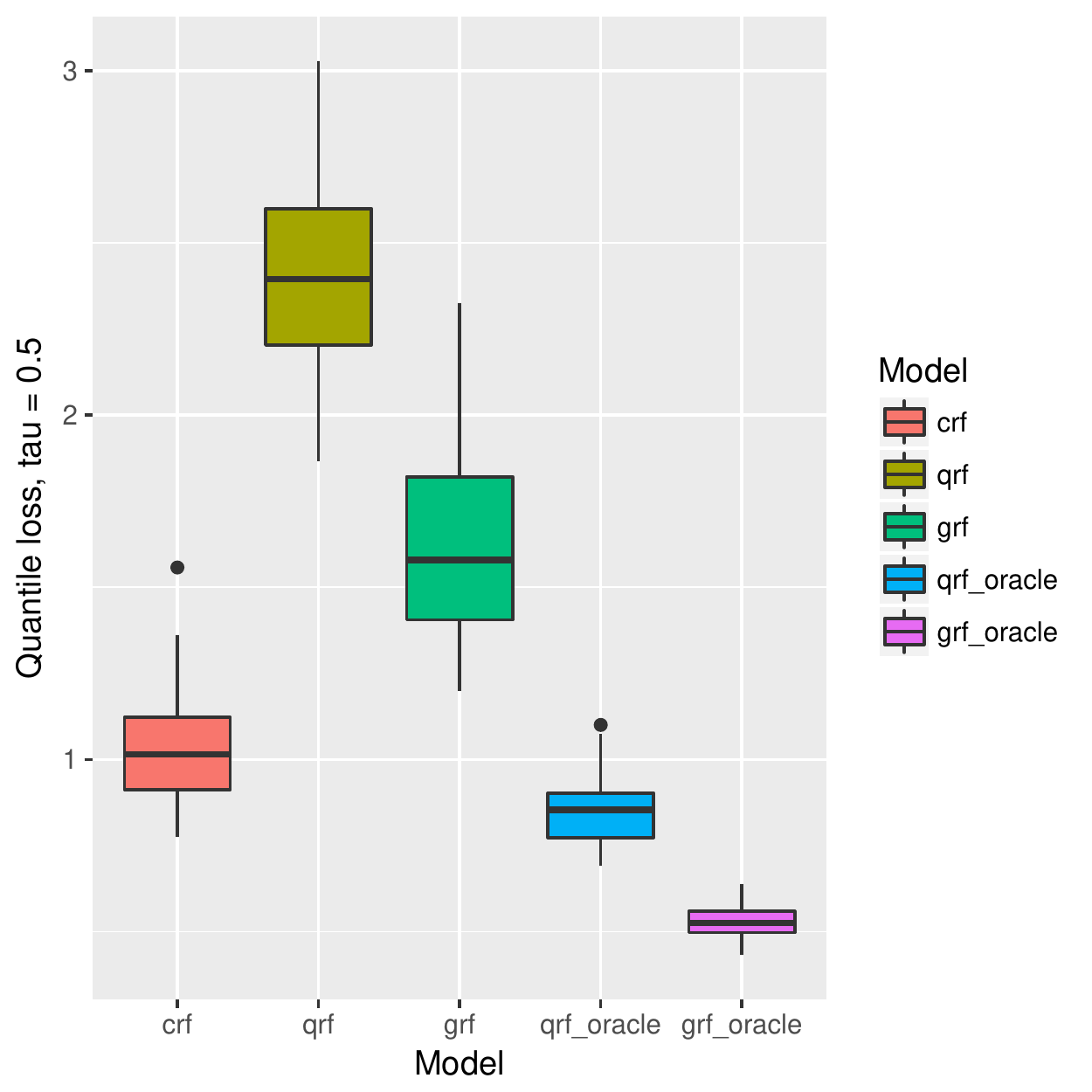}
        \caption{Quantile loss: $\tau = 0.5$}
    \end{subfigure}
    ~
    \begin{subfigure}[b]{0.18\linewidth}
        \includegraphics[width=\textwidth]{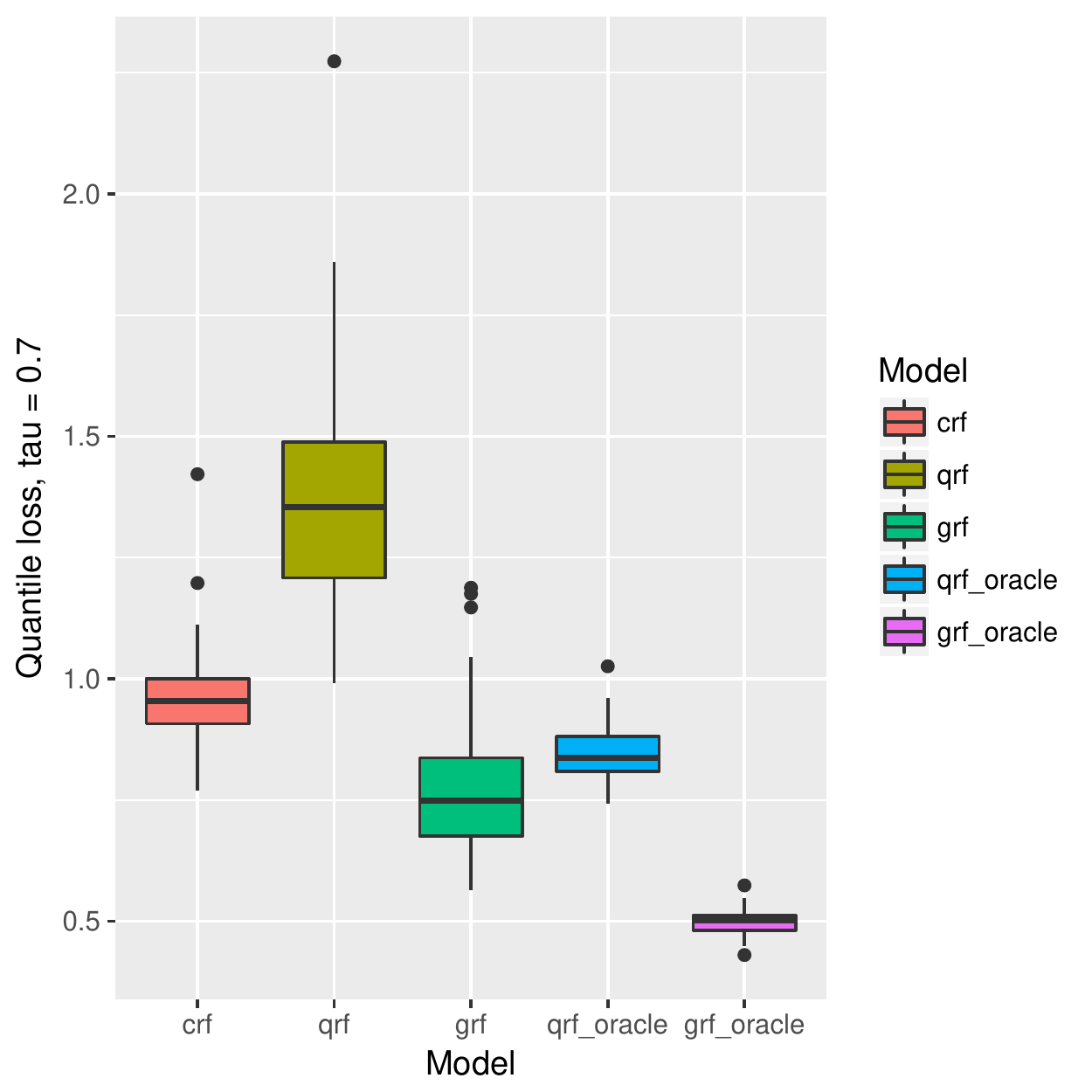}
        \caption{Quantile loss: $\tau = 0.7$}
    \end{subfigure}
    ~
    \begin{subfigure}[b]{0.18\linewidth}
        \includegraphics[width=\textwidth]{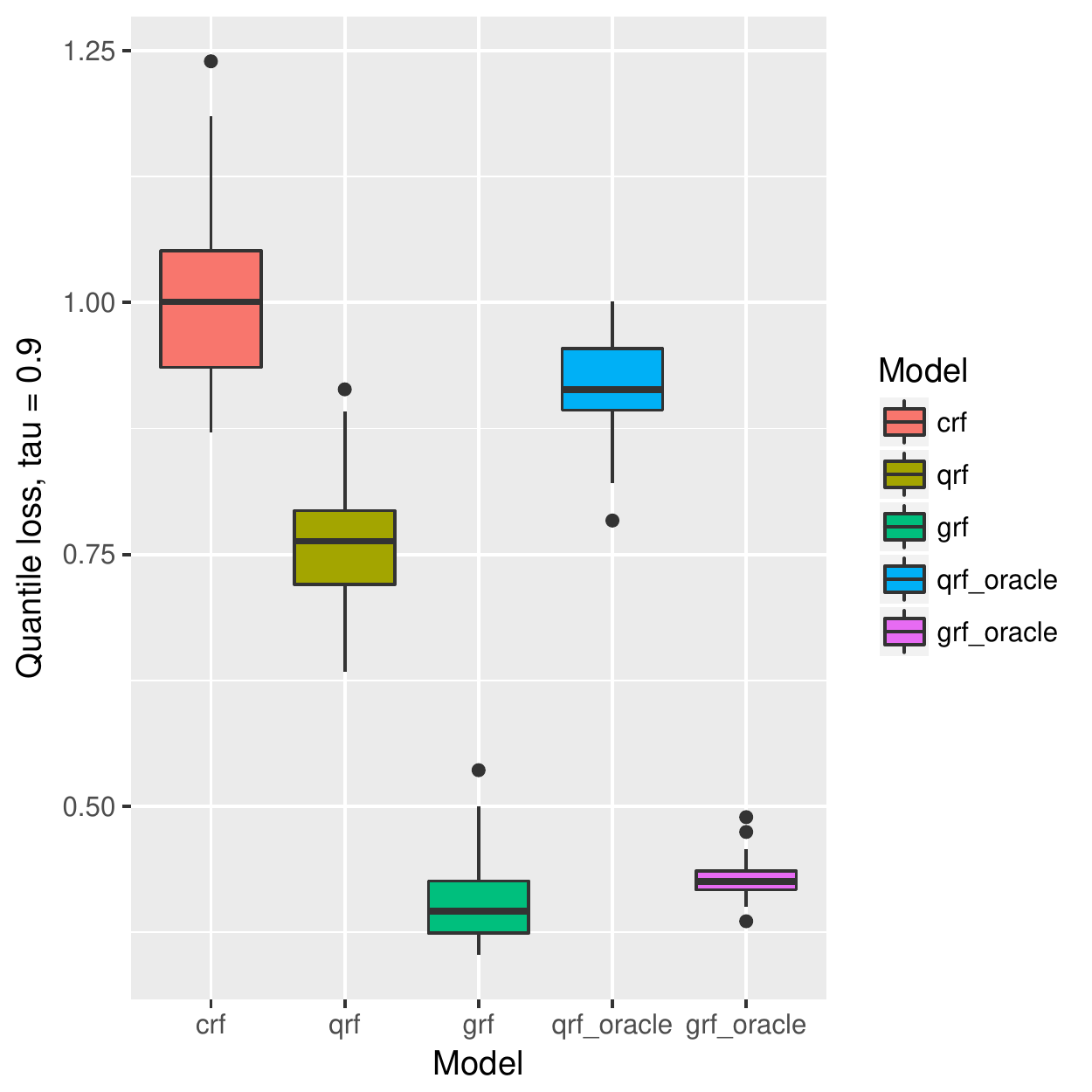}
        \caption{Qauntile loss: $\tau = 0.9$}
    \end{subfigure}
    
    \vspace{-0.05in}
    
    \begin{subfigure}[b]{0.18\linewidth}
        \includegraphics[width=\textwidth]{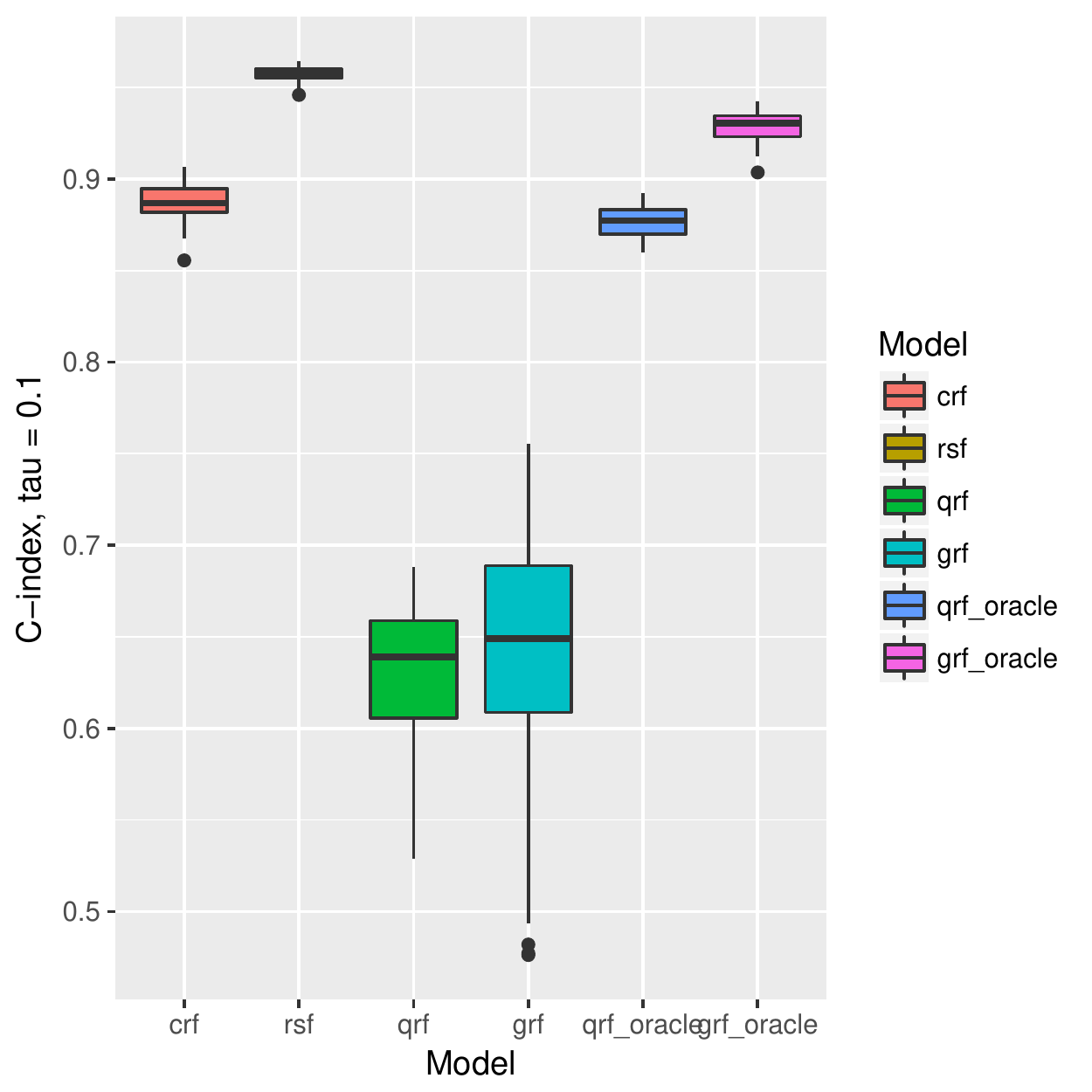}
        \caption{C-index: $\tau = 0.3$}
    \end{subfigure}
    ~ %add desired spacing between images, e. g. ~, \quad, \qquad, \hfill etc. 
      %(or a blank line to force the subfigure onto a new line)
    \begin{subfigure}[b]{0.18\linewidth}
        \includegraphics[width=\textwidth]{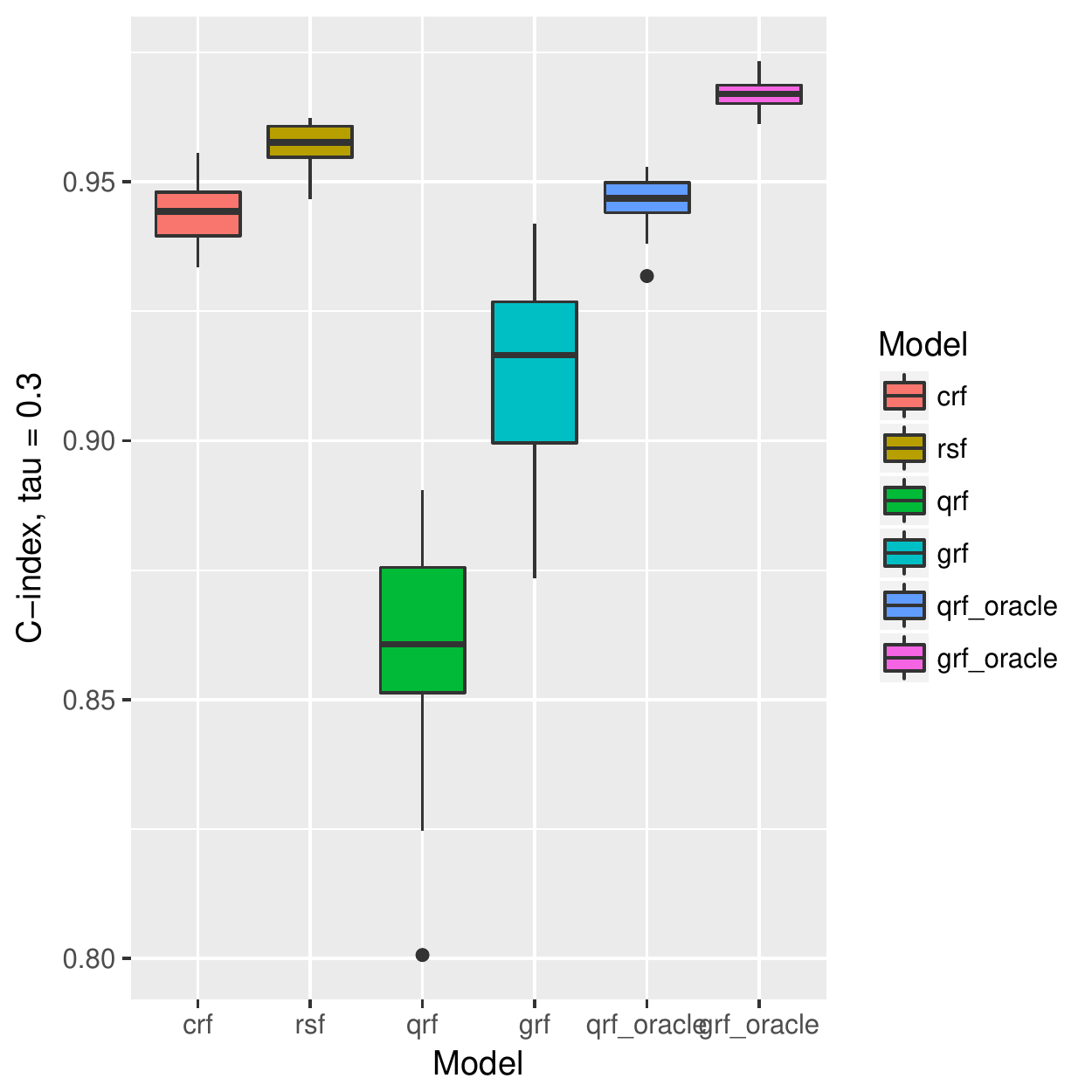}
        \caption{C-index: $\tau = 0.3$}
    \end{subfigure}
    ~
    \begin{subfigure}[b]{0.18\linewidth}
        \includegraphics[width=\textwidth]{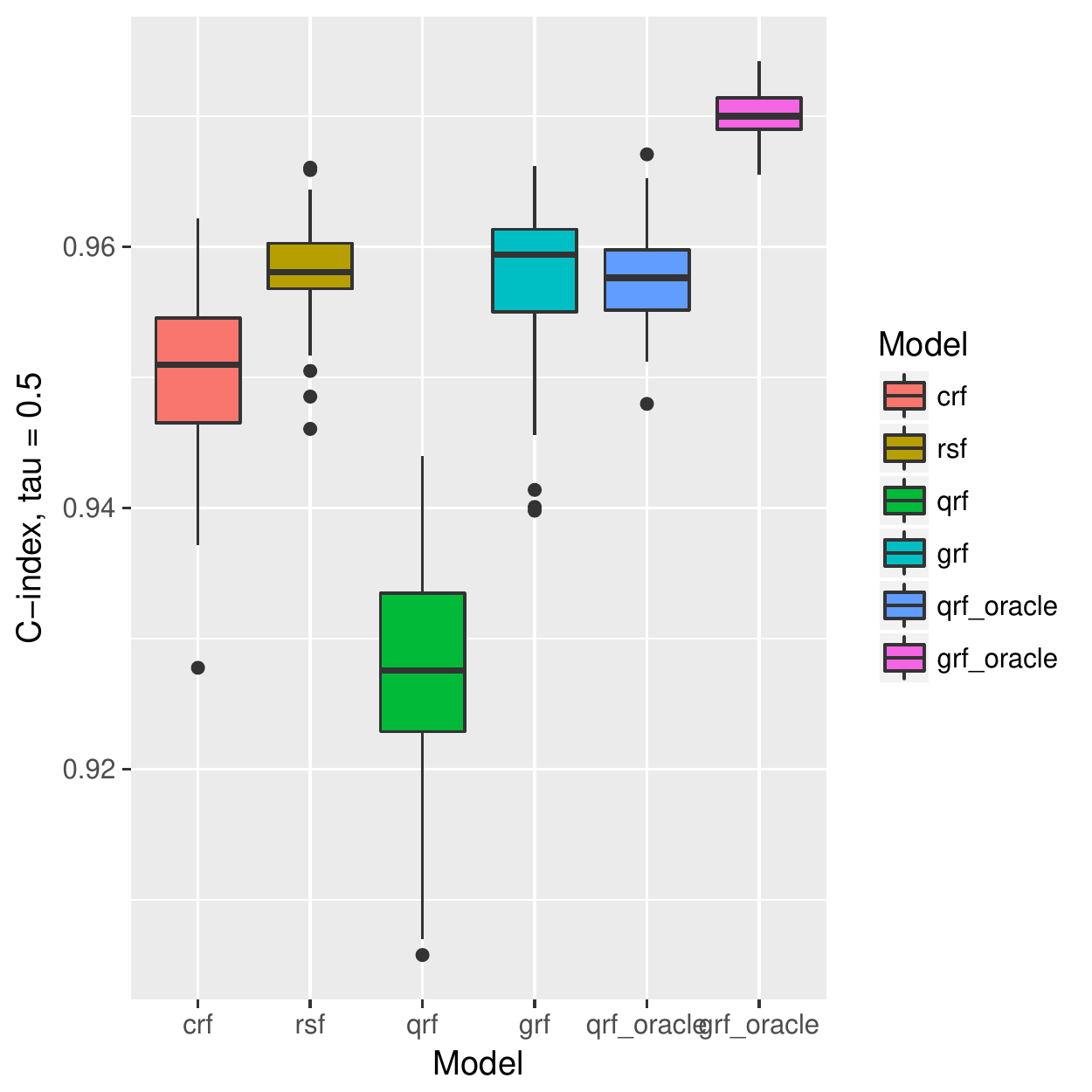}
        \caption{C-index: $\tau = 0.5$}
    \end{subfigure}
    ~
    \begin{subfigure}[b]{0.18\linewidth}
        \includegraphics[width=\textwidth]{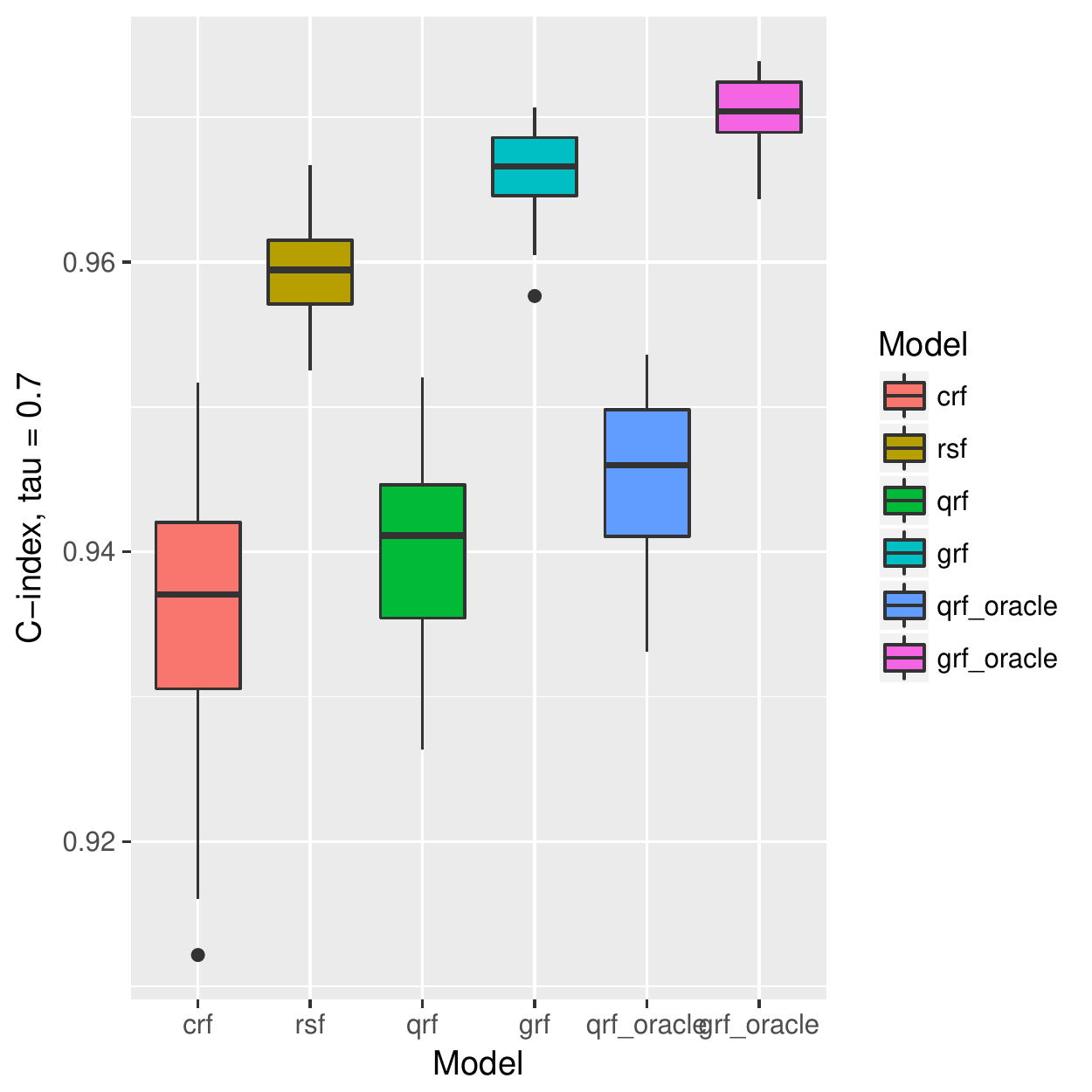}
        \caption{C-index: $\tau = 0.7$}
    \end{subfigure}
    ~
    \begin{subfigure}[b]{0.18\linewidth}
        \includegraphics[width=\textwidth]{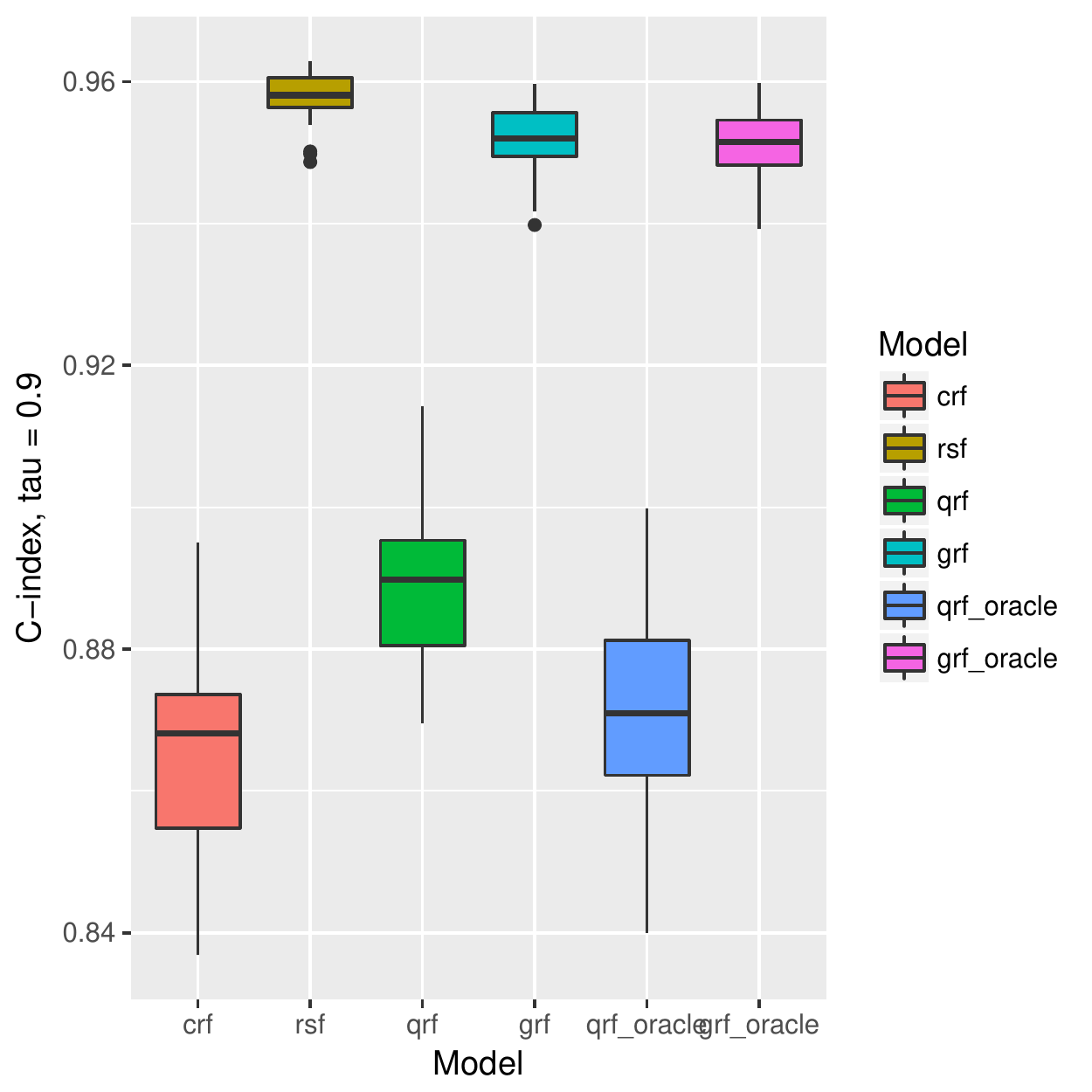}
        \caption{C-index: $\tau = 0.9$}
    \end{subfigure}
    
    \caption{Censored regression model with five-dimensional complex manifold structure: box plots with $n=300$ and $B=1000$. For the metrics MSE \eqref{eq:L_MSE}, MAD \eqref{eq:L_MAD} and quantile loss \eqref{eq:L_quantile}, the smaller the value is the better. For C-index, the larger it is the better.}
    \label{fig:complex_multi_box}
\end{figure}

\subsubsection{Node size}
\label{sssec:nodesize}
In this section, we investigate the impact of the choice of the node size on different methods. The data we use will be generated from the one-dimensional and multi-dimensional AFT and Sine models as defined in the previous sections. We increase the node size from 5 to 60 with step size of 5. 

% aft nodesize
\begin{figure}[!htb]
    \small
    \centering
    \begin{subfigure}[b]{0.3\linewidth}
        \includegraphics[width=\textwidth]{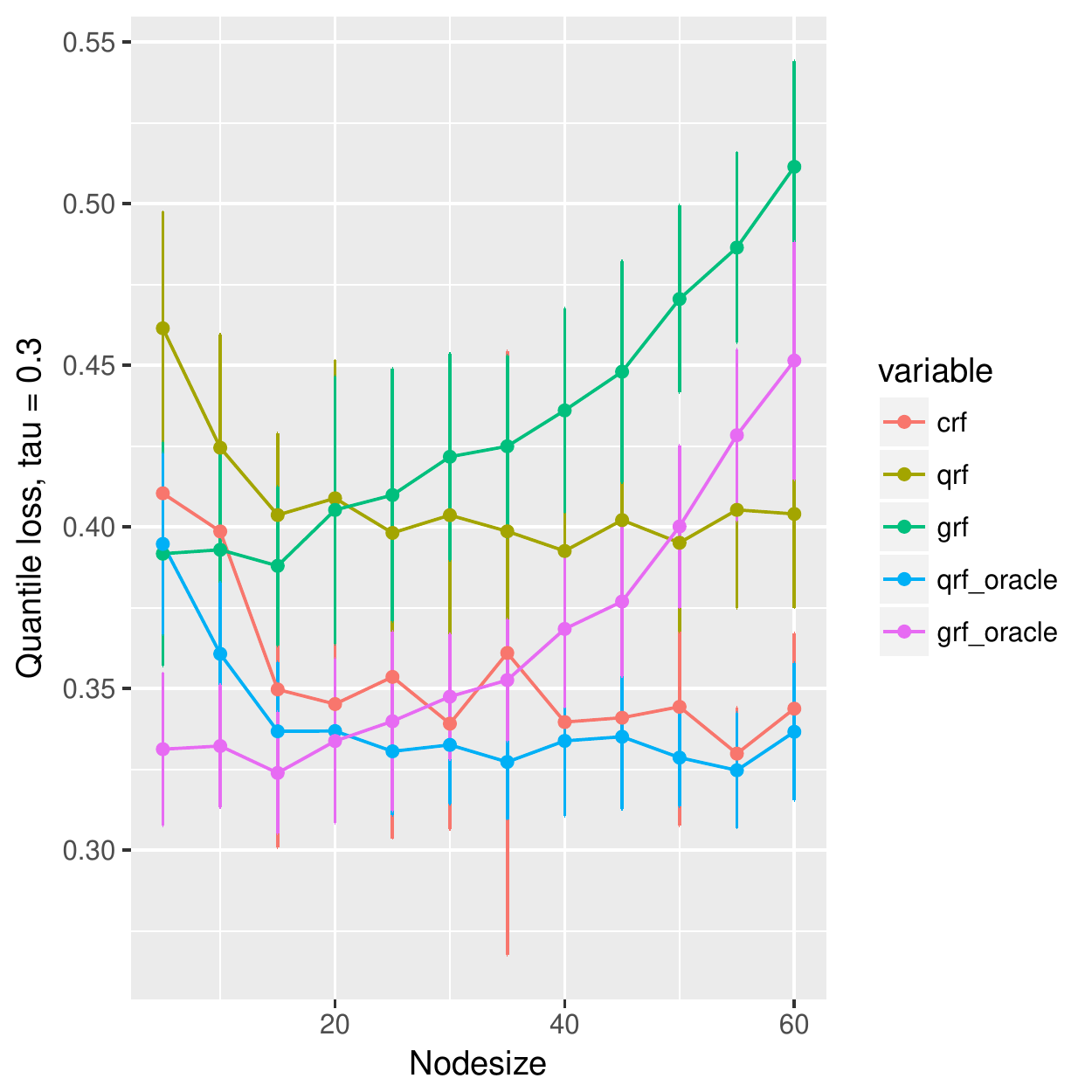}
        \caption{$\tau = 0.3$}
    \end{subfigure}
    ~
    \begin{subfigure}[b]{0.3\linewidth}
        \includegraphics[width=\textwidth]{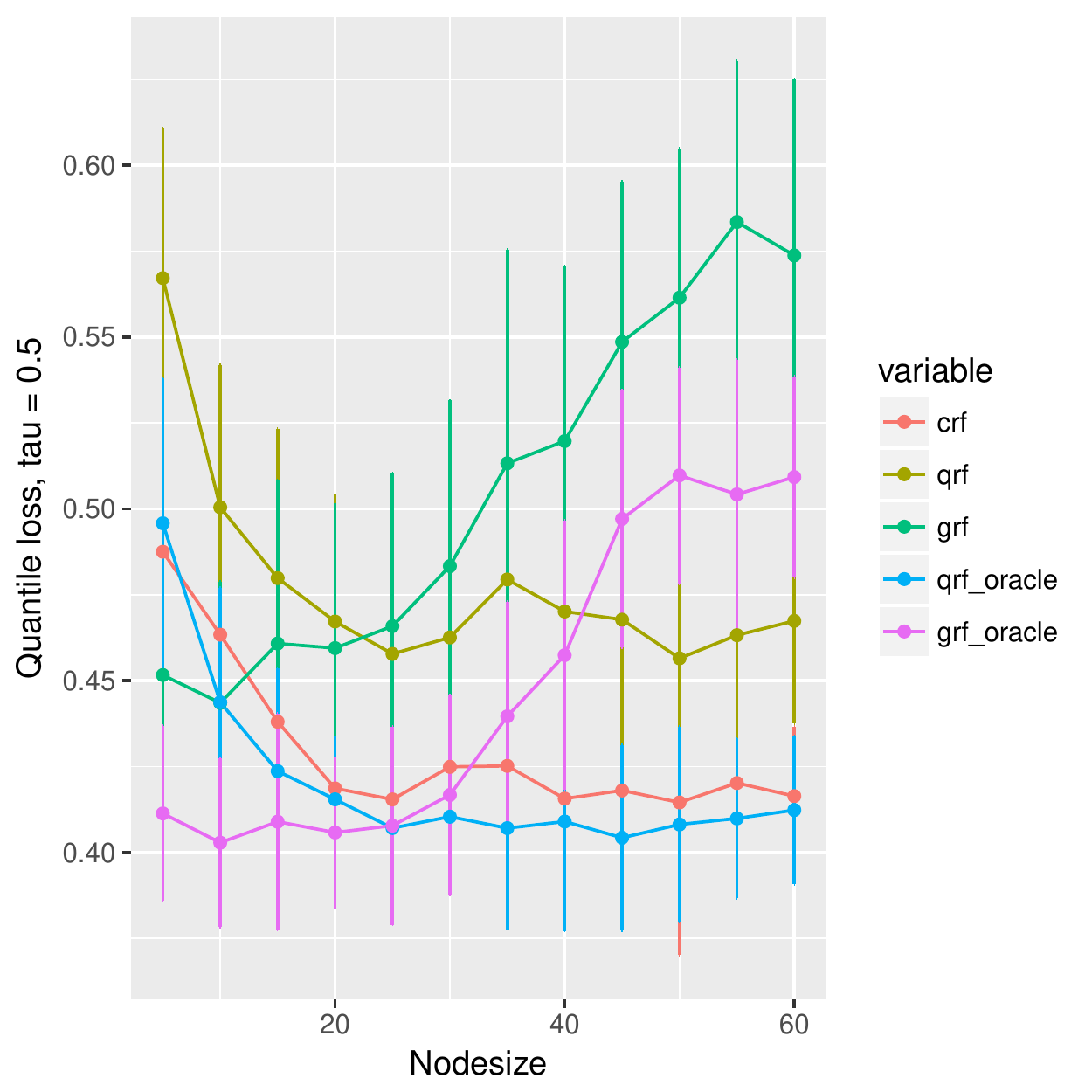}
        \caption{$\tau = 0.5$}
    \end{subfigure}
    ~
    \begin{subfigure}[b]{0.3\linewidth}
        \includegraphics[width=\textwidth]{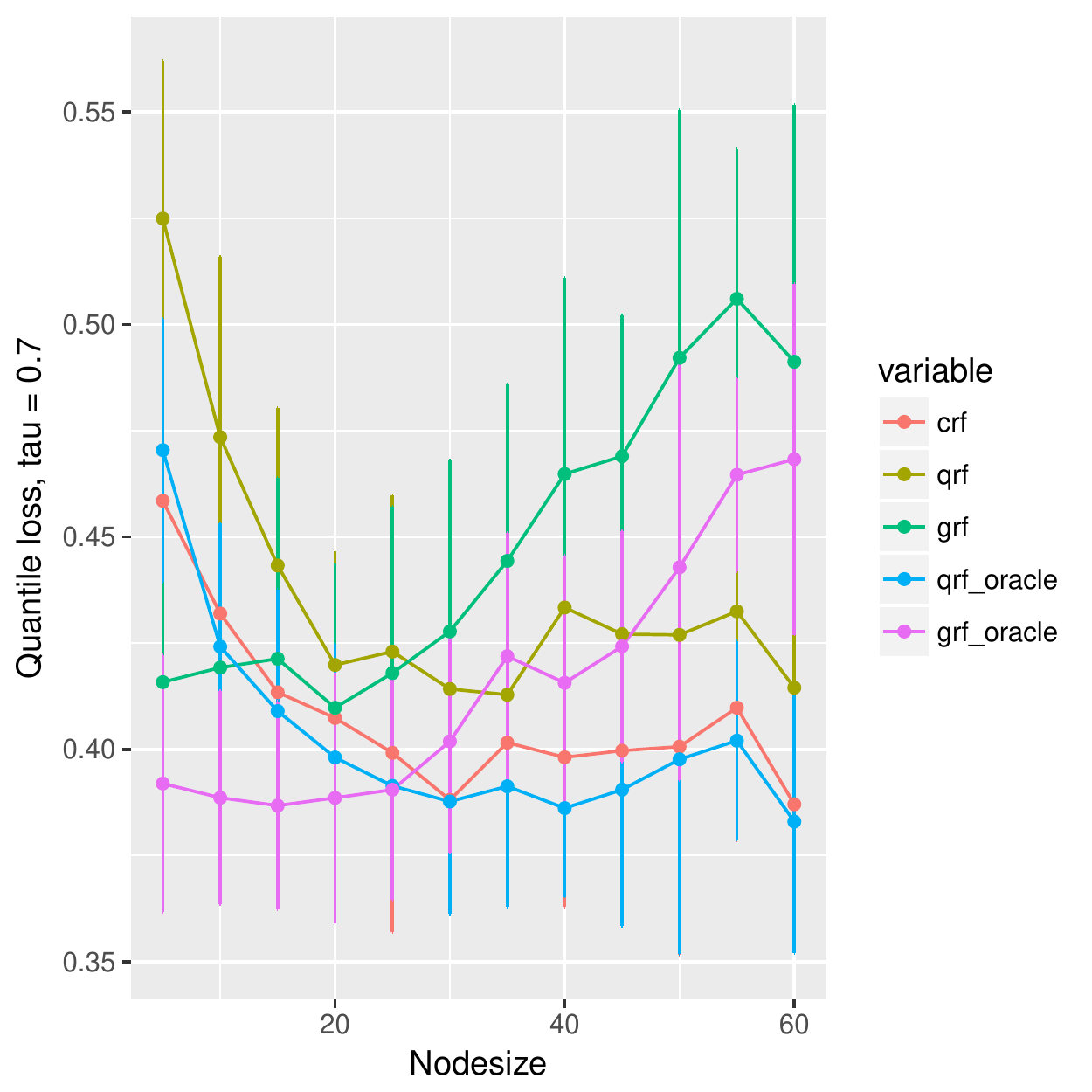}
        \caption{$\tau = 0.7$}
    \end{subfigure}
    
    \caption{Quantile losses of 1D AFT model with different node sizes.}
    \label{fig:aft_nodesize}
\end{figure}

% sine nodesize
\begin{figure}[!htb]
    \small
    \centering
    \begin{subfigure}[b]{0.3\linewidth}
        \includegraphics[width=\textwidth]{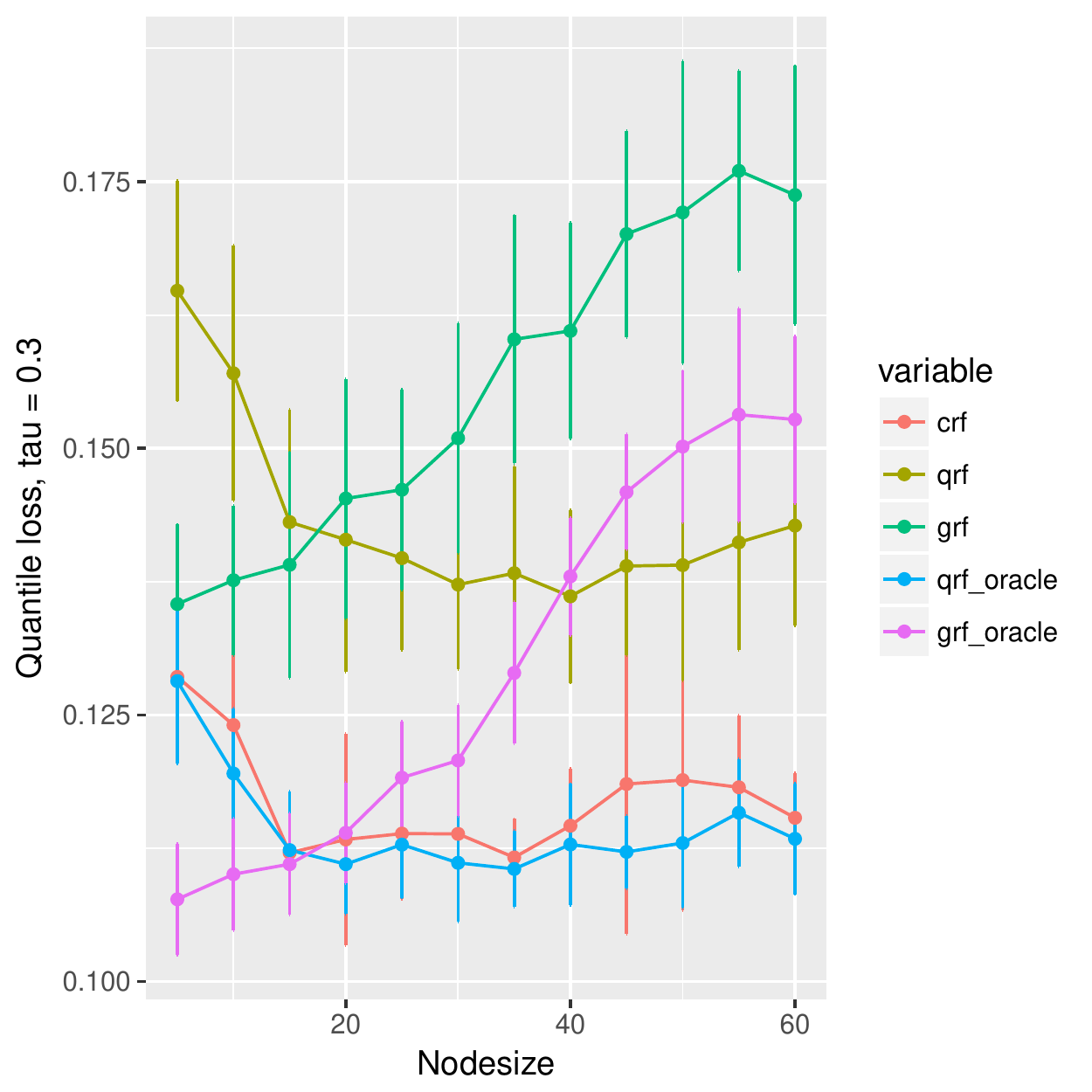}
        \caption{$\tau = 0.3$}
    \end{subfigure}
    ~
    \begin{subfigure}[b]{0.3\linewidth}
        \includegraphics[width=\textwidth]{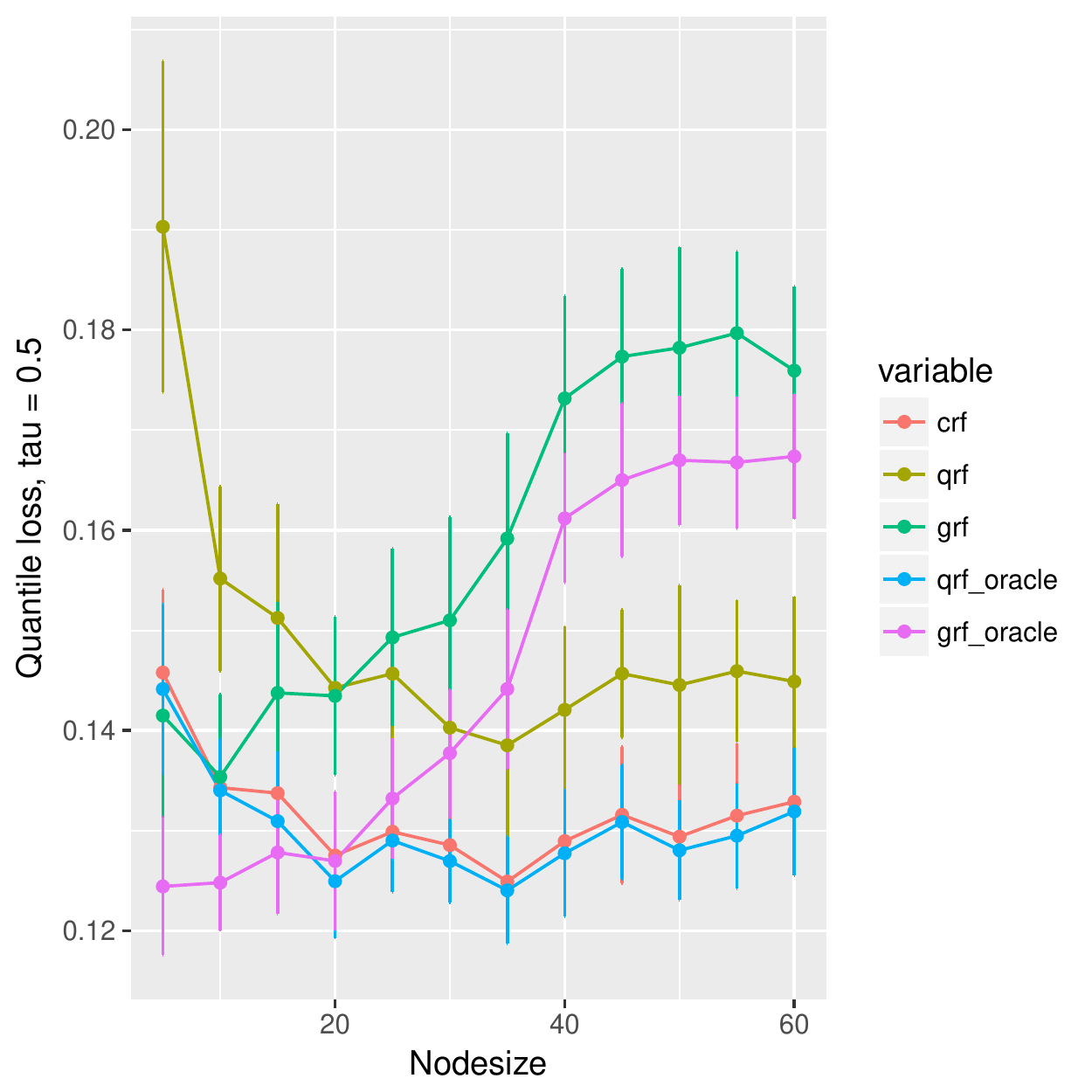}
        \caption{$\tau = 0.5$}
    \end{subfigure}
    ~
    \begin{subfigure}[b]{0.3\linewidth}
        \includegraphics[width=\textwidth]{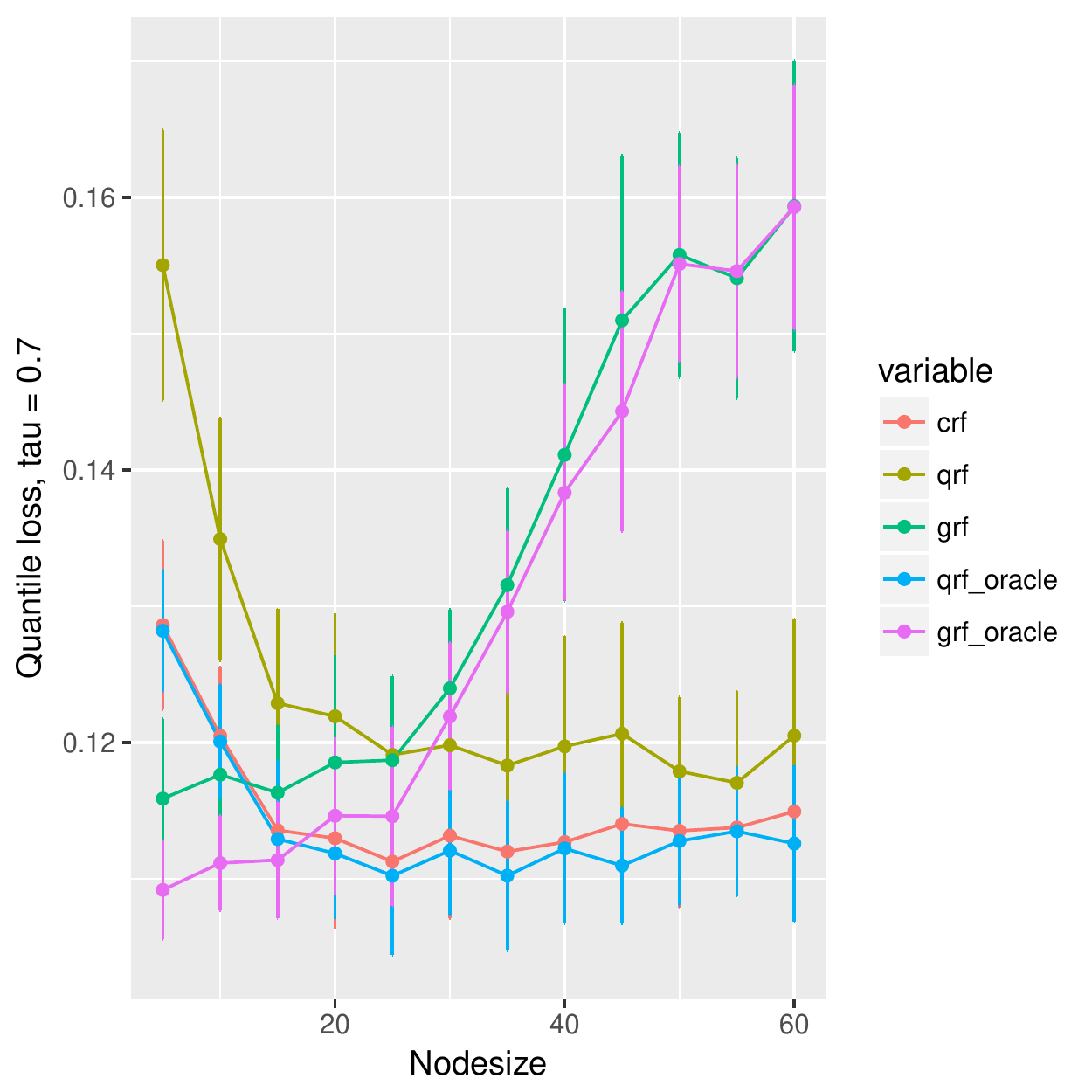}
        \caption{$\tau = 0.7$}
    \end{subfigure}
    
    \caption{Quantile losses of Sine model with different node sizes.}
    \label{fig:sine_nodesize}
\end{figure}

\paragraph{One-dimensional AFT and Sine models}
The result of sine model is summarized in Figure \ref{fig:sine_nodesize}. One can see that for both \textit{qrf} and our model, \textit{crf}, the quantile loss will first decrease when node size increases. It attains minimum around node size of 30. However, for \textit{grf}, its quantile loss is almost monotonically increasing, and attains minimum at node size of 5. But both \textit{qrf-oracle} and \textit{grf-oracle} can attain the best quantile loss of about 0.125. And one impressive observation is that our model, \textit{crf}, almost performs the same as \textit{qrf-oracle} for all node sizes. Similar conclusion can be made from the AFT result which is in Figure \ref{fig:aft_nodesize}.

% multi aft
\begin{figure}[!htb]
    \small
    \centering
    \begin{subfigure}[b]{0.3\linewidth}
        \includegraphics[width=\textwidth]{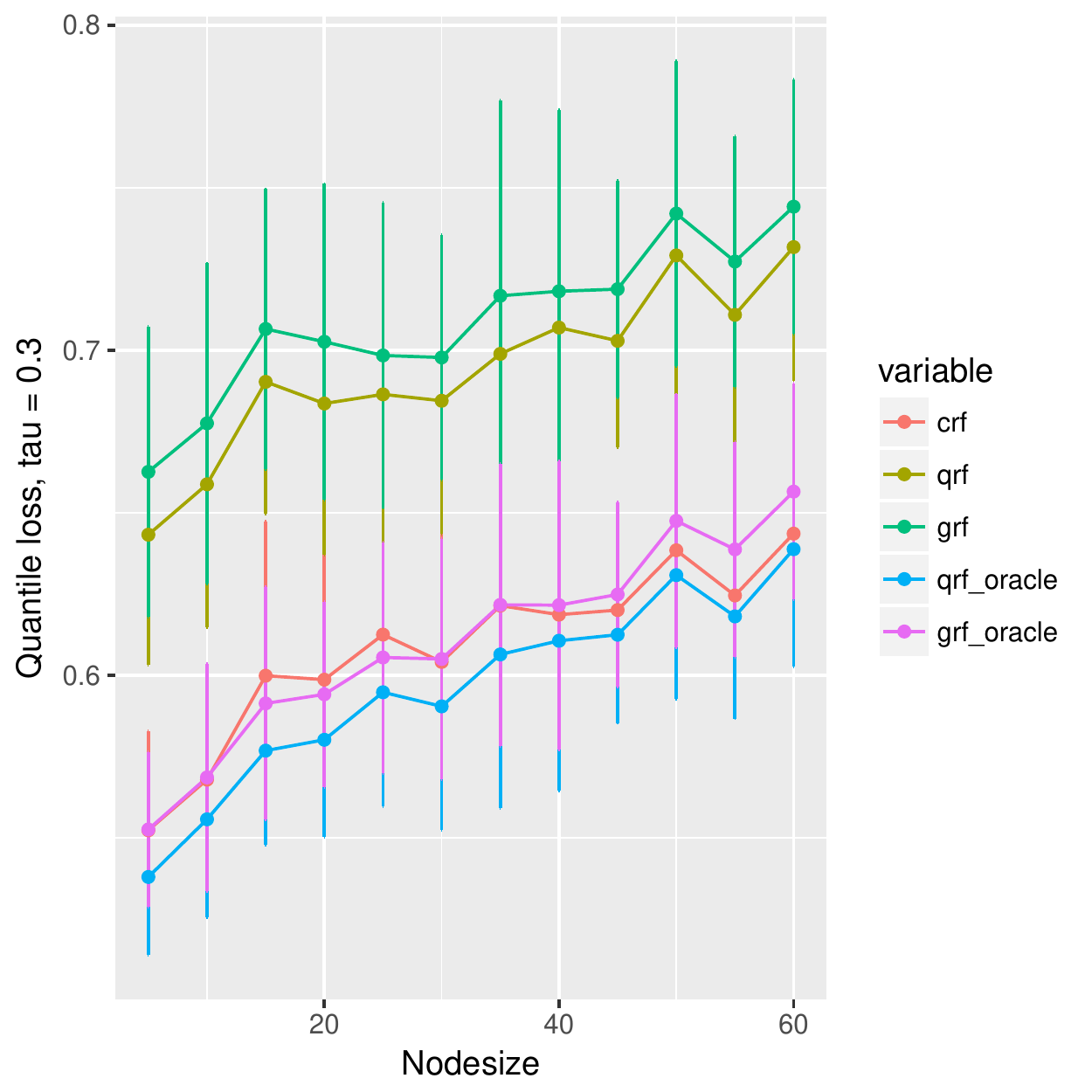}
        \caption{$\tau = 0.3$}
    \end{subfigure}
    ~
    \begin{subfigure}[b]{0.3\linewidth}
        \includegraphics[width=\textwidth]{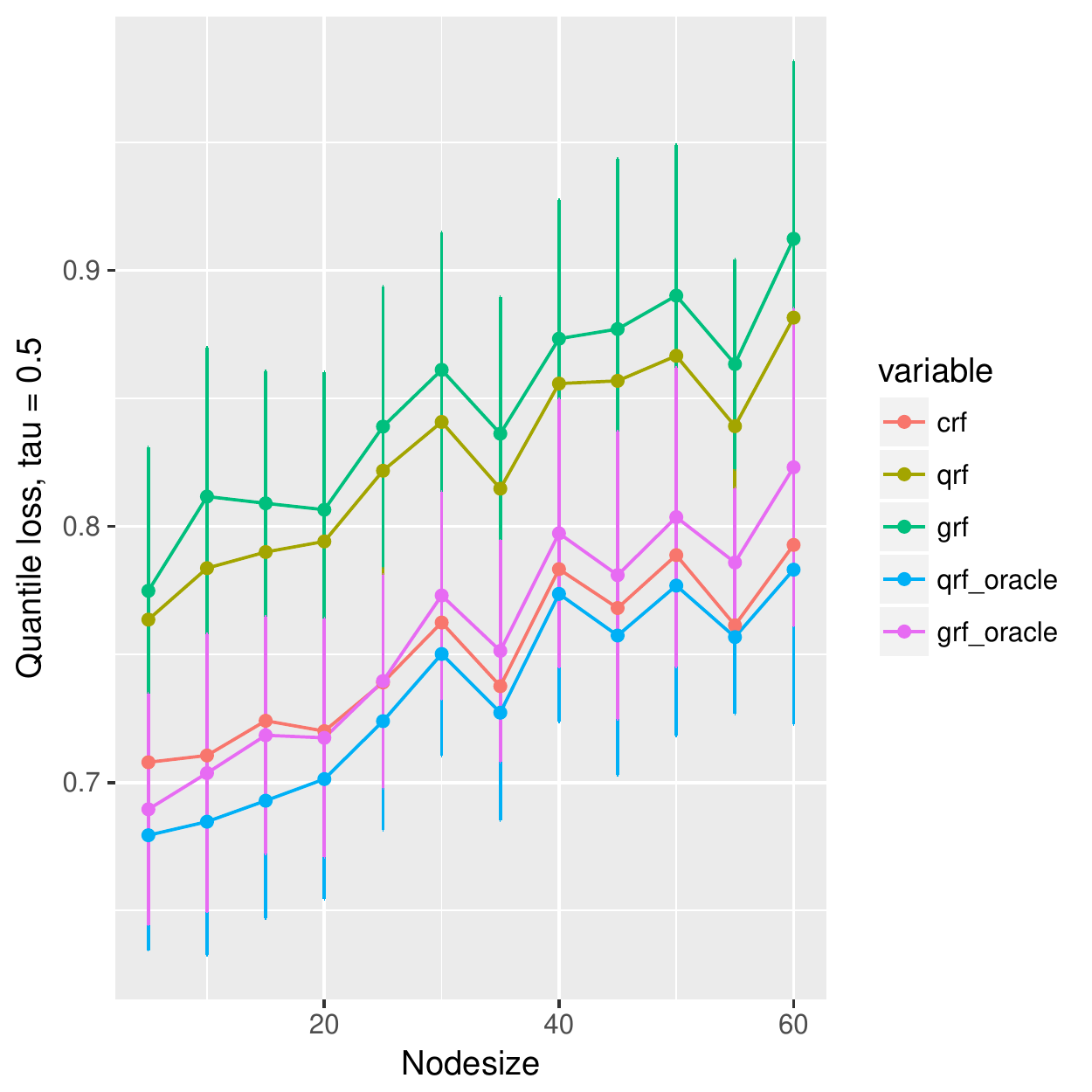}
        \caption{$\tau = 0.5$}
    \end{subfigure}
    ~
    \begin{subfigure}[b]{0.3\linewidth}
        \includegraphics[width=\textwidth]{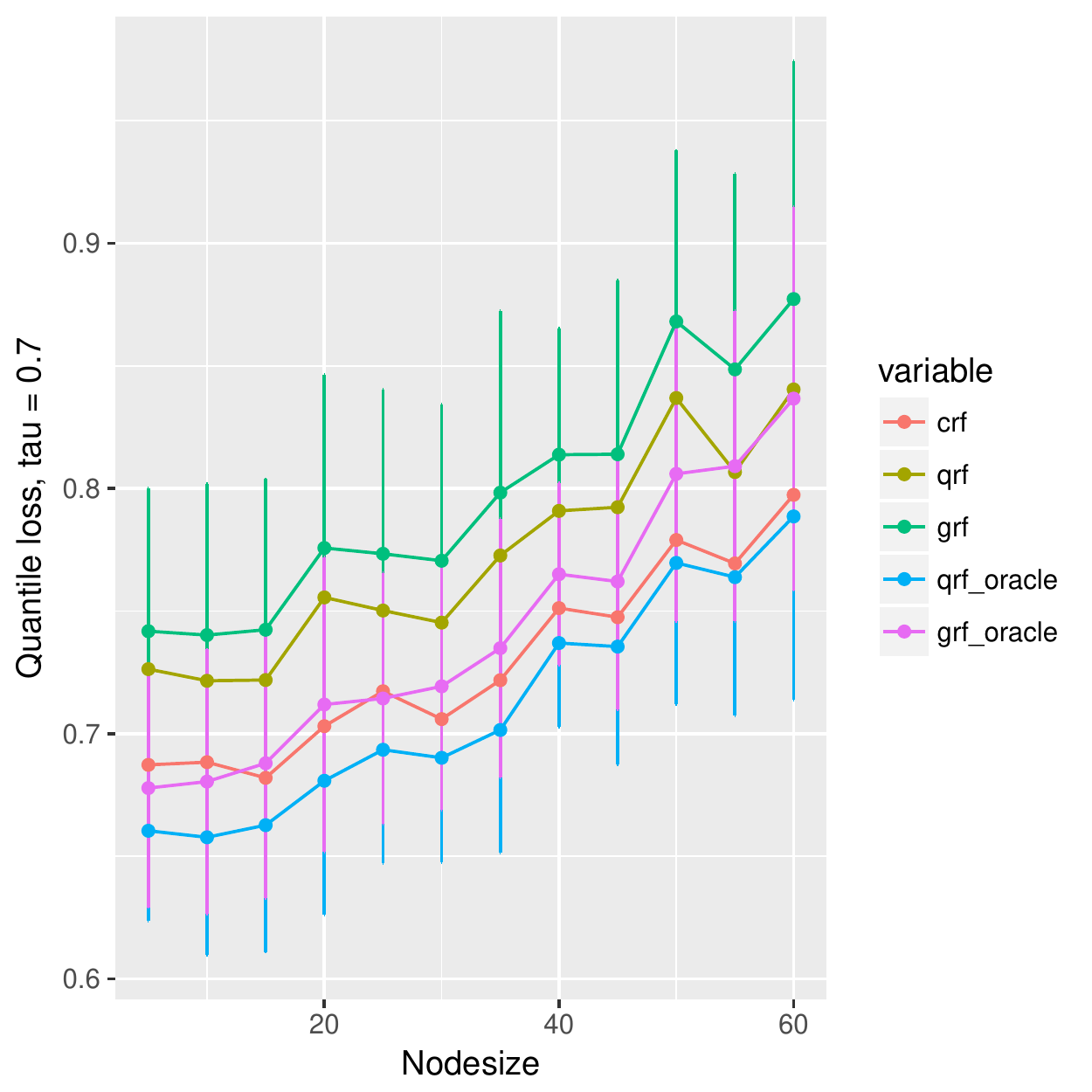}
        \caption{$\tau = 0.7$}
    \end{subfigure}
    
    \caption{Quantile losses of multi-dimensional AFT model with different node sizes.}
    \label{fig:aft_multi_nodesize}
\end{figure}

% multi complex
\begin{figure}[!htb]
    \small
    \centering
    \begin{subfigure}[b]{0.3\linewidth}
        \includegraphics[width=\textwidth]{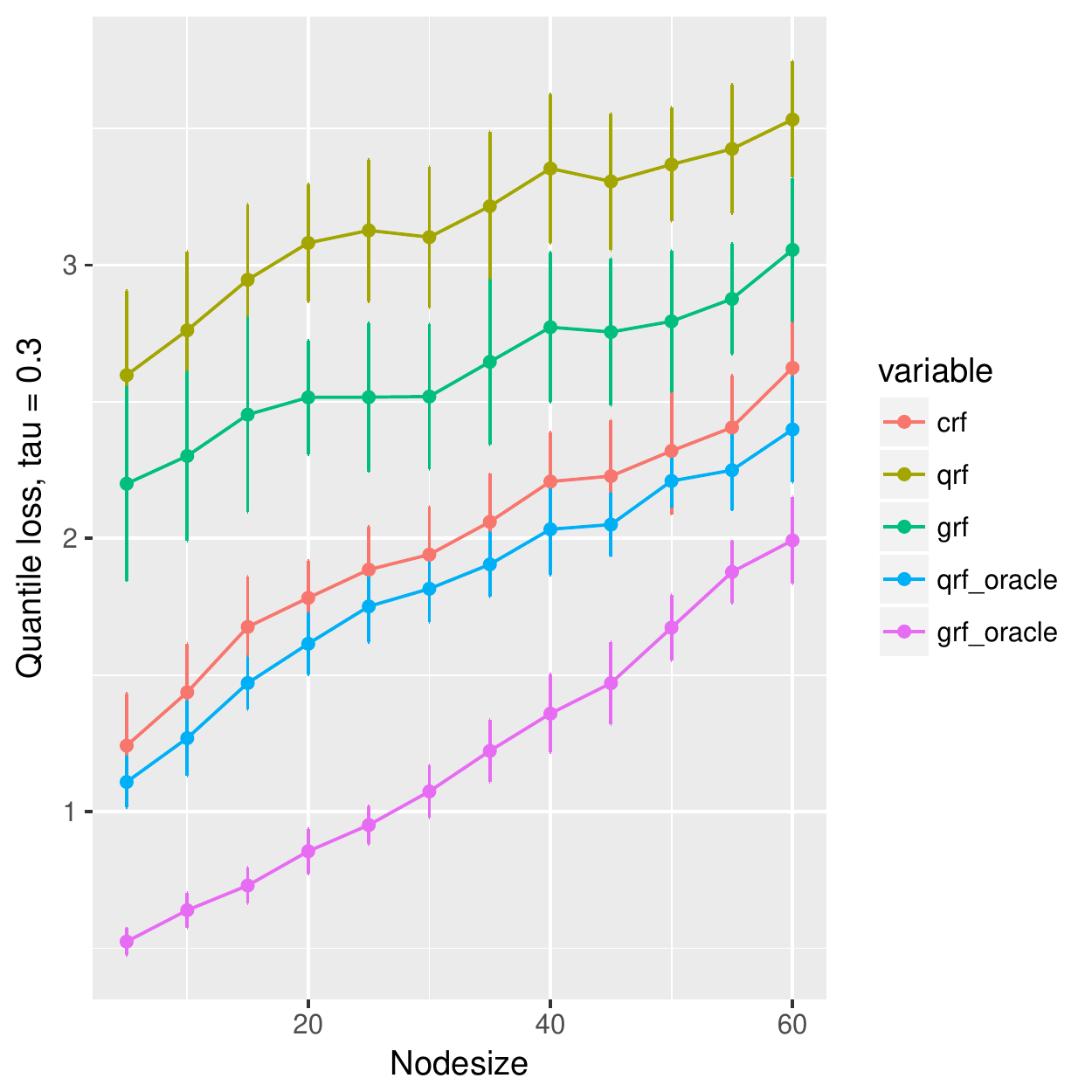}
        \caption{$\tau = 0.3$}
    \end{subfigure}
    ~
    \begin{subfigure}[b]{0.3\linewidth}
        \includegraphics[width=\textwidth]{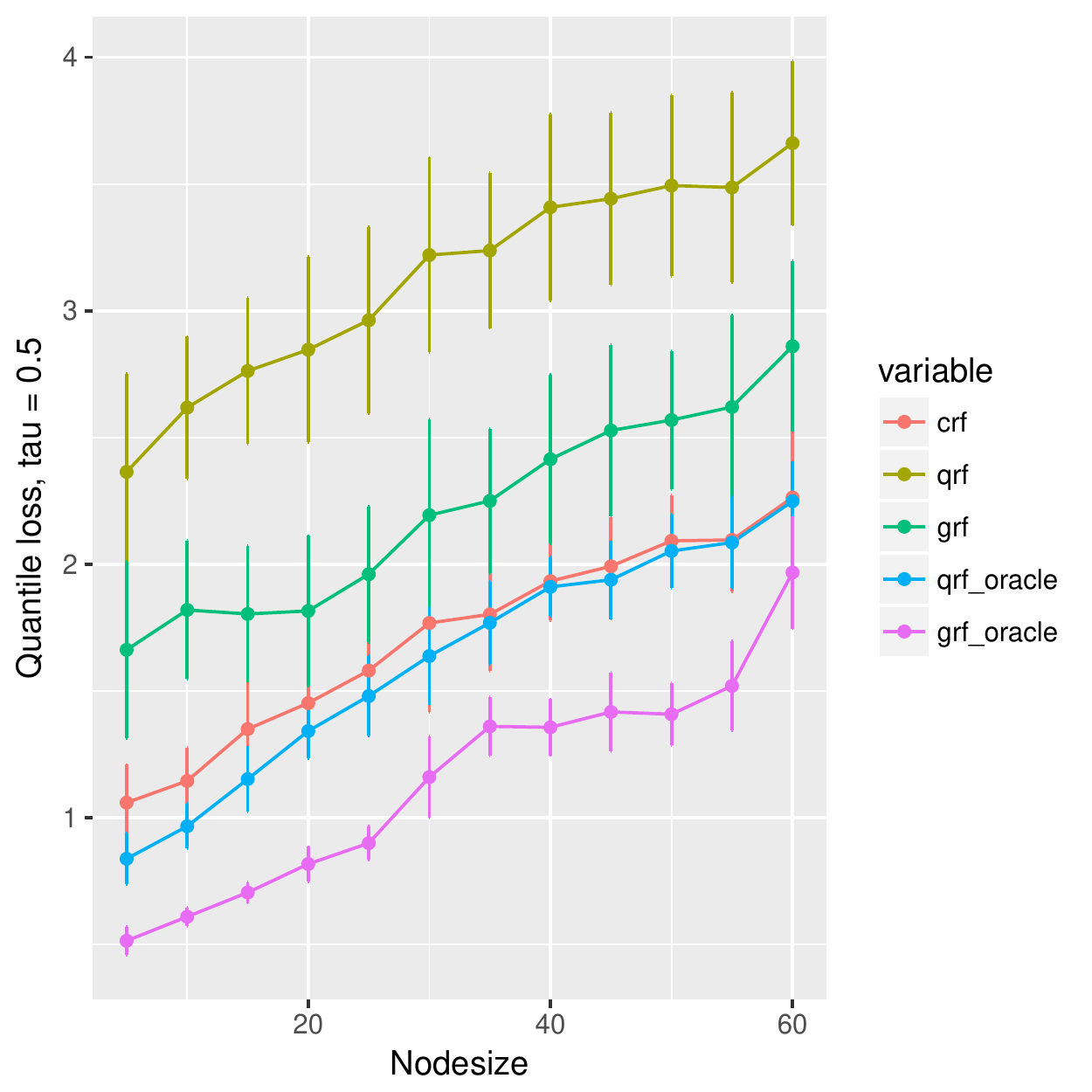}
        \caption{$\tau = 0.5$}
    \end{subfigure}
    ~
    \begin{subfigure}[b]{0.3\linewidth}
        \includegraphics[width=\textwidth]{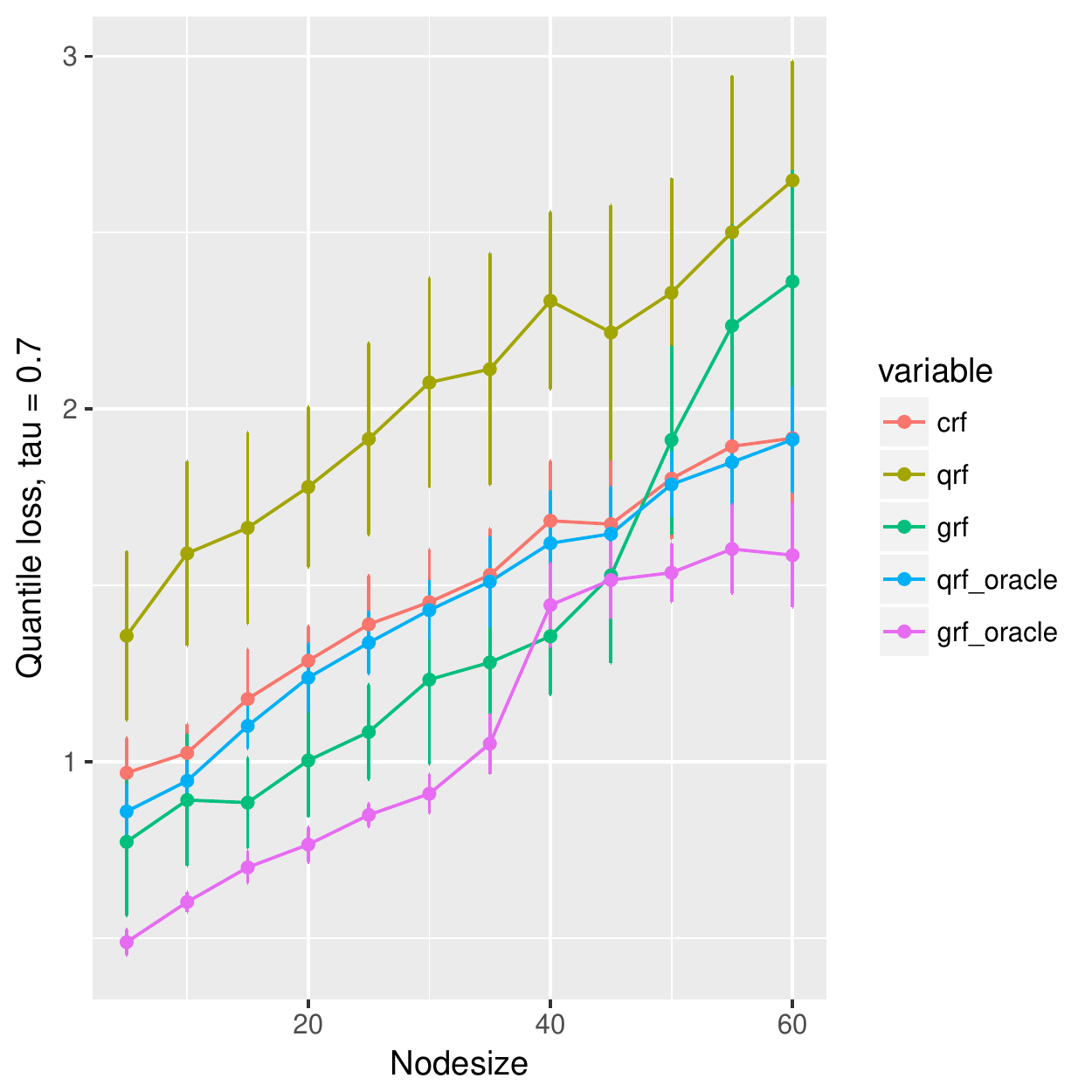}
        \caption{$\tau = 0.7$}
    \end{subfigure}
    
    \caption{Quantile losses of multi-dimensional complex model with different node sizes.}
    \label{fig:complex_multi_nodesize}
\end{figure}

\paragraph{Multi-dimensional AFT model}
From Figure \ref{fig:aft_multi_nodesize}, we observe that \textit{qrf}, \textit{qrf-oracle} and \textit{grf-oracle} all give similar results. The performance of our model \textit{crf} is only slightly worse than \textit{qrf-oracle}, but is even better than \textit{grf-oracle}.

\paragraph{Multi-dimensional complex model}
The result is summarized in Figure \ref{fig:complex_multi_nodesize}. The censoring level is about $25\%$. From the figure, we observe that the behavior of \textit{crf} is still only slightly worse than \textit{qrf-oracle}. In this experiment, \textit{grf-oracle} behaves the best. All of \textit{crf}, \textit{qrf-oracle} and \textit{grf-oracle} are significantly better than the biased models \textit{qrf} and \textit{grf}. When $\tau=0.7$, \textit{grf} behaves slightly better than \textit{qrf-oracle} when node size is small. The reason is that the conditional quantiles of $Y$ and $T$ are closer when $\tau$ is larger, and \textit{grf} is more stable and smooth on the data in this experiment. But we still observe that the performance of \textit{crf} and \textit{qrf-oracle} are very close.

\subsection{Real Data}
In this section, we compare our censored forest (crf) with quantile random forest (qrf) \citep{meinshausen2006quantile} and generalized forest (grf) \citep{athey2016generalized} on real datasets. In order to evaluate the performances unbiasedly, we manually add censoring to the data. In addition, we apply qrf and grf to the data without censoring and we call the resulted models qrf-oracle and grf-oracle, respectively.

For all these methods, bagged versions of the training data are used for each of the $1000$ trees. We use 5-fold cross validation to select the best node sizes for different methods. For all the other parameters, we keep the default settings.

\paragraph{Datasets}
We use datasets \textit{BostonHousing}, \textit{Ozone} from the R packages \textit{mlbench} and \textit{alr3}. For all the datasets, we sample censoring variables from Exponential distributions with $\lambda$ set so that the censoring level is roughly $20\%$. For \textit{BostonHousing} dataset, we set $\lambda = 0.01$. For \textit{Ozone}, $\lambda = 0.025$. For \textit{Abalone} dataset, we random sample $1000$ observations and take the log-transformation of the response variable \textit{rings}. We then set $\lambda = 0.10$.

\paragraph{Evaluation}
For each dataset, we train our model on bootstrapped version of the data, and test the performance on out-of-bag observations. This process is repeated for 40 times, and we calculate the mean and standard deviation of the prediction errors. In our context, the prediction error is measured by the $\tau$-th quantile loss for $\tau$-th quantile estimation. The results are illustrated in Figure \ref{fig:real_data}.

On all data sets, our proposed method behaves better than quantile forest and generalized forest in terms of quantile losses. Especially when $\tau = 0.1$, $0.3$ or $0.5$, the performance of our method is significantly better than \textit{qrf} and \textit{grf}, and is even comparable to that of oracle \textit{qrf} and \textit{grf}. It agrees with our observation in the one-dimensional example (Figure \ref{fig:aft_1d} and \ref{fig:aft_1d_box}). While estimating larger quantiles, the true $\tau$-th quantile of $T_i$ and $Y_i$ are close, and hence the performance of all five models are similar. But when $\tau$ is small, the $\tau$-th quantile of $T_i$ and $Y_i$ are different because of the censoring, and in this case, our model has superior advantage and find the true quantiles of $T_i$ almost as good as the oracle methods.

\begin{figure}[!htb]
    \small
    \centering
    \begin{subfigure}[b]{0.18\linewidth}
        \includegraphics[width=\textwidth]{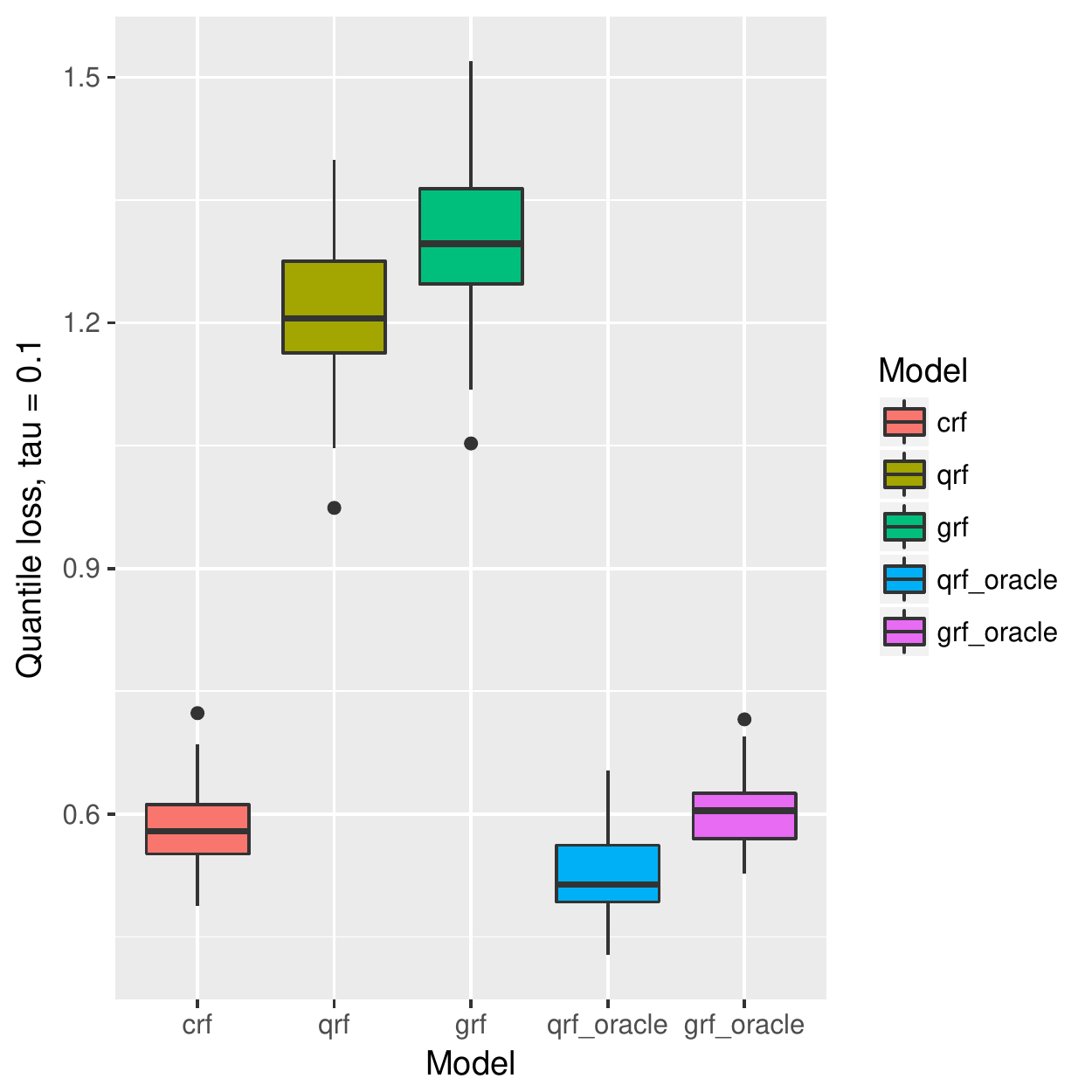}
        \caption{BostonHousing: $\tau = 0.1$}
    \end{subfigure}
    ~
    \begin{subfigure}[b]{0.18\linewidth}
        \includegraphics[width=\textwidth]{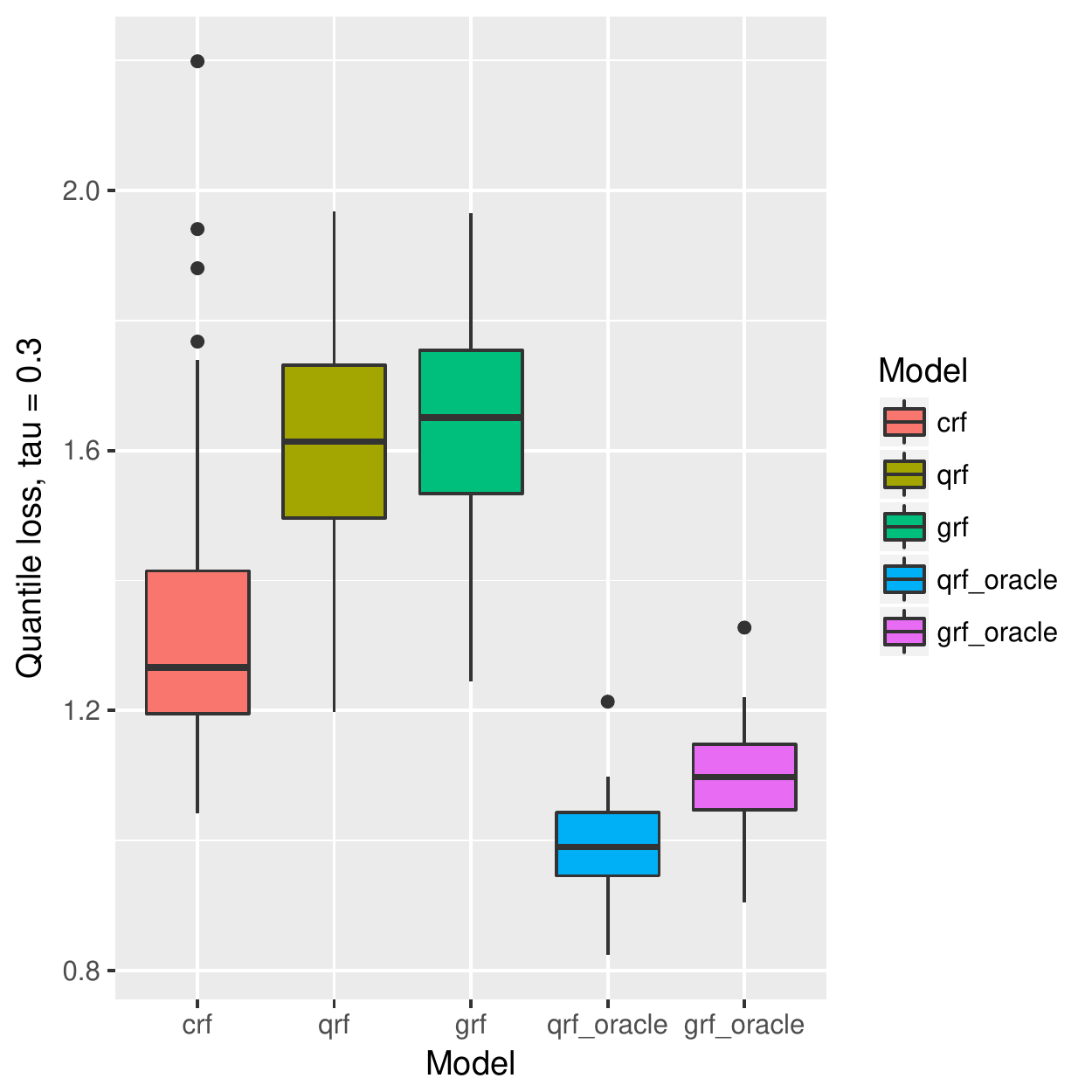}
        \caption{BostonHousing: $\tau = 0.3$}
    \end{subfigure}
    ~ %add desired spacing between images, e. g. ~, \quad, \qquad, \hfill etc. 
      %(or a blank line to force the subfigure onto a new line)
    \begin{subfigure}[b]{0.18\linewidth}
        \includegraphics[width=\textwidth]{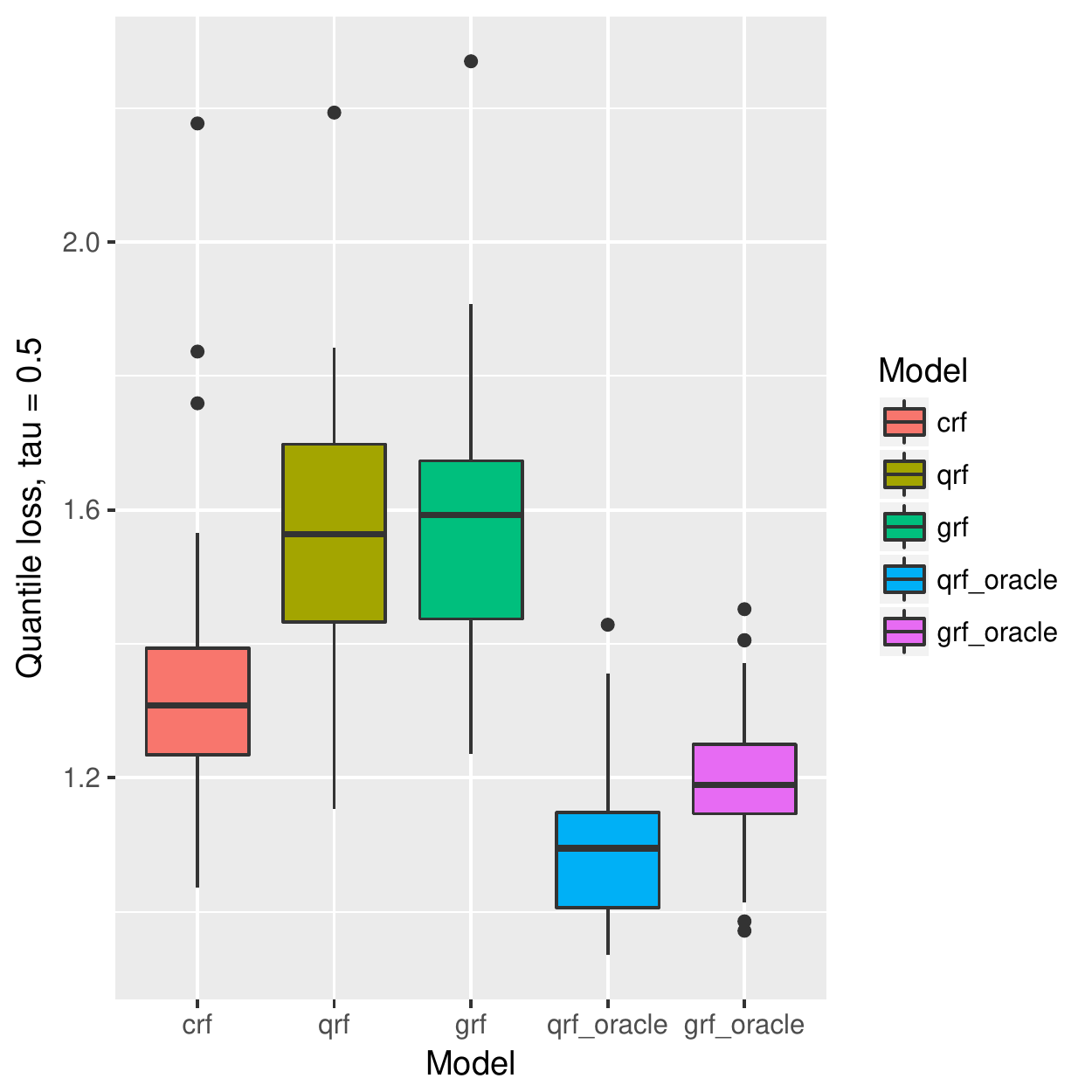}
        \caption{BostonHousing: $\tau = 0.5$}
    \end{subfigure}
    ~
    \begin{subfigure}[b]{0.18\linewidth}
        \includegraphics[width=\textwidth]{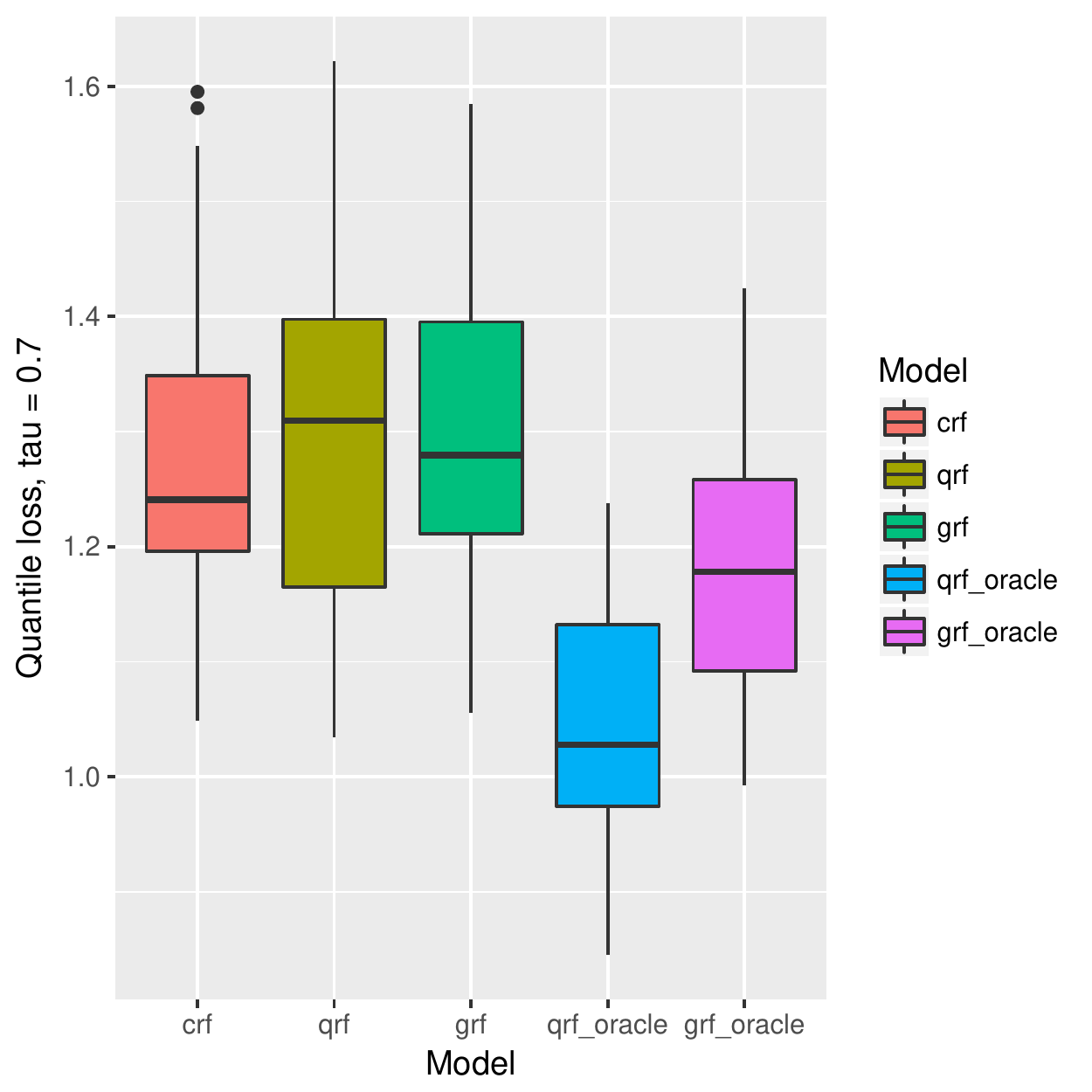}
        \caption{BostonHousing: $\tau = 0.7$}
    \end{subfigure}
    ~
    \begin{subfigure}[b]{0.18\linewidth}
        \includegraphics[width=\textwidth]{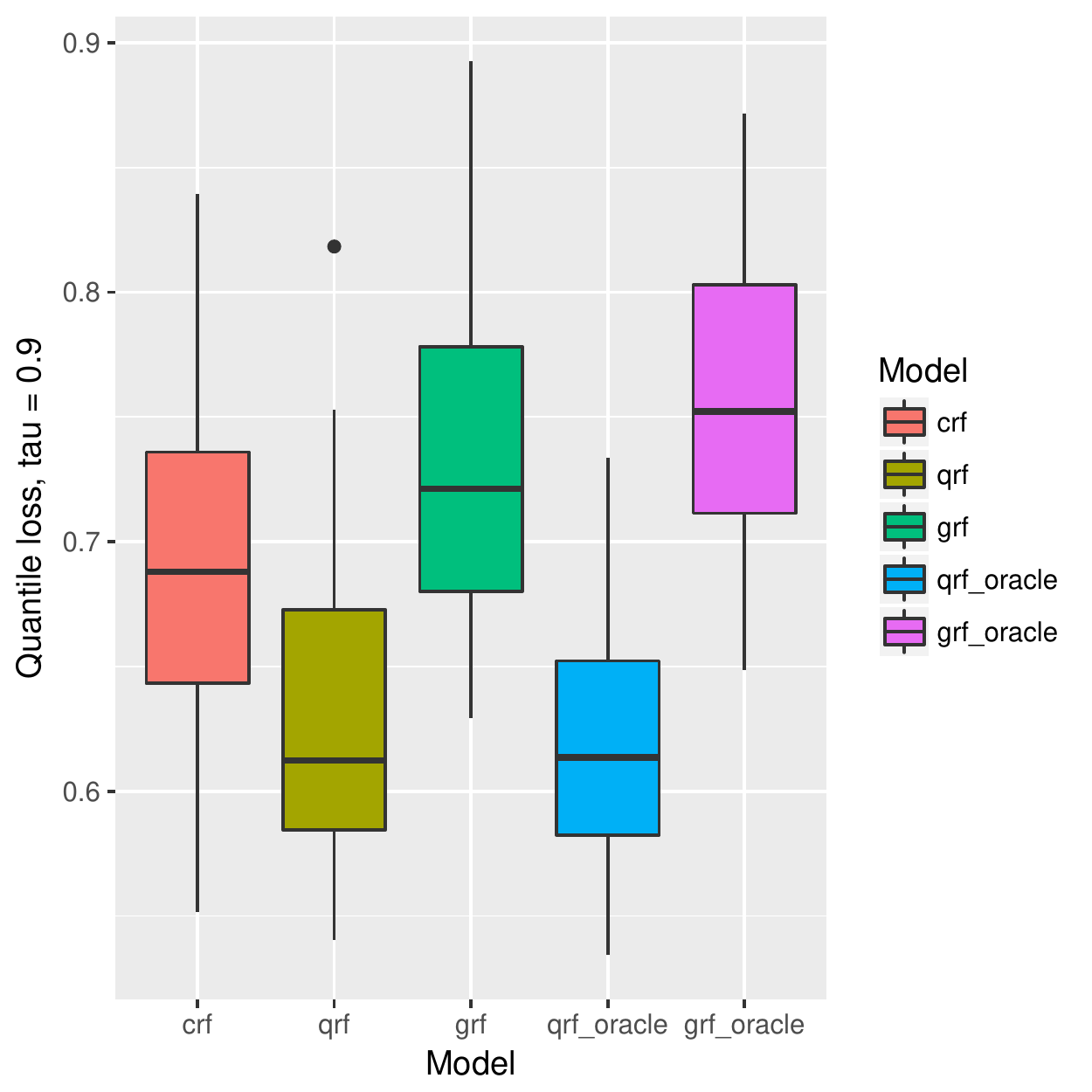}
        \caption{BostonHousing: $\tau = 0.9$}
    \end{subfigure}

    \vspace{-0.05in}
    
    \begin{subfigure}[b]{0.18\linewidth}
        \includegraphics[width=\textwidth]{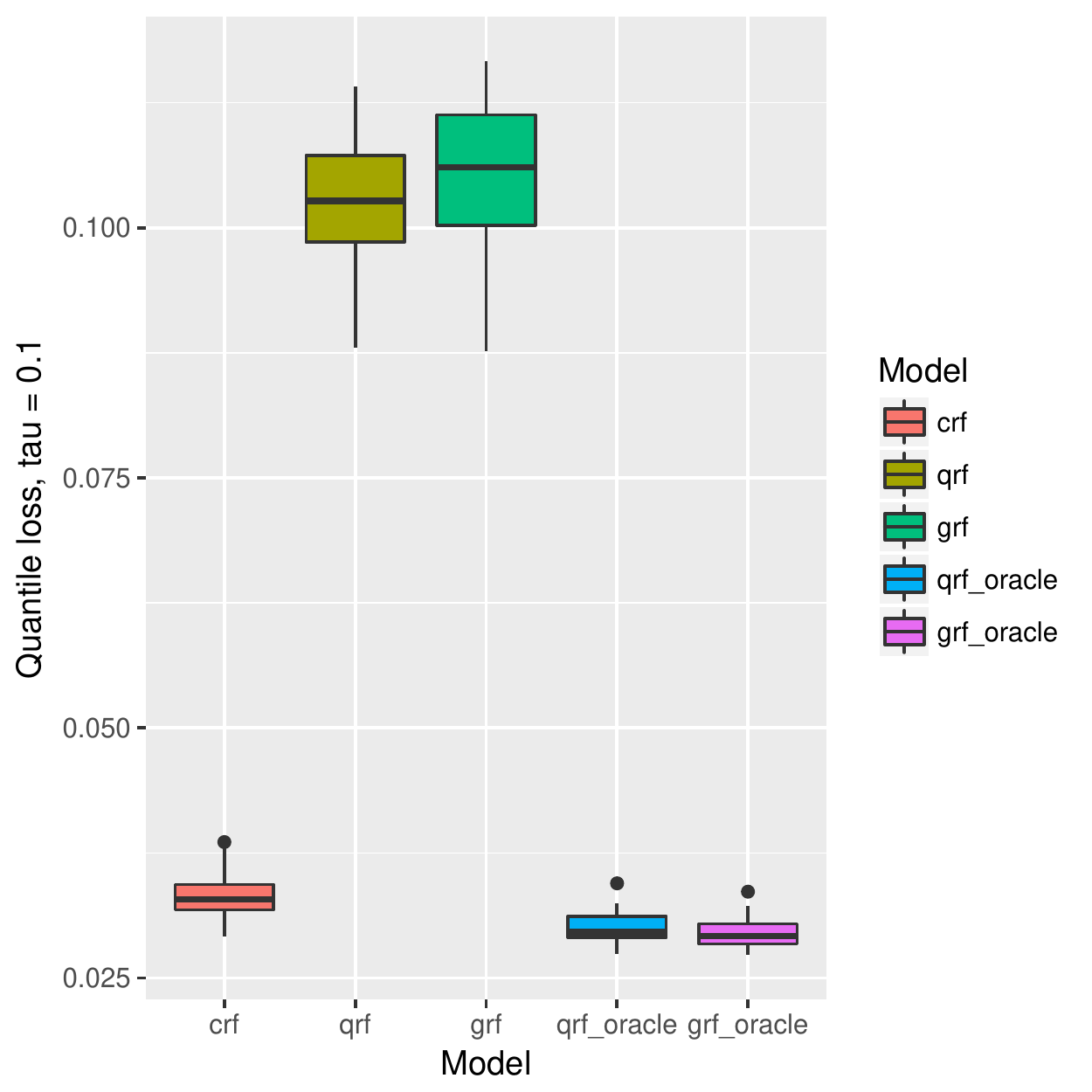}
        \caption{Abalone: $\tau = 0.1$}
    \end{subfigure}
    ~
    \begin{subfigure}[b]{0.18\linewidth}
        \includegraphics[width=\textwidth]{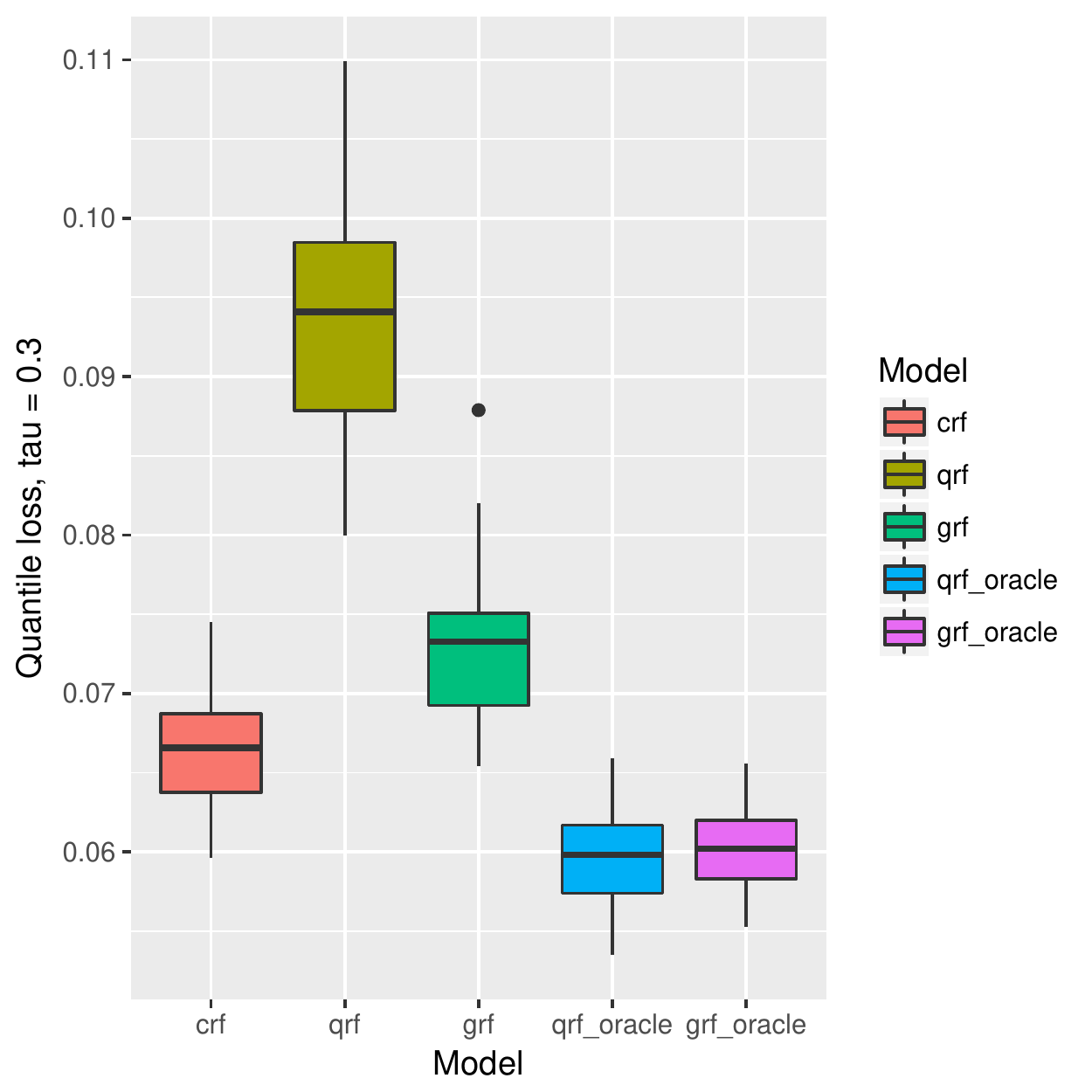}
        \caption{Abalone: $\tau = 0.3$}
    \end{subfigure}
    ~ %add desired spacing between images, e. g. ~, \quad, \qquad, \hfill etc. 
      %(or a blank line to force the subfigure onto a new line)
    \begin{subfigure}[b]{0.18\linewidth}
        \includegraphics[width=\textwidth]{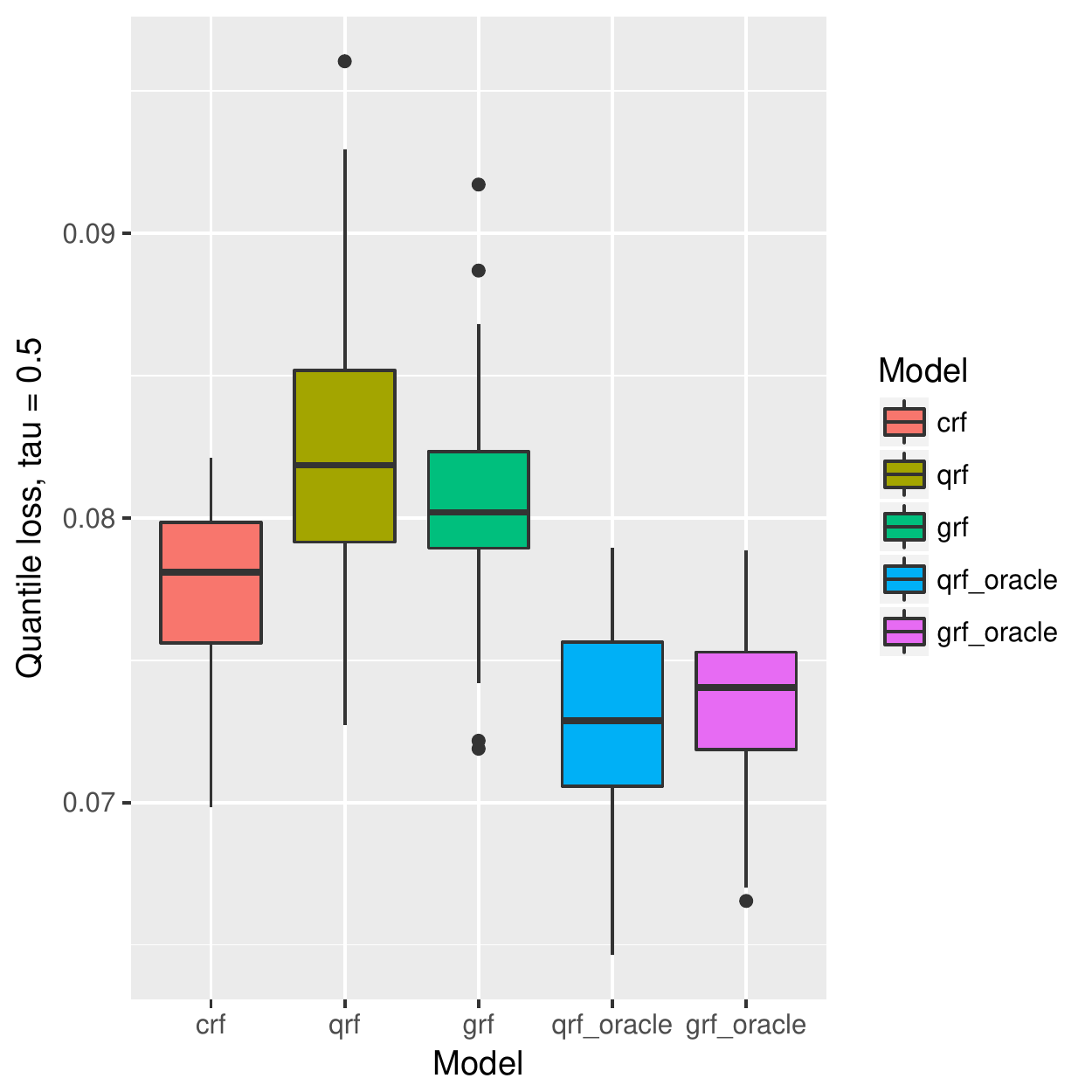}
        \caption{Abalone: $\tau = 0.5$}
    \end{subfigure}
    ~
    \begin{subfigure}[b]{0.18\linewidth}
        \includegraphics[width=\textwidth]{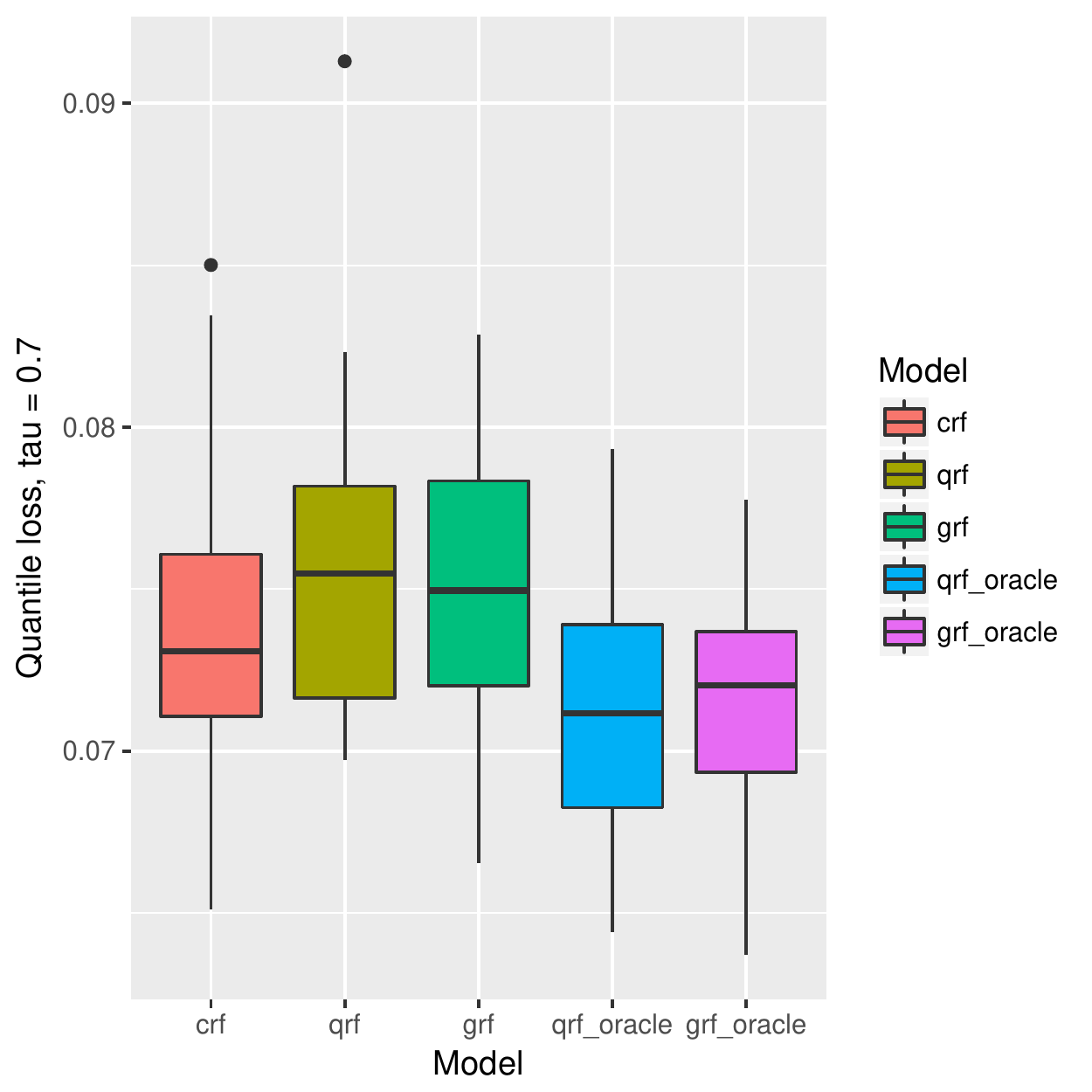}
        \caption{Abalone: $\tau = 0.7$}
    \end{subfigure}
    ~
    \begin{subfigure}[b]{0.18\linewidth}
        \includegraphics[width=\textwidth]{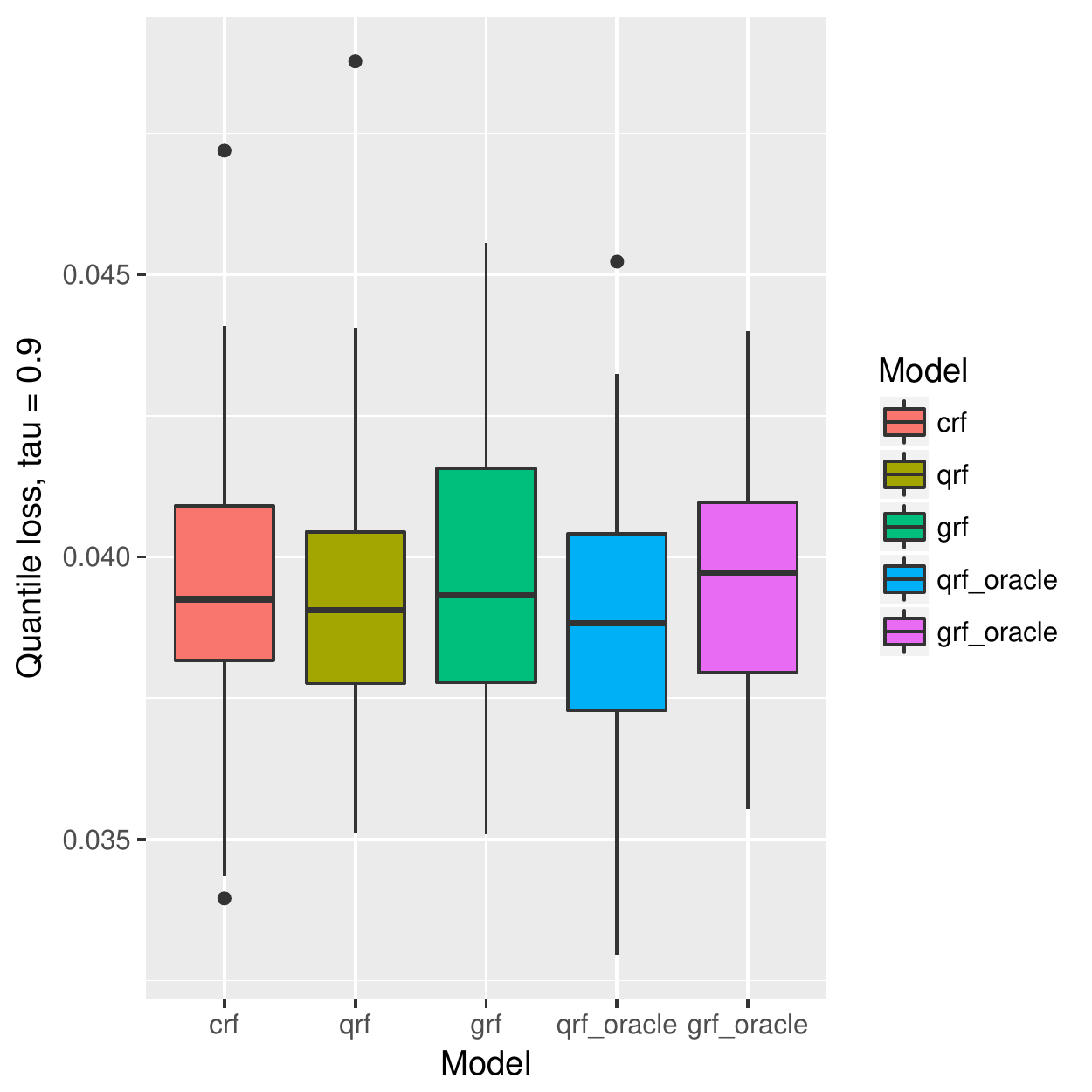}
        \caption{Abalone: $\tau = 0.9$}
    \end{subfigure}

    \vspace{-0.05in}
    
    \begin{subfigure}[b]{0.18\linewidth}
        \includegraphics[width=\textwidth]{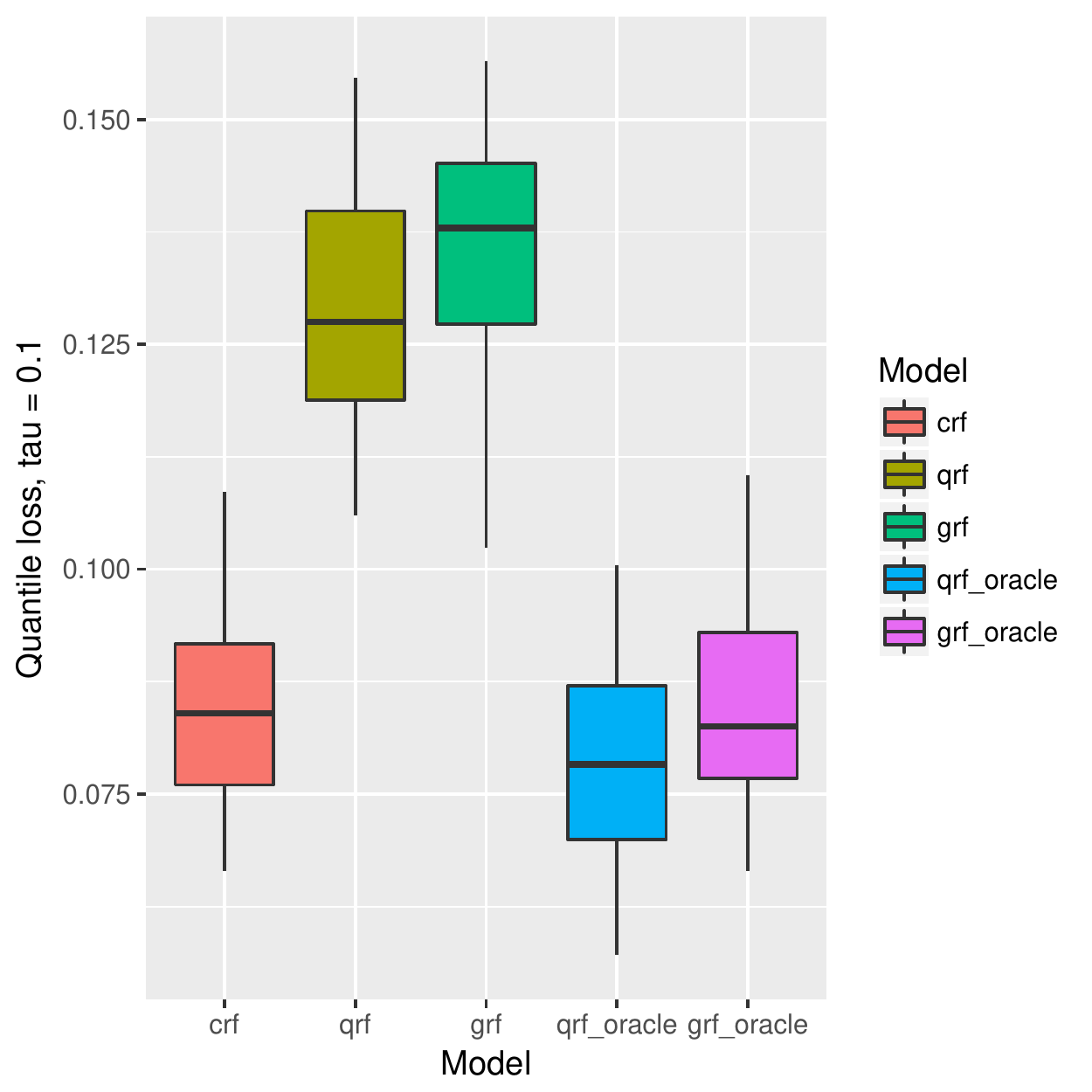}
        \caption{Ozone: $\tau = 0.1$}
    \end{subfigure}
    ~
    \begin{subfigure}[b]{0.18\linewidth}
        \includegraphics[width=\textwidth]{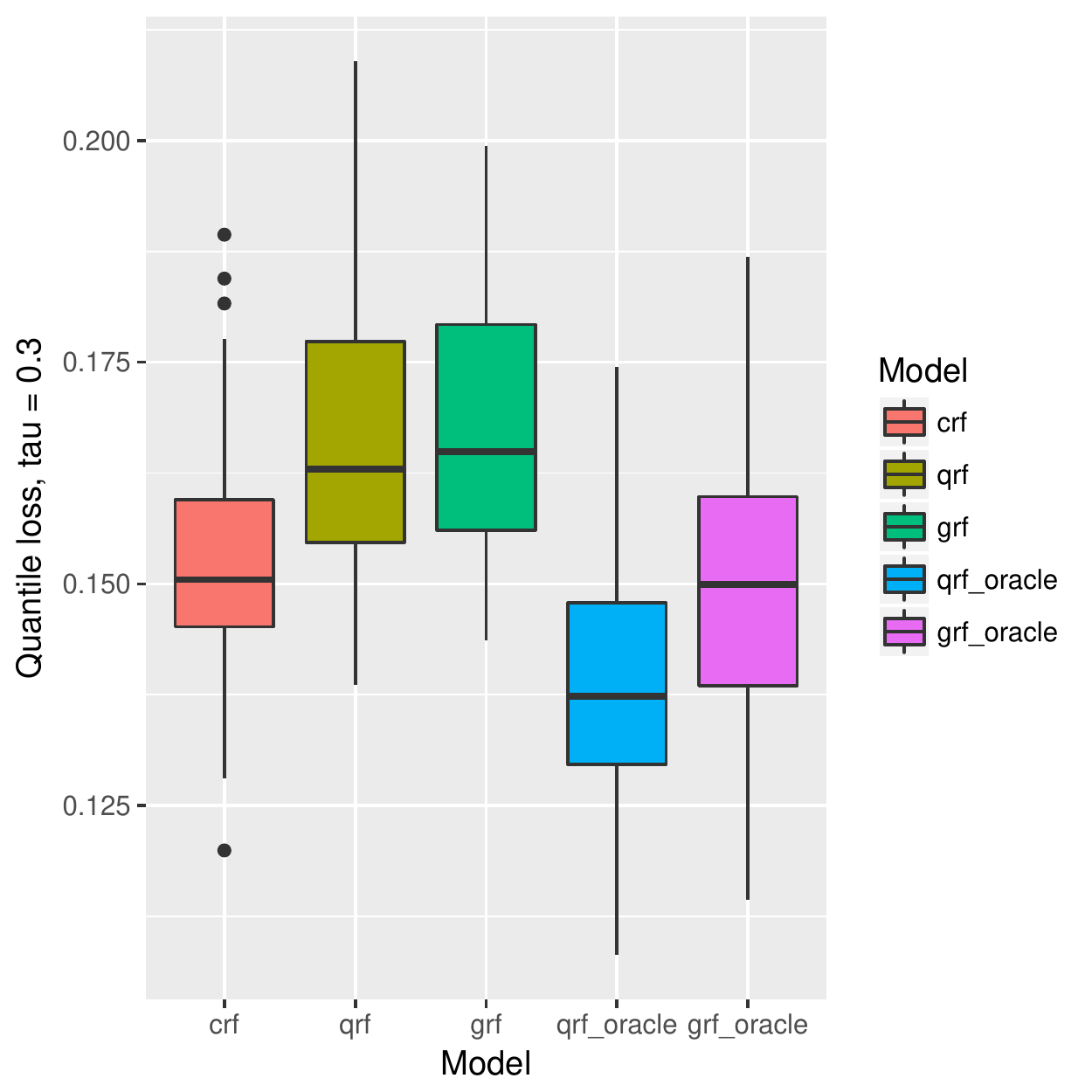}
        \caption{Ozone: $\tau = 0.3$}
    \end{subfigure}
    ~ %add desired spacing between images, e. g. ~, \quad, \qquad, \hfill etc. 
      %(or a blank line to force the subfigure onto a new line)
    \begin{subfigure}[b]{0.18\linewidth}
        \includegraphics[width=\textwidth]{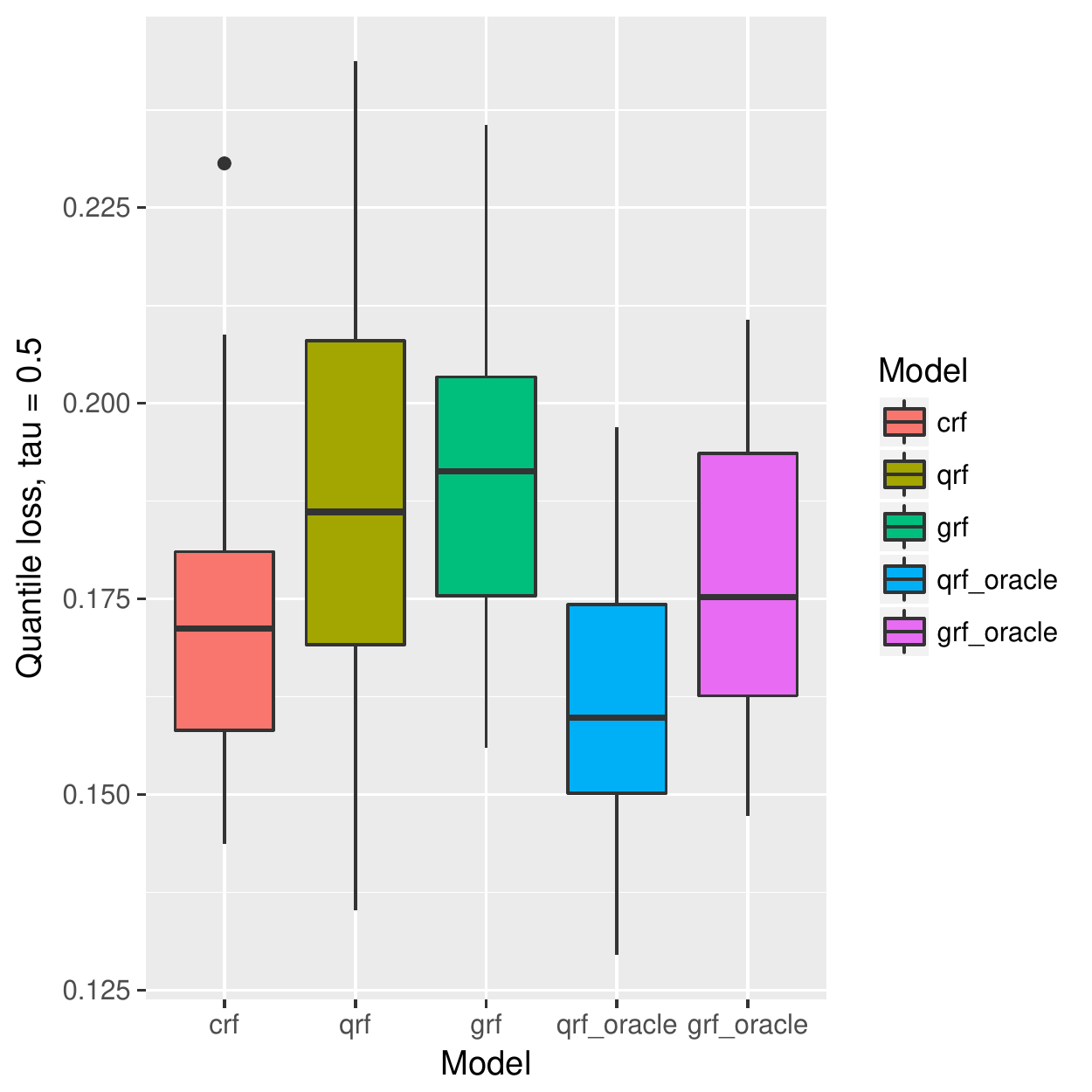}
        \caption{Ozone: $\tau = 0.5$}
    \end{subfigure}
    ~
    \begin{subfigure}[b]{0.18\linewidth}
        \includegraphics[width=\textwidth]{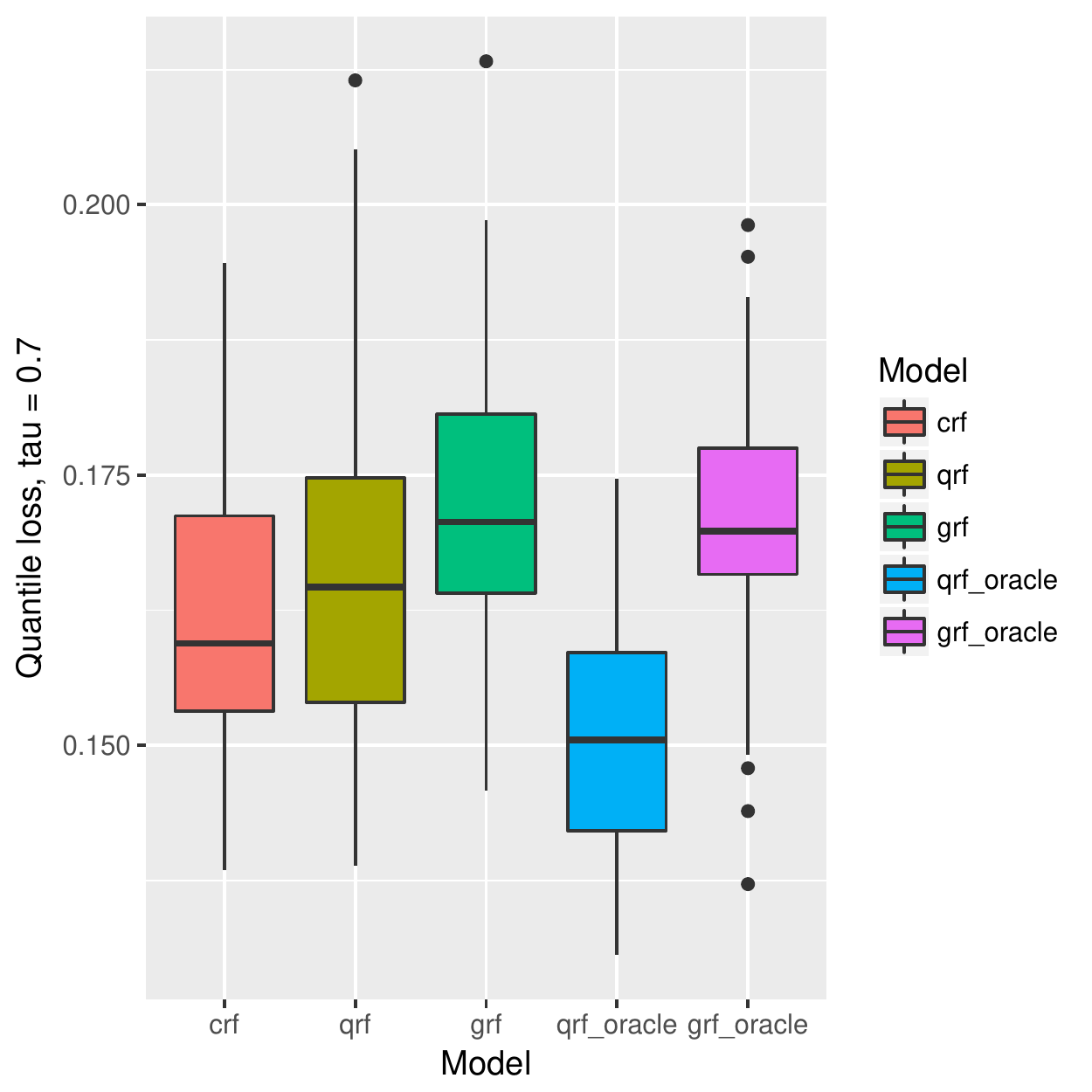}
        \caption{Ozone: $\tau = 0.7$}
    \end{subfigure}
    ~
    \begin{subfigure}[b]{0.18\linewidth}
        \includegraphics[width=\textwidth]{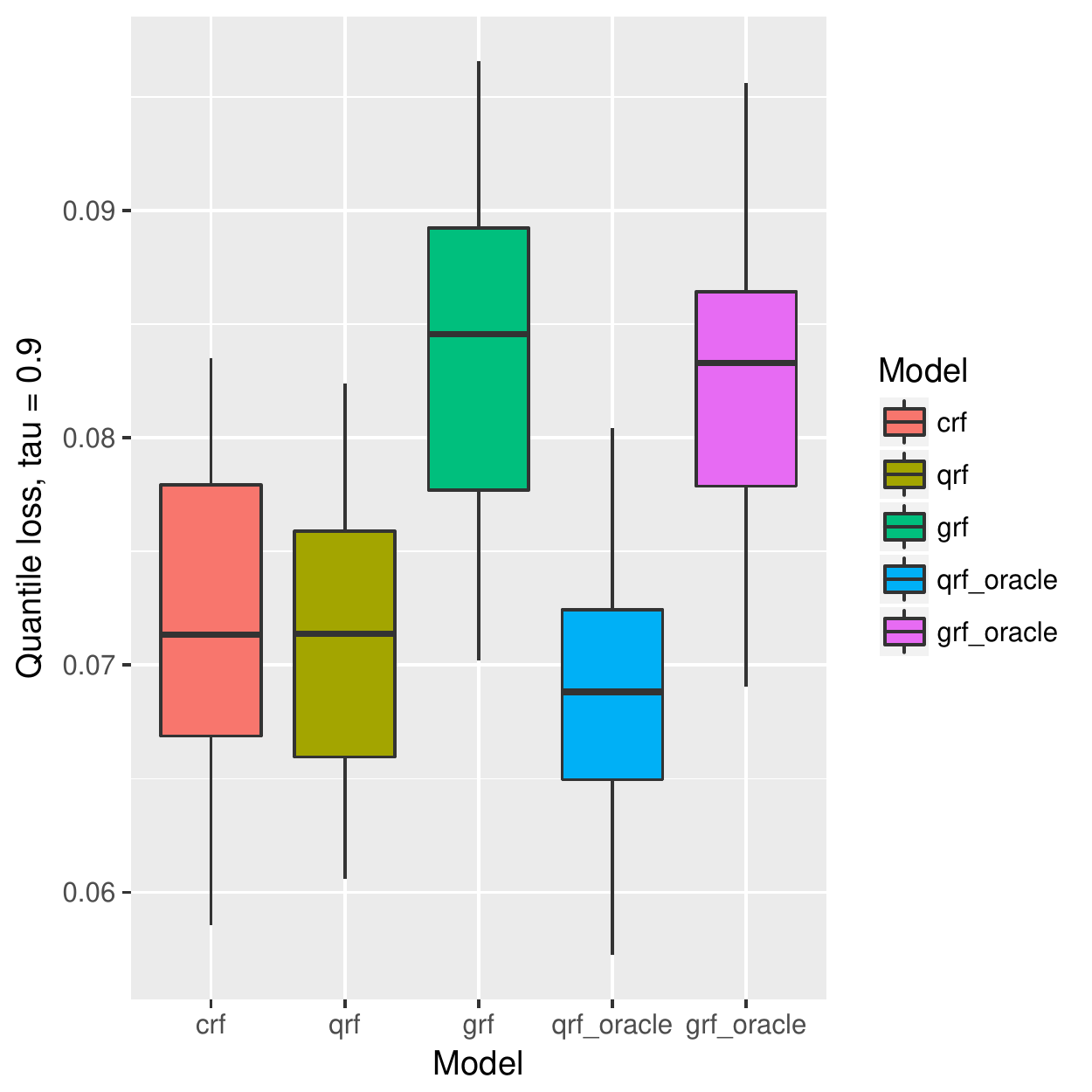}
        \caption{Ozone: $\tau = 0.9$}
    \end{subfigure}
    
    \caption{Quantile losses evaluated on a hold-out set of the three real data sets: each one presented in a corresponding row. Censoring level is approximately 20\%. We compare our method with {\it grf }(green) and {\it qrf }(olive) and their corresponding oracle versions, {\it grf-oracle }(purple) and {\it qrf-oracle} (blue).}
    \label{fig:real_data}
\end{figure}

\begin{figure}[!htb]
    \small
    \centering
    \begin{subfigure}[b]{0.3\linewidth}
        \includegraphics[width=\textwidth]{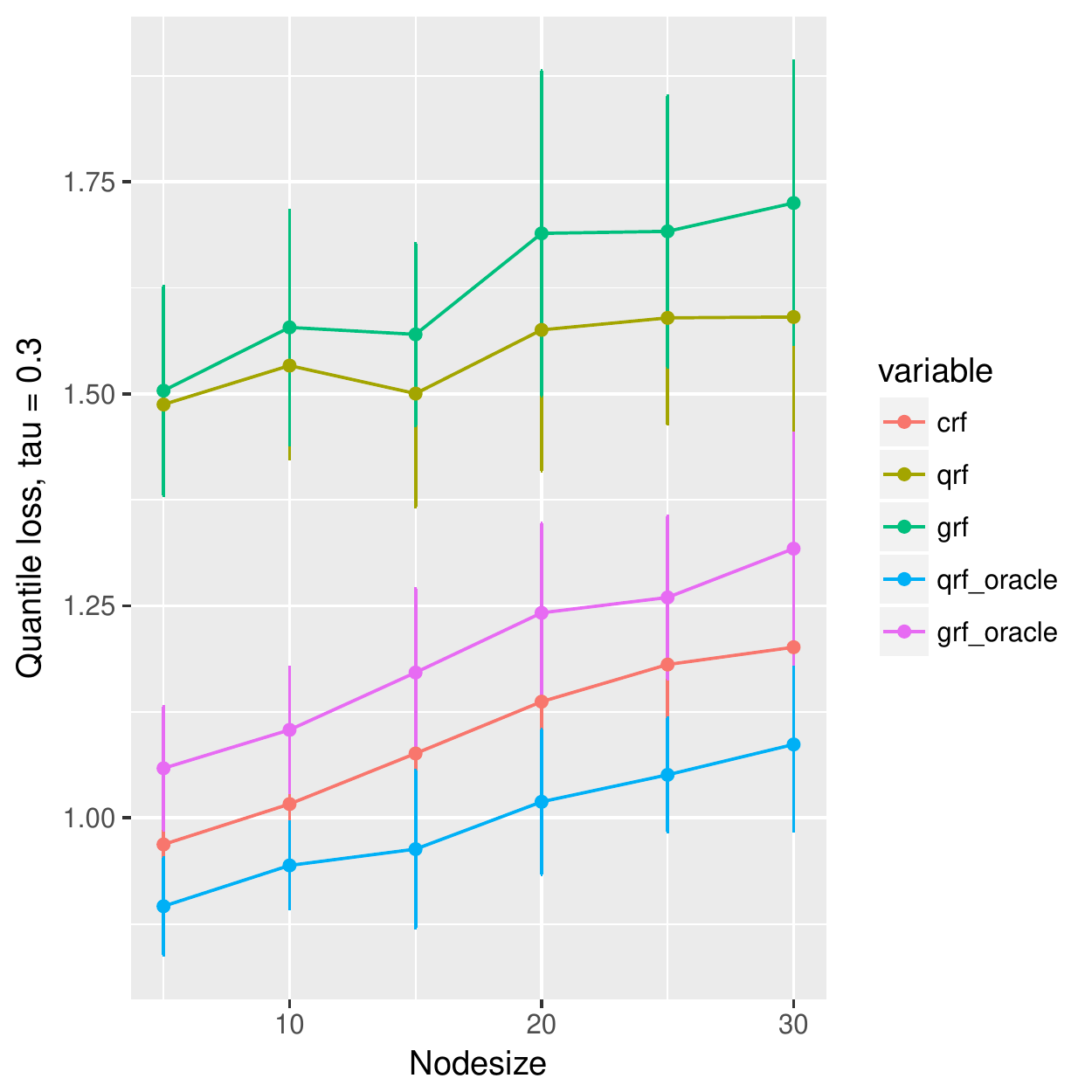}
        \caption{BostonHousing: $\tau = 0.3$}
    \end{subfigure}
    ~
    \begin{subfigure}[b]{0.3\linewidth}
        \includegraphics[width=\textwidth]{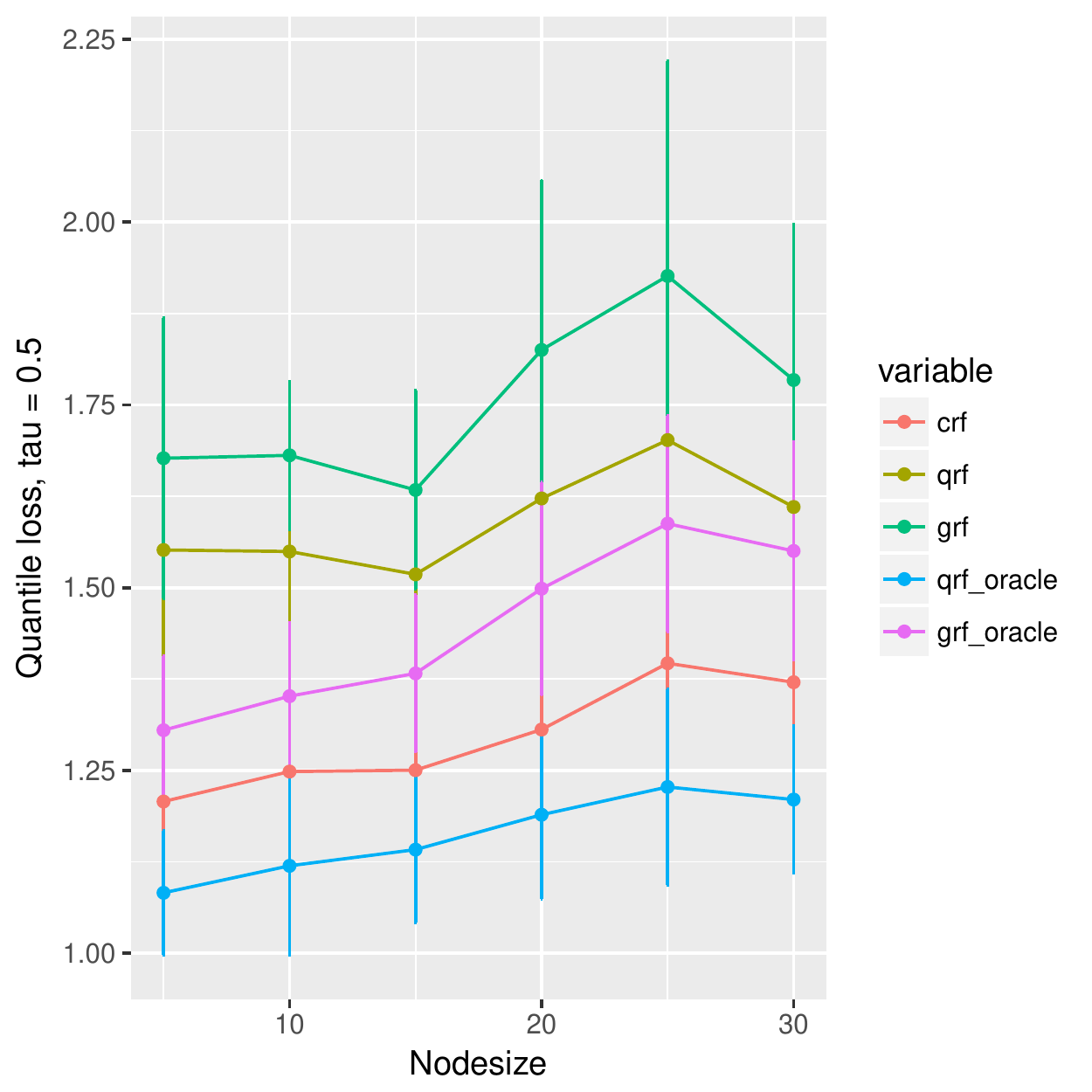}
        \caption{BostonHousing: $\tau = 0.5$}
    \end{subfigure}
    ~
    \begin{subfigure}[b]{0.3\linewidth}
        \includegraphics[width=\textwidth]{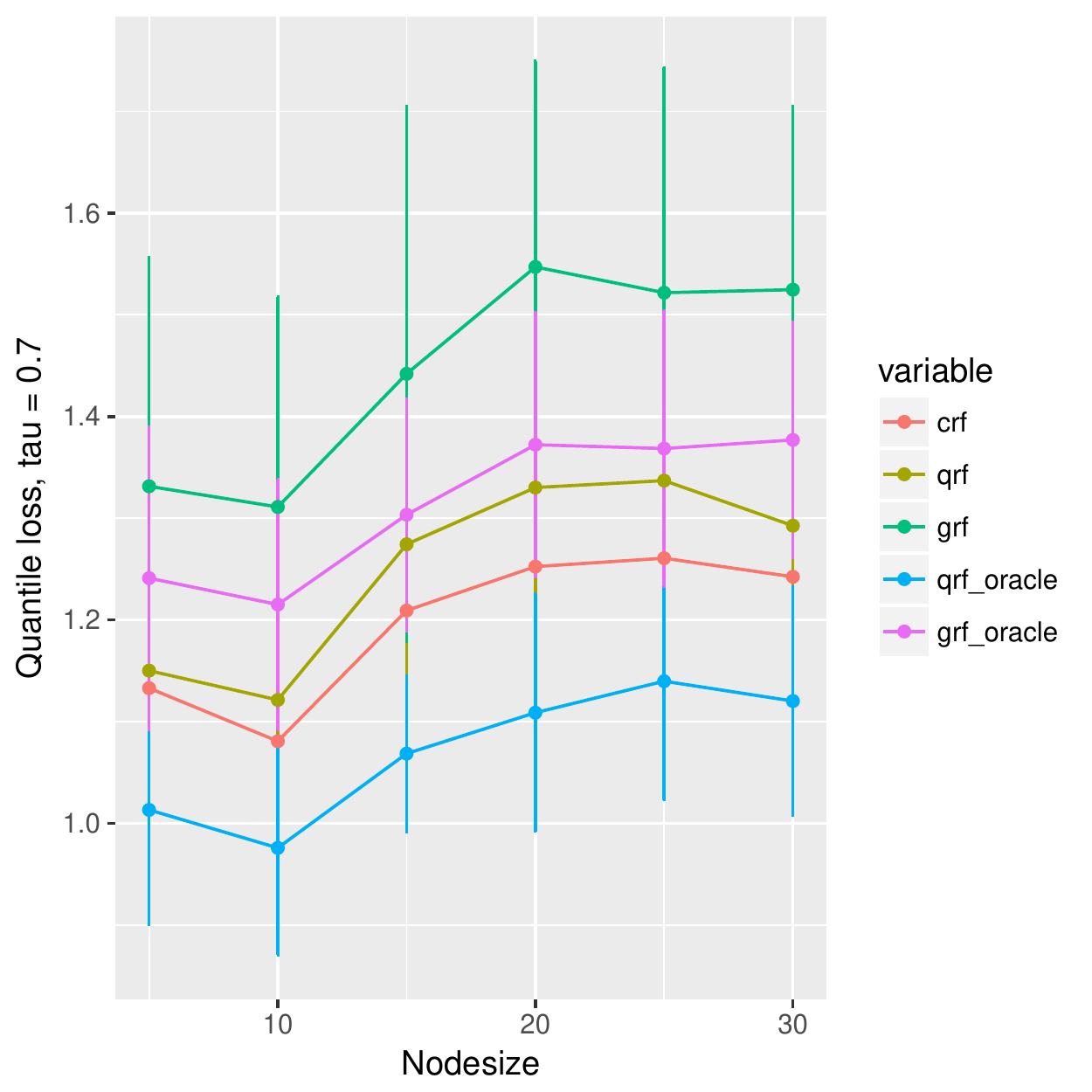}
        \caption{BostonHousing: $\tau = 0.7$}
    \end{subfigure}
    ~
    
    \vspace{-0.05in}
    
    \begin{subfigure}[b]{0.3\linewidth}
        \includegraphics[width=\textwidth]{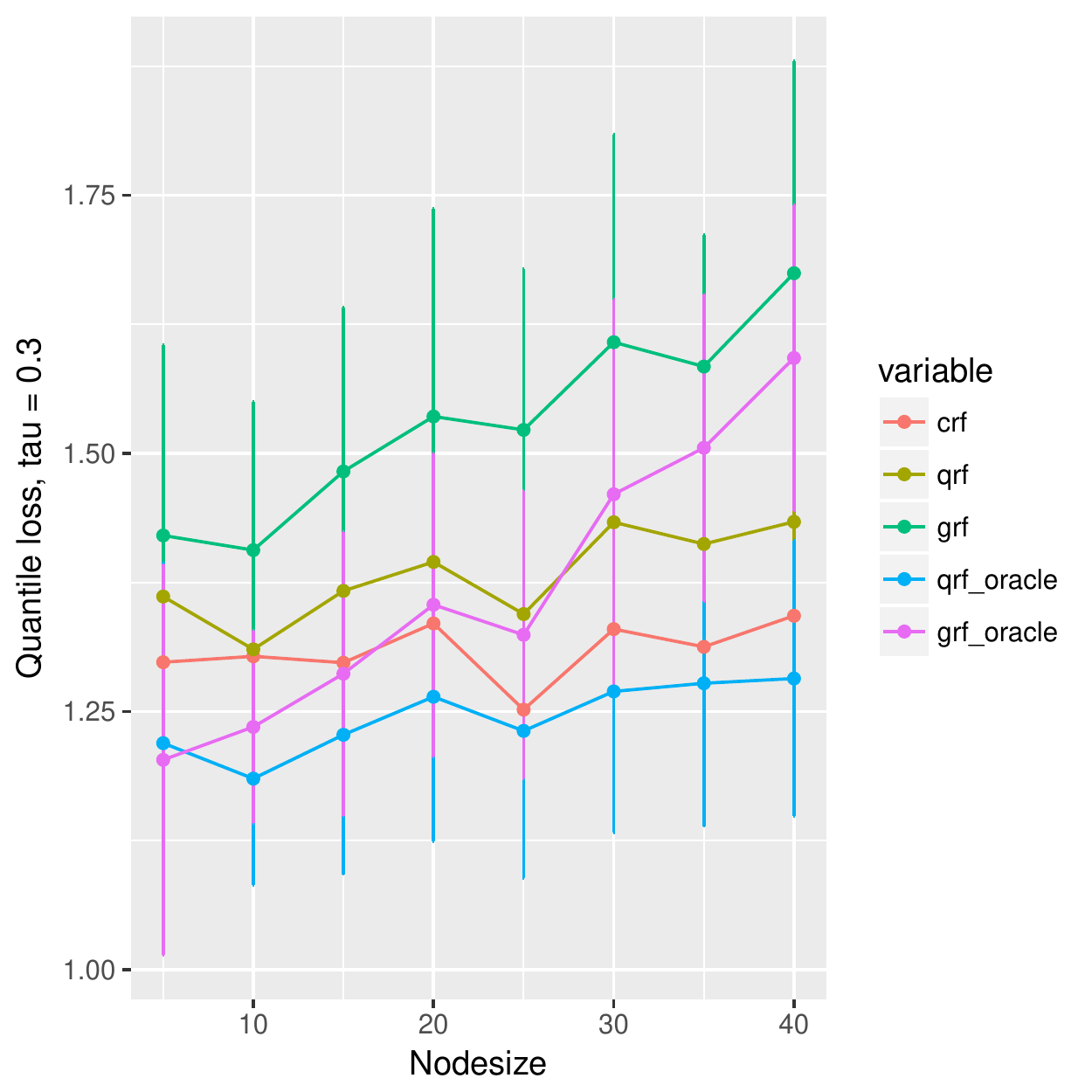}
        \caption{Ozone: $\tau = 0.3$}
    \end{subfigure}
    ~
    \begin{subfigure}[b]{0.3\linewidth}
        \includegraphics[width=\textwidth]{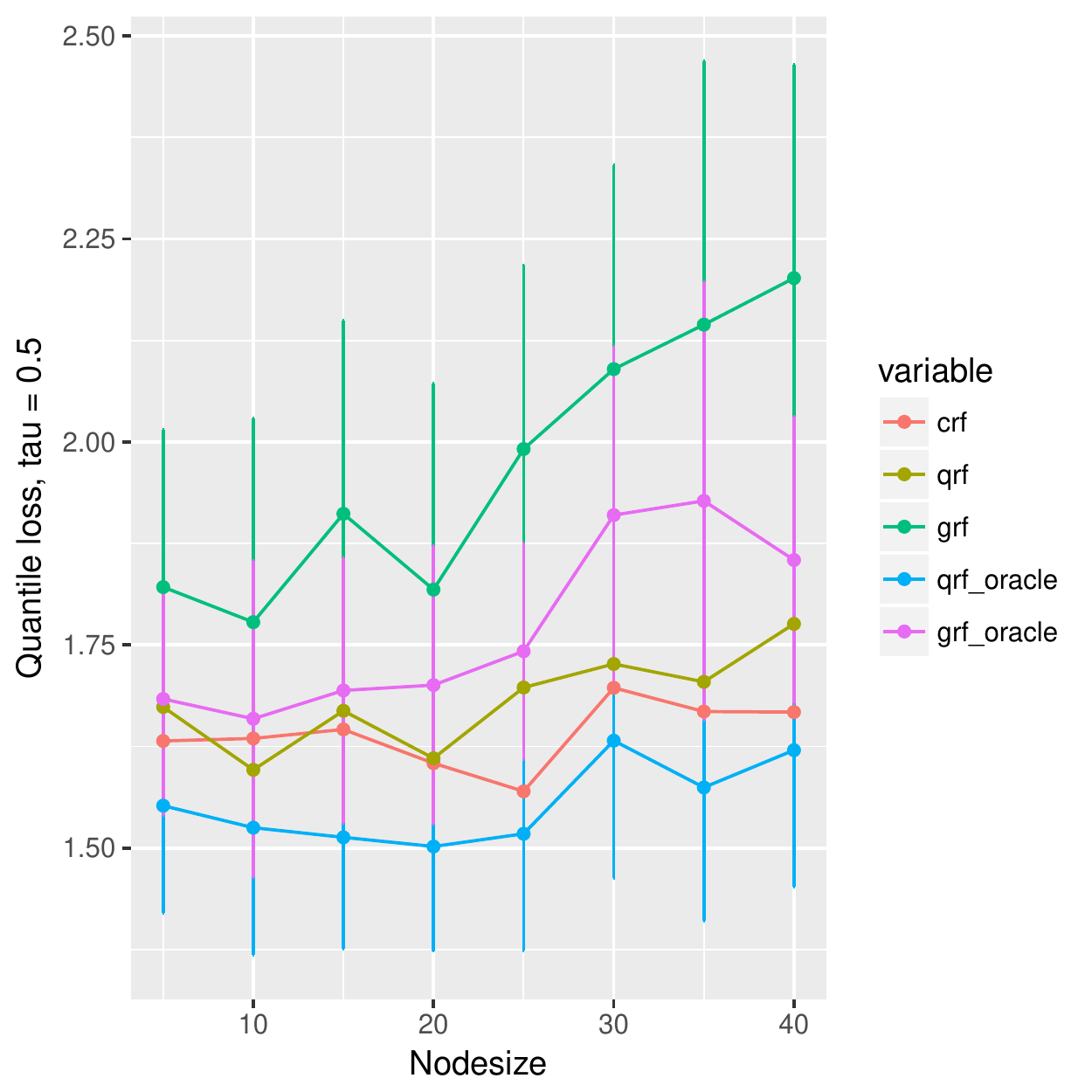}
        \caption{Ozone: $\tau = 0.5$}
    \end{subfigure}
    ~
    \begin{subfigure}[b]{0.3\linewidth}
        \includegraphics[width=\textwidth]{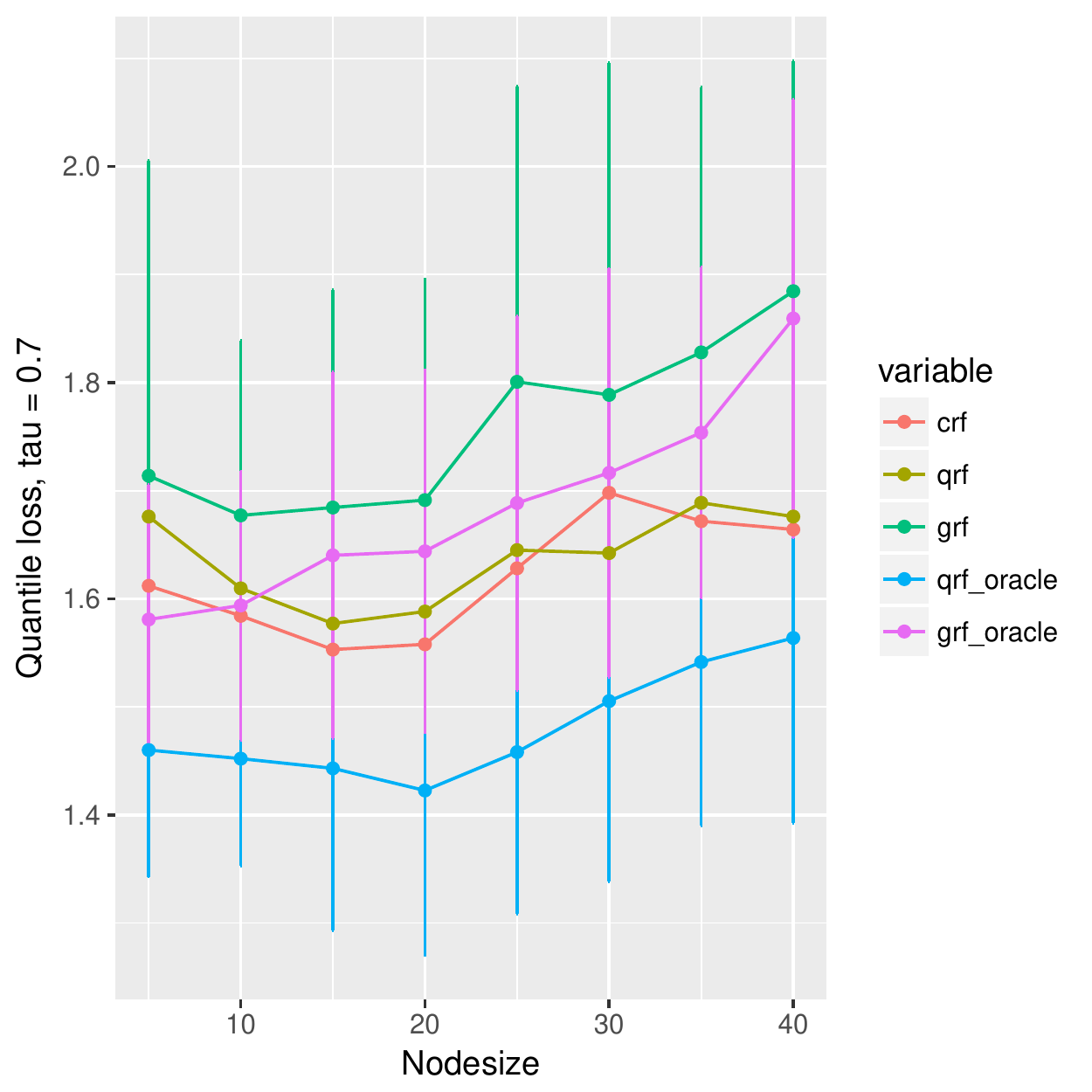}
        \caption{Ozone: $\tau = 0.7$}
    \end{subfigure}
    ~
    
    \vspace{-0.05in}
    
    \begin{subfigure}[b]{0.3\linewidth}
        \includegraphics[width=\textwidth]{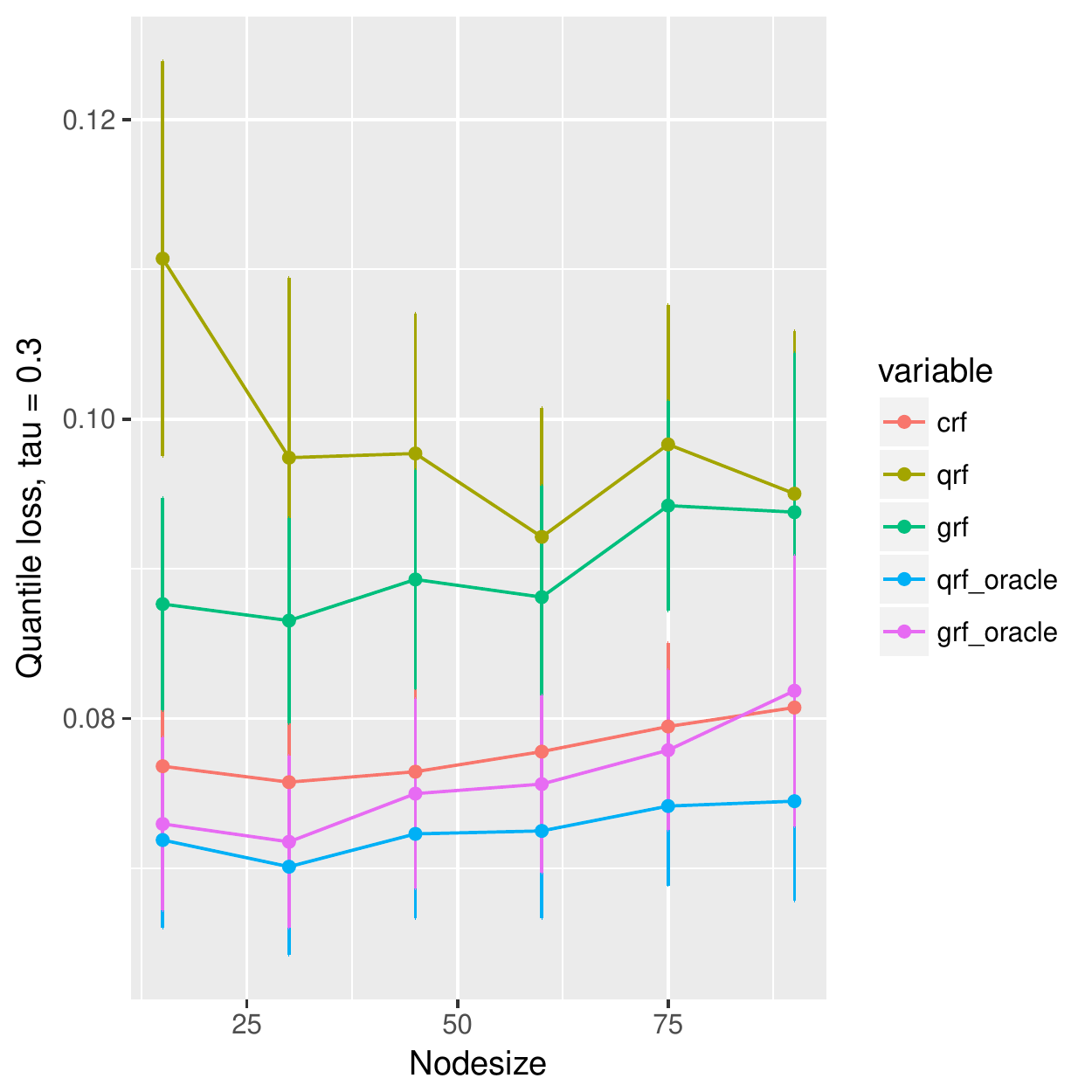}
        \caption{Abalone: $\tau = 0.3$}
    \end{subfigure}
    ~
    \begin{subfigure}[b]{0.3\linewidth}
        \includegraphics[width=\textwidth]{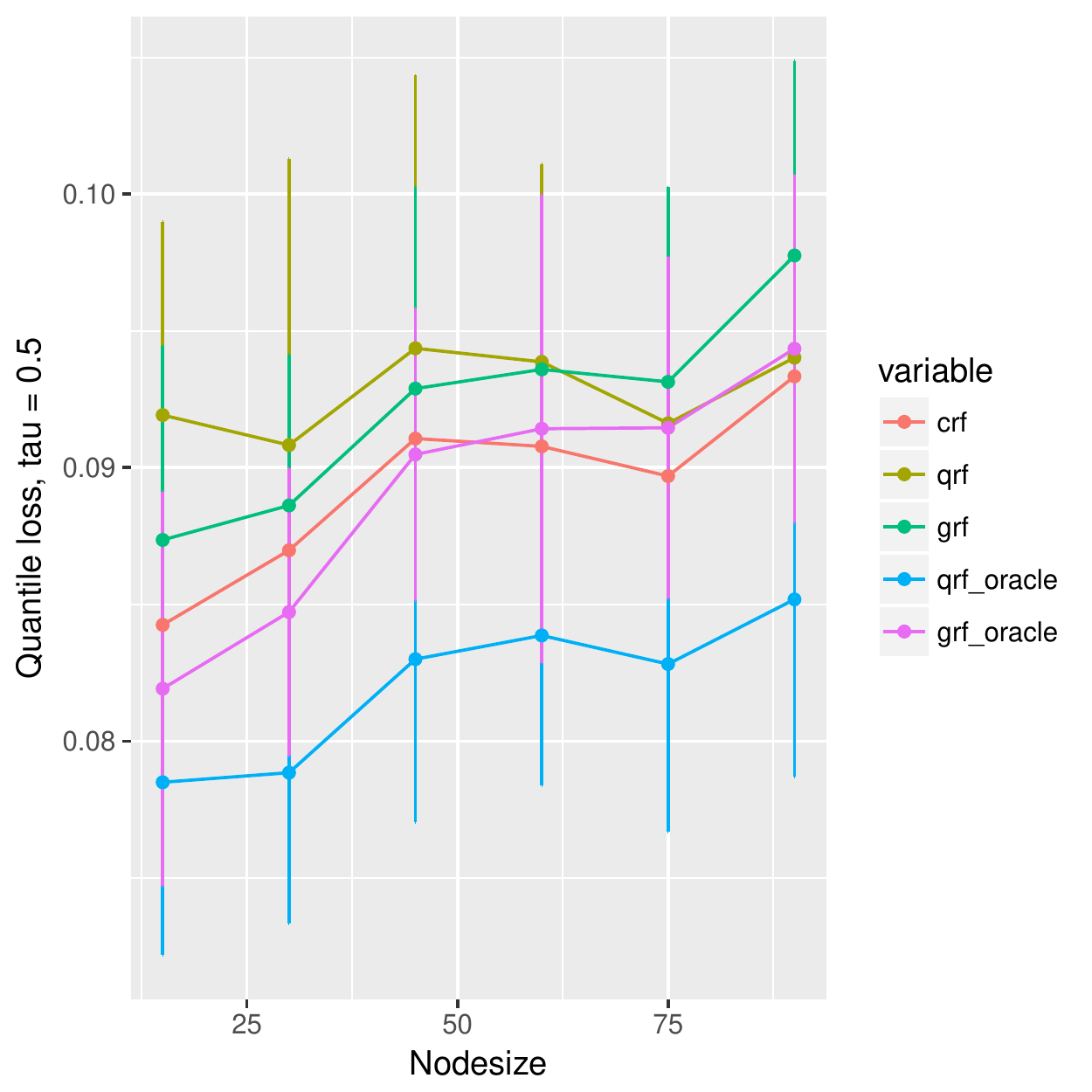}
        \caption{Abalone: $\tau = 0.5$}
    \end{subfigure}
    ~
    \begin{subfigure}[b]{0.3\linewidth}
        \includegraphics[width=\textwidth]{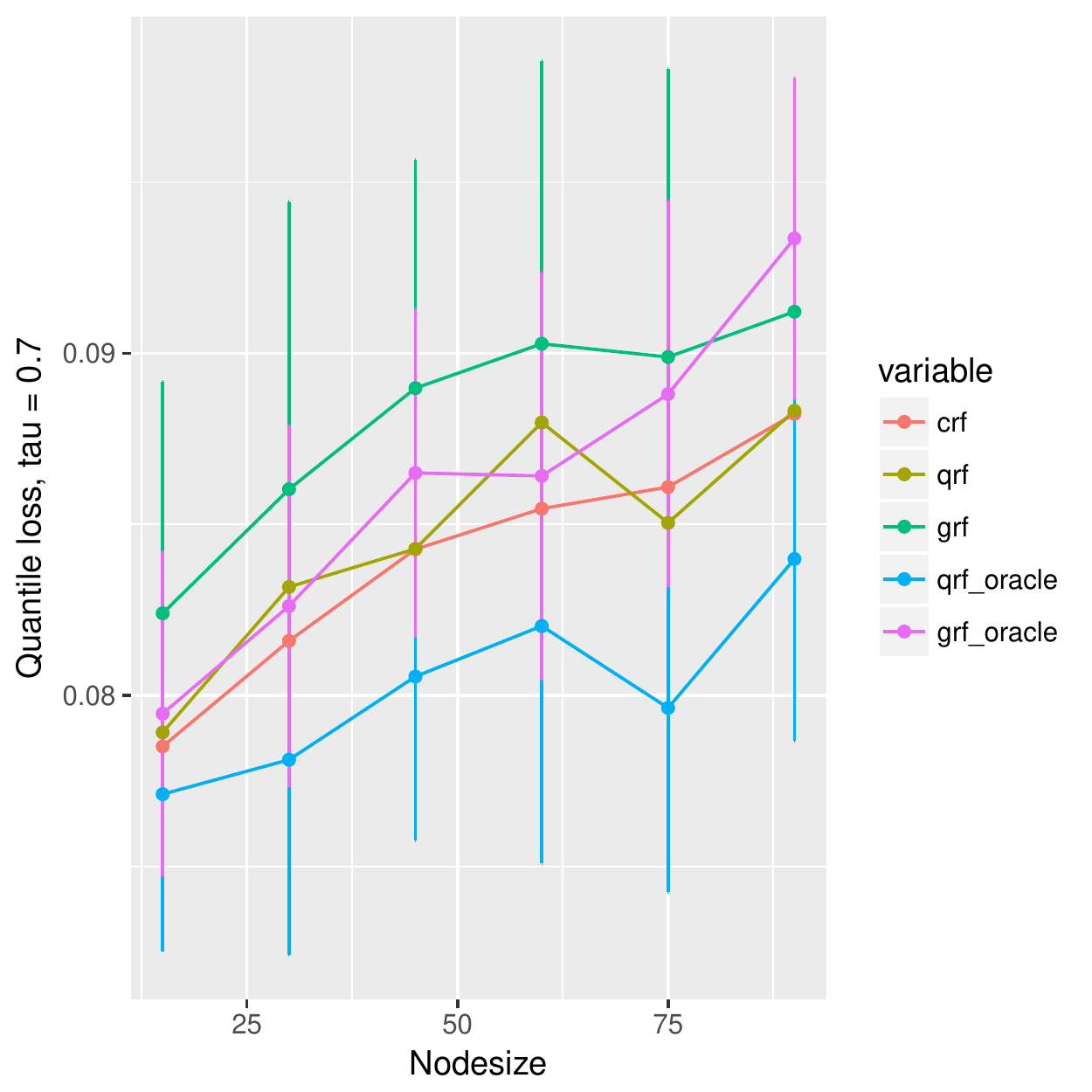}
        \caption{Abalone: $\tau = 0.7$}
    \end{subfigure}
    
    \caption{Hold-out Quantile losses  with different node sizes on the three real data sets: each one represented in one row. }
    \label{fig:real_data_nodesize}
\end{figure}

\paragraph{Nodesize}
For each dataset, we train different models using different nodesizes and compare the performance. For each node size, we bootstrap the data and repeat the experiments for 20 times, and we calculate the mean and standard deviation of the quantile predictions for quantiles $\tau = 0.3$, $0.5$, and $0.7$. The result is in Figure \ref{fig:real_data_nodesize}. We observe that our method, \textit{crf}, is uniformly better than \textit{qrf} and \textit{grf}, proving that \textit{crf} is able to correct the bias introduced by censoring. Moreover, the quantile loss of \textit{crf} is always competitive to that of \textit{qrf-oracle} and \textit{grf-oracle}, and is remarkably  always better than \textit{grf-oracle}, only slightly worse than \textit{qrf-oracle}.

\subsection{Prediction Intervals}
All the forest methods can be used to get $95\%$ prediction intervals by predicting the $0.025$ and $0.975$ quantiles of the true response variable. Then for any location $x \in \mathcal{X}$, a straightforward confidence interval will be $[Q(x;0.025), Q(x;0.975)]$. The result is illustrated in Figure \ref{fig:real_data_ci} for the case of univariate censored sine model. For each data set, we bootstrap the data and calculate the $0.025$ and $0.975$ quantile for the out of bag points. Then for each node size, we repeat this process for 20 times and calculate the average coverage rate of the confidence intervals.

We observe that in all of the cases, our method \textit{crf} and \textit{qrf-oracle} give the coverage closest to $95\%$. As can be seen from Figure \ref{fig:real_data_nodesize}, both \textit{qrf} and \textit{grf} perform much worse on predicting lower quantiles. They tend to under-estimate the lower quantiles and hence make the confidence intervals much wider than the true ones. Nevertheless, as seen in Figure \ref{fig:real_data_ci}, out method has great coverage that is never below $95\%$, indicating certain efficiency of the proposed method.

\begin{figure}[!htb]
    \small
    \centering
    \begin{subfigure}[b]{0.3\linewidth}
        \centering
        \includegraphics[width=\textwidth]{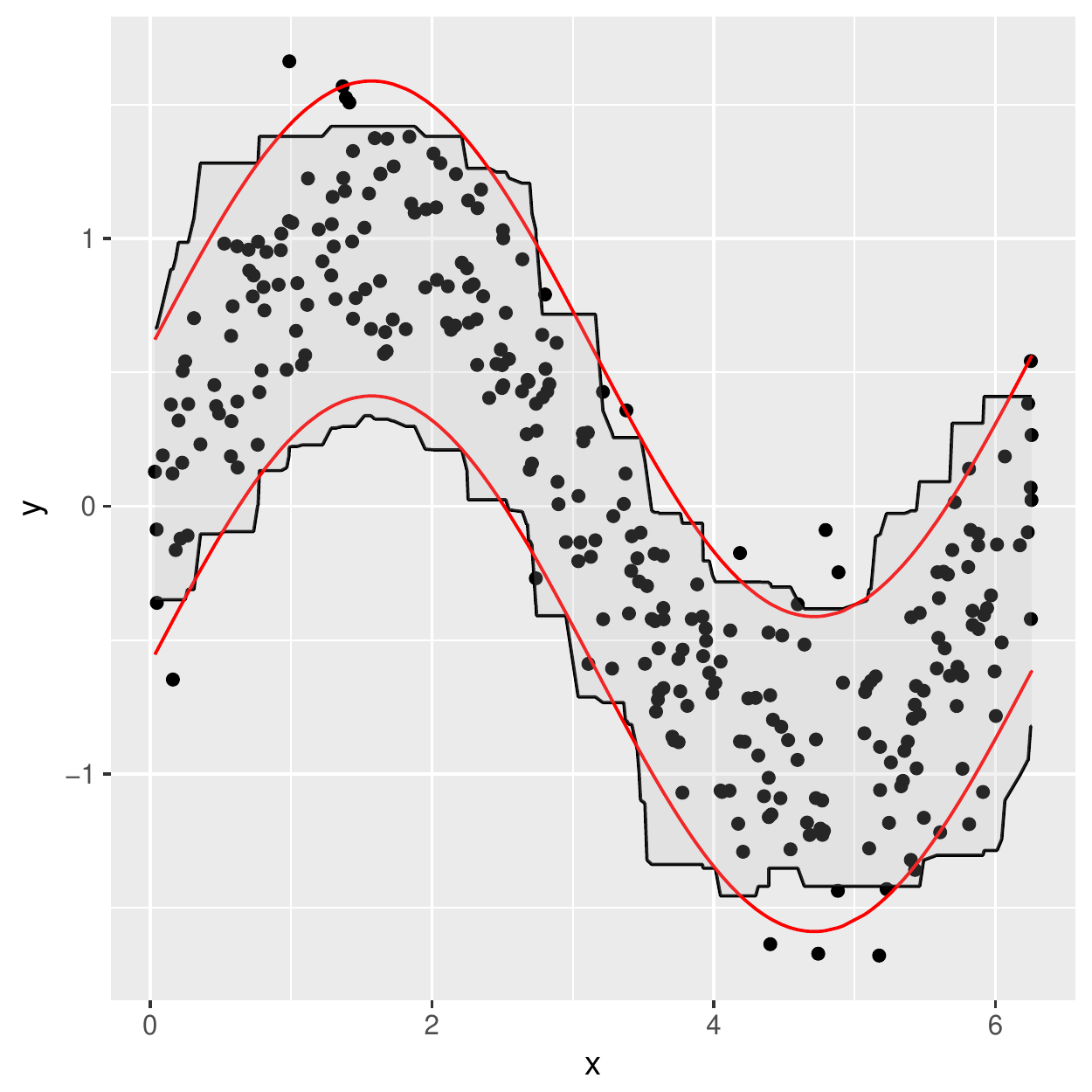}
        \caption{crf}
    \end{subfigure}
    ~
    \begin{subfigure}[b]{0.3\linewidth}
        \centering
        \includegraphics[width=\textwidth]{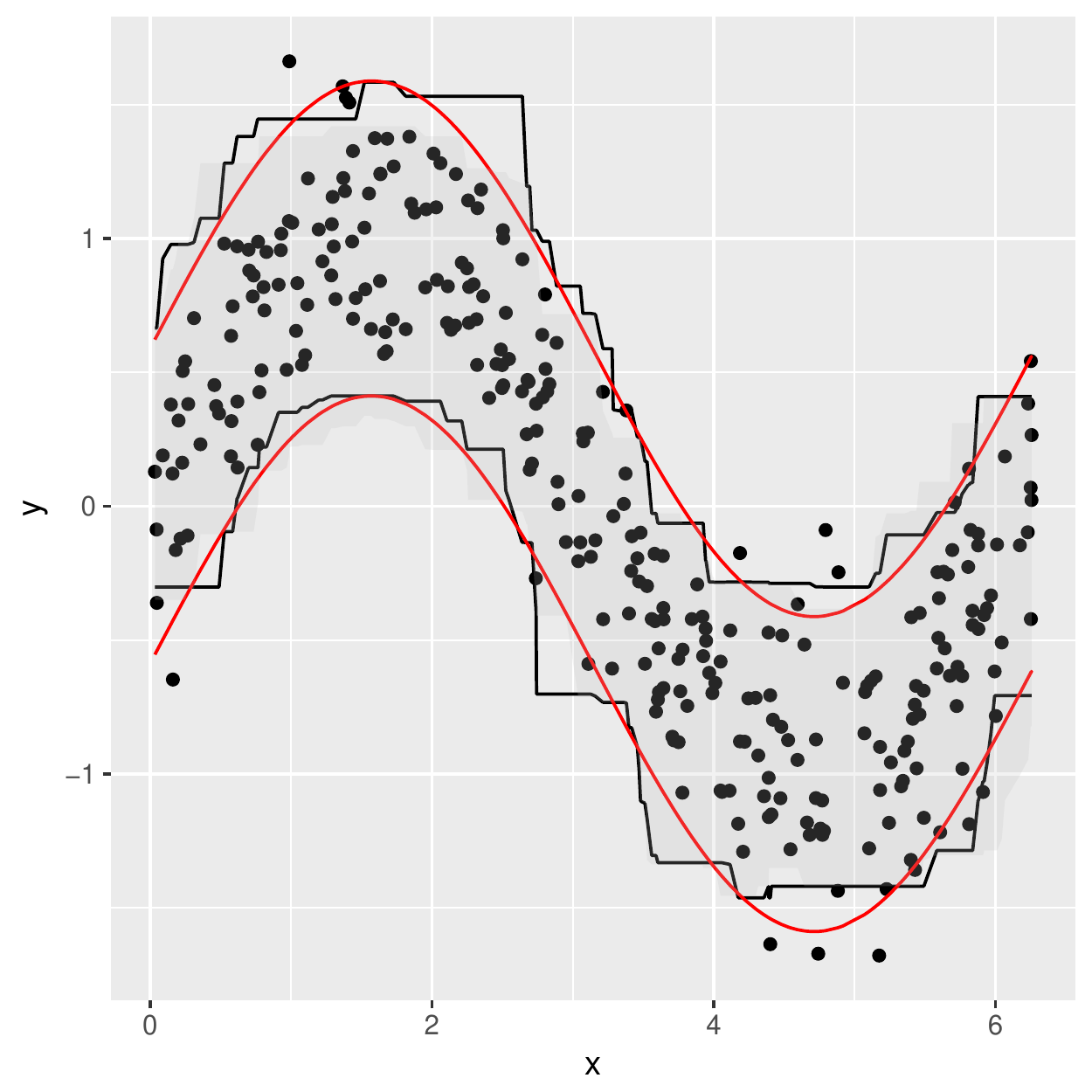}
        \caption{qrf-oracle}
    \end{subfigure}
    ~
    \begin{subfigure}[b]{0.3\linewidth}
        \centering
        \includegraphics[width=\textwidth]{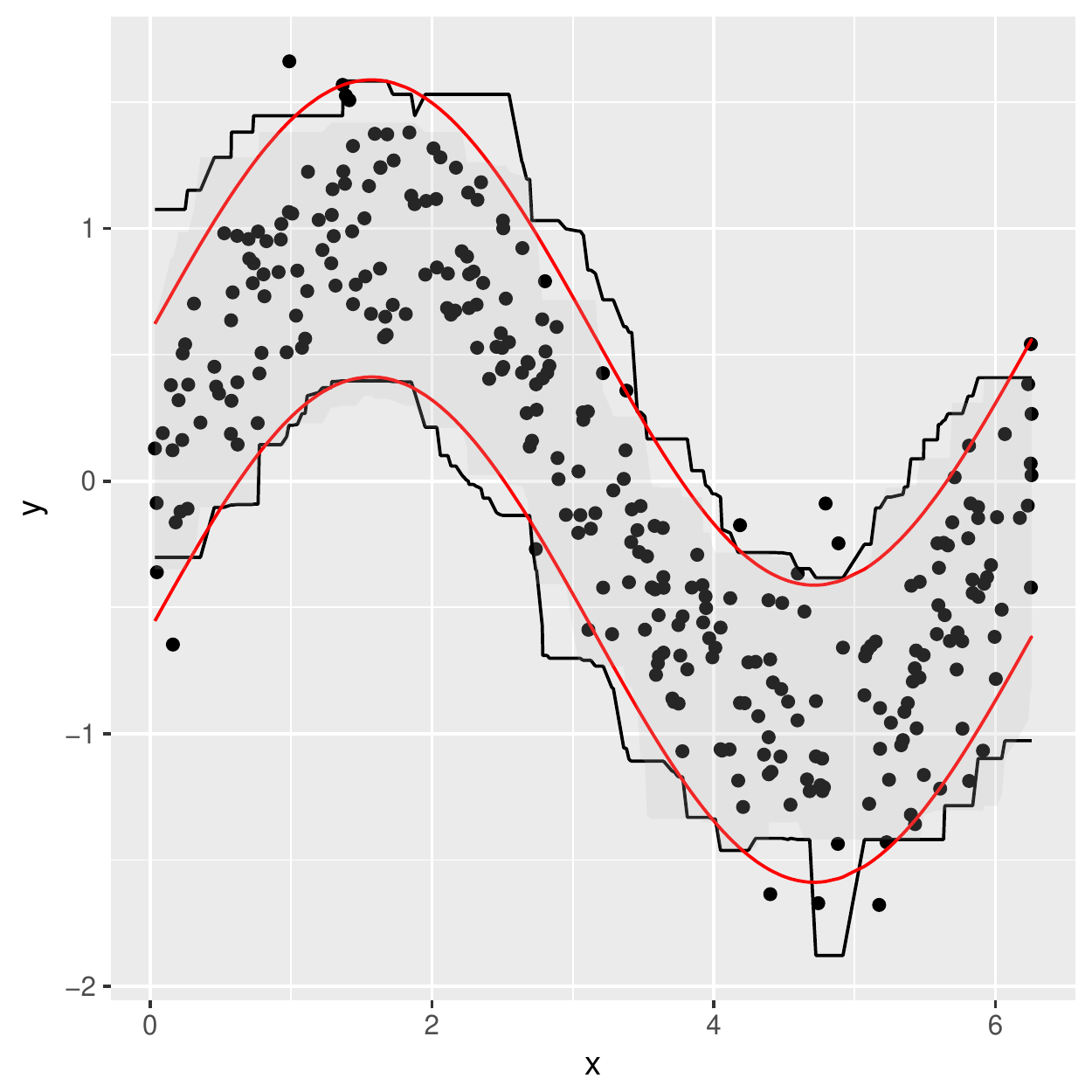}
        \caption{grf-oracle}
    \end{subfigure}
    ~
    
    \vspace{-0.05in}
    
    \begin{subfigure}[b]{0.3\linewidth}
        \centering
        \includegraphics[width=\textwidth]{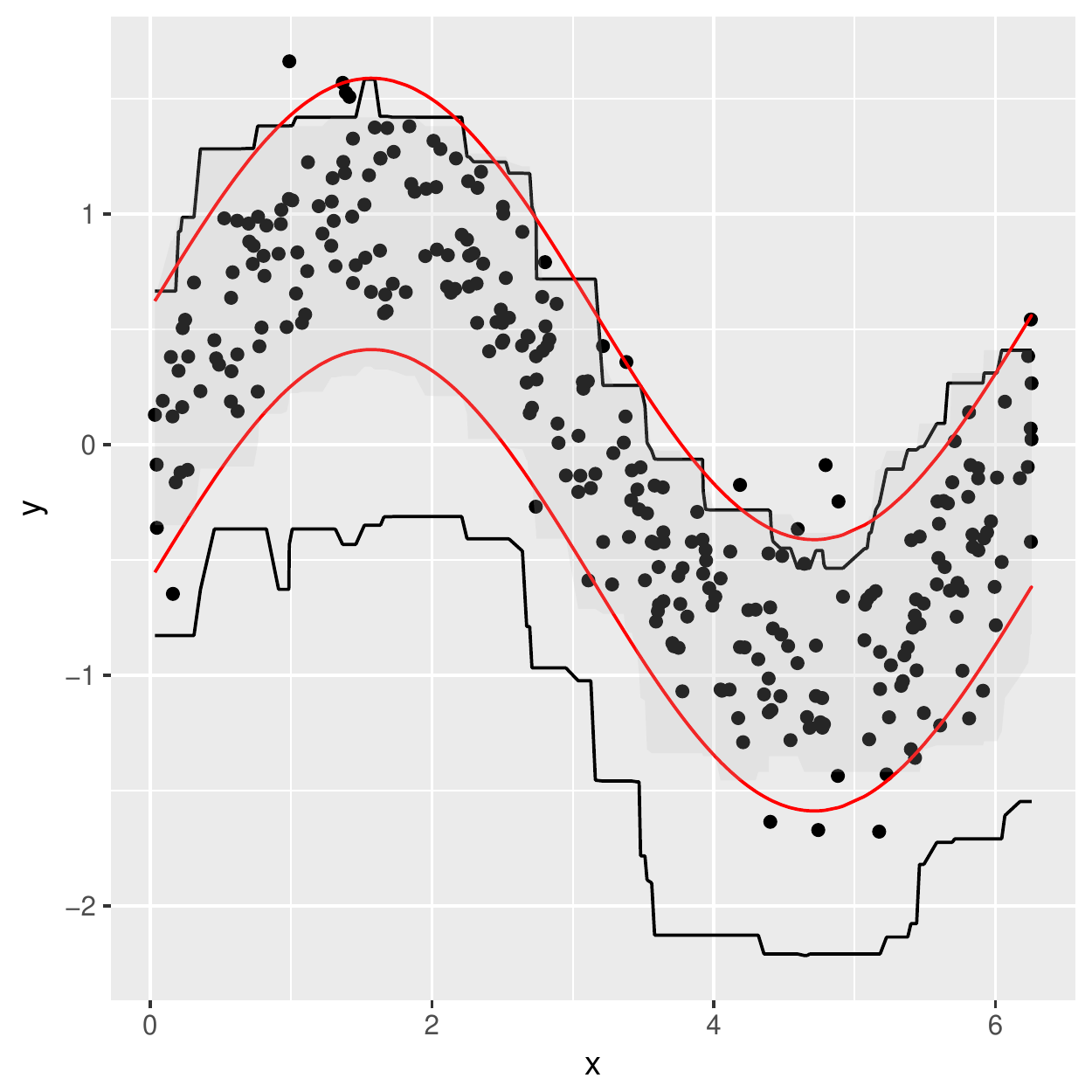}
        \caption{qrf}
    \end{subfigure}
    ~
    \begin{subfigure}[b]{0.3\linewidth}
        \centering
        \includegraphics[width=\textwidth]{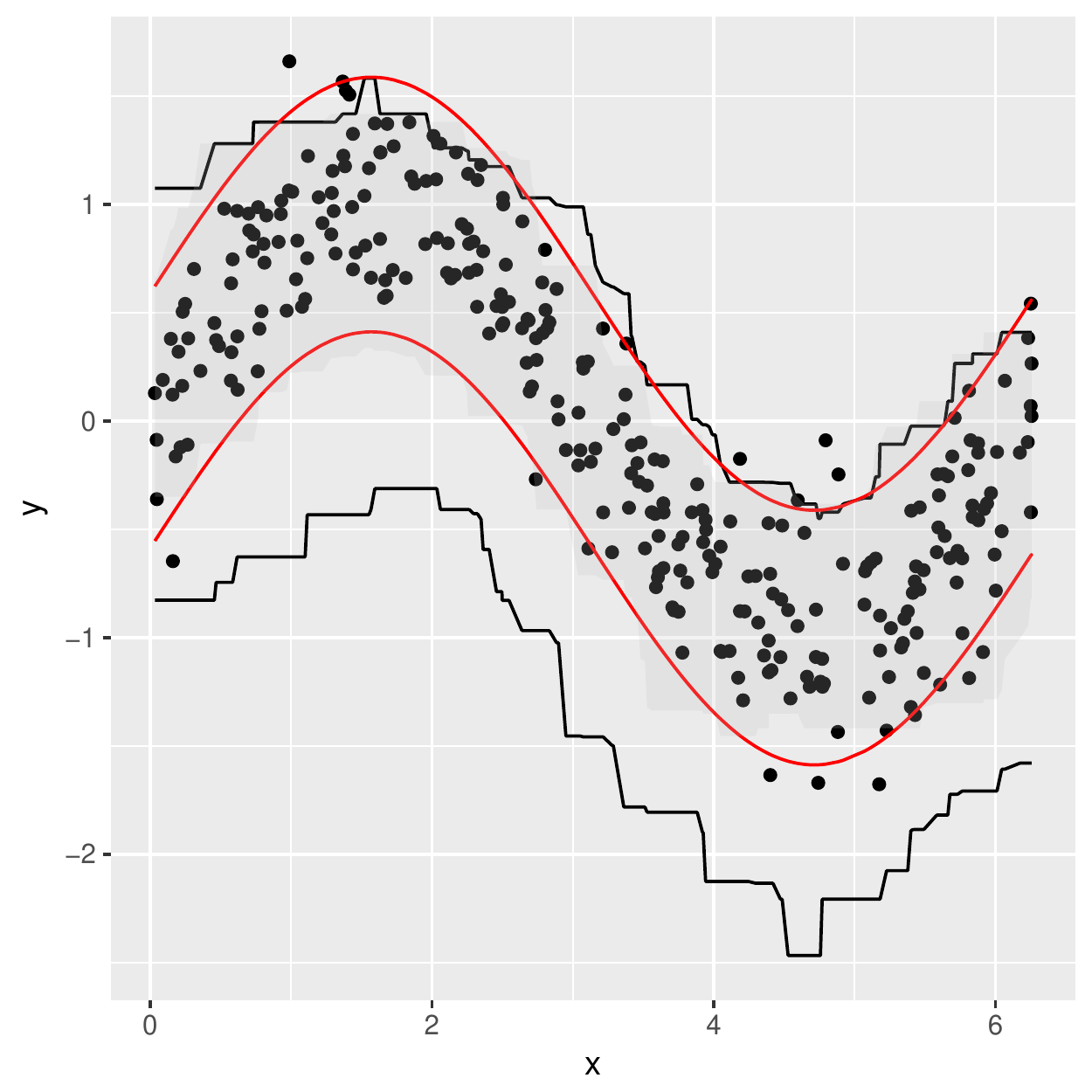}
        \caption{grf}
    \end{subfigure}
    
    \caption{Prediction intervals of the univariate censored since model.}
    \label{fig:sine_ci_plot}
\end{figure}

\begin{figure}[!htb]
    \small
    \centering
    \begin{subfigure}[b]{0.3\linewidth}
        \includegraphics[width=\textwidth]{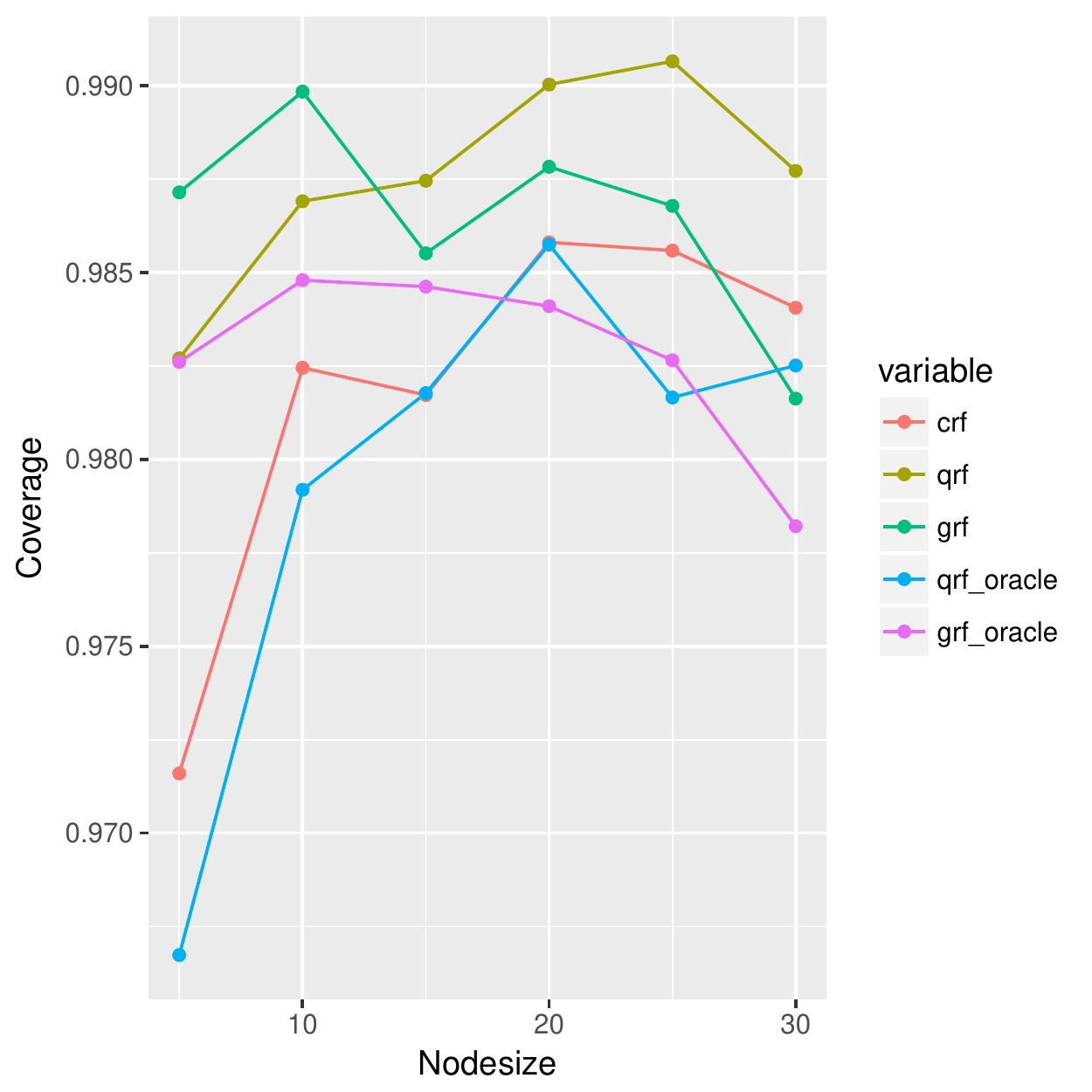}
        \caption{BostonHousing}
    \end{subfigure}
    ~
    \begin{subfigure}[b]{0.3\linewidth}
        \includegraphics[width=\textwidth]{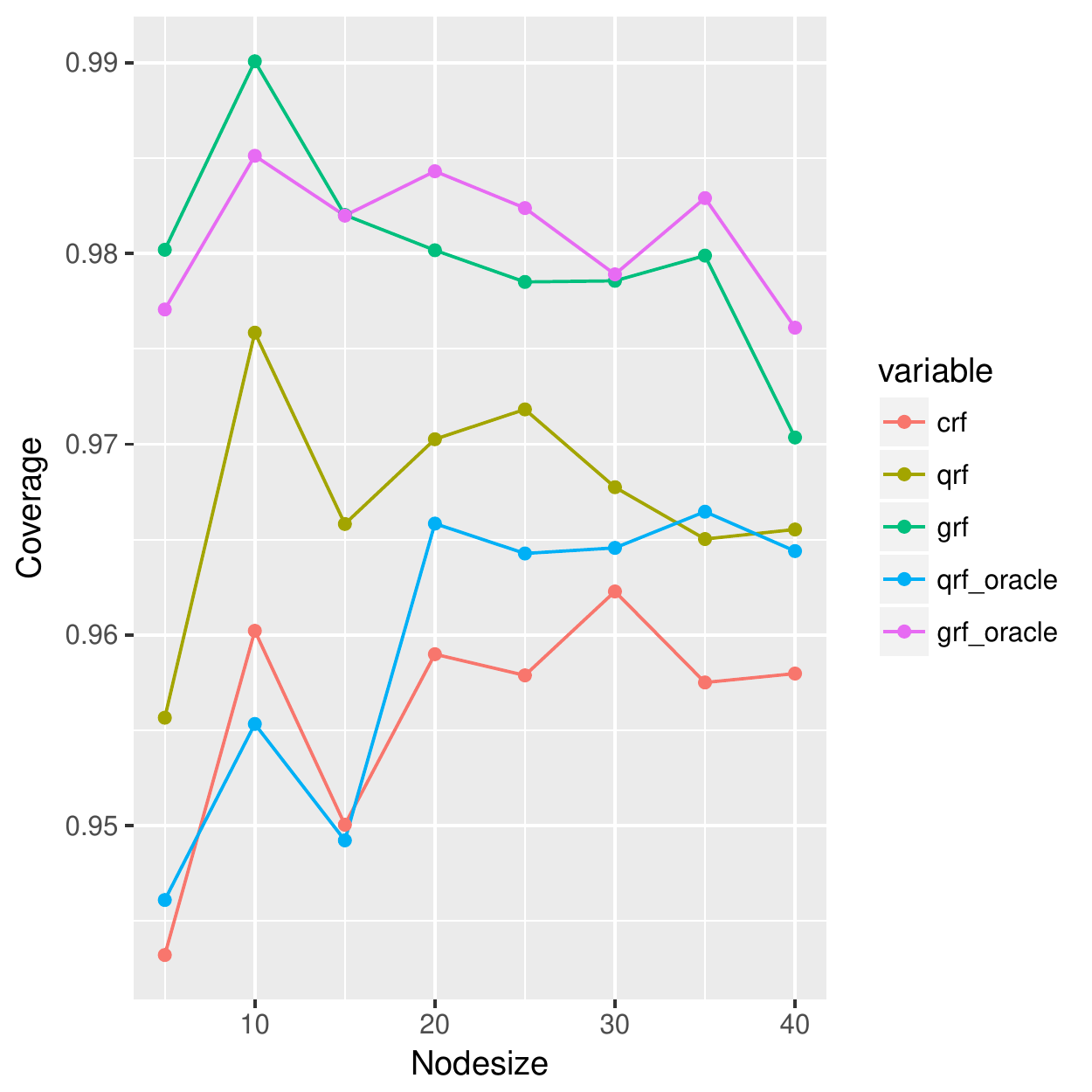}
        \caption{Ozone}
    \end{subfigure}
    ~
    \begin{subfigure}[b]{0.3\linewidth}
        \includegraphics[width=\textwidth]{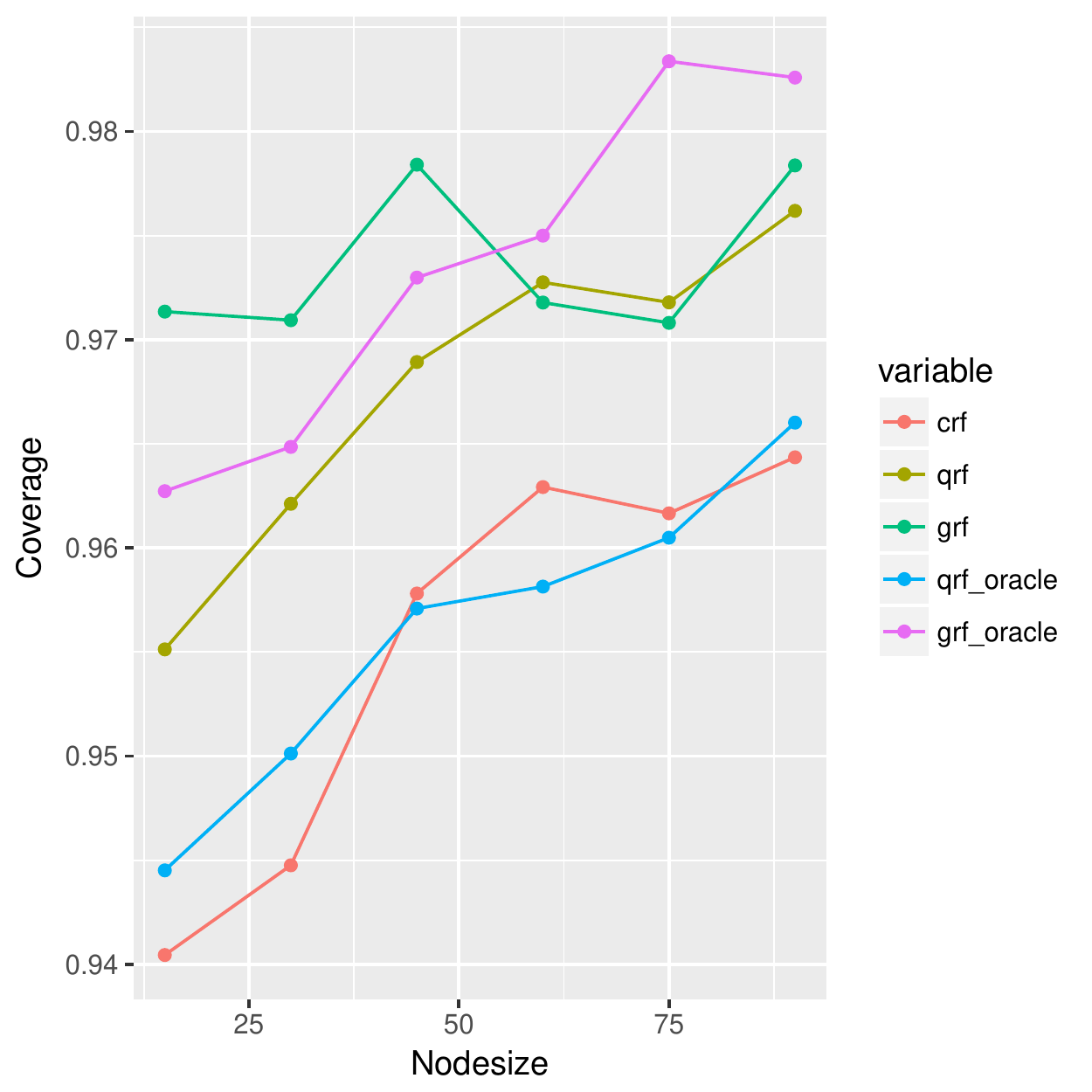}
        \caption{Abalone}
    \end{subfigure}
    
    \caption{Average coverage of prediction intervals with different node sizes on the three real data sets: each one represented in one plot. The length of the confidence intervals is much larger for all of the other considered methods.}
    \label{fig:real_data_ci}
\end{figure}

\section{Discussion}
\label{sec:conc}

In this article, we introduced a censored quantile random forest, a novel non-parametric method for quantile regression problems that is integrated with the censored nature of the observations. While preserving information carried by the censored observations, the novel estimating equations maintain the flexibility of general random forest approaches. The estimating equations are equipped with both censoring information as well as random forest information simultaneously.

The statistical mechanism, particularly asymptotic normality, of ensemble tree-based methods is still not fully understood. Some insightful discussions and first steps towards this goal can be found in \cite{zhu2012recursively}. As noted in \cite{athey2016generalized} even consistency properties of censored based methods present significant theoretical challenges. Generalization of the results of \cite{athey2016generalized} to the case of censored observations involves non-trivial extensions of the theoretical advancements introduced therein. A generalization would require not only adaptation to censoring but as well as to an infinite-dimensional nuisance parameter; current results focus on finite dimensional nuisance parameters.  Once the latter generalization is achieved then, utilizing our estimating equations, will allow for the generalization of the former.  

One of the promising applications of the introduced method is in the estimation of heterogeneous treatment effects when the response variable is censored; treatment discovery with right-censored observations is important and yet poorly understood research area. Equipping this literature with a fully non-parametric approach would lead to a significant broadening of the now more known parametric approaches. Our method appears to be a nice fit for this setting; observe that our estimating equations can be easily replaced with another kind that targets treatment effects directly.

\bibliographystyle{chicago}
\bibliography{forestcqr}

\clearpage
\section{Appendix}
\label{sec:appex}
\begin{proof}[Proof of Theorem \ref{thm:complexity}]
To get the candidate set $\mathcal{C}$, if we use the k-nearest neighbor estimator \eqref{eq:KM_kNN}, then the first step is to sort $n$ weights and choose the largest $k$ elements. This is in general a $O(n \log(n))$ procedure. If we use the Beran estimator \eqref{eq:Beran_rf}, then the time complexity is $O(n)$ because we need to find all the nonzero weights.

After we have the candidate set $\mathcal{C}$, evaluating $S_n(q;\tau)$ for all $q \in \mathcal{C}$ and finding the minimum is a $O(n |\mathcal{C}|)$ procedure. For \eqref{eq:KM_kNN}, $|\mathcal{C}| = k$; and for \eqref{eq:Beran_rf}, $|\mathcal{C}|$ is in the order of $m \log(n)^{p-1}$ by \cite{lin2006random}.
\end{proof}

\begin{proof}[Proof of Theorem \ref{thm:part1}]
Conditional on $X_1, \cdots, X_n$, the random variable $U_i = F(Y_i|X_i), i=1,\cdots,n$ are i.i.d. uniform on $[0,1]$. By Condition \ref{cond:4}, for a given $X_i$,
\begin{eqnarray*}
\ind(Y_i \le q) = \ind(U_i \le F(q|X_i)).
\end{eqnarray*}
Then we can decompose
\begin{eqnarray*}
&& \sum_{i=1}^n w(X_i, x) \ind(Y_i \le q) \\
&=& \sum_{i=1}^n w(X_i, x) \ind(U_i \le F(q|X_i)) \\
&=& \sum_{i=1}^n w(X_i, x) \ind(U_i \le F(q|x)) + \sum_{i=1}^n w(X_i, x)\Bigl\{ \ind(U_i \le F(q|X_i)) - \ind(U_i \le F(q|x)) \Bigr\}.
\end{eqnarray*}
The difference between the empirical distribution function and the truth can then be bounded by
\begin{eqnarray*}
&& \left|\sum_{i=1}^n w(X_i, x) \ind(Y_i \le q) - F(q|x)\right| \\
&\le& \underbrace{\left|\sum_{i=1}^n w(X_i, x) \ind(U_i \le F(q|x)) - F(q|x)\right|}_{\textrm{(I)}}
\\
&+& \underbrace{\left|\sum_{i=1}^n w(X_i, x)\Bigl\{ \ind(U_i \le F(q|X_i)) - \ind(U_i \le F(q|x)) \Bigr\}\right|}_{\textrm{{(II)}}}.
\end{eqnarray*}
For part (I), since $U_i$ is uniform, we have
\begin{equation*}
\sup_{q \in \mathbb{R}} \left|\sum_{i=1}^n w(X_i, x) \ind(U_i \le F(q|x)) - F(q|x)\right| = \sup_{z \in [0,1]} \left|\sum_{i=1}^n w(X_i, x) \ind(U_i \le z) - z \right|
\end{equation*}
Now since $0 \le w(X_i, x) \le 1/m$ and $\sum_{i=1}^n w(X_i, x) = 1$, we have
\begin{equation*}
\sum_{i=1}^n w(X_i, x)^2 \le \max_{i=1,\cdots,n} w(X_i, x) \le \frac{1}{m} \to 0
\end{equation*}
as $n \to \infty$, by Condition \ref{cond:2}. Hence, by Chebyshev inequality, for every $z \in [0,1]$ and $x \in \mathcal{X}$,
\begin{equation*}
\left|\sum_{i=1}^n w(X_i, x) \ind(U_i \le z) - z \right| = o_p(1).
\end{equation*}
Then by Bonferroni's inequality,
\begin{equation*}
\sup_{z \in [0,1]} \left|\sum_{i=1}^n w(X_i, x) \ind(U_i \le z) - z \right| = o_p(1).
\end{equation*}
The proof of part (II)
\begin{equation*}
\left|\sum_{i=1}^n w(X_i, x)\Bigl\{ \ind(U_i \le F(q|X_i)) - \ind(U_i \le F(q|x)) \Bigr\}\right| = o_p(1)
\end{equation*}
follows the same argument of Theorem 1 and Lemma 2 in \cite{meinshausen2006quantile} by invoking Condition \ref{cond:2}. Finally, we notice that by Condition \ref{cond:5}, $\sup_{q\in [-r,r]} |\hat{G}(q|x) - G(q|x)| = o(1)$ because $[-r,r]$ is compact.
\end{proof}

\begin{proof}[Proof of Theorem \ref{thm:consistency}]
By \cite{van2000asymptotic}, we only need to show for any $\tau \in (0,1)$, $x \in \mathcal{X}$,
\begin{enumerate}
\item \label{part:1}
$\sup_{q \in [-r,r]} | S_n(q; \tau) - S(q; \tau) | = o_p(1)$.
\item \label{part:2}
For any $\epsilon > 0$,
$\inf\{|S(q;\tau)|: |q - q^*| \ge \epsilon, q \in [-r, r] \} > 0$.
\item \label{part:3}
$S_n(q_n; \tau) = o_p(1)$.
\end{enumerate}
Part \ref{part:1} has been proved by Theorem \ref{thm:part1}. For part \ref{part:2}, note that
\begin{eqnarray*}
S(q;\tau) &=& (1-\tau)G(q|x) - \mathbb{P}(Y > q|x) \\
&=& (1-\tau)G(q|x) - \mathbb{P}(T > q|x) \mathbb{P}(C > q|x) \\
&=& ((1-\tau) - \mathbb{P}(T > q|x)) G(q | x) \\
&=& (\mathbb{P}(T \le q|x) - \tau) G(q | x).
\end{eqnarray*}
The second equality is because of the conditionally independency between $T$ and $C$. Fix an $\epsilon > 0$, and denote $$E = \{|S(q;\tau)|: |q - q^*| \ge \epsilon, q \in [-r, r]\}.$$ Since $0 < \tau < 1$, by Condition \ref{cond:4}, there exists some $l > 0$ such that $G(q|x) \ge l$ and $$|\mathbb{P}(T \le q | x) - \tau| \ge l$$ for $q \in E$. Now for part \ref{part:3}, by the definition of $q_n$, we know 
$$|S_n(q_n; \tau)| = \min_{q \in [-r,r]} |S_n(q;\tau)|.$$ Also by definition of $q^*$, 
$$0 = |S(q^*;\tau)| = \min_{q \in [-r,r]} |S(q;\tau)|.$$ Then we get
\begin{eqnarray*}
    |S_n(q_n;\tau)| &=& |S_n(q_n;\tau)| - |S_n(q^*;\tau)| + |S_n(q^*;\tau)| - |S(q^*;\tau)| \\
    &\le& | S_n(q^*;\tau) - S(q^*;\tau) | \\
    &\le& \sup_{q \in [-r,r]} |S_n(q;\tau) - S(q;\tau)| \\
    &=& o_p(1)
\end{eqnarray*}
where the first inequality is because of the definition of $q_n$ and the triangular inequality.
\end{proof}

\clearpage
%\newpage
%\bigskip
%\begin{center}
%{\large\bf SUPPLEMENTARY MATERIAL}
%\end{center}
%
%\begin{description}
%
%\item[Title:] Brief description. (file type)
%
%\item[R-package for  MYNEW routine:] R-package ÒMYNEWÓ containing code to perform the diagnostic methods described in the article. The package also contains all datasets used as examples in the article. (GNU zipped tar file)
%
%\item[HIV data set:] Data set used in the illustration of MYNEW method in Section~ 3.2. (.txt file)
%
%\end{description}

\end{document}